\def\eqref#1{equation~\ref{#1}}
\def\1{\bm{1}}
\def\vone{{\bm{1}}}
\def\mB{{\bm{B}}}
\def\mC{{\bm{C}}}
\def\mG{{\bm{G}}}
\def\mH{{\bm{H}}}
\def\mI{{\bm{I}}}
\def\mK{{\bm{K}}}
\def\mL{{\bm{L}}}
\def\mM{{\bm{M}}}
\def\mN{{\bm{N}}}
\def\mW{{\bm{W}}}
\def\mY{{\bm{Y}}}
\def\mBeta{{\bm{\beta}}}
\DeclareMathAlphabet{\mathsfit}{\encodingdefault}{\sfdefault}{m}{sl}
\SetMathAlphabet{\mathsfit}{bold}{\encodingdefault}{\sfdefault}{bx}{n}
\def\gA{{\mathcal{A}}}
\def\gD{{\mathcal{D}}}
\def\gF{{\mathcal{F}}}
\def\gG{{\mathcal{G}}}
\def\gH{{\mathcal{H}}}
\def\gI{{\mathcal{I}}}
\def\gJ{{\mathcal{J}}}
\def\gL{{\mathcal{L}}}
\def\gN{{\mathcal{N}}}
\def\gO{{\mathcal{O}}}
\def\gR{{\mathcal{R}}}
\def\gS{{\mathcal{S}}}
\def\gT{{\mathcal{T}}}
\def\gU{{\mathcal{U}}}
\def\gW{{\mathcal{W}}}
\def\gX{{\mathcal{X}}}
\def\gY{{\mathcal{Y}}}
\def\gZ{{\mathcal{Z}}}
\newcommand{\E}{\mathbb{E}}
\newcommand{\R}{\mathbb{R}}
\newcommand{\postrebuttal}[1]{{\color{black} #1}}
\DeclareMathOperator*{\argmin}{arg\,min}
\DeclareMathOperator{\Tr}{Tr}
\newtheorem{theorem}{Theorem}[section]
\newtheorem{lemma}[theorem]{Lemma}
\newtheorem{definition}[theorem]{Definition}
\newtheorem{remark}[theorem]{Remark}
\newtheorem{assumption}[theorem]{Assumption}
\newtheorem*{algorithm_state*}{Algorithm}
\title{Density Ratio-Free Doubly Robust\\ Proxy Causal Learning}
\author{
{Bariscan Bozkurt\textsuperscript{1}} \And
{Houssam Zenati\textsuperscript{1}} \And
{Dimitri Meunier\textsuperscript{1}} \And
{Liyuan Xu\textsuperscript{2}} \And
{Arthur Gretton\textsuperscript{1,3}} \\
\\
\textsuperscript{1}Gatsby Computational Neuroscience Unit, University College London, \\
\textsuperscript{2}Secondmind, 
\textsuperscript{3}DeepMind. \\
\texttt{\{bariscan.bozkurt.23, h.zenati, dimitri.meunier.21\}@ucl.ac.uk} \\
\texttt{liyuan9988@gmail.com} \quad \texttt{arthur.gretton@gmail.com}
}
\begin{document}
\maketitle

\begin{abstract}
We study the problem of causal function estimation in the Proxy Causal Learning (PCL) framework, where  confounders are not observed but proxies for the confounders are available. Two main approaches have been proposed: outcome bridge-based and treatment bridge-based methods. In this work, we propose two kernel-based doubly robust estimators that combine the strengths of both approaches, and naturally handle continuous and high-dimensional variables. Our identification strategy builds on a recent density ratio-free method for treatment bridge-based PCL; furthermore, in contrast to previous approaches, it does not require indicator functions or kernel smoothing over the treatment variable. These properties make it especially well-suited for continuous or high-dimensional treatments. By using kernel mean embeddings, we \postrebuttal{propose the first density-ratio free doubly robust estimators for proxy causal learning, which have closed form solutions and strong uniform consistency guarantees}. Our estimators outperform existing methods on PCL benchmarks, including a prior doubly robust method that requires both kernel smoothing and density ratio estimation. 
\end{abstract}


\section{Introduction and related works}

Estimating the effects of interventions-also referred to as \emph{treatments}-on outcomes is a central goal of causal learning. This task is particularly challenging in observational settings, where randomized experiments are not feasible. A major difficulty stems from \emph{confounding variables} that influence both the treatment and the outcome. In practice, the commonly made unconfoundedness (or ignorability) assumption \citep{peaRob95}—that there are no unobserved confounders—often fails to hold, as it is unrealistic to expect that all relevant confounders can be accounted for. As a result, one may instead assume that the observed covariates serve as \emph{proxies} for latent unmeasured confounders.

One classical line of work for addressing unobserved confounding is \emph{instrumental variable} (IV) regression, which assumes access to instruments that affect the treatment but are independent of the unobserved confounders \citep{Reiersl1945ConfluenceAB, Stock_who_invented_IV, Newey_nonparametric_IV}. A more recent and promising direction is the Proxy Causal Learning (PCL), which leverages two auxiliary variables: (i) a treatment proxy, denoted by $Z$, which is causally related to the treatment, and (ii) an outcome proxy, denoted by $W$, which is causally related to the outcome. The associated \postrebuttal{causal graph} is illustrated in Figure~(\ref{fig:pcl-dag}). \postrebuttal{In this setup, a bidirectional arrow indicates that either causal direction between the two variables is plausible, or that they may share an unobserved common cause.} Within this framework, \citet{Miao2018Identifying} showed that the causal effect can be identified via an \emph{outcome bridge function}, without the need to explicitly recover the latent confounders. This contrasts with approaches such as \citet{Kuroki2014Mesurement, Louizos2017Causal, Lee2019Estimation}, which attempt to estimate the latent confounders directly. 
\begin{wrapfigure}{r}{0.42\textwidth}
\begin{center}
\includegraphics[,width=0.42\textwidth]{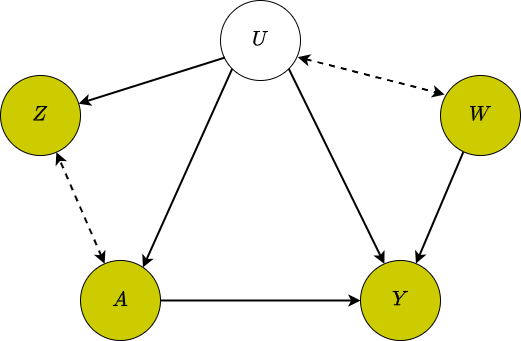}
\end{center}
\caption{\postrebuttal{An illustrative causal graph for proxy causal learning (PCL), consistent with Assumption (\ref{assum:ProxyConditionalIndependenceAssumptions}) \citep{Miao2018Identifying}}. Observed variables are shown as yellow nodes: $A$ denotes the
treatment, $Y$ denotes the outcome, $Z$ denotes the
treatment proxy, and $W$ denotes the outcome proxy. The unobserved confounder $U$ is depicted as a white node. Dotted bi-directional arrows indicate potential bidirectional causality \postrebuttal{(ambiguous directionality)} or the existence of a shared latent ancestor between variables. \postrebuttal{For other causal models that satisfy this assumption, see Table A.1 in \citep{10.1214/23-STS911}}
}
\label{fig:pcl-dag}
\end{wrapfigure}

Specifically, \citet{Miao2018Identifying} demonstrate that an outcome bridge function—a function of the outcome proxy $W$ and the treatment—whose conditional expectation equals the regression function, can be integrated over the distribution of $W$ to recover the average causal effect. 
Building on this idea, several methods have been proposed to estimate the outcome bridge function and the causal effect, utilizing sieve expansions \citep{deaner2023proxycontrolspaneldata}, reproducing kernel Hilbert spaces (RKHS) \citep{Mastouri2021ProximalCL, singh2023kernelmethodsunobservedconfounding}, neural networks \citep{xu2021deep, kompa2022deep} and minimax learning \citep{Kallus2021Causal, pmlr-v151-ghassami22a}. 
In particular, \citet{Mastouri2021ProximalCL} proposed two kernel-based methods for efficiently estimating the outcome bridge function: a two-stage regression approach called \emph{kernel proxy variable} (KPV) and a one-step estimator based on maximum moment restrictions \citep{Muandet2020KernelCM} called \emph{proximal maximum moment restriction} (PMMR). Although outcome bridge function-based methods have received widespread attention, an alternative line of work focuses on identifying causal effects using ideas inspired by inverse propensity score (IPS) models \citep{thompson1952, rosRub83, robins:hern:brum:2000}. The seminal work by \citet{semiparametricProximalCausalInference} introduced a complementary identification strategy based on a \emph{treatment bridge function}—a function of the treatment proxy $Z$ and the treatment. This approach, however, is limited to binary treatments and relies on an indicator function over the treatment of interest in the identification formula. Moreover, it requires density ratio estimation to recover the treatment bridge function. Extensions to continuous treatments have been proposed by \citet{wu2024doubly} and \citet{deaner2023proxycontrolspaneldata}. In particular, \citet{wu2024doubly} generalized the binary treatment framework of \citet{semiparametricProximalCausalInference} by replacing the indicator function with kernel smoothing. However, density ratio estimation remains a significant challenge, particularly in high-dimensional settings. To overcome the limitations posed by indicator functions, kernel smoothing, and explicit density ratio estimation, \citet{bozkurt2025density} introduced a density ratio-free identification strategy based on a novel formulation of the treatment bridge function called \emph{kernel alternative proxy} (KAP). Their method simplifies the least-squares objective to bypass the explicit density ratio estimation, following ideas similar to those in \citet{Kanamori_LeastSquareImportance}, and it also avoids both kernel smoothing and indicator function. As a result, it is more scalable for settings with continuous and high-dimensional treatments. \looseness=-1

In practice, the treatment- and outcome- bridge approaches each have different and complementary strengths. 
\citet{bozkurt2025density} illustrate in experiments that the relative performance of each method depends on the informativeness of the proxies \citep[see Section (13) in][]{bozkurt2025density}. Specifically, when the outcome proxy $W$ is more informative about the unobserved confounder $U$ than the treatment proxy $Z$, their treatment bridge-based method outperforms outcome bridge-based alternatives. Conversely, when $Z$ is more informative about $U$ than $W$, outcome bridge-based methods such as those of \citet{Mastouri2021ProximalCL} and \citet{singh2023kernelmethodsunobservedconfounding} yield better performance. However, in practice, it is often difficult to know in advance which scenario applies. This motivates the development of a \emph{doubly robust} (DR) approach for the PCL setting that combines the strengths of both classes of methods in a way that the resulting approach will remain robust, even if one of the methods fails to identify the causal effect. 

DR methods are well established in fully observed causal settings due to their appealing property of consistency if either the outcome model or the \postrebuttal{propensity score} model is correctly specified \citep{robins2001, bang2005doubly}. Such estimators have been generalized to a wide range of functional estimation tasks \citep{hines2022demystifying, kennedy2024semiparametric}, and are often grounded  in the theory of efficient influence functions \citep{bickel1993efficient, tsiatis2006semiparametric, fisher2021visually}. These techniques allow for bias reduction and valid inference even in high-dimensional or nonparametric settings \citep{vanderLaanRubin+2006, Chernozhukov2018}. 

In the PCL setting, the first doubly robust (DR) approach was introduced by \citet{semiparametricProximalCausalInference}, who combined their novel treatment bridge function-based method with the earlier outcome bridge function-based method of \citet{Miao2018Identifying}. Their method is built on the theory of efficient influence functions and achieves the semiparametric efficiency bound. However, as noted earlier, it is limited to the binary treatment case. An extension to continuous treatments was proposed by \citet{wu2024doubly}, who replaced the indicator function over the treatment of interest with a kernel smoothing technique, resulting in a nonparametric DR estimation procedure. Nonetheless, learning the treatment bridge component of their DR estimator still requires explicit density ratio estimation. Additionally, the use of kernel smoothing can introduce further difficulties when the treatment is high-dimensional.

In this work, we   develop two novel doubly robust estimators for PCL. Specifically: (i) our first DR estimator integrates the two-stage regression method KPV of \citet{Mastouri2021ProximalCL} with the treatment bridge function formulation KAP of \citep{bozkurt2025density}, and (ii) our second DR estimator leverages the maximum moment restriction-based algorithm PMMR in \citet{Mastouri2021ProximalCL} with the same treatment bridge strategy KAP. Both estimators retain consistency as long as either the outcome bridge function or the treatment bridge function is correctly specified, thus achieving the doubly robust property.

Our key contributions are:
(i) We propose two novel doubly robust algorithms for causal effect estimation in the PCL setting \postrebuttal{that circumvent the need of explicit density ratio estimation};
(ii) We leverage conditional mean embeddings (CMEs) to derive simple, closed-form estimators using matrix and vector operations—scalable to continuous and high-dimensional treatments;
(iii) \postrebuttal{We establish uniform consistency of our proposed estimators, which is stronger than the pointwise convergence typical in doubly robust causal learning}; and
(iv) We empirically demonstrate that our methods outperform existing PCL baselines in challenging scenarios.\looseness=-1

The remainder of the paper is organized as follows: Section (\ref{sec:problem-settings}) introduces the problem setup and our doubly robust identification result. Section (\ref{sec:NonparametricAlgorithm}) presents our nonparametric estimation algorithms. Section (\ref{sec:Consistency_main_paper}) outlines the consistency results for our proposed methods. Numerical experiments are reported in Section (\ref{sec:NumericalExperiments_main}), and we conclude in Section (\ref{sec:Conclusion}). Our implementation code is available on GitHub\footnote{\url{https://github.com/BariscanBozkurt/Doubly-Robust-Kernel-Proxy-Variable-Algorithm}}. \looseness=-1

\section{Problem setting and identification} \label{sec:problem-settings}

We consider the problem of estimating causal effects, defined as counterfactual outcomes under hypothetical interventions. Let $A \in \mathcal{A}$ denote the treatment and $Y \in \gY \subset \mathbb{R}$ the observed outcome. An unobserved confounder $U \in \mathcal{U}$ affects both $A$ and $Y$. Our goal is to estimate the \emph{dose-response curve}, as formalized in the following definition.

\begin{definition}
The dose-response is defined as $\theta_{\text{ATE}}(a) = \mathbb{E}[\mathbb{E}[Y \mid A = a, U]]$, representing the counterfactual mean outcome if the entire population received treatment $a$. The subscript ATE signifies that its semiparametric counterpart is the average treatment effect (ATE).
\label{def:dose_response_curve}
\end{definition}


The key difficulty in estimating dose-response stems from the fact that the confounding variable $U$ is unobserved in most of the applications. To address this issue, proxy causal learning setup assumes the availability of two proxy variables: $Z$, which serves as a proxy for the treatment, and $W$, which acts as a proxy for the outcome. The underlying \postrebuttal{causal graph} is depicted in Figure (\ref{fig:pcl-dag}). This graphical model admits the following conditional independence assumptions which we leverage throughout:
\begin{assumption}{(Conditional Independencies).}
The following conditional independence statements are implied by the \postrebuttal{causal graph} illustrated in Figure (\ref{fig:pcl-dag}): i-) $Y \perp Z  | U, A$ (Conditional Independence for $Y$), ii-) $ W \perp Z  | U, A$ and $W \perp A | U$ (Conditional Independence for $W$).
\label{assum:ProxyConditionalIndependenceAssumptions}
\end{assumption} 
In the following subsections, we review two main approaches to estimate the dose-response curve in PCL setting: (i) outcome bridge function methods \citep{Miao2018Identifying, Mastouri2021ProximalCL,singh2023kernelmethodsunobservedconfounding, xu2021deep, Miao01102024}, and (ii) treatment bridge function methods \citep{deaner2023proxycontrolspaneldata, semiparametricProximalCausalInference, wu2024doubly, bozkurt2025density, https://doi.org/10.1002/sam.70045}. \postrebuttal{These methods rely on completeness Assumptions (\ref{assum:OutcomeBridgeCompleteness}) and (\ref{assum:TreatmentBridgeCompleteness}) that formalize the requirement that the proxy variable be sufficiently informative about the unobserved confounders, highlighting the importance of collecting rich sets of proxy variables in observational studies to mitigate unobserved confounding. While completeness is generally not testable (even when all relevant variables are observed), it is known to hold for a wide range of semiparametric and nonparametric models \citep{Miao2018Identifying}. For an extended discussion on completeness within the PCL framework, we refer the reader to \citet[Section (S.2)]{Miao2018Identifying}.}

\subsection{Outcome bridge function-based identification}
To identify the dose-response, \citet{Miao2018Identifying, Miao01102024} show that the causal effect is nonparametrically identifiable under the model in Figure~(\ref{fig:pcl-dag}), given the following completeness assumption \citep[Assumption 2]{xu2021deep}:\looseness=-1

\begin{assumption}{(Completeness Assumption).}
For any square-integrable function $\ell: \gU \rightarrow \R$ and for any $a \in \gA$, $\mathbb{E}[\ell(U) \mid Z, A = a] = 0$ almost surely if and only if $\ell(U)=0$ almost surely.
\label{assum:OutcomeBridgeCompleteness}
\end{assumption}

The completeness condition ensures that the proxy variable $Z$ exhibits sufficient variation relative to the unobserved confounder $U$, enabling the identification of the treatment effect.

The theorem below identifies $\theta_{\text{ATE}}(a)$ via an outcome bridge function.

\begin{theorem}[Causal Identification with Outcome Bridge Function \citep{Miao2018Identifying, Miao01102024}]
Let Assumptions (\ref{assum:ProxyConditionalIndependenceAssumptions}) and (\ref{assum:OutcomeBridgeCompleteness}) hold. Furthermore, suppose that there exists an outcome bridge function $h_0(w, a)$ satisfying
\begin{equation}
\begin{aligned}
\mathbb{E}[Y \mid Z, A] & =\int h_0(w, A) p(w \mid Z, A) d w. 
\label{eq:OutcomeBridgeEquation}
\end{aligned}    
\end{equation}
Then, the dose-response can be identified by $\theta_{A T E}(a) =\mathbb{E}[h_0(W, a)]$.
%
\label{theorem:OutcomeBridgeIdentificationATE}
\end{theorem}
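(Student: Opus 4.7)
The plan is to promote the bridge equation from the level of the treatment proxy $Z$ to the level of the unobserved confounder $U$, after which the identification of $\theta_{\text{ATE}}(a)$ follows immediately by the tower property. Concretely, the key auxiliary identity I aim to establish is, for each $a \in \mathcal{A}$,
\begin{equation*}
\mathbb{E}[Y \mid U, A = a] \;=\; \mathbb{E}[h_0(W, a) \mid U, A = a] \quad \text{almost surely}.
\end{equation*}

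To obtain this, first fix $a \in \mathcal{A}$ and evaluate the bridge equation \eqref{eq:OutcomeBridgeEquation} at $A = a$. Using $Y \perp Z \mid U, A$ on the left-hand side and $W \perp Z \mid U, A$ on the right-hand side (both from Assumption \ref{assum:ProxyConditionalIndependenceAssumptions}), together with the tower property, I would rewrite both sides as iterated conditional expectations given $(Z, A = a)$ of functions of $(U, A = a)$:
\begin{align*}
\mathbb{E}[Y \mid Z, A = a] &= \mathbb{E}\bigl[\,\mathbb{E}[Y \mid U, A = a] \,\big|\, Z, A = a\,\bigr], \\
\int h_0(w, a)\, p(w \mid Z, a)\, dw &= \mathbb{E}[h_0(W, a) \mid Z, A = a] = \mathbb{E}\bigl[\,\mathbb{E}[h_0(W, a) \mid U, A = a] \,\big|\, Z, A = a\,\bigr].
\end{align*}
Subtracting and letting $\ell(U) := \mathbb{E}[Y \mid U, A = a] - \mathbb{E}[h_0(W, a) \mid U, A = a]$, the bridge equation gives $\mathbb{E}[\ell(U) \mid Z, A = a] = 0$ almost surely. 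Applying the completeness Assumption \ref{assum:OutcomeBridgeCompleteness} then forces $\ell(U) = 0$ almost surely, which is precisely the desired identity.

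Finally, I would use $W \perp A \mid U$ (again from Assumption \ref{assum:ProxyConditionalIndependenceAssumptions}) to eliminate the conditioning on $A = a$, namely $\mathbb{E}[h_0(W, a) \mid U, A = a] = \mathbb{E}[h_0(W, a) \mid U]$, and conclude by the tower property that
\begin{equation*}
\theta_{\text{ATE}}(a) \;=\; \mathbb{E}\bigl[\mathbb{E}[Y \mid A = a, U]\bigr] \;=\; \mathbb{E}\bigl[\mathbb{E}[h_0(W, a) \mid U]\bigr] \;=\; \mathbb{E}[h_0(W, a)].
\end{equation*}
The step I expect to require the most care is the completeness invocation: one must verify that $\ell$ is square-integrable so that Assumption \ref{assum:OutcomeBridgeCompleteness} is applicable, and, when $A$ is continuous, handle the conditioning on $\{A = a\}$ via a regular conditional distribution so that statements such as ``almost surely in $Z$ given $A = a$'' are well-defined. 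Beyond this measure-theoretic bookkeeping, the argument is mechanical and is driven entirely by the conditional-independence structure encoded in Figure \ref{fig:pcl-dag} together with the completeness of $Z$ in $U$.
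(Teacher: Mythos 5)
Your argument is correct and is essentially the standard proof that the paper defers to via its citations (Proposition 3.1 of Miao et al.\ and Corollary 1 of Xu et al.): lift the bridge equation from $Z$ to $U$ using the conditional independences, invoke completeness to conclude $\mathbb{E}[Y\mid U, A=a]=\mathbb{E}[h_0(W,a)\mid U, A=a]$, drop the conditioning on $A=a$ via $W\perp A\mid U$, and average over $U$. Your flagged caveats (square-integrability of $\ell$ and regular conditional distributions for continuous $A$) are exactly the implicit regularity conditions assumed in those references, so there is no gap.
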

\postrebuttal{The proof of Theorem (\ref{theorem:OutcomeBridgeIdentificationATE}) can be found in \citep[Proposition (3.1)]{Miao01102024}, \citep[Corollary (1)]{xu2021deep}.}
\subsection{Treatment bridge function-based identification}
\citet{semiparametricProximalCausalInference} and \citet{deaner2023proxycontrolspaneldata} proposed an alternative identification results using a treatment bridge function,  but its learning requires density ratio estimation. Therefore, we instead follow \citet{bozkurt2025density}, whose method avoids this issue as we describe it in Section (\ref{sec:NonparametricAlgorithm}) and forms the basis of our doubly robust estimator. \postrebuttal{The structure of this treatment bridge function-based identification is analogous to inverse propensity weighting principle \citep{rosRub83, robins:hern:brum:2000}.} We adopt the following completeness assumptions to identify the dose-response via the treatment bridge function \citep[Assumption 3.3]{bozkurt2025density}:

\begin{assumption}{(Completeness Assumption).}
For any square integrable function $\ell: \gU \rightarrow \mathbb{R}$, for all $a \in \mathcal{A}$, $\mathbb{E}[\ell(U) \mid W, A=a]=0$ $p(W)$ almost surely if and only if $\ell(U)=0$ $p(U)-a . e$. 
\label{assum:TreatmentBridgeCompleteness}
\end{assumption}

The completeness condition ensures that the proxy variable $W$ exhibits enough variation in relation to the unobserved confounder $U$, enabling the identification of treatment effect.

\begin{theorem}[Causal identification with treatment bridge function \citep{bozkurt2025density}]
Let Assumptions (\ref{assum:ProxyConditionalIndependenceAssumptions}) and (\ref{assum:TreatmentBridgeCompleteness}). Furthermore, suppose that there exists a treatment bridge function $\varphi_0(z, a)$ such that
\begin{align}
\mathbb{E}\left[\varphi_0(Z, a) \mid W, A=a\right]=\frac{p(W) p(a)}{p(W, a)}, \quad \forall a \in \gA,
\label{eq:TreatmentBridgeEquation}
\end{align}
then the dose-response can be identified by
$
\theta_{A T E}(a)=\mathbb{E}\left[Y \varphi_0(Z, a) \mid A=a\right]
$.
\label{theorem:TreatmentBridgeIdentificationATE}
\end{theorem}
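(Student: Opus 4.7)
The plan is to follow the standard inverse-propensity-weighting-style argument adapted to the PCL graph. I would first introduce the auxiliary function $g(u,a) := \mathbb{E}[\varphi_0(Z,a)\mid U,a]$, show that it must equal the confounder density ratio $p(u)/p(u\mid A=a)$ using the bridge equation together with completeness, and then substitute this identity back into $\mathbb{E}[Y\varphi_0(Z,a)\mid A=a]$ to recover the dose-response.

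First I would take the bridge equation and reshape its left-hand side via the tower property and the conditional independence $W \perp Z \mid U, A$ from Assumption~(\ref{assum:ProxyConditionalIndependenceAssumptions}):
\begin{equation*}
\mathbb{E}[\varphi_0(Z,a)\mid W, A=a] = \mathbb{E}\bigl[\mathbb{E}[\varphi_0(Z,a)\mid U, W, A=a]\,\big|\, W, A=a\bigr] = \mathbb{E}[g(U,a)\mid W, A=a].
\end{equation*}
Thus the bridge equation becomes $\mathbb{E}[g(U,a)\mid W, A=a] = p(W)p(a)/p(W,a)$. Next I would verify that the candidate $\tilde g(u,a) := p(u)/p(u\mid A=a)$ also satisfies this integral equation: a direct Bayes computation, combined with the second part of Assumption~(\ref{assum:ProxyConditionalIndependenceAssumptions}) that $W \perp A \mid U$, yields
\begin{equation*}
\mathbb{E}[\tilde g(U,a)\mid W, A=a] = \int \frac{p(u)}{p(u\mid A=a)}\,\frac{p(W\mid u)\,p(u\mid A=a)}{p(W\mid A=a)}\,du = \frac{p(W)}{p(W\mid A=a)},
\end{equation*}
which matches the right-hand side of the bridge equation. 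Applying Assumption~(\ref{assum:TreatmentBridgeCompleteness}) to the difference $g(u,a)-\tilde g(u,a)$ (for each fixed $a$) then forces $g(U,a) = p(U)/p(U\mid A=a)$ almost surely.

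Finally I would plug this identification of $g$ into the target functional. Starting from $\mathbb{E}[Y\varphi_0(Z,a)\mid A=a]$, conditioning on $U$ and invoking $Y \perp Z \mid U,A$ gives
\begin{equation*}
\mathbb{E}[Y\varphi_0(Z,a)\mid A=a] = \mathbb{E}\bigl[\mathbb{E}[Y\mid U,A=a]\,g(U,a)\,\big|\,A=a\bigr] = \int \mathbb{E}[Y\mid U=u,A=a]\,\frac{p(u)}{p(u\mid A=a)}\,p(u\mid A=a)\,du,
\end{equation*}
and the density ratio collapses the weighting measure to $p(u)\,du$, yielding $\mathbb{E}[\mathbb{E}[Y\mid U,A=a]] = \theta_{\text{ATE}}(a)$.

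The main obstacle is the middle step, namely showing that $g$ must coincide with the confounder density ratio. The subtlety is that completeness is stated only for functions of $U$ at each fixed treatment level $a$, so the argument has to be carried out pointwise in $a$, and one must carefully invoke both conditional independence statements in Assumption~(\ref{assum:ProxyConditionalIndependenceAssumptions})—the $W \perp Z \mid U, A$ part to rewrite the left-hand side of the bridge equation, and the $W \perp A \mid U$ part to verify the candidate density ratio. Once that identification is established, the remaining computation is a straightforward application of the tower property and $Y \perp Z \mid U, A$.
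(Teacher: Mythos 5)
Your proof is correct. The paper itself does not reproduce a proof of this theorem—it defers to \citep[Theorem 3.4]{bozkurt2025density}—but your argument (use $W \perp Z \mid U, A$ to reduce the bridge equation to an integral equation in $g(U,a) = \mathbb{E}[\varphi_0(Z,a)\mid U, A=a]$, invoke completeness in $W$ together with $W \perp A \mid U$ to identify $g$ with the confounder density ratio $p(U)/p(U\mid A=a)$, then apply $Y \perp Z \mid U, A$ to collapse the weighted conditional expectation to $\mathbb{E}[\mathbb{E}[Y\mid U, A=a]]$) is exactly the standard identification argument used there. The only unstated technicality is that applying Assumption~(\ref{assum:TreatmentBridgeCompleteness}) to $g-\tilde g$ presumes this difference is square integrable, a mild regularity condition implicitly assumed throughout.
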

Theorem~(\ref{theorem:TreatmentBridgeIdentificationATE}) is proved in \citep[Theorem 3.4]{bozkurt2025density}. The existence of the treatment bridge function $\varphi_0$ relies on Assumption~(\ref{assum:OutcomeBridgeCompleteness}), along with mild regularity and integrability conditions. In particular, note that Assumption~(\ref{assum:OutcomeBridgeCompleteness}) underpins the identification of the causal effect via the outcome bridge function. On the other hand, Assumption~(\ref{assum:TreatmentBridgeCompleteness}) ensures the existence of $h_0$ and identification via the treatment bridge $\varphi_0$.  Thus, these completeness assumptions serve distinct yet complementary roles. See Supplementary Material (S.M.) ( \ref{appendix:ExistenceOfBridgeFunctions}) for further discussion on the existence of the bridge functions.\looseness=-1

\subsection{Doubly robust proximal identification}
\label{sec:DoublyRobustIdentification}
With two distinct identification approaches at hand, we now present a doubly robust identification result that combines outcome bridge function and treatment bridge function-based identification.

\begin{theorem}[Doubly robust causal identification]
    The dose-response can be identified with
    \begin{align}
        \theta_{\text{ATE}}^{(\text{DR})}(a; h, \varphi)\mid_{(h = h_0, \varphi = \varphi_0)} = \E[\varphi_0(Z, a) (Y - h_0(W, a)) \mid A = a] + \E[h_0(W, a)],
        \label{eq:DoublyRobustIdentification}
    \end{align}
    where $h_0$ and $\varphi_0$ are the outcome and treatment bridge functions that satisfy Equations (\ref{eq:OutcomeBridgeEquation}) and (\ref{eq:TreatmentBridgeEquation}), respectively.
    Furthermore, $\theta_{\text{ATE}}^{(\text{DR})}(a; h, \varphi)$ 
    admits \emph{double robustness} such that $\theta_{\text{ATE}}^{(\text{DR})}$ identifies the dose-response curve if \textbf{either} $h$ solves Equation (\ref{eq:OutcomeBridgeEquation}) \textbf{or} $\varphi$ solves Equation (\ref{eq:TreatmentBridgeEquation})—but not necessarily both.\looseness=-1
    \label{theorem:DoublyRobustIdentification}
\end{theorem}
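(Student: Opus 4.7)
The plan is to prove the two directions of double robustness separately, using in each case the defining equation of whichever bridge function is assumed to be correct, together with the corresponding identification result (Theorem \ref{theorem:OutcomeBridgeIdentificationATE} or Theorem \ref{theorem:TreatmentBridgeIdentificationATE}). Write
\begin{align*}
\theta_{\text{ATE}}^{(\text{DR})}(a; h, \varphi) = \underbrace{\mathbb{E}[\varphi(Z,a)(Y - h(W,a)) \mid A=a]}_{=: T_1(a; h, \varphi)} + \underbrace{\mathbb{E}[h(W,a)]}_{=: T_2(a; h)},
\end{align*}
so that the goal reduces to showing that (i) if $h = h_0$ then $T_1(a; h_0, \varphi) = 0$ for any admissible $\varphi$, and (ii) if $\varphi = \varphi_0$ then $T_1(a; h, \varphi_0) + T_2(a; h) = \mathbb{E}[Y\varphi_0(Z,a) \mid A=a]$ for any admissible $h$. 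The case when both are correct then follows from either branch.

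For the first case, I would condition on $(Z, A=a)$ inside $T_1$: by the tower property and measurability,
\begin{align*}
T_1(a; h_0, \varphi) = \mathbb{E}\bigl[\varphi(Z,a)\,\mathbb{E}[Y - h_0(W,a) \mid Z, A=a] \,\big|\, A=a\bigr].
\end{align*}
The defining Equation (\ref{eq:OutcomeBridgeEquation}) of $h_0$ states exactly that $\mathbb{E}[Y \mid Z, A] = \mathbb{E}[h_0(W, A) \mid Z, A]$, so the inner conditional expectation vanishes and hence $T_1(a; h_0, \varphi) = 0$. Combined with $T_2(a; h_0) = \mathbb{E}[h_0(W,a)] = \theta_{\text{ATE}}(a)$ from Theorem \ref{theorem:OutcomeBridgeIdentificationATE}, this gives identification in case (i).

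For the second case, I would condition on $(W, A=a)$ inside the cross term and then use Equation (\ref{eq:TreatmentBridgeEquation}):
\begin{align*}
\mathbb{E}[h(W,a)\varphi_0(Z,a) \mid A=a] &= \mathbb{E}\bigl[h(W,a)\,\mathbb{E}[\varphi_0(Z,a) \mid W, A=a] \,\big|\, A=a\bigr] \\
&= \mathbb{E}\!\left[h(W,a)\,\frac{p(W)\,p(a)}{p(W,a)} \,\bigg|\, A=a\right] \\
&= \int h(w,a)\,\frac{p(w)\,p(a)}{p(w,a)}\,\frac{p(w,a)}{p(a)}\,dw = \mathbb{E}[h(W,a)].
\end{align*}
Substituting, $T_1(a; h, \varphi_0) + T_2(a; h) = \mathbb{E}[Y\varphi_0(Z,a) \mid A=a] - \mathbb{E}[h(W,a)] + \mathbb{E}[h(W,a)] = \mathbb{E}[Y\varphi_0(Z,a) \mid A=a]$, which equals $\theta_{\text{ATE}}(a)$ by Theorem \ref{theorem:TreatmentBridgeIdentificationATE}.

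The main obstacle is the density manipulation in case (ii): one has to recognize that the conditional expectation of $\varphi_0$ given $(W, A=a)$ is precisely a density ratio that cancels against the conditional density $p(w \mid a)$ when integrated out, turning $\mathbb{E}[\cdot \mid A=a]$ into a marginal expectation over $W$. Beyond that, both directions reduce to routine applications of the tower property and the two bridge equations, and the double robustness claim follows immediately from the two cases.
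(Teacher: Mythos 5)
Your proposal is correct and follows essentially the same route as the paper's proof: the paper likewise splits into the two cases, kills the term $\E[\varphi(Z,a)(Y-h_0(W,a))\mid A=a]$ via the inner integral $\int (y-h_0(w,a))p(y,w\mid z,a)\,dy\,dw=0$ from Equation~(\ref{eq:OutcomeBridgeEquation}), and in the second case shows $-\E[\varphi_0(Z,a)h(W,a)\mid A=a]+\E[h(W,a)]=0$ by exactly the density-ratio cancellation you describe. Your tower-property phrasing is just a notational variant of the paper's explicit integral manipulations.
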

We prove Theorem (\ref{theorem:DoublyRobustIdentification}) in S.M. (\ref{appendix:DoublyRobustIdentification_Th}) and relate our estimator to semiparametric efficiency theory in S.M. (\ref{sec:EfficientInfluenceFunction_Appendix}).\looseness=-1

\begin{remark}
In S.M. (\ref{sec:EfficientInfluenceFunction_Appendix}), we provide a background in semiparametric efficiency theory and efficient influence functions (EIF) \citep{bickel1993efficient, tsiatis2006semiparametric, kennedy2024semiparametric}, and we derive the EIF of the dose-response in the discrete treatment case. Under the mild Assumption~(\ref{assum:ConditionalExpOperatorsSurjectivity}), Theorem~(\ref{thm:EfficientInfluenceFunction}) shows that the EIF is
$
\psi_{\text{ATE}}(\gO; a)= \varphi_0(Z, A)(Y - h_0(W, A))\frac{1}{p(A)} \mathbf{1}[A = a] + h_0(W, a) - \theta_{A T E}(a),
$
 where $\gO = (Y, Z, W, A)$, and $\mathbf{1}[\cdot]$ is the indicator function defined as $\mathbf{1}[a_i = a_j]=\left\{\begin{array}{cc}  1 & a_i = a_j, \\
   0 & \text{otherwise.} \end{array} \right.$ Since the expectation of the influence function is zero, we note that:\looseness=-1
\begin{align*}
    0 &= \E[\psi_{\text{ATE}}(\gO; a)]
    =\E[\varphi_0(Z, a)(Y - h_0(W, a)) \mid A = a] + \E[h(W, a)] - \theta_{A T E}(a),
\end{align*}
which implies the identification $\theta_{ATE}(a) = \E[\varphi_0(Z, a)(Y - h_0(W, a)) \mid A = a] + \E[h_0(W, a)]$. While this EIF-based identification is derived for discrete treatments, Theorem~(\ref{theorem:DoublyRobustIdentification}) shows the same form holds for continuous treatments and preserves double robustness.
\end{remark}
\begin{remark}
    Our identification result in Theorem~(\ref{theorem:DoublyRobustIdentification}) differs from the doubly robust identification in \citet{semiparametricProximalCausalInference}. They identify the dose-response as $\E\left[q_0(Z, A)(Y - h_0(W, A))\mathbf{1}[A = a]\right] + \E[h_0(W, a)]$, where $q_0$ is the treatment bridge function solving the Fredholm integral equation $\E[q_0(Z, a) \mid W, A = a] = 1 / p(A = a\mid W) \enspace \forall a \in \gA$, and $h_0$ solves Equation (\ref{eq:OutcomeBridgeEquation}). Unlike our formulation based on Equation~(\ref{eq:TreatmentBridgeEquation}), their identification uses the indicator function $\mathbf{1}[A = a]$. \citet{wu2024doubly} approximate this indicator with kernel smoothing in estimation. While both methods use joint expectations over $(Y, Z, W, A)$, ours relies on conditional expectation under the distribution $p(Y, Z, W \mid A = a)$. \postrebuttal{This structural difference enables applicability of our methods to high-dimensional treatments, where the kernel-smoothing-based DR approach fails.}\looseness=-1
\end{remark}
\section{Nonparametric kernel methods}
\label{sec:NonparametricAlgorithm}
We now present a nonparametric estimation algorithm for approximating the function in Equation~(\ref{eq:DoublyRobustIdentification}) using RKHS theory and conditional mean embeddings. The next subsection reviews the necessary RKHS background, followed by our proposed algorithms' derivations.

\subsection{Reproducing kernel Hilbert spaces}
In this section, we briefly review reproducing kernel Hilbert spaces (RKHSs), as concepts such as kernels and mean embeddings are used throughout our algorithm derivations and consistency analyses. For each space $\gF \in \{\gA, \gW, \gZ\}$, we denote the associated positive semi-definite kernel by $k_\gF(\cdot, \cdot) : \gF \times \gF \to \R$, which induces the RKHS $\gH_\gF \subset \{\ell: \gF \to \R\}$. We denote the corresponding canonical feature map by $\phi_\gF(f) = k_\gF(\cdot, f) \in \gH_\gF$.
The inner product and norm in the RKHS $\gH_\gF$ are denoted by $\langle \cdot, \cdot \rangle_{\gH_\gF}$ and $\|\cdot\|_{\gH_{\gF}}$, respectively. The tensor product space is denoted by $\gH_\gF \otimes \gH_\gG$, and for convenience, we use the shorthand notation $\gH_{\gF\gG}$. This space is isometrically isomorphic to the Hilbert space of Hilbert–Schmidt operators from $\gH_\gG$ to $\gH_\gF$ \citep{aubin2011applied}, denoted by $S_2(\gH_\gG, \gH_\gF)$. Analogously, we denote the tensor product feature map $\phi_\gF(f) \otimes \phi_\gG(g) \in \gH_\gF \otimes \gH_\gG$ by $\phi_{\gF\gG}(f, g)$. Throughout the paper, we impose the following assumption on the domains and the corresponding kernels:\looseness=-1
\begin{assumption} We assume that (i) each $\gF \in \{\gA, \gW, \gZ\}$ is a Polish space; (ii) $k_\gF(f, .)$, is continuous and bounded by $\kappa$, i.e., $\sup_{f \in \gF} \|k_\gF(f, .)\|_{\gH_\gF} \le \kappa$, for almost every $f \in \gF $.
\label{assum:KernelAssumptions_main}
\end{assumption}
For a given distribution $p(f)$ on $\gF$ and a kernel $k_\gF$ such that $\E[k_\gF(F, F)] < \infty$, the kernel mean embedding of $p(F)$ is defined as $\mu_F = \int_\gF k_\gF(\cdot, f) p(f) df \in \gH_\gF$ \citep{HilbertSpaceEmbeddingforDistributions,gretton2013introduction}. Furthermore, for a given conditional distribution $p(F|g)$ for each $g \in \gG$, the conditional mean embedding (CME) of $p(F|g)$ is defined as the operator $\mu_{F|G}(g) = \int_\gF k_\gF(\cdot, f) p(f | g) df \in \gH_\gF$ \citep{HilbertSpaceEmbeddingforConditionalDistributions, grunewalder2012conditional,park2020measure, klebanov2020rigorous,li2022optimal}.
\subsection{Doubly robust algorithm for dose-response curve estimation}
\label{sec:DoublyRobustAlgorithmMain}
We note that the doubly robust function $\theta_{\text{ATE}}^{(\text{DR})}$ consists of three components:
\begin{align*}
    \theta_{\text{ATE}}^{(\text{DR})}(a) = \E[Y\varphi_0(Z, a)\mid A = a] -\E[ \varphi_0(Z, a) h_0(W, a) \mid A = a] + \E[h_0(W, a)]. 
\end{align*}
Let $\gD = \{y_i, z_i, w_i, a_i\}_{i=1}^t$ be i.i.d. samples from the distribution $p(Y, Z, W, A)$ that are used to estimate $\theta_{\text{ATE}}^{(DR)}$. We develop two doubly robust algorithms for estimating the dose-response curve: (i) combining the two-stage regression method KPV from \citet{Mastouri2021ProximalCL} with the density ratio-free treatment bridge approach KAP from \citet{bozkurt2025density}, and (ii) combining the maximum moment restriction method PMMR from \citet{Mastouri2021ProximalCL} with the same KAP strategy. Following \citet{Mastouri2021ProximalCL} and \citet{bozkurt2025density}, we assume $h_0 \in \gH_{\gW} \otimes \gH_{\gA}$ and $\varphi_0 \in \gH_{\gZ} \otimes \gH_{\gA}$. 

Two RKHS-based methods for estimating $h_0$ are Kernel Proxy Variable (KPV) and Proxy Maximum Moment Restriction (PMMR), both proposed by \citet{Mastouri2021ProximalCL}. KPV estimates $\E[h(W,a)]$ in three steps: (i) Given samples $\{\bar{w}_i, \bar{z}_i, \bar{a}_i\}_{i=1}^{n_h} \subset \gD$, it estimates $\mu_{W|Z,A}(z,a) = \E[\phi_\gW(W) \mid Z=z, A=a]$ via regularized least squares.
(ii) With second-stage data $\{\tilde{y}_i, \tilde{z}_i, \tilde{a}_i\}_{i=1}^{m_h} \subset \gD$, it optimizes the sample-based counterpart of the loss $\gL_{\text{KPV}}(h) = \E\left[ \left(Y - \E[h(W, A) \mid Z, A]\right)^2\right] + \lambda_{h, 2} \|h\|_{\gH_{\gW\gA}}$ using the representer theorem \citep{GeneralizedRepresenterTheorem} and the first-stage estimate. (iii) The dose-response at the treatment $a$ is then estimated via the sample mean $\frac{1}{t_h} \sum_{i=1}^{t_h} \hat{h}(\Dot{w}_i, a)$ where $\{\Dot{w}_i\}_{i = 1}^{t_h} \subset \gD$ can be considered as the third-stage samples, and $\hat{h}$ denotes the bridge function estimate. Here $n_h$, $m_h$, and $t_h$ denotes the number of samples in each stage. One may either split the $t$ training samples across stages or reuse them. In the original KPV implementation, data is split for stages one and two, and the entire dataset is used in stage three, i.e., $t_h = t$ \citep{Mastouri2021ProximalCL}. \looseness=-1

PMMR instead uses a one-step estimation via the empirical version of the loss $\gL_{\text{PMMR}}(h) = \sup_{\|g\|\le 1} \E[(Y - h(W,A))g(Z,A)]^2 + \lambda_{\text{MMR}}\|h\|_{\gH_{\gW\gA}}$ with $g \in \gH_{\gZ\gA}$, which can be solved in closed form using the representer theorem. The entire dataset of $t$ samples is used directly for training. Dose-response estimation mirrors the same procedure as in KPV. Additional details and pseudo-code are provided in S.M. (\ref{Appendix:KPVandPMMR_Algos}) and Algorithms(\ref{algo:KPVAlgorithm}) and (\ref{algo:PMMRAlgorithm}).

 \citet{bozkurt2025density} propose a two-stage regression algorithm in RKHSs, called the \textit{Kernel Alternative Proxy} (KAP), to estimate the treatment bridge function $\varphi_0$, followed by a third-stage regression to approximate the dose-response curve via $\mathbb{E}[Y\hat{\varphi}(Z, a)\mid A = a]$, where $\hat{\varphi}$ approximates $\varphi_0$. A key advantage of this approach is that it avoids explicit density ratio estimation, in contrast to the methods proposed by \citet{wu2024doubly} and \citet{semiparametricProximalCausalInference}. Specifically, KAP simplifies the regularized least squares objective as:
\begin{align}
    \gL_{\text{KAP}}(\varphi) &= \E\left[\left(\frac{p(W)p(A)}{p(W, A)} - \E[\varphi(Z, A) \mid W, A]\right)^2\right] + \lambda_{2,\varphi} \|\varphi\|^2_{\gH_{\gZ\gA}}\nonumber\\
    &= \E\left[\E[\varphi(Z, A) \mid W, A]^2\right]+ \E\left[\frac{p(W)p(A)}{p(W, A)}\E[\varphi(Z, A) \mid W, A]\right] + \lambda_{2,\varphi} \|\varphi\|^2_{\gH_{\gZ\gA}} + \text{const.}\nonumber\\
 &=\E\left[\E[\varphi(Z, A) \mid W, A]^2\right]+ \E_W\E_A\left[\E[\varphi(Z, A) \mid W, A]\right] + \lambda_{2,\varphi} \|\varphi\|^2_{\gH_{\gZ\gA}} + \text{const.}\label{eq:KernelAlternativeProxySimplifiedCost_MainText}
\end{align}
Here, $\mathbb{E}_W\mathbb{E}_A[\cdot]$ denotes the decoupled expectation under $p(W)p(A)$, “const.” refers to terms independent of $\varphi$, and $\lambda_{2,\varphi}$ is a regularization parameter. Notably, the objective in Equation~(\ref{eq:KernelAlternativeProxySimplifiedCost_MainText}) involves no density ratio terms. The KAP method proceeds in three stages:
(i) Given i.i.d. samples $\{\bar{w}_i, \bar{z}_i, \bar{a}_i\}_{i = 1}^{n_{\varphi}} \subset \gD$, the first-stage regression estimates the conditional mean embedding $\mu_{W | Z, A}(z, a) = \E[\phi_\gW(W) \mid Z = z, A = a]$ via regularized least squares. (ii) Using this estimate to approximate the conditional mean $\E[\varphi(Z, A) \mid W, A] = \langle \varphi, \mu_{W | Z, A} (z, a) \otimes \phi_\gA(a)\rangle_{\gH_{\gZ\gA}}$, the second-stage regression minimizes the empirical counterpart of Equation (\ref{eq:KernelAlternativeProxySimplifiedCost_MainText}) using second-stage samples $\{\Tilde{w}_i, \Tilde{a}_i\}_{i = 1}^{m_{\varphi}} \subset \gD$. A closed-form solution is derived via the representer theorem \citep{GeneralizedRepresenterTheorem}. (iii) With the estimate $\hat{\varphi}$, the dose-response is estimated via kernel ridge regression to approximate $\mathbb{E}[Y \hat{\varphi}(Z, a) \mid A = a]$ using third-stage samples $\{\dot{y}_i, \dot{z}_i, \dot{a}_i\}_{i=1}^{t_{\varphi}} \subset \gD$. Here, $n_{\varphi}$, $m_{\varphi}$, and $t_{\varphi}$ denote the sample sizes for the first, second, and third stages, respectively, which may be obtained by splitting or reusing the original $t$ training samples. In the published implementation, data is split for the first and second stages, while all $t$ samples are used in the third stage, i.e., $t_{\varphi} = t$. Further details are provided in S.M. (\ref{sec:KernelAlternativeProxySummaryAppendix}), and the algorithm is outlined in Algorithm~(\ref{algo:KernelAlternativeProxyAlgorithm}).

As a final step to estimate $\theta_{\text{ATE}}^{(\text{DR})}(a)$, we develop a procedure for estimating $\mathbb{E}[\varphi_0(Z, a) h_0(W, a) \mid A = a]$, completing the doubly robust algorithm when combined with estimators for $\mathbb{E}[Y\varphi_0(Z, a)\mid A = a]$ and $\mathbb{E}[h_0(W, a)]$. Using the properties of the tensor product, we notice that
\begin{align}
    &\E[ \varphi_0(Z, a) h_0(W, a) \mid A = a]  \approx \E[ \hat{\varphi}(Z, a) \hat{h}(W, a) \mid A = a] \nonumber\\
    &=\E\left[ \Big\langle \hat{\varphi}, \phi_\gZ(Z) \otimes \phi_\gA(a) \Big\rangle_{\gH_\gZ \otimes \gH_\gA} \left\langle \hat{h} , \phi_\gW(W) \otimes \phi_\gA(a) \right\rangle_{\gH_\gW \otimes \gH_\gA} \mid A = a\right] \nonumber\\
    &= \E\left[\left\langle \hat{\varphi} \otimes \hat{h}, \left(\phi_\gZ(Z) \otimes \phi_\gA(a)\right) \otimes \left(\phi_\gW(W) \otimes \phi_\gA(a)\right)\right\rangle_{\gH_\gZ \otimes \gH_\gA \otimes \gH_\gW \otimes \gH_\gA} \mid A = a\right]. \label{eq:DoublyRobustNeedToApproximate}
\end{align}
As derived in S.M.  (\ref{sec:AlgorithmDerivationsAppendix}), the expectation in Equation (\ref{eq:DoublyRobustNeedToApproximate}) can be approximated using a combination of kernel ridge regression and the properties of the inner products in RKHSs. In essence, the approximation procedure requires learning the conditional mean embedding $\mu_{Z, W \mid A} (a) = \E\left[ \phi_\gZ(Z) \otimes \phi_\gW(W) \mid A = a\right]$. We approximate this term with vector-valued kernel ridge regression. In particular, under the regularity condition $\E[g(Z, W) \mid A = \cdot] \in \gH_\gA$ for all $g \in \gH_{\gZ\gW}$, there exists a Hilbert-Schmidt operator $C_{Z, W \mid A} \in \gS_2(\gH_\gA, \gH_{\gZ\gW})$ such that $\E\left[ \phi_\gZ(Z) \otimes \phi_\gW(W) \mid A = a\right] = C_{Z, W \mid A} \phi_\gA(a)$. This operator is learned by minimizing the regularized least-squares function:
\begin{align*}
    \hat{\gL}^c_{DR}(C) = \frac{1}{t} \sum_{i = 1}^t \|\phi_\gZ(z_i) \otimes \phi_\gW(w_i) - C \phi_\gA(a_i)\|_{\gH_{\gZ\gW}}^2 + \lambda_{\text{DR}} \|C\|^2_{\gS_2(\gH_\gA, \gH_{\gZ\gW})},
\end{align*}
where $\{z_i, w_i, a_i\}_{i = 1}^t \subset \gD$ denote observations from training set, $t$ is the number of given training samples, and $\lambda_{\text{DR}}$ is the regularization parameter for the vector-valued kernel ridge regression. The minimizer is given by $\hat{\mu}_{Z, W \mid A}(a) = \hat{C}_{Z, W \mid A} \phi_\gA(a) = \sum_i \xi_i(a) \phi_\gZ(z_i) \otimes \phi_\gW(w_i)$
where $\xi_i(a) = \left[\left(\mK_{AA} + t \lambda_{\text{DR}} \mI \right)^{-1} \mK_{Aa}\right]_i$, $\mK_{AA}$ is the kernel matrix over $\{a_i\}_{i = 1}^t$, $\mK_{A a}$ is the kernel vector between training points $\{a_i\}_{i = 1}^t$ and the target treatment $a$. Using this learned conditional mean embedding, we show that the approximation of Equation (\ref{eq:DoublyRobustNeedToApproximate}) results in the following closed-form:
\begin{align}
    {\E}[ \hat{\varphi}(Z, a) \hat{h}(W, a) \mid A = a] \approx \sum_{i = 1}^t \xi_i(a) \hat{\varphi}(z_i, a) \hat{h}(w_i, a).
    \label{eq:DoublyRobustEmpiricalEstimate}
\end{align}
 We summarize the full procedure in the pseudo-code presented in Algorithm (\ref{algo:DoublyRobustKPV}). We name our methods \textbf{Doubly Robust Kernel Proxy Variable} (DRKPV) and \textbf{Doubly Robust Proxy Maximum Moment Restriction} (DRPMMR). Specifically, DRKPV uses the KPV algorithm (summarized in Algorithm \ref{algo:KPVAlgorithm}), while DRPMMR employs the PMMR method (summarized in Algorithm \ref{algo:PMMRAlgorithm}).

\begin{algorithm}[H]
{\footnotesize
\textbf{Input}: Training samples $\{y_i, w_i, z_i, a_i\}_{i=1}^{t}$, \\
\textbf{Parameters:} Regularization parameter $\lambda_{\text{DR}}$ and the parameters of Algorithms (\ref{algo:KPVAlgorithm} or \ref{algo:PMMRAlgorithm}) and (\ref{algo:KernelAlternativeProxyAlgorithm}).\\
\textbf{Output}: Doubly robust dose-response estimation $\hat{\theta}_{\text{ATE}}^{(\text{DR})}(a)$ for all $a \in \gA$. \looseness=-1
\begin{algorithmic}[1]
\STATE Collect outcome bridge and dose-response curve estimates, $\hat{h}$  
 and $\hat{\theta}_1(\cdot)$, with either Algorithm (\ref{algo:KPVAlgorithm}, KPV) or (\ref{algo:PMMRAlgorithm}, PMMR).
\STATE Collect treatment bridge and the dose-response curve estimates, $\hat{\varphi}$ and $\hat{\theta}_2(\cdot)$, with Algorithm (\ref{algo:KernelAlternativeProxyAlgorithm}).
\STATE Let $\hat{\theta}_3(\cdot)$ be given by $\hat{\theta}_3(\cdot) = \sum_{i = 1}^t \xi_i(\cdot) \hat{\varphi}(z_i, \cdot) \hat{h}(w_i, \cdot),$ 
where $\xi_i(\cdot) = \left[\left(\mK_{AA} + t \lambda_{\text{DR}} \mI \right)^{-1} \mK_{A(\cdot)}\right]_i$.
\STATE For each $a \in \gA$, return the doubly robust dose-response estimation as $\hat{\theta}_{\text{ATE}}^{(\text{DR})}(a) = \hat{\theta}_1(a) + \hat{\theta}_2(a) - \hat{\theta}_3(a).$
\end{algorithmic}}
\caption{DRKPV / DRPMMR Algorithms}
\label{algo:DoublyRobustKPV}
\end{algorithm}

\section{Consistency results}
\label{sec:Consistency_main_paper}

{In the S.M. (\ref{appendix:ConsistencyResults}), we present non-asymptotic uniform consistency guarantees for our proposed doubly robust dose-response curve algorithms, DRKPV and DRPMMR. Theorems (\ref{thm:DRKPV_Consistency_main}) and (\ref{thm:DRPMMR_Consistency_main}) below are a consequence of our non-asymptotic uniform consistency and demonstrate that our estimators converge to the true causal function.
\begin{theorem}
    Suppose Assumptions~(\ref{assum:KernelAssumptions_main}), (\ref{assum:Stage1ConsistencyKernelAssumptions}), (\ref{asst:well_specifiedness}), 
    (\ref{asst:src_all_stages}), (\ref{asst:evd_all_stages}), (\ref{asst:well_specifiedness_outcomeBridge}),(\ref{asst:src_all_stages_KPV}), and (\ref{asst:evd_all_stages_KPV}) hold. Then, for the DRKPV algorithm with given training samples $\{y_i, w_i, z_i, a_i\}_{i=1}^{t}$, we obtain that $\sup_{a \in \gA} |\hat{\theta}_{\text{ATE}}^{\text{(DR)}}(a) - \theta_{\text{ATE}}(a)| \rightarrow 0$
    with $t \rightarrow \infty$ almost surely by reducing the regularizer $\lambda_{\text{DR}}$ and the regularizers of KPV and KAP algorithms at appropriate rates.\label{thm:DRKPV_Consistency_main}
\end{theorem}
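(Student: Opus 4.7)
The plan is to exploit the additive structure shared by both the true dose-response (via Theorem~(\ref{theorem:DoublyRobustIdentification})) and our estimator (Algorithm~(\ref{algo:DoublyRobustKPV})). Setting $\theta_1(a)=\E[h_0(W,a)]$, $\theta_2(a)=\E[Y\varphi_0(Z,a)\mid A=a]$, and $\theta_3(a)=\E[\varphi_0(Z,a)h_0(W,a)\mid A=a]$, we have $\theta_{\text{ATE}}(a)=\theta_1(a)+\theta_2(a)-\theta_3(a)$ and $\hat{\theta}_{\text{ATE}}^{(\text{DR})}(a)=\hat{\theta}_1(a)+\hat{\theta}_2(a)-\hat{\theta}_3(a)$. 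The triangle inequality yields $\sup_a |\hat{\theta}_{\text{ATE}}^{(\text{DR})}(a)-\theta_{\text{ATE}}(a)|\le \sum_{j=1}^3 \sup_a|\hat{\theta}_j(a)-\theta_j(a)|$, so it suffices to control each of the three summands almost surely.

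For $\sup_a|\hat\theta_1(a) - \theta_1(a)|$, I would invoke the uniform-in-$a$ consistency of the KPV dose-response estimator: under Assumptions~(\ref{asst:well_specifiedness_outcomeBridge})--(\ref{asst:evd_all_stages_KPV}), the usual two-stage RKHS analysis (stage-one CME estimation plus stage-two penalised least squares for $h$) gives an RKHS-norm rate for $\hat h$ which, combined with $\|\phi_\gW(w)\otimes\phi_\gA(a)\|\le\kappa^2$ from Assumption~(\ref{assum:KernelAssumptions_main}), yields a pointwise bound uniform in $a$ once one averages over $W$ and controls the empirical mean via a standard concentration argument. Symmetrically, $\sup_a|\hat\theta_2(a) - \theta_2(a)|$ follows from uniform-in-$a$ consistency of the KAP three-stage pipeline under Assumptions~(\ref{asst:well_specifiedness})--(\ref{asst:evd_all_stages}).

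The only genuinely new ingredient is the third summand. Using Equation~(\ref{eq:DoublyRobustEmpiricalEstimate}) together with the reproducing property, $\hat{\theta}_3(a)$ admits the inner-product form $\hat{\theta}_3(a) = \langle \hat{\varphi}\otimes\hat{h},\, \hat{\mu}_{Z,W\mid A}(a)\otimes\phi_\gA(a)\otimes\phi_\gA(a)\rangle$, and similarly for $\theta_3(a)$ with $(\varphi_0, h_0, \mu_{Z,W\mid A})$ in place of their estimators. Adding and subtracting the mixed quantity $\langle \varphi_0\otimes h_0,\, \hat{\mu}_{Z,W\mid A}(a)\otimes\phi_\gA(a)\otimes\phi_\gA(a)\rangle$, then applying Cauchy-Schwarz, the kernel bound $\|\phi_\gA(a)\|\le\kappa$, and the tensor factorization $\hat\varphi\otimes\hat h - \varphi_0\otimes h_0 = (\hat\varphi - \varphi_0)\otimes \hat h + \varphi_0\otimes(\hat h - h_0)$, reduces the supremum to three pieces: (i) uniform Hilbert--Schmidt convergence of $\hat C_{Z,W\mid A}$ yielding $\sup_a\|\hat\mu_{Z,W\mid A}(a)-\mu_{Z,W\mid A}(a)\|\to 0$ under a source condition and appropriate decay of $\lambda_{\text{DR}}$; (ii) $\|\hat\varphi-\varphi_0\|_{\gH_{\gZ\gA}}\to 0$ from KAP's second-stage analysis; and (iii) $\|\hat h - h_0\|_{\gH_{\gW\gA}}\to 0$ from KPV's second-stage analysis. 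The remaining factors ($\|\hat h\|$, $\|\varphi_0\|$) are bounded on the high-probability event that KPV/KAP consistency holds.

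The main obstacle is rate coordination. DRKPV involves four distinct regularizers (stage-one and stage-two of KPV, stage-one and stage-two of KAP, plus $\lambda_{\text{DR}}$), each tied to its own source- and effective-dimension exponents, and all must jointly decay so that every error term in the preceding bounds vanishes simultaneously. The most delicate technical point is that step~(iii) requires RKHS-norm rather than merely $L^2$-norm consistency of $\hat\varphi$ and $\hat h$, which demands the stronger source-condition exponents stipulated in Assumptions~(\ref{asst:src_all_stages}) and (\ref{asst:src_all_stages_KPV}) and drives the admissible range of $\lambda$-decay rates. Once compatible rates are fixed, a Borel-Cantelli argument lifts the resulting in-probability rates to almost-sure convergence, yielding the stated uniform consistency.
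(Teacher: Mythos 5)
Your proposal is correct as a proof of the qualitative statement, but it takes a genuinely different decomposition from the paper's, and the difference matters for the quantitative results that this theorem summarizes. You split the error term-by-term, $\sup_a|\hat{\theta}^{(\text{DR})}_{\text{ATE}}(a)-\theta_{\text{ATE}}(a)|\le\sum_{j=1}^3\sup_a|\hat{\theta}_j(a)-\theta_j(a)|$, which yields an \emph{additive} bound in the bridge-function errors, roughly $\|\hat{\varphi}-\varphi_0\|+\|\hat{h}-h_0\|$ plus the CME estimation errors. The paper instead introduces the intermediate quantity $\bar{\theta}_{\text{ATE}}(a)=\E[\hat{\varphi}(Z,a)(Y-\hat{h}(W,a))\mid A=a]+\E[\hat{h}(W,a)]$ (the population functional evaluated at the \emph{estimated} bridges) and splits into $|\theta_{\text{ATE}}-\bar{\theta}_{\text{ATE}}|+|\bar{\theta}_{\text{ATE}}-\hat{\theta}_{\text{ATE}}|$. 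Lemma~(\ref{lemma:Intermediate_DR_Difference1}) then shows the first piece collapses exactly to $\E[(\varphi_0(Z,a)-\hat{\varphi}(Z,a))(h_0(W,a)-\hat{h}(W,a))\mid A=a]$ — the doubly robust cancellation — so the bridge errors enter only through the \emph{product} $\|\hat{\varphi}-\varphi_0\|\,\|\hat{h}-h_0\|$ (Lemma~(\ref{lemma:Intermediate_DR_Bound1})), while the second piece is pure mean-embedding estimation error (Lemma~(\ref{lemma:Intermediate_DR_Bound2})). This is what produces the product rates $m_h^{-r_h}m_\varphi^{-r_\varphi}$ in Theorem~(\ref{thm:Appendix_KPV_Consistency_Final}); your additive bound would give the slower $m_h^{-r_h}+m_\varphi^{-r_\varphi}$. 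For the consistency claim alone both routes work, since under the stated assumptions every summand vanishes; your remaining observations (RKHS-norm rather than $L^2$ convergence of the bridges being essential, uniformity in $a$ via $\|\phi_\gA(a)\|\le\kappa$ and Hilbert--Schmidt convergence of $\hat{C}_{Z,W\mid A}$, and Borel--Cantelli to pass from high-probability to almost-sure convergence) all match the paper's treatment. One minor point you gloss over: the estimators converge to the \emph{minimum-RKHS-norm} solutions $\bar{\varphi}_0$ and $\bar{h}_0$ (Definitions~(\ref{def:min_norm_bridge}) and~(\ref{def:min_norm_bridge_outcome})), so the population decomposition must be written with these particular bridge functions, which is legitimate since they satisfy Equations~(\ref{eq:OutcomeBridgeEquation}) and~(\ref{eq:TreatmentBridgeEquation}).
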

\begin{theorem}
    Suppose Assumptions~(\ref{assum:KernelAssumptions_main}), (\ref{assum:Stage1ConsistencyKernelAssumptions}), (\ref{asst:well_specifiedness}), 
    (\ref{asst:src_all_stages}), (\ref{asst:evd_all_stages}),
    (\ref{asst:well_specifiedness_outcomeBridge}-2),(\ref{assum:PMMR_Y_is_bounded_Assumption}), and (\ref{assum:PMMR_gamma_Hilbert_Space_Assumption}) hold. Then, for DRPMMR algorithm with given training samples $\{y_i, w_i, z_i, a_i\}_{i=1}^{t}$, we obtain that $\sup_{a \in \gA} |\hat{\theta}_{\text{ATE}}^{\text{(DR)}}(a) - \theta_{\text{ATE}}(a)| \rightarrow 0$
    with $t \rightarrow \infty$ almost surely by reducing the regularizer $\lambda_{\text{DR}}$ and the regularizers of PMMR and KAP algorithms at appropriate rates.\label{thm:DRPMMR_Consistency_main}
\end{theorem}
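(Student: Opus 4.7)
The plan is to decompose the uniform error into three components corresponding to the three summands of $\hat{\theta}_{\text{ATE}}^{(\text{DR})}(a)$ produced by Algorithm~(\ref{algo:DoublyRobustKPV}), namely $\hat{\theta}_1,\hat{\theta}_2,\hat{\theta}_3$, which estimate the population quantities $\theta_1(a):=\E[h_0(W,a)]$, $\theta_2(a):=\E[Y\varphi_0(Z,a)\mid A=a]$, and $\theta_3(a):=\E[\varphi_0(Z,a)h_0(W,a)\mid A=a]$. Theorem~(\ref{theorem:DoublyRobustIdentification}) gives $\theta_{\text{ATE}}(a)=\theta_1(a)+\theta_2(a)-\theta_3(a)$, so the triangle inequality reduces the claim to establishing $\sup_{a\in\gA}|\hat{\theta}_j(a)-\theta_j(a)|\to 0$ almost surely for $j=1,2,3$, each under a separate subset of the stated assumptions and each at a rate controlled by a distinct regularizer.

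For $j=1$ I would invoke the uniform-in-$a$ consistency of the PMMR-based dose-response estimator (Algorithm~(\ref{algo:PMMRAlgorithm})) under Assumptions~(\ref{asst:well_specifiedness_outcomeBridge}-2), (\ref{assum:PMMR_Y_is_bounded_Assumption}), (\ref{assum:PMMR_gamma_Hilbert_Space_Assumption}), which follows from the closed-form MMR solution via the representer theorem, concentration of the empirical kernel operators, and the bounded-kernel Assumption~(\ref{assum:KernelAssumptions_main}) that converts Cauchy--Schwarz bounds on $|\langle \hat{h}-h_0,\mu_W\otimes\phi_\gA(a)\rangle|$ into a sup-norm statement. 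For $j=2$ I would apply the uniform consistency of the three-stage KAP estimator from \citep{bozkurt2025density}, available under Assumptions~(\ref{assum:Stage1ConsistencyKernelAssumptions}), (\ref{asst:well_specifiedness}), (\ref{asst:src_all_stages}), (\ref{asst:evd_all_stages}), with rates on the three KAP regularizers chosen as in that paper.

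The novel step is $j=3$. Using tensor-product reproducing identities, I rewrite
\begin{align*}
\theta_3(a) &= \bigl\langle \varphi_0\otimes h_0,\; \mu_{Z,W\mid A}(a)\otimes\phi_\gA(a)\otimes\phi_\gA(a)\bigr\rangle_{\gH_{\gZ\gA\gW\gA}}, \\
\hat{\theta}_3(a) &= \bigl\langle \hat{\varphi}\otimes\hat{h},\; \hat{\mu}_{Z,W\mid A}(a)\otimes\phi_\gA(a)\otimes\phi_\gA(a)\bigr\rangle_{\gH_{\gZ\gA\gW\gA}},
\end{align*}
and add and subtract the mixed term $\bigl\langle \varphi_0\otimes h_0,\hat{\mu}_{Z,W\mid A}(a)\otimes\phi_\gA(a)^{\otimes 2}\bigr\rangle$. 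Cauchy--Schwarz, the bound $\|\phi_\gA(a)\|\le\kappa$ from Assumption~(\ref{assum:KernelAssumptions_main}), and the factorization $\|\hat{\varphi}\otimes\hat{h}-\varphi_0\otimes h_0\|\le\|\hat{\varphi}-\varphi_0\|\|\hat{h}\|+\|\varphi_0\|\|\hat{h}-h_0\|$ reduce the task to (i) RKHS-norm convergence of $\hat{\varphi}\to\varphi_0$ and $\hat{h}\to h_0$, which are already by-products of the $j=1,2$ analyses, and (ii) the uniform CME bound $\sup_{a\in\gA}\|\hat{\mu}_{Z,W\mid A}(a)-\mu_{Z,W\mid A}(a)\|_{\gH_{\gZ\gW}}\to 0$. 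Property (ii) is a standard vector-valued kernel ridge regression statement for the operator $C_{Z,W\mid A}\in S_2(\gH_\gA,\gH_{\gZ\gW})$: under the relevant sub-cases of Assumptions~(\ref{asst:src_all_stages})--(\ref{asst:evd_all_stages}), $\hat{C}_{Z,W\mid A}\to C_{Z,W\mid A}$ in Hilbert--Schmidt norm at a rate governed by $\lambda_{\text{DR}}$, and the uniform-in-$a$ transfer uses $\|(\hat{C}-C)\phi_\gA(a)\|\le\kappa\|\hat{C}-C\|_{\mathrm{HS}}$.

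The main obstacle is coordinating the decaying rates of the several regularizers (PMMR's $\lambda_{\mathrm{MMR}}$, the two KAP regularizers, and $\lambda_{\text{DR}}$) so that all three error terms vanish simultaneously, and then upgrading the in-probability bounds to almost-sure convergence via Borel--Cantelli. In particular, the product structure in step (i) introduces a multiplicative bias--variance trade-off in which the slowest factor dictates the admissible decay rate for $\lambda_{\text{DR}}$; choosing the PMMR and KAP regularizers as in their single-estimator proofs and setting $\lambda_{\text{DR}}=t^{-\alpha}$ for a sufficiently small $\alpha>0$ balances the errors. A secondary bookkeeping issue is showing that $\|\hat{\varphi}\|_{\gH_{\gZ\gA}}$ and $\|\hat{h}\|_{\gH_{\gW\gA}}$ remain bounded (needed in the factorization), which follows from regularization and the well-specifiedness Assumptions~(\ref{asst:well_specifiedness}) and (\ref{asst:well_specifiedness_outcomeBridge}-2).
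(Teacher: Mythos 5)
Your proposal is correct as a route to consistency, but it is genuinely different from the paper's argument. The paper does not split $\hat{\theta}_{\text{ATE}}^{(\text{DR})}$ into its three summands; instead it introduces the intermediate quantity $\bar{\theta}_{ATE}(a) = \E[\hat{\varphi}(Z,a)(Y-\hat{h}(W,a))\mid A=a] + \E[\hat{h}(W,a)]$ (estimated bridges, population expectations) and proves the exact identity $\theta_{ATE}(a)-\bar{\theta}_{ATE}(a) = \E[(\varphi_0(Z,a)-\hat{\varphi}(Z,a))(h_0(W,a)-\hat{h}(W,a))\mid A=a]$ (Lemma~\ref{lemma:Intermediate_DR_Difference1}), so the bridge-function errors enter only as the \emph{product} $\|\varphi_0-\hat{\varphi}\|\,\|h_0-\hat{h}\|$, while $|\bar{\theta}_{ATE}(a)-\hat{\theta}_{ATE}(a)|$ is controlled purely by the CME and mean-embedding estimation errors (Lemma~\ref{lemma:Intermediate_DR_Bound2}). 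In your term-by-term decomposition the errors $\|\hat{h}-h_0\|$ and $\|\hat{\varphi}-\varphi_0\|$ instead enter \emph{additively} through $\hat{\theta}_1$, $\hat{\theta}_2$ and $\hat{\theta}_3$ separately; this still gives $\sup_a|\hat{\theta}^{(\text{DR})}_{\text{ATE}}(a)-\theta_{\text{ATE}}(a)|\to 0$, so it proves the stated theorem, but it forfeits the product-of-errors rate that is the quantitative payoff of the doubly robust construction and that appears in the paper's finite-sample bounds (Theorem~\ref{thm:Appendix_PMMR_Consistency_Final}). Two bookkeeping corrections: the convergence targets are the minimum-RKHS-norm solutions $\bar{\varphi}_0$ and $\check{h}_0$ of Definitions~\ref{def:min_norm_bridge} and \ref{def:min_norm_bridge_outcome_PMMR} (which still satisfy the identification equations, so your decomposition survives with these substituted for $\varphi_0,h_0$); and the uniform control of $\hat{\mu}_{Z,W\mid A}$ is not a sub-case of Assumptions~(\ref{asst:src_all_stages})--(\ref{asst:evd_all_stages}) but requires the dedicated well-specifiedness and source conditions for $C_{Z,W\mid A}$, Assumptions~(\ref{assum:DR_well_specifiedness}) and (\ref{assum:src_doubly_robust}), which the paper invokes via Theorem~\ref{th:CME_rate_JMLR_for_DR}.
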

The precise high probability finite-sample bounds on the error in the supremum norm and the corresponding optimal regularization parameters are given in Theorems~(\ref{thm:Appendix_KPV_Consistency_Final}) and~(\ref{thm:Appendix_PMMR_Consistency_Final}).
Note that both KPV and KAP involve multiple regression stages, each potentially using different sample sizes (i.e., $n_h$, $m_h$, $t_h$, $n_\varphi$, $m_\varphi$, $t_\varphi$). These samples are derived from the original training set $\{y_i, w_i, z_i, a_i\}_{i=1}^{t}$, either through data splitting or reuse across stages. In particular, KPV uses $n_h$, and $m_h$, samples from the training set for its first- and second-stages, respectively, and $t_h$ samples from training set to estimate the dose response. Similarly, KAP uses $n_\varphi$, $m_\varphi$, and $t_\varphi$ samples from the training set for its first-, second-, and third-stages, respectively. We detail the data-splitting procedure in S.M. (\ref{appendix:Data_splitting}). Consequently, the convergence rate in Theorem~(\ref{thm:Appendix_KPV_Consistency_Final}) depends on the sizes of these stage-specific subsets as well as the total training size $t$. In contrast, PMMR uses the full dataset of size $t$, and the convergence rate for DRPMMR Theorem~(\ref{thm:Appendix_PMMR_Consistency_Final}) depends on $t$ and the sample sizes used in the KAP stages. A comprehensive summary of the consistency results for KPV, PMMR, and KAP appears in S.M. (\ref{appendix:ConsistencyResults}), and complete proofs for methods' convergence are provided in S.M. (\ref{appendix:DoublyRobustConsistency}).
}

\postrebuttal{
\begin{remark}
     In Theorems (\ref{thm:DRKPV_Consistency_main}) and (\ref{thm:DRPMMR_Consistency_main}), we establish uniform consistency of DRKPV and DRPMMR, which ensures control of estimation error across the entire treatment domain. These results hold under smoothness and effective RKHS dimension assumptions. 
     \begin{itemize}
         \item We note that while kernel ridge regression provides optimal rates in Sobolev–Matérn classes \citep{li2024towards}, the rates of convergence are slower for larger input dimensions \citep{li2024towards, pmlr-v139-donhauser21a}. Our results therefore apply in high-dimensional settings under dimension-dependent smoothness assumptions, but addressing increasing-dimension regimes is left for future work. For further details, see Remark (\ref{remark:CurseOfDimensionality}).\looseness=-1
         \item Furthermore, while our estimators are consistent and derived from the EIF, they are not classical one-step estimators \citep{newey1994large, pfanzagl1985contributions} and do not automatically achieve local efficiency or asymptotic normality \citep{kennedy2024semiparametric}. This remains an interesting topic for future work and is discussed further in Remark (\ref{remark:AsymptoticNormality}).
     \end{itemize}
\end{remark}
}
\vspace{-3mm}
\section{Numerical experiments}
\label{sec:NumericalExperiments_main}
In this section, we assess the performance of our proposed estimators for dose-response curve estimation using both synthetic and real-world datasets. We benchmark our methods against several recent state-of-the-art PCL algorithms, including Proximal Kernel Doubly Robust (PKDR) \citep{wu2024doubly}, Kernel Negative Control (KNC) \citep{singh2023kernelmethodsunobservedconfounding}, Kernel Proxy Variable (KPV) \citep{Mastouri2021ProximalCL}, Proximal Maximum Moment Restriction (PMMR) \citep{Mastouri2021ProximalCL}, and Kernel Alternative Proxy (KAP) \citep{bozkurt2025density}. Except for experiments involving PKDR, we use a Gaussian kernel of the form $k_\gF(f_i, f_j) = \exp(- \|f_i - f_j\|_2^2 / (2 l^2))$ for each $\gF \in \{\gW, \gZ, \gA\}$, where $l$ denotes the kernel bandwidth. The bandwidth is selected using the median heuristic based on pairwise distances. For PKDR, we follow the original implementation by \citet{wu2024doubly} and use the Epanechnikov kernel. We determine the regularization parameter $\lambda_{\text{DR}}$ by utilizing the closed-form expression for leave-one-out cross-validation (LOOCV) in kernel ridge regression. Either LOOCV or a held-out validation set are applied for the regularization terms in the treatment and outcome bridge methods, in line with prior approaches in \citep{Mastouri2021ProximalCL, singh2023kernelmethodsunobservedconfounding, bozkurt2025density}. \postrebuttal{We provide additional experimental details—including ablation studies on hyperparameter selection, and scalability analysis with Nyström approximation—in the S.M. (\ref{sec:appendix_NumericalExperiments}).\looseness=-1
}
\begin{figure*}[ht!]
\centering
\subfloat[Synthetic low-dimensional setting]
{\includegraphics[width=0.45\textwidth]{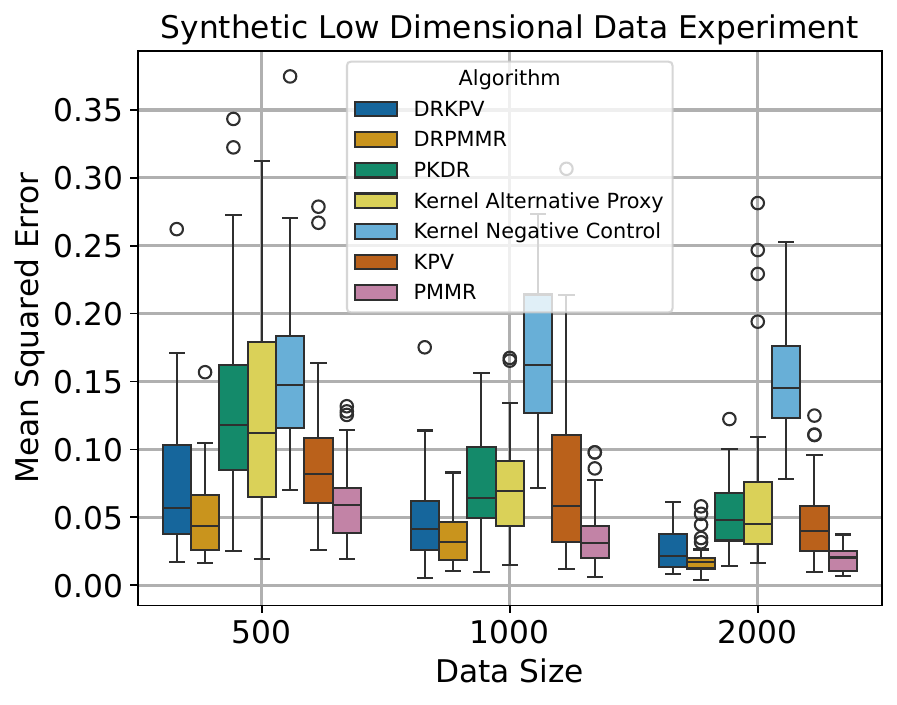} \label{fig:DRATEComparisonLowDim}}
\hfill 
\subfloat[dSprite dataset]
{\includegraphics[width=0.45\textwidth]{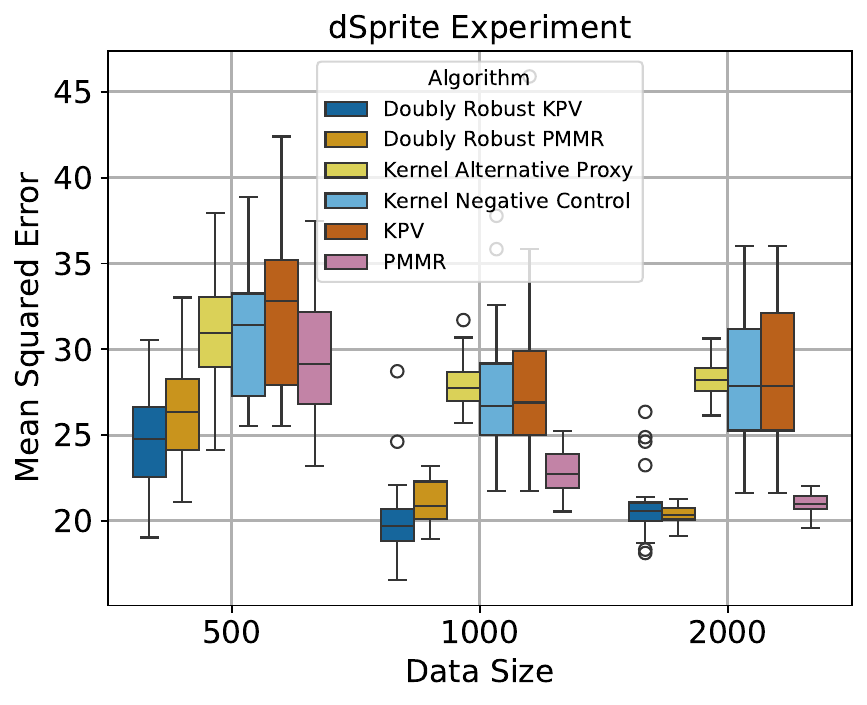} \label{fig:DRATEComparisonDSprite}}
\vspace{0.1cm} 
\subfloat[Legalized abortion and crime dataset]
{\includegraphics[width=0.45\textwidth]{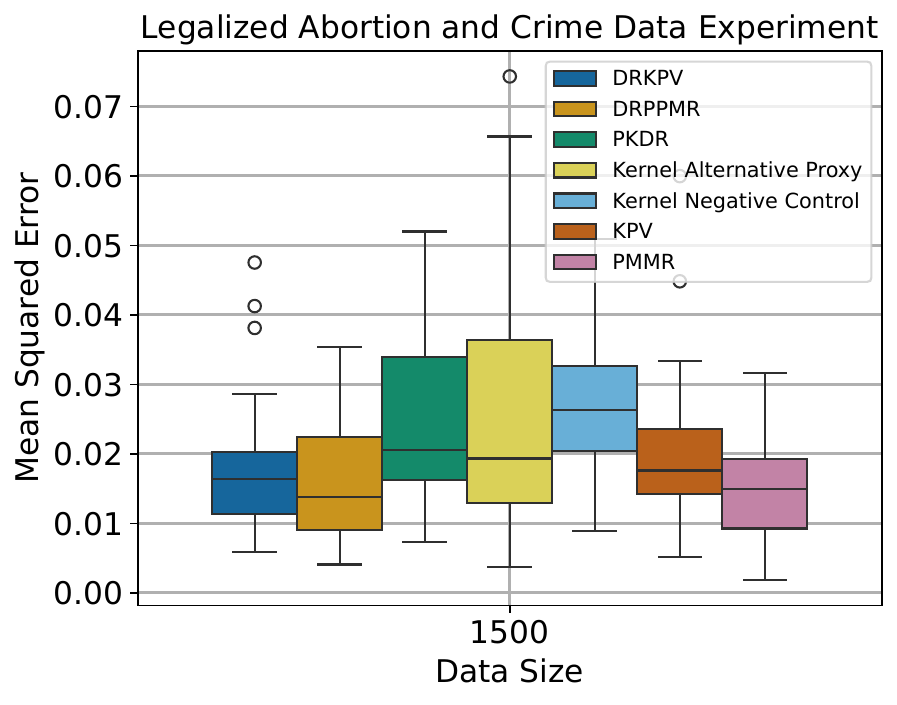} \label{fig:DRATEComparisonAbortion}}
\hfill 
\subfloat[Grade retention and cognitive outcome datasets]
{\includegraphics[trim = {0cm 0cm 0cm 0.0cm},clip,width=0.45\textwidth]{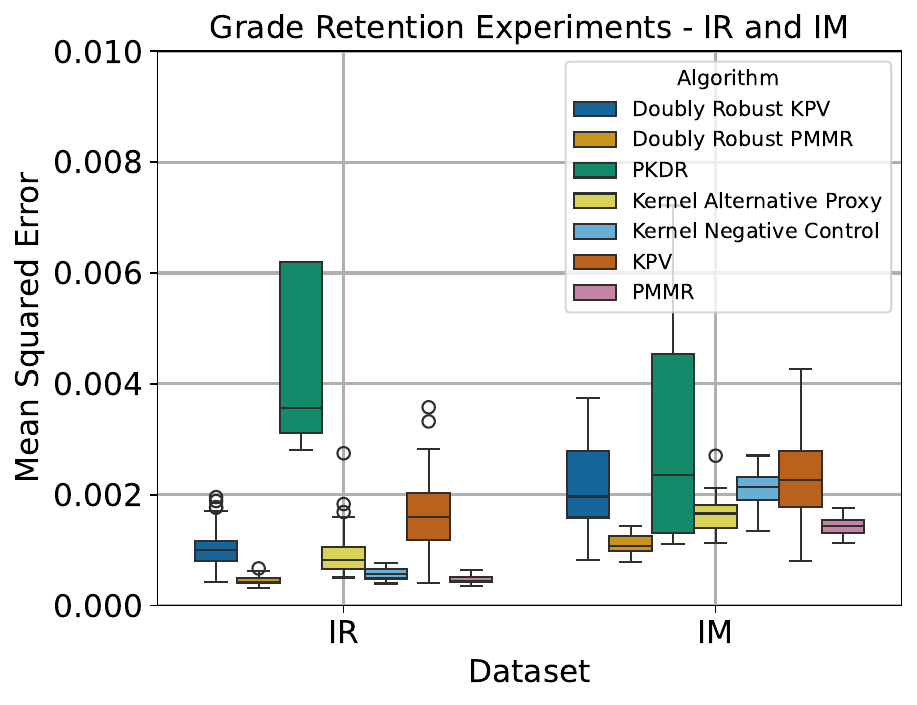} \label{fig:DRATEComparisonDeaner}}
\newline\caption{Dose-response curve estimation across various datasets and algorithms: \emph{DRKPV } and \emph{DRPMMR} (Ours), PKDR \citep{wu2024doubly}, KAP \citep{bozkurt2025density}, KNC \citep{singh2023kernelmethodsunobservedconfounding}, KPV \citep{Mastouri2021ProximalCL}, and PMMR \citep{Mastouri2021ProximalCL}. (a) Synthetic low-dimensional setting, (b) dSprite dataset, (c) legalized abortion and crime dataset, and (d) grade retention and cognitive outcome datasets.}
\label{fig:DoublyRobustMethodATEComparison}
\end{figure*}

\vspace{-1mm}
\textbf{Synthetic Low Dimensional:} We adopt the synthetic data generation process from \citet{wu2024doubly} which simulates a confounded, nonlinear, and noisy treatment–outcome relationship:\looseness=-1
\begin{align*}
    &U_1 \sim \mathcal{U}[-1, 2], \enspace U_2 \sim \mathcal{U}[0, 1] - \mathbf{1}[0 \le U_1 \le 1],\enspace W = [U_2 + \mathcal{U}[-1, 1], U_1 + \mathcal{N}(0, 1)]\\
    &Z = [U_2 + \mathcal{N}(0, 1), U_1 + \mathcal{U}[-1, 1]], \enspace A := U_1 + \mathcal{N}(0, 1)\\
    &Y := 3 \cos(2(0.3U_2 + 0.3U_1 + 0.2) + 1.5A) + \mathcal{N}(0, 1).
\end{align*}
Here, $\mathcal{U}[a, b]$ denotes the uniform distribution on the interval $[a, b]$, and $\mathcal{N}(\mu, \sigma^2)$ is the Gaussian distribution with mean $\mu$ and variance $\sigma^2$. We run experiments using training sets of sizes $500$, $1000$, and $2000$. Figure~(\ref{fig:DRATEComparisonLowDim}) presents the mean squared error (MSE) of different PCL benchmark methods, averaged over 30 independent runs. Our proposed methods, DRKPV and DRPMMR, consistently outperform competing algorithms and demonstrate improved accuracy with increasing data.

\textbf{dSprite:} We use the \emph{Disentanglement testing Sprite dataset} (\emph{dSprite}) dataset \citep{dsprites17}, a collection of $64 \times 64$ grayscale images characterized by latent variables: \emph{scale}, \emph{rotation}, \emph{posX}, and \emph{posY}. Originally designed for disentanglement studies \citep{higgins2017betavae}, it was recently adapted by \citet{xu2021deep} as a benchmark for proxy causal learning. In this setup, the treatment is a \emph{high-dimensional vector} obtained by flattening each image and adding Gaussian noise. The target causal function is defined as $\theta_{\text{ATE}}(A) = ((\text{vec}(B)^\top A)^2 - 3000) / 500$, where $A \in \R^{4096}$ and $B \in \R^{64 \times 64}$, with entries of $B$ given by $B_{ij} = |32 - j| / 32$. The outcome is generated via $Y = 12 (\text{\emph{posY}} - 0.5)^2\theta_{\text{ATE}}(A) + \epsilon$, where $\epsilon \sim \mathcal{N}(0, 0.5)$. The treatment proxy $Z \in \R^3$ comprises the latent variables \emph{scale}, \emph{rotation}, and \emph{posX}. The outcome proxy $W$ is another \emph{dSprite} image sharing the same \emph{posY} value as the treatment image, while the other latent factors are fixed to \emph{scale} = $0.8$, \emph{rotation} = $0$, and \emph{posX} = $0.5$. We run our evaluations using training set sizes of $500$, $1000$, and $2000$. Figure~(\ref{fig:DRATEComparisonDSprite}) reports the MSE results averaged across 30 independent runs. Since the implementation of \citet{wu2024doubly} does not support high-dimensional treatments, we omit PKDR. Our methods outperform all others on this high-dimensional benchmark.\looseness=-1

\textbf{Legalized Abortion and Crime:} We evaluate our methods on the Legalized Abortion and Crime dataset \citep{LegalizedAbortionData}, following the preprocessing and setup from \citep{Mastouri2021ProximalCL, wu2024doubly, bozkurt2025density, woody2020estimatingheterogeneouseffectscontinuous}. We use the version of the dataset available from the GitHub repository of \citet{Mastouri2021ProximalCL}\footnotemark{}. In the causal graph, the treatment $A$ is the effective abortion rate, and the outcome $Y$ is the murder rate. The treatment proxy $Z$ is the generosity of aid to families with dependent children, while the outcome proxy $W$ include beer consumption per capita, the logarithm of the prisoner population per capita, and the presence of a concealed weapons law. Figure~(\ref{fig:DRATEComparisonAbortion}) reports the MSE results averaged over 30 runs, where each of the 10 data files\footnotemark[\value{footnote}] is evaluated using three different random runs (leading to different data-splits for different regression stages in KPV and KAP). 
\footnotetext{\url{https://github.com/yuchen-zhu/kernel_proxies}} Our DRKPV and DRPMMR outperform their non-doubly robust counterparts (KPV and PMMR) and all other baselines.

\textbf{Grade Retention: } We evaluate the effect of grade retention on long-term cognitive development using data from the ECLS-K panel study \citep{deaner2023proxycontrolspaneldata, Fruehwirth_GradeRetentions}, following the setup of \citet{Mastouri2021ProximalCL}\footnotemark[\value{footnote}]. The treatment variable $A$ indicates whether a student was retained in grade, and the outcome variable $Y$ corresponds to cognitive test scores in math and reading measured at age 11. The treatment proxy $Z$ consists of the average scores from 1st/2nd and 3rd/4th grade assessments, while the outcome proxy $W$ includes cognitive and behavioral test scores recorded during kindergarten. Figure~(\ref{fig:DRATEComparisonDeaner}) reports the MSE results averaged across 30 realizations (3 independent runs per each of the 10 dataset), and compares our method with other proxy-based approaches. On the IM (math retention) dataset, DRKPV and DRPMMR outperform their non-doubly robust counterparts (KPV and PMMR) and all baselines. On the IR (reading retention) dataset, DRKPV outperforms KPV and all other methods except PMMR, while DRPMMR performs on par with PMMR, both outperforming the rest.

\textbf{Doubly Robust Estimation in Misspecified Setting: } We further evaluate the robustness of our methods when the bridge functions are misspecified. By representer theorem \citep{GeneralizedRepresenterTheorem}, the bridge functions in KPV, PMMR, and KAP can be expressed as linear combinations of feature maps in their respective RKHSs: $\hat{h} = \sum_{i = 1}^{n_{h}}\sum_{j = 1}^{m_{h}} \alpha_{ij} \phi_\gW(w_i) \otimes \phi_\gA(\Tilde{a}_j)$ for KPV, $\hat{h} = \sum_{i = 1}^{t} \alpha_{i} \phi_\gW(w_i) \otimes \phi_\gA({a}_i)$ for PMMR, and ${\hat{\varphi} = \sum_{l = 1}^{m_\varphi} \sum_{\substack{t = 1}}^{m_\varphi} \gamma_{lt} \hat{\mu}_{Z|W, A} (\Tilde{w}_t, \Tilde{a}_l) \otimes \phi_\gA(\Tilde{a}_l)}$ for KAP (see S.M. (\ref{Appendix:KPVandPMMR_Algos}) and ( \ref{sec:KernelAlternativeProxySummaryAppendix})). To simulate misspecification, we first train the DRKPV (or DRPMMR) in the synthetic low-dimensional setup and then perturb either the outcome (KPV/PMMR) or treatment (KAP) bridge coefficients by adding Gaussian noise. Figures~(\ref{fig:DRKPV_Outcome_Misspecified_V1}) and~(\ref{fig:DRKPV_Treatment_Misspecified_V1}) show DRKPV results averaged over five independent runs with standard deviation bands. In Figure~(\ref{fig:DRKPV_Outcome_Misspecified_V1}), we illustrate the result when the outcome bridge coefficients are perturbed via $\alpha_{ij} \leftarrow \alpha_{ij} + \varepsilon_{ij}$, where $\varepsilon_{ij} \sim \mathcal{N}(0, 0.2)$. Similarly, Figure~(\ref{fig:DRKPV_Treatment_Misspecified_V1}) presents results when the treatment bridge function is misspecified by jittering $\gamma_{ij} \leftarrow \gamma_{ij} + \varepsilon_{ij}$ with $\varepsilon_{ij} \sim \mathcal{N}(0, 0.2)$. In these plots, the term \emph{slack prediction} denotes the empirical estimate of $\E[\hat{\varphi}(Z, a) \hat{h}(W, a) \mid A = a]$ in Equation (\ref{eq:DoublyRobustEmpiricalEstimate}). Despite misspecification, DRKPV recovers the true causal function, as the slack term offsets the error. Figures~(\ref{fig:DRPMMR_Outcome_Misspecified_V1}) and~(\ref{fig:DRPMMR_Treatment_Misspecified_V1}) show analogous experiments for DRPMMR. Once again, we perturb one of the learned bridge functions by injecting Gaussian noise $\mathcal{N}(0, 0.2)$ into its coefficients. As in the DRKPV case, DRPMMR continues to accurately recover the true causal effect, demonstrating its robustness to misspecification of bridge functions. \postrebuttal{
For enhanced legibility, a larger version of this figure is provided in S.M. (\ref{sec:AdditionalNumericalExperiments_MisspecifiedBridgeFunction}).
}

Figures~(\ref{fig:DRKPV_Outcome_Misspecified_V2})–(\ref{fig:DRPMMR_Treatment_Misspecified_V2}) in S.M. (\ref{sec:AdditionalNumericalExperiments_MisspecifiedBridgeFunction}) illustrate the same experiments under higher perturbation, where the coefficients are jittered with $\varepsilon_{ij} \sim \mathcal{N}(0, 0.5)$. Furthermore, additional robustness evaluations are provided in S.M. (\ref{sec:AdditionalNumericalExperiments_JobCorp}) where we adopt the semi-synthetic setups from \citep[Section 13]{bozkurt2025density} based on the JobCorps dataset \citep{Schochet_JobCorps, Flores_JobCorps}. These experiments vary the informativeness of the proxy variables to challenge the outcome and treatment bridge completeness assumptions (Assumptions~(\ref{assum:OutcomeBridgeCompleteness}) and~(\ref{assum:TreatmentBridgeCompleteness})).

\vspace{-3mm}
\begin{figure*}[ht!]
\centering
\subfloat[]
{\includegraphics[trim = {0cm 0cm 0cm 0.0cm},clip,width=0.245\textwidth]{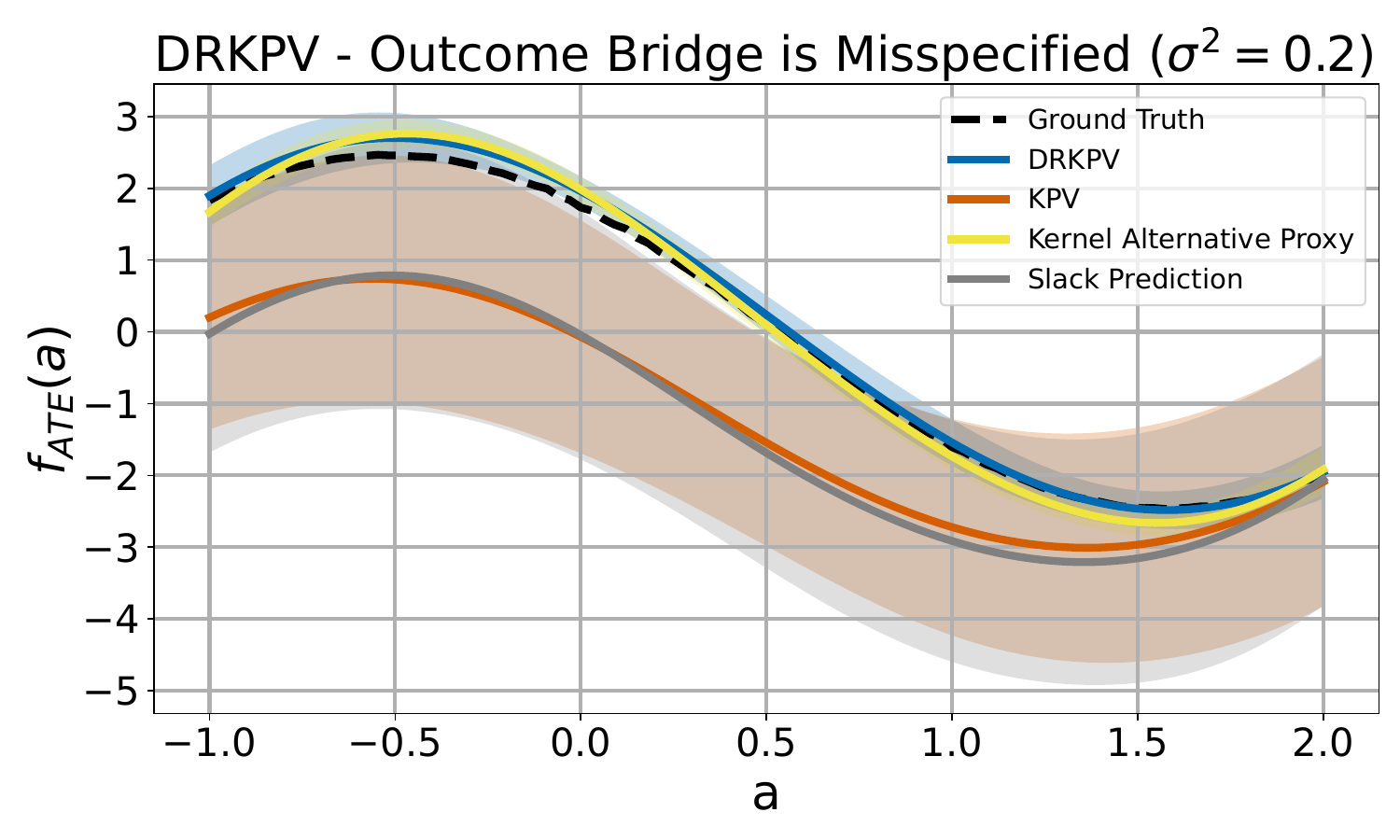} \label{fig:DRKPV_Outcome_Misspecified_V1}}
\subfloat[]
{\includegraphics[trim = {0cm 0cm 0cm 0.0cm},clip,width=0.245\textwidth]{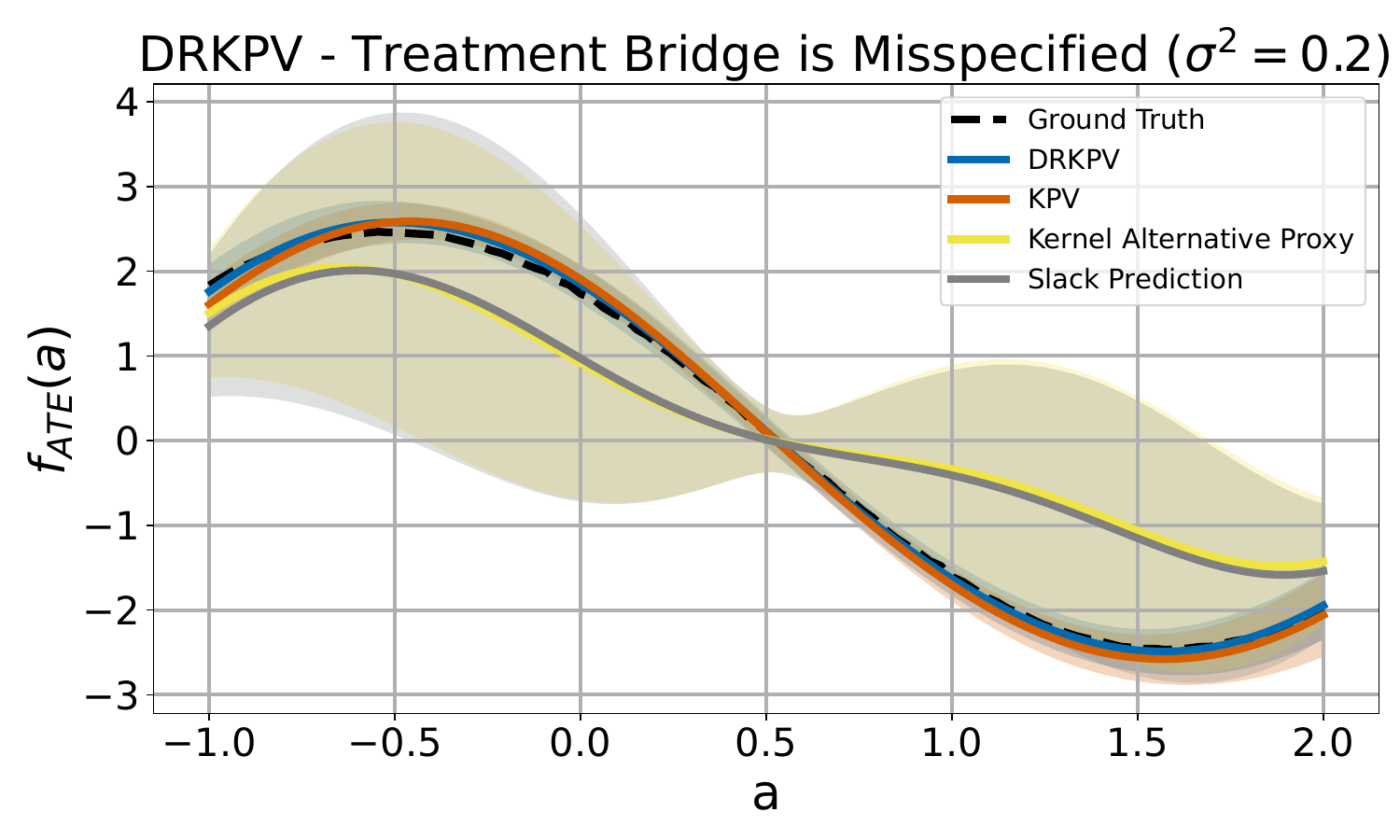}
\label{fig:DRKPV_Treatment_Misspecified_V1}}
\subfloat[]
{\includegraphics[trim = {0cm 0cm 0cm 0.0cm},clip,width=0.245\textwidth]{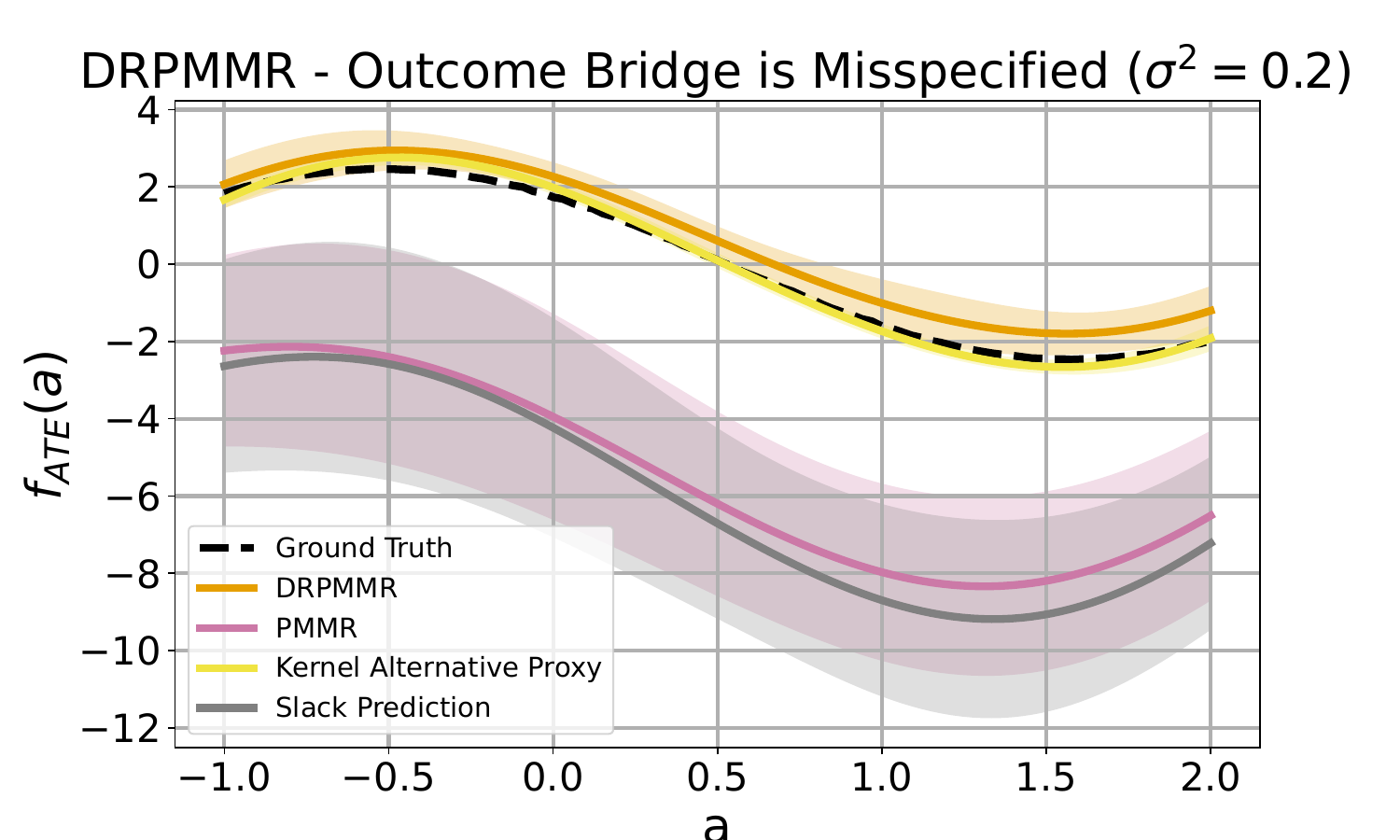}
\label{fig:DRPMMR_Outcome_Misspecified_V1}}
\subfloat[]
{\includegraphics[trim = {0cm 0cm 0cm 0.0cm},clip,width=0.252\textwidth]{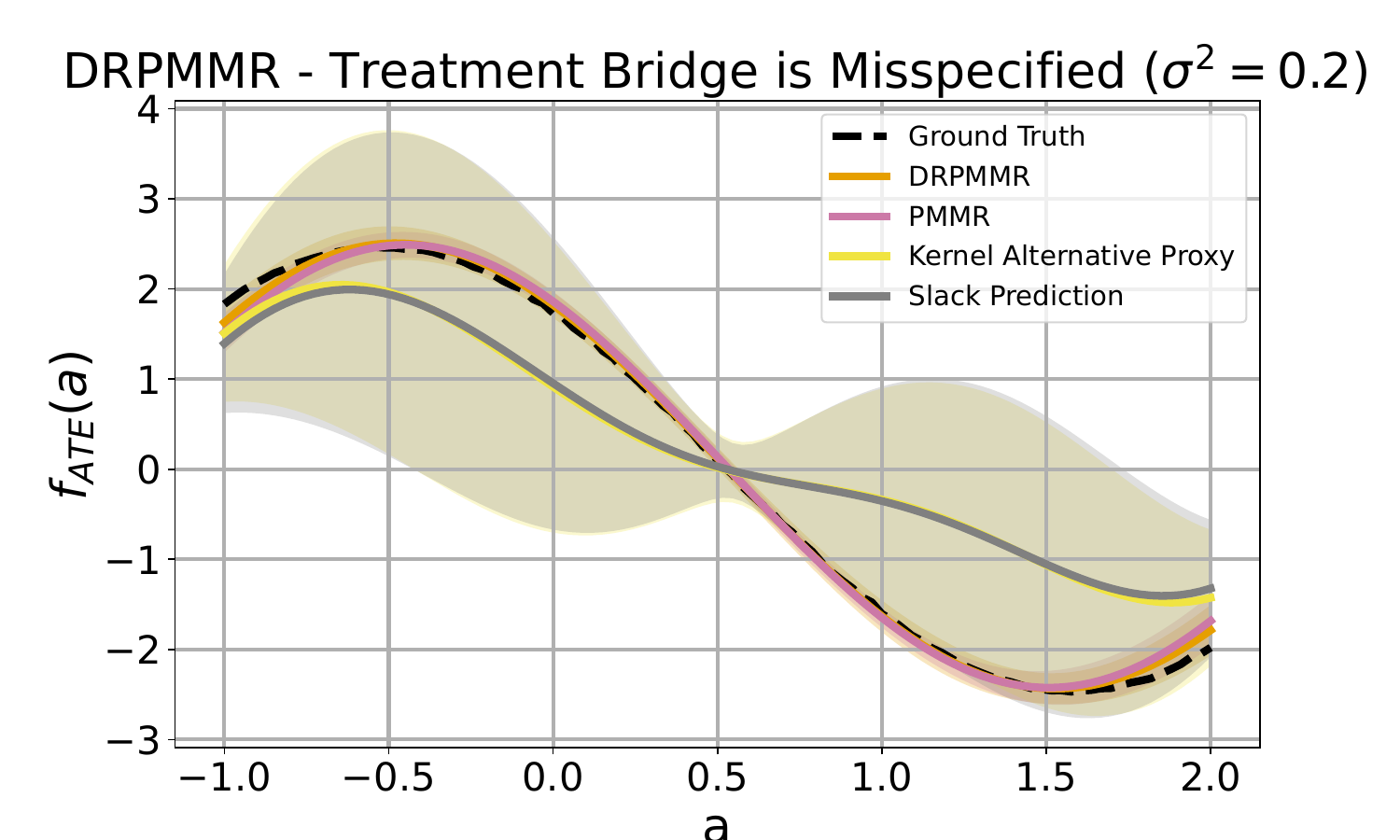}
\label{fig:DRPMMR_Treatment_Misspecified_V1}}
\newline\caption{Experimental results in bridge function misspecifications with the synthetic low-dimensional data: (a, b) DRKPV estimates under outcome and treatment bridge misspecifications, respectively; (c, d) DRPMMR estimates under outcome and treatment bridge misspecifications, respectively.}
\label{fig:DoublyRobustMethodATEMisspecifiedComparison}
\end{figure*}

\vspace{-3mm}
\section{Discussion and conclusion}
\label{sec:Conclusion}
We introduce two doubly robust estimators—DRKPV and DRPMMR—in the proxy causal learning framework that address unmeasured confounding without requiring explicit density ratio estimation. Built on kernel mean embeddings, both estimators have closed-form expressions and effectively combine outcome and treatment bridge functions to recover the dose-response curve. They are provably consistent and remain robust under misspecification of either bridge. Empirically, our methods outperform recent baselines on challenging benchmarks and scale well to high-dimensional treatments. \postrebuttal{However, the curse of dimensionality poses a subtle theoretical limitation when the input dimension grows with the sample size; addressing this is an interesting area for future work. A key limitation of our work is the computational cost of kernel methods; while we present an initial step toward scalability using Nyström approximation, exploring stronger scalable methods remains a primary direction for future research.\looseness=-1} 
\newpage
\begin{ack}
Bariscan Bozkurt, Houssam Zenati, Dimitri Meunier, and Arthur Gretton are supported by Gatsby Charitable Foundation. We are grateful for the constructive feedback and insightful discussion provided by the anonymous reviewers of NeurIPS 2025.


\end{ack}
\bibliography{bibliography}

\newpage
\appendix

\section*{Supplementary Material for Density Ratio-Free Doubly Robust Proxy Causal Learning}

Section~(\ref{appendix:DoublyRobustIdentification_Th}) reviews our doubly robust identification result and proves Theorem~(\ref{theorem:DoublyRobustIdentification}). Section~(\ref{sec:EfficientInfluenceFunction_Appendix}) outlines semiparametric theory and derives the efficient influence function for discrete treatments. In Section~(\ref{sec:AlgorithmDerivationsAppendix}), we present derivations for our proposed algorithms: Doubly Robust Kernel Proxy Variable (DRKPV) and Doubly Robust Proxy Maximum Moment Restriction (DRPMMR). Section~(\ref{appendix:ExistenceOfBridgeFunctions}) discusses conditions for the existence of outcome and treatment bridge functions. Section~(\ref{appendix:ConsistencyResults}) proves consistency of our methods. Section~(\ref{sec:appendix_NumericalExperiments}) details the experimental setup, including hyperparameter tuning and additional results.

\section{Doubly robust identification}
\label{appendix:DoublyRobustIdentification_Th}
Here, we prove Theorem (\ref{theorem:DoublyRobustIdentification}). For completeness, we restate the theorem below.

\begin{theorem}[Doubly robust causal identification; Replica of Theorem (\ref{theorem:DoublyRobustIdentification})]
    The dose-response can be identified with
    \begin{align*}
        \theta_{\text{ATE}}^{(\text{DR})}(a; h, \varphi)\mid_{(h = h_0, \varphi = \varphi_0)} = \E[\varphi_0(Z, a) (Y - h_0(W, a)) \mid A = a] + \E[h_0(W, a)],
    \end{align*}
    where $h_0$ and $\varphi_0$ are the outcome and treatment bridge functions that satisfy Equations (\ref{eq:OutcomeBridgeEquation}) and (\ref{eq:TreatmentBridgeEquation}), respectively.
    Furthermore, $\theta_{\text{ATE}}^{(\text{DR})}(a; h, \varphi)$ 
    admits \emph{double robustness} such that $\theta_{\text{ATE}}^{(\text{DR})}$ identifies the dose-response curve if \textbf{either} $h$ solves Equation (\ref{eq:OutcomeBridgeEquation}) \textbf{or} $\varphi$ solves Equation (\ref{eq:TreatmentBridgeEquation})—but not necessarily both.\looseness=-1
\end{theorem}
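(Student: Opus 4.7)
The plan is to split the argument into two cases corresponding to the two sufficient conditions in the double robustness claim, and in each case reduce the formula to one of the two existing identification results (Theorem \ref{theorem:OutcomeBridgeIdentificationATE} or Theorem \ref{theorem:TreatmentBridgeIdentificationATE}). Throughout, I would use only the defining bridge equations together with the tower property of conditional expectation; neither case needs both $h=h_0$ and $\varphi=\varphi_0$.

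\textbf{Case 1: $h=h_0$ solves Equation \eqref{eq:OutcomeBridgeEquation}.} Here the goal is to show that the first summand $\E[\varphi(Z,a)(Y-h_0(W,a))\mid A=a]$ vanishes for \emph{every} function $\varphi$, so that only $\E[h_0(W,a)]=\theta_{\text{ATE}}(a)$ (from Theorem \ref{theorem:OutcomeBridgeIdentificationATE}) remains. The outcome bridge equation says $\E[Y\mid Z,A]=\E[h_0(W,A)\mid Z,A]$, so $\E[Y-h_0(W,A)\mid Z,A]=0$ almost surely; evaluating at $A=a$ and applying the tower property to pull $\varphi(Z,a)$ through the inner conditional expectation gives the vanishing of the first summand. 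This step is routine but requires that $\varphi(Z,a)$ is $(Z,A)$-measurable, which it is by construction.

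\textbf{Case 2: $\varphi=\varphi_0$ solves Equation \eqref{eq:TreatmentBridgeEquation}.} Here the plan is to show that $\E[\varphi_0(Z,a)h(W,a)\mid A=a]=\E[h(W,a)]$ for \emph{every} function $h$, so the second and third summands cancel and leave $\E[Y\varphi_0(Z,a)\mid A=a]=\theta_{\text{ATE}}(a)$ by Theorem \ref{theorem:TreatmentBridgeIdentificationATE}. The key calculation is
\begin{align*}
\E[\varphi_0(Z,a)h(W,a)\mid A=a]
&=\int h(w,a)\,\E[\varphi_0(Z,a)\mid W=w,A=a]\,p(w\mid a)\,dw \\
&=\int h(w,a)\,\frac{p(w)p(a)}{p(w,a)}\,p(w\mid a)\,dw
=\int h(w,a)\,p(w)\,dw,
\end{align*}
where I first condition on $(W,A=a)$ inside the conditional expectation, then substitute the defining identity \eqref{eq:TreatmentBridgeEquation} for $\varphi_0$, and finally simplify using $p(w\mid a)p(a)=p(w,a)$.

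The main obstacles are mild but worth flagging: (i) the Fubini-style interchange that underlies the factorization in Case 2 requires standard integrability of $\varphi_0(\cdot,a)h(\cdot,a)$ under $p(z,w\mid a)$, which can be guaranteed by the boundedness assumptions invoked later in the consistency analysis; and (ii) the identity $p(w\mid a)p(a)=p(w,a)$ implicitly assumes positivity of $p(a)$ on the support where we evaluate the dose-response. Neither case uses the other bridge equation, so the two arguments together yield the stated double robustness: correctness of either bridge alone suffices.
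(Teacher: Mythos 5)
Your proposal is correct and follows essentially the same two-case argument as the paper: in Case 1 the residual term $\E[\varphi(Z,a)(Y-h_0(W,a))\mid A=a]$ vanishes by the outcome bridge equation, and in Case 2 the identity $\E[\varphi_0(Z,a)h(W,a)\mid A=a]=\E[h(W,a)]$ cancels the last two summands, leaving Theorem (\ref{theorem:TreatmentBridgeIdentificationATE}) to identify the first. The integrability and positivity caveats you flag are reasonable but are left implicit in the paper's proof as well.
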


\begin{proof}
To understand why $\theta_{\text{ATE}}^{\text{DR}}(a)$ exhibits double robustness, we consider the following two cases:

\emph{Case i:} Suppose that $h$ is correctly specified, i.e. $h = h_0$. Then,
\begin{align*}
    &\theta_{\text{ATE}}^{(\text{DR})}(a; h_0, \varphi) = \E[\varphi(Z, a) (Y - h_0(W, a)) \mid A = a] + \E[h_0(W, a)] 
    \\&= \int \varphi(z, a) \underbrace{\left(\int (y - h_0(w, a)) p(y, w | z, a) dy dw \right)}_{= 0 \text{ due to Equation (\ref{eq:OutcomeBridgeEquation})}} p(z | a) dz + \E[h_0(W, a)] = \theta_{\text{ATE}}(a)
\end{align*}
where the final equality is due to Theorem (\ref{theorem:OutcomeBridgeIdentificationATE}).

\emph{Case ii:} Now, suppose that $\varphi$ is correctly specified, i.e., $\varphi = \varphi_0$. Then, we note that
\begin{align*}
    \theta_{\text{ATE}}^{(\text{DR})}(a; h, \varphi_0)
    &= \E[Y\varphi_0(Z, a)\mid A = a] -\E[ \varphi_0(Z, a) h(W, a) \mid A = a] + \E[h(W, a)] 
\end{align*}

First off, note that the first component $ \E[Y\varphi_0(Z, a)\mid A = a] $ identifies the dose-response due to Theorem (\ref{theorem:TreatmentBridgeIdentificationATE}). Next, we consider the remaining two terms\looseness=-1
\begin{align*}
    &-\E[ \varphi_0(Z, a) h(W, a) \mid A = a] + \E[h(W, a)]  \\
    &= -\int \int\varphi_0(z, a) h(w,  a) p(z, w | a) dz dw + \E[h(W, a)]\\
    &=-\int \int\varphi_0(z, a) h(w, a) p(z| w, a) p(w | a) dz dw + \E[h_0(W, a)]\\
    &= - \int \left(\int \varphi_0(z, a) p(z| w, a) dz\right) h(w, a)  p(w | a) dw + \E[h(W, a)]\\
    &= - \int \left(\E[\varphi_0(Z, a) | w, a]\right) h(w, a)  p(w | a) dw + \E[h_0(W, a)]\\
    &= -\int \frac{p(w)}{p(w | a)} h(w, a)  p(w | a) dw + \E[h(W, a)] = - \E[h(W, a)] + \E[h(W, a)] = 0,
\end{align*}
where the first equality in the last line above is due to Equation (\ref{eq:TreatmentBridgeEquation}).
\end{proof}

The following lemma is the direct implication of Theorem (\ref{theorem:DoublyRobustIdentification}), which shows that the dose-response curve can also be identified with the conditional expectation of the multiplication of bridge functions. 
\begin{lemma}
    Let $h_0$ and $\varphi_0$ be outcome and treatment bridge functions satisfying Equation (\ref{eq:OutcomeBridgeEquation}) and (\ref{eq:TreatmentBridgeEquation}), respectively. Then, the dose-response can be identified by the following slack function $\theta_{\text{ATE}}(a) = \E[ \varphi_0(Z, a) h_0(W, a) \mid A = a]$.
\end{lemma}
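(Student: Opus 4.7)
The plan is to derive the claim directly from the doubly robust identification in Theorem~(\ref{theorem:DoublyRobustIdentification}) together with the two single-bridge identifications from Theorems~(\ref{theorem:OutcomeBridgeIdentificationATE}) and~(\ref{theorem:TreatmentBridgeIdentificationATE}). Specifically, substituting $h = h_0$ and $\varphi = \varphi_0$ into the doubly robust formula yields
\begin{align*}
\theta_{\text{ATE}}(a) = \E[\varphi_0(Z, a) Y \mid A = a] \;-\; \E[\varphi_0(Z, a) h_0(W, a) \mid A = a] \;+\; \E[h_0(W, a)],
\end{align*}
so I can solve for the mixed term as
$\E[\varphi_0(Z, a) h_0(W, a) \mid A = a] = \E[\varphi_0(Z, a) Y \mid A = a] + \E[h_0(W, a)] - \theta_{\text{ATE}}(a)$. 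Invoking Theorem~(\ref{theorem:TreatmentBridgeIdentificationATE}) to rewrite the first term as $\theta_{\text{ATE}}(a)$ and Theorem~(\ref{theorem:OutcomeBridgeIdentificationATE}) to rewrite the second term as $\theta_{\text{ATE}}(a)$, the right-hand side collapses to $\theta_{\text{ATE}}(a) + \theta_{\text{ATE}}(a) - \theta_{\text{ATE}}(a) = \theta_{\text{ATE}}(a)$, which is the desired identity.

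An equivalent and slightly more direct route is to reuse the computation already executed in Case ii of the proof of Theorem~(\ref{theorem:DoublyRobustIdentification}). There, it was shown that for \emph{any} square-integrable $h$,
\begin{align*}
\E[\varphi_0(Z, a)\, h(W, a) \mid A = a] = \E[h(W, a)],
\end{align*}
using only the defining equation~(\ref{eq:TreatmentBridgeEquation}) for $\varphi_0$ together with the tower property and a density-ratio cancellation $p(w)/p(w\mid a)$. Applied to the specific choice $h = h_0$, this identity immediately yields $\E[\varphi_0(Z, a) h_0(W, a) \mid A = a] = \E[h_0(W, a)]$, and Theorem~(\ref{theorem:OutcomeBridgeIdentificationATE}) finishes the argument by equating $\E[h_0(W, a)]$ with $\theta_{\text{ATE}}(a)$.

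No substantive obstacle is anticipated, as the argument is essentially a two-line rearrangement relying entirely on results proved earlier. The only mild care is in noting that the key reduction in Case ii of Theorem~(\ref{theorem:DoublyRobustIdentification}) was established for arbitrary $h$ (it never used that $h$ solves the outcome bridge equation), so its specialization to $h_0$ is immediate; no additional assumptions beyond those of Theorems~(\ref{theorem:OutcomeBridgeIdentificationATE}), (\ref{theorem:TreatmentBridgeIdentificationATE}), and~(\ref{theorem:DoublyRobustIdentification}) are needed.
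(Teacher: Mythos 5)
Your proposal is correct, and your second route is precisely the paper's own proof: it reuses the computation from Case ii of Theorem~(\ref{theorem:DoublyRobustIdentification}), which shows $\E[\varphi_0(Z,a)h(W,a)\mid A=a]=\E[h(W,a)]$ via the cancellation of $p(w)/p(w\mid a)$ against $p(w\mid a)$, and then invokes Theorem~(\ref{theorem:OutcomeBridgeIdentificationATE}) to conclude. Your first route (rearranging the doubly robust identity and applying both single-bridge identifications) is a valid minor variant but adds nothing beyond the direct argument.
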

\begin{proof}
    As we observe in the second case of the proof of Theorem (\ref{theorem:DoublyRobustIdentification})
    \begin{align*}
        \E[ \varphi_0(Z, a) h_0(W, a) \mid A = a] = \int \frac{p(w)}{p(w | a)} h_0(w, a)  p(w | a) dw = \E[h_0(W, a)] = \theta_{ATE}(a).
    \end{align*}
\end{proof}

\section{Derivation of efficient influence function in discrete treatment setting}
\label{sec:EfficientInfluenceFunction_Appendix}

\subsection{Background on efficient influence functions}

Let $\mathcal{P}$ denote a nonparametric statistical model, and let $\psi: \mathcal{P} \to \mathbb{R}$ be a target functional of interest—such as the dose-response curve $\theta_{\text{ATE}}(a)$. In semiparametric efficiency theory, a statistical parameter such as the average treatment effect is viewed as a functional mapping the underlying data-generating law \(p \in \mathcal{P}\) to a real number. In this section, we denote by $\mathbb{P}$ the true distribution with $\mathbb{P} \ell = \E[\ell]$, $\mathbb{P}_n$ the empirical distribution with $\mathbb{P}_n(\ell) := \frac{1}{n} \sum_{i=1}^n \ell(Z_i)$. The $\gL_2(\mathbb{P})$ norm is written $\|\ell\|^2 := \int \ell^2 d\mathbb{P}$.
 We are interested in how the functional $\theta$ changes as the data-generating distribution $p \in \mathcal{P}$ varies. When $\theta$ is sufficiently smooth, it admits a \emph{Gâteaux derivative}, defined as:
\[
\theta'(p; H) := \left. \frac{d}{d\epsilon} \theta(p + \epsilon H) \right|_{\epsilon = 0}
\]
for $H$ in the tangent space of $\mathcal{P}$. The functional is said to be Gâteaux differentiable at $P$ if $\theta'(p; H)$ exists and is linear and continuous in $H$. The Gâteaux derivative formalizes the local sensitivity of the functional to infinitesimal perturbations of the distribution. It extends the notion of a directional derivative to the space of probability laws and provides a linear approximation of how \( \theta(p) \) varies when \(p\) is displaced along an admissible direction \(H\) within the tangent space.

This gives rise to the \emph{von Mises expansion} \citep{mises1947asymptotic, kennedy2024semiparametric}:
\[
\theta(q) = \theta(p) + \theta'(p; q - p) + R_2(p, q)
\]
where $R_2(p, q)$ is a second-order remainder depending on products or squares of differences between $q$ and $p$. The von Mises expansion plays an analogous role to a first-order Taylor expansion in finite dimensions, expressing the change in the functional as a linear term—captured by the influence function—plus a higher-order remainder.
This expansion enables the definition of the \emph{influence function}.

\begin{definition}[Influence function {\citep{bickel1993efficient, kennedy2024semiparametric, fisher2021visually}}]
A measurable function $\psi(\cdot; p) \in \gL_2(p)$ is the influence function of $\theta$ at $p$ if
\[
\theta'(p; q - p) = \int \psi(z; p) \, d(q - p)(z), \quad \text{for all } q \in \mathcal{P}
\]
with $\int \psi(z; p) dp(z) = 0$ and $\mathbb{E}_p[\psi(Z; p)^2] < \infty$.
\end{definition}

The influence function $\psi$ is the \emph{Riesz representer} of the derivative $\theta'(p; \cdot)$, as $\gL_2(p)$ is a Hilbert space. It is also referred to as the \emph{pathwise derivative gradient} \citep{ ichimura2022influence} or \emph{Neyman orthogonal score} \citep{tsiatis2006semiparametric, Chernozhukov2018}. Intuitively, the influence function measures how sensitive the parameter of interest is to infinitesimal changes in the data-generating distribution. It can be seen as the “directional derivative” of the functional with respect to perturbations of the underlying distribution—much like a gradient in function space. This makes it a key tool for constructing estimators that correct first-order bias and achieve minimal asymptotic variance.

To compute influence functions, we analyze \emph{parametric submodels} $\{p_\epsilon\} \subset \mathcal{P}$ such that $p_{\epsilon=0} = p$ and $\epsilon \mapsto p_\epsilon$ is smooth. A canonical choice is the \emph{tilted model}:
\[
\frac{dp_\epsilon}{dp}(z) = 1 + \epsilon s(z), \quad \text{where } \|s\|_\infty \le M, \; \epsilon < 1/M
\]
The associated \emph{score function} is $s_\epsilon(z) = s(z) = \left.\frac{\partial}{\partial \epsilon} \log p_\epsilon(z) \right|_{\epsilon=0}$ \citep{stein1956efficient, fisher2021visually}.

Applying the von Mises expansion to $p_\epsilon$, we obtain:
\[
\theta(p_\epsilon) = \theta(p) + \epsilon \int \psi(z; p) s(z) \, dp(z) + R_2(p, p_\epsilon)
\]
Since $R_2(p, p_\epsilon)$ is second-order, it satisfies $\left.\frac{d}{d\epsilon} R_2(p, p_\epsilon)\right|_{\epsilon=0} = 0$, a condition known as \emph{Neyman orthogonality} \citep{Chernozhukov2018}. Thus,
\[
\left. \frac{d}{d\epsilon} \theta(p_\epsilon) \right|_{\epsilon = 0} = \int \psi(z; p) s(z) \, dp(z)
\]
This defines \emph{pathwise differentiability}, and the function $\psi$ satisfying this identity is the efficient influence function.

\begin{definition}[Pathwise differentiability {\citep{bickel1993efficient, kennedy2024semiparametric}}]
A functional $\theta$ is pathwise differentiable at $p$ if for every smooth parametric submodel $\{p_\epsilon\} \subset \mathcal{P}$ with score $s_\epsilon(z)$, we have:
\[
\left. \frac{d}{d\epsilon} \theta(p_\epsilon) \right|_{\epsilon=0} = \int \psi(z; p) s_\epsilon(z) \, dp(z)
\]
\end{definition}

Once $\psi$ is known, it can be used to construct an \emph{efficient estimator}. Suppose $\widehat{p}$ estimates $p$, and let $\mathbb{P}_n$ denote the empirical measure. The \emph{plug-in estimator} $\widehat{\theta}_{\text{pi}} := \theta(\widehat{p})$ admits the expansion:
\[
\theta(\widehat{p}) - \theta(p) = - \mathbb{E}_p[\psi(Z; \widehat{p})] + R_2(\widehat{p}, p)
\]
This motivates the \emph{one-step estimator}, which corrects the plug-in estimator with an estimation of the first-order bias $\mathbb{E}_P[\psi(Z; \widehat{p})]$:
\[
\widehat{\theta}^{\text{os}} := \theta(\widehat{p}) + \mathbb{P}_n[\psi(Z; \widehat{p})]
\]
When analyzing one-step estimators, it is common to write the following decomposition \citep{kennedy2024semiparametric}:

\[
\widehat{\theta}^{\text{os}} - \theta(p) = (\mathbb{P}_n - \mathbb{P})[\psi(Z; p)] + \underbrace{(\mathbb{P}_n - \mathbb{P})[\psi(Z; \widehat{p}) - \psi(Z; p)]}_{\text{empirical process}} + \underbrace{R_2(\widehat{p}, p)}_{\text{second-order remainder}}.
\]

The first term is a sample average and converges by the central limit theorem. The second term, known as the \emph{empirical process}, vanishes at rate $o_{\mathbb{P}}(n^{-1/2})$ if the estimated influence function $\widehat{\psi}$ is $\gL_2$-consistent (i.e $\|\widehat{\psi}(Z) - \psi(Z)\|_2 \to_{\mathbb{P}} 0$, where $\|\ell\|_2^2 = \int \ell^2 dp$.) which is the case if it lies in a Donsker class or is estimated via cross-fitting (see \citep{ kennedy2024semiparametric}).

The third term, $R_2(\widehat{p}, p)$, is the \emph{second-order remainder}, which captures nonlinear errors. This term often exhibits a rate which depends on product of nuisance error rates. Therefore, under standard product-rate conditions, such as nuisance errors converging at rate $n^{-1/4}$, we obtain $R_2 = o_{\mathbb{P}}(n^{-1/2})$.

Together, these results imply that $\widehat{\theta}^{\text{os}}$ is asymptotically linear and efficient. Specifically, in nonparametric models, the best achievable precision for estimating a target parameter $\theta$ is governed by a \emph{local asymptotic minimax bound} which is attained by this estimator. 

This bound extends the classical Cramér--Rao inequality using pathwise differentiability and parametric submodels. For any smooth submodel $\{p_\epsilon\}$ with score $s(z)$, the Cramér--Rao lower bound is:
\[
\frac{\left(\mathbb{E}[\psi(Z; p) s(z)]\right)^2}{\mathbb{E}[s(z)^2]} \le \mathbb{E}[\psi(Z; p)^2],
\]
where $\psi$ is the influence function. Equality is achieved when $s = \psi$, yielding the \emph{nonparametric efficiency bound}:
\[
\operatorname{var}\{\psi(Z; p)\}
\]

Since the one-step estimator satisfies
\[
\sqrt{n}(\widehat{\theta}^{\text{os}} - \theta) \rightsquigarrow N(0, \operatorname{var}\{\psi(Z; p)\}),
\]
it achieves this bound and is thus locally efficient.

Eventually, note that the efficient influence function (EIF) is defined relative to the model’s \emph{tangent space}, which formalizes the allowable directions of perturbation around the true distribution. The tangent space is the set of valid perturbations around the true law and thus determines which directions contribute to efficiency. 

In \emph{nonparametric models}, the tangent space is the entire Hilbert space of square-integrable, mean-zero functions:
\[
\mathcal{T} = \left\{ \ell \in \gL_2(\mathbb{P}) : \mathbb{E}_{\mathbb{P}}[\ell(Z)] = 0 \right\}.
\]
This ensures that any regular, pathwise differentiable functional admits a \emph{unique} influence function $\psi$ that also serves as the EIF.

In contrast, a \emph{semiparametric model} imposes structural constraints on the distribution (e.g., conditional independence, smoothness, or dimension reduction). These constraints reduce the tangent space to a strict subspace of $\gL_2(\mathbb{P})$. In this case, there may exist multiple functions $\psi$ satisfying the von Mises expansion, but only those that lie within the tangent space correspond to valid perturbations. Among them, the EIF is defined as the unique $\psi$ that also lies in the tangent space and minimizes variance. Therefore, careful attention must be paid to ensure that the candidate $\psi$ lies in the tangent space, as only then does it correspond to the efficiency bound \citep{kennedy2024semiparametric}.

\subsection{Derivation of the efficient influence function}

In this section, we consider discrete treatment setting, i.e., $A \in \{0, 1, \ldots, d_\gA\}$. We define the conditional expectation operator $T: \gL_2(p_{W, A}) \rightarrow \gL_2(p_{Z, A})$ by $T(\ell) = \mathbb{E}[\ell(W, A) \mid Z, A]$, and its adjoint $T^*: \gL_2(p_{Z, A}) \rightarrow \gL_2(p_{W, A})$ by $T^*(\ell) = \mathbb{E}[\ell(Z, A) \mid W, A]$,where $p_{Z, A}$ and $p_{W, A}$ denote the joint distributions of the random variables $(Z, A)$ and $(W, A)$, respectively. Here, $\gL_2(\gF, p)$ denotes the space of square-integrable functions on domain $\gF$ with respect to the measure $p$. We impose the following regularity condition on the conditional mean operators following the setup in \citet{semiparametricProximalCausalInference}:
\begin{assumption} \label{assum:ConditionalExpOperatorsSurjectivity} $T$ and $T^*$ are surjective. \end{assumption}
Under this setting, we derive the efficient influence function (EIF) of the dose-response curve, following the framework of \citep[Theorem 3.1]{semiparametricProximalCausalInference}.

\begin{theorem}{(Efficient influence function)}
The efficient influence function of $\theta_{A T E}(a)$ where Assumption (\ref{assum:ConditionalExpOperatorsSurjectivity}) holds, and Equation (\ref{eq:OutcomeBridgeEquation}) holds at the true data generating law, is given by
$$
\psi_{\text{ATE}}(\gO; a)= \varphi_0(Z, A)(Y - h_0(W, A))\frac{1}{p(A)} \mathbf{1}[A = a] + h_0(W, a) - \theta_{A T E}(a)
$$
where $\gO$ is the collection of the variables $(Y, A, W, Z)$, and $\mathbf{1}[\cdot]$ is the indicator function.  {Therefore, the corresponding semiparametric local efficiency bound of $\theta_{\text{ATE}}(a)$ equals to $\mathbb{E}\left[\psi_{\text{ATE}}(\gO; a)^2\right]$.}    
\label{thm:EfficientInfluenceFunction}
\end{theorem}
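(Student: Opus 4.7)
The plan is to verify pathwise differentiability directly: for any regular parametric submodel $\{p_\epsilon\}$ through the truth with score $s(\gO) = \partial_\epsilon \log p_\epsilon(\gO)|_{\epsilon=0}$, I would show that $\partial_\epsilon \theta_{\text{ATE},\epsilon}(a)|_{\epsilon=0} = \E[\psi_{\text{ATE}}(\gO;a)\,s(\gO)]$. Since the nonparametric tangent space is the full mean-zero $\gL_2(p)$, any mean-zero candidate satisfying this identity is automatically the EIF, and the local efficiency bound is then $\E[\psi_{\text{ATE}}(\gO;a)^2]$ by Riesz representation. Surjectivity of $T$ and $T^*$ (Assumption~(\ref{assum:ConditionalExpOperatorsSurjectivity})) is what guarantees existence of $h_\epsilon$ and $\varphi_\epsilon$ along the submodel and makes the inversion step below well-posed.

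Using the outcome-bridge identification $\theta_{\text{ATE},\epsilon}(a) = \int h_\epsilon(w,a)\, p_\epsilon(w)\, dw$, differentiation at $\epsilon=0$ splits into two parts. The term coming from $\partial_\epsilon p_\epsilon(w)|_{\epsilon=0} = p(w)\E[s(\gO)\mid W=w]$ equals $\E[h_0(W,a)\,s(\gO)]$ after iterated expectation, which matches the $h_0(W,a)-\theta_{\text{ATE}}(a)$ piece of $\psi_{\text{ATE}}$ since the constant $\theta_{\text{ATE}}(a)\E[s(\gO)]=0$ can be absorbed for free. The remaining part is the implicit derivative $\E[\dot h(W,a)]$, where $\dot h := \partial_\epsilon h_\epsilon|_{\epsilon=0}$.

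For the implicit piece, I would differentiate the identity $\E_\epsilon[h_\epsilon(W,A) - Y \mid Z, A] = 0$ along the submodel: expressing it as a double integral against $p_\epsilon(w,y \mid Z,A)$, applying Leibniz, and noting that the conditional-score decomposition together with the outcome-bridge equation kills the $\E[s(\gO)\mid Z,A]$ piece, one gets
$$\E[\dot h(W,A)\mid Z,A] = \E[(Y - h_0(W,A))\,s(\gO) \mid Z, A].$$
Next, the treatment-bridge identity $\E[\varphi_0(Z,a)\mid W,A=a] = p(W)p(a)/p(W,a)$ lets me rewrite the marginal target as a joint expectation,
$$\int \dot h(w,a)\, p(w)\, dw = \frac{1}{p(a)} \E\!\left[\dot h(W,A)\, \varphi_0(Z,A)\, \mathbf{1}[A=a]\right],$$
and conditioning on $(Z,A)$ and plugging in the previous identity gives
$$\E[\dot h(W,a)] = \E\!\left[ \frac{\varphi_0(Z,A)(Y - h_0(W,A))\, \mathbf{1}[A=a]}{p(A)}\, s(\gO) \right],$$
which is exactly the weighted-residual term of $\psi_{\text{ATE}}$. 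Summing the two contributions verifies the pathwise derivative identity, and a brief check (using the outcome-bridge equation under the inner conditioning) confirms $\E[\psi_{\text{ATE}}(\gO;a)]=0$, yielding the efficiency bound.

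The main obstacle is the implicit-derivative step: $h_\epsilon$ is only characterized as a solution of a first-kind Fredholm equation, so differentiating through it along the submodel requires a regular selection of solutions, together with surjectivity of $T$ and $T^*$ ensuring that the treatment bridge $\varphi_\epsilon$ exists and can act as the dual representer used to swap the marginal $p(w)\,dw$ integral for a $(Z,A)$-conditional argument. Under Assumption~(\ref{assum:ConditionalExpOperatorsSurjectivity}), this dual argument is available and follows the template of \citet{semiparametricProximalCausalInference}; the only substantive difference here is that our continuous-treatment bridge $\varphi_0$ reduces to theirs when $A$ is discrete, so the derivation transfers directly.
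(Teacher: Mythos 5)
Your proposal follows essentially the same route as the paper's proof: differentiate the outcome-bridge identification $\theta_{\text{ATE},\epsilon}(a)=\int h_\epsilon(w,a)\,p_\epsilon(w)\,dw$ along the submodel, recover the $h_0(W,a)-\theta_{\text{ATE}}(a)$ piece from the marginal-density derivative, characterize $\E[\dot h(W,A)\mid Z,A]$ by differentiating the bridge equation, and then use the treatment-bridge identity $\E[\varphi_0(Z,a)\mid W,A=a]=p(W)p(a)/p(W,a)$ to convert $\int \dot h(w,a)p(w)\,dw$ into a joint expectation that can be conditioned on $(Z,A)$. These are exactly the paper's Equations (\ref{eq:EIF_ImportantEquation1})--(\ref{eq:EIF_ImportantEquation5}). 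One point deserves correction, however: you assert that the tangent space is all of mean-zero $\gL_2(p)$, so that any influence function is automatically the \emph{efficient} one, and you attribute Assumption~(\ref{assum:ConditionalExpOperatorsSurjectivity}) to the existence of $h_\epsilon$ and $\varphi_\epsilon$ along the submodel. In the paper the model is semiparametric (the outcome-bridge equation is imposed at the true law), so the tangent space is the restricted set $\gT_1+\gT_2$, and the final step of the proof verifies that the candidate $\psi_{\text{ATE}}$ actually lies in $\gT_1+\gT_2$ --- this is precisely where the surjectivity of $T$ and $T^*$ is invoked. Your computation of the pathwise derivative is fine, but without that tangent-space membership check you have only shown that $\psi_{\text{ATE}}$ is \emph{an} influence function, not that it is the efficient one attaining the bound $\E[\psi_{\text{ATE}}(\gO;a)^2]$.
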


\begin{remark}
 We note that our identification result for the dose-response curve in the continuous treatment setting can be viewed as a generalization of the identification result implied by the efficient influence function in Theorem~(\ref{thm:EfficientInfluenceFunction}). Specifically, observe that, since the expectation of the influence function is zero, we have:
\begin{align*}
    0 &= \E[\varphi_0(Z, a)(Y - h_0(W, a))\frac{1}{p(a)} \mathbf{1}[A = a] + h_0(W, a) - \theta_{A T E}(a)]\\
    &=\E[\varphi_0(Z, a)(Y - h_0(W, a)) \mid A = a] + \E[h(W, a)] - \theta_{A T E}(a)
\end{align*}
Therefore, in the discrete treatment case, the dose-response curve can be identified by 
\begin{align}
    \theta_{A T E}(a) = \E[\varphi_0(Z, a)(Y - h_0(W, a)) \mid A = a] + \E[h(W, a)]
    \label{eq:Our_identification_appendix}
\end{align}
In Theorem~(\ref{theorem:DoublyRobustIdentification}), we show that the same representation identifies the dose-response curve in the continuous treatment case, while also exhibiting the double robustness property. 
\end{remark}

\begin{remark}
We note that the formulation of our efficient influence function (EIF) differs from that of \citet{semiparametricProximalCausalInference} particularly in the treatment bridge function term. In their framework, the treatment bridge function $q_0 : \gZ \times \gA \rightarrow \R$  is defined as the solution to
 \begin{align}
     \E[q_0(Z, a) | W, A = a] = \frac{1}{p(A = a \mid W)} = \frac{p(W)}{p(W, a)}, \quad \forall a\in\gA,
     \label{eq:Cuis_treatment_bridge_function}
 \end{align}
 and the outcome bridge function solves (\ref{eq:OutcomeBridgeEquation}). Under the Assumption (\ref{assum:ConditionalExpOperatorsSurjectivity}) and assuming that Equation (\ref{eq:OutcomeBridgeEquation}) holds under the true data generating law, \citet{semiparametricProximalCausalInference} shows that the EIF of the dose-response is given by
 \begin{align*}
     \psi_{\text{ATE}}^q(\gO; a) = q_0(Z, A)(Y - h_0(W, A)) \mathbf{1}[A = a] + h_0(W, a) - \theta_{A T E}(a),
 \end{align*}
 which leads to the identification formula
 \begin{align}
     \theta_{\text{ATE}}(a) = \E\left[q_0(Z, A)(Y - h_0(W, A)) \mathbf{1}[A = a]\right] + \E[h(W, a)].
     \label{eq:Cuis_identification_appendix}
 \end{align}
 Instead, our identification result in Equation~(\ref{eq:Our_identification_appendix}) differs from Equation~(\ref{eq:Cuis_identification_appendix}) in two key ways:
(i) We use a treatment bridge function $\varphi_0$ that solves Equation~(\ref{eq:TreatmentBridgeEquation}), which includes an additional factor of $p(a)$ in the numerator compared to Equation (\ref{eq:Cuis_treatment_bridge_function}); (ii) As a result, the first term in our identification formula involves the conditional expectation  $\E[\varphi_0(Z, a)(Y - h_0(W, a)) \mid A = a]$ over the distribution $p(Y, Z, W \mid A = a)$, whereas in Equation~(\ref{eq:Cuis_identification_appendix}), the expectation is taken over the joint distribution $p(Y, Z, W, A)$ with an indicator function enforcing the treatment level. Because of this difference, our framework extends naturally to continuous treatments without the kernel smoothing step of \citet{wu2024doubly}, whose technique has been extensively used in the doubly-robust literature to handle continuous treatments \citep{kennedy2017, Colangelo2020, zenati2025doubledebiased}. \postrebuttal{Furthermore, this structural difference extends the applicability of our methods to high-dimensional treatments, where kernel-smoothing-based DR approaches are ineffective as we demonstrated with dSprite experiment in Section (\ref{sec:NumericalExperiments_main}).}
\end{remark}

\begin{proof}[Proof of Theorem (\ref{thm:EfficientInfluenceFunction})]
We consider the following \citep{kennedy2024semiparametric, fisher2021visually, ichimura2022influence} parametric sub-model for nonparametric $\mathcal{P}$ indexed by $\epsilon$ which includes the true data generating law at $\epsilon = 0$ for some mean-zero function $\ell: \mathcal{O} \rightarrow \mathbb{R}$,
\begin{equation}
 p_\epsilon(\gO)= p(\gO)\{1+\epsilon \ell(\gO)\}
 \label{eq:parametric_submodel}
\end{equation}
where $\gO = (Y, Z, W, A)$ is the collection of variables of interest, $\|\ell\|_{\infty} \leq M<\infty$, and $\epsilon<1 / M$ so that $p_\epsilon(\gO) \geq 0$. 
We have the score function $s_\epsilon(\gO)|_{\epsilon=0}=\left.\frac{\partial}{\partial \epsilon} \log p_\epsilon(\gO)\right|_{\epsilon=0}$. 
To find the efficient influence function for $\theta_{ATE}(a)$, we need to find the curve $\psi_{ATE}$ that satisfies
\begin{equation}
 \left.\frac{\partial}{\partial \epsilon} \theta_{A T E}\left(a; p_\epsilon\right)\right|_{\epsilon=0}=\left.\int \psi_{A T E}(\gO ; p) s_\epsilon(\gO) d p(\gO)\right|_{\epsilon=0}
 \label{eq:pathwise}
\end{equation}
First of all, we recall that
$$
\mathbb{E}[Y \mid Z, A]=\int h(w, A) dp(w \mid Z, A)
$$
and define the bridge function $h_\epsilon$ so that:
$$
\mathbb{E}_{p_\epsilon}[Y \mid Z, A] =\int h_\epsilon(w, A) p_\epsilon(w \mid Z, A) d w.
$$
Therefore, we have
\begin{align*}
    \int \left(y - h_\epsilon(w, a)\right) p_\epsilon(w, y | z, a)\mid_{\epsilon = 0} dw dy = 0,
\end{align*}
which implies that 
\begin{align}
    \int \frac{\partial h_\epsilon(w, a)}{\partial \epsilon} \Big\vert_{\epsilon = 0} p(w | z, a)dw  = \int (y - h(w, a)) \frac{\partial log p_\epsilon(w, y | z, a)}{\partial \epsilon}p_\epsilon(w, y | z, a)\Big\vert_{\epsilon = 0}dw dy
    \label{eq:EIF_ImportantEquation1}
\end{align}
Now, we recall that
\begin{align*}
    \theta_{ATE}(a; p_\epsilon) = \int h_\epsilon(w, a) p_\epsilon(w) dw.
\end{align*}
Hence,
\begin{align}
    \frac{\partial \theta_{ATE}(a; p_\epsilon)}{\partial \epsilon} \Big\vert_{\epsilon = 0} &= \int \frac{\partial h_\epsilon(w, a)}{\partial \epsilon} \Big\vert_{\epsilon = 0} p(w) dw + \int h(w, a) \frac{\partial p_\epsilon(w)}{\partial \epsilon}\Big\vert_{\epsilon = 0} dw\nonumber\\
    &= \int \frac{\partial h_\epsilon(w, a)}{\partial \epsilon} \Big\vert_{\epsilon = 0} p(w) dw + \int h(w, a) \frac{\partial \log p_\epsilon(w)}{\partial \epsilon}\Big\vert_{\epsilon = 0} p(w) dw
    \label{eq:EIF_ImportantEquation2}
\end{align}
We notice that
\begin{align}
    \int h(w, a) \frac{\partial \log p_\epsilon(w)}{\partial \epsilon}\Big\vert_{\epsilon = 0} p(w) dw = \E \left[\left(h(W, a) - \theta_{ATE}(a)\right) \frac{\partial \log p_\epsilon(Z, Y, A, W)}{\partial \epsilon} \Big\vert_{\epsilon = 0}\right].
    \label{eq:EIF_ImportantEquation3}
\end{align}
One can show this as follows:
\begin{align*}
    &\E \left[\left(h(W, a) - \theta_{ATE}(a)\right) \frac{\partial \log p_\epsilon(Z, Y, A, W)}{\partial \epsilon} \Big\vert_{\epsilon = 0}\right] \\
    &= \int h(w, a) \frac{\partial p_\epsilon(z, y, a', w)}{\partial \epsilon} \Big\vert_{\epsilon = 0} d p(z, y, a', w) - \theta_{ATE}(a) \int \frac{\partial \log p_{\epsilon}(z, y, a', w)}{\partial \epsilon} \Big\vert_{\epsilon = 0} d p(z, y, a', w)\\
    &= \int h(w, a) \frac{\partial \log p_\epsilon(w)}{\partial \epsilon} \Big\vert_{\epsilon = 0} d p(z, y, a', w) + \int h(w, a) \frac{\partial \log p_\epsilon(z, y, a'|w)}{\partial \epsilon} \Big\vert_{\epsilon = 0} d p(z, y, a', w)\\
    &- \theta_{ATE}(a) \underbrace{\int \frac{\partial \log p_{\epsilon}(w)}{\partial \epsilon} \Big\vert_{\epsilon = 0} d p(z, y, a', w)}_{= 0} - \theta_{ATE}(a) \int \frac{\partial \log p_{\epsilon}(z, y, a' | w)}{\partial \epsilon} \Big\vert_{\epsilon = 0} d p(z, y, a', w) \\
    &= \int h(w, a) \frac{\partial \log p_\epsilon(w)}{\partial \epsilon} \Big\vert_{\epsilon = 0} d p(z, y, a', w) + \int (h(w, a) - \theta_{ATE}(a)) \frac{\partial \log p_\epsilon(z, y, a'|w)}{\partial \epsilon} \Big\vert_{\epsilon = 0} d p(z, y, a', w)\\
    &= \int h(w, a) \frac{\partial \log p_\epsilon(w)}{\partial \epsilon} \Big\vert_{\epsilon = 0} d p(z, y, a', w) \\&+ \int (h(w, a) - \theta_{ATE}(a)) \underbrace{\left(\int \frac{\partial \log p_\epsilon(z, y, a'|w)}{\partial \epsilon} \Big\vert_{\epsilon = 0} d p(z, y, a', | w)\right)}_{ = 0} d p(w)\\
    &= \int h(w, a) \frac{\partial \log p_\epsilon(w)}{\partial \epsilon} \Big\vert_{\epsilon = 0} d p(w).
\end{align*}

Next, we consider the first component in Equation (\ref{eq:EIF_ImportantEquation2}):
\begin{align}
    &\int \frac{\partial h_\epsilon(w, a)}{\partial \epsilon} \Big\vert_{\epsilon = 0} p(w) dw = \int \frac{\partial h_\epsilon(w, a)}{\partial \epsilon} \Big\vert_{\epsilon = 0} \frac{p(w)}{p(w, a)} p(w, a) dw \nonumber\\
    &= \int \frac{\partial h_\epsilon(w, a')}{\partial \epsilon} \Big\vert_{\epsilon = 0} \mathbf{1}[a = a']\frac{p(w)}{p(w, a')} p(w, a') dw da'\enspace \nonumber\\
    &= \int \frac{\partial h_\epsilon(w, a')}{\partial \epsilon} \Big\vert_{\epsilon = 0} \mathbf{1}[a = a']\frac{1}{p(a')} \E[\varphi_0(Z, a') \mid W = w, A = a'] p(w, a') dw da'\nonumber\\
    &=\int \frac{\partial h_\epsilon(w, a')}{\partial \epsilon} \Big\vert_{\epsilon = 0} \mathbf{1}[a = a']\frac{1}{p(a')} \varphi_0(z, a') p(z, w, a') dw da'\nonumber\\
    &=\int \mathbf{1}[a = a']\frac{1}{p(a')} \varphi_0(z, a') \left(\int \frac{\partial h_\epsilon(w, a')}{\partial \epsilon} \Big\vert_{\epsilon = 0} p(w | z, a') dw\right) p(z, a') dz da'\nonumber\\
    &= \int \mathbf{1}[a = a']\frac{1}{p(a')} \varphi_0(z, a') \left(\int (y - h(w, a)) \frac{\partial log p_\epsilon(w, y | z, a)}{\partial \epsilon}\Big\vert_{\epsilon = 0} p(w, y | z, a') dw dy\right) p(z, a') dz da' \label{eq:EIF_ImportantEquation4}\\
    &= \int \mathbf{1}[a = a']\frac{1}{p(a')} \varphi_0(z, a') (y - h(w, a)) \frac{\partial log p_\epsilon(w, y | z, a')}{\partial \epsilon}\Big\vert_{\epsilon = 0} p(w, y, z, a') dw dy dz da'\nonumber\\
    &+ \underbrace{\int \mathbf{1}[a = a']\frac{1}{p(a')} \varphi_0(z, a') (y - h(w, a)) \frac{\partial log p_\epsilon( z, a')}{\partial \epsilon}\Big\vert_{\epsilon = 0} p(w, y, z, a') dw dy dz da'}_{{ = 0}}\nonumber\\
    &=\E\left[\mathbf{1}[A = a] \frac{1}{p(A)} \varphi_0(Z, A) (Y - h(W, A)) \frac{\partial \log p_\epsilon(W, Y, Z, A)}{\partial \epsilon}\Big\vert_{\epsilon = 0}\right],
    \label{eq:EIF_ImportantEquation5}
\end{align}
where equality in Equation (\ref{eq:EIF_ImportantEquation4}) is due to Equation (\ref{eq:EIF_ImportantEquation1}). Now, combining the results in Equations (\ref{eq:EIF_ImportantEquation3}) and (\ref{eq:EIF_ImportantEquation5}), we obtain
\begin{align*}
    &\frac{\partial \theta_{ATE}(a; p_\epsilon)}{\partial \epsilon}\Big\vert_{\epsilon = 0} =\\& \E\left[\left(\mathbf{1}[A = a] \frac{1}{p(A)} \varphi_0(Z, A) (Y - h(W, A)) + h(W, a) - \theta_{ATE}(a)\right) \frac{\partial \log p_\epsilon(W, Y, Z, A)}{\partial \epsilon}\Big\vert_{\epsilon = 0}\right],
\end{align*}
which shows that the  {efficient} influence function for $\theta_{ATE}(a)$ is given by
\begin{align}
    \psi_{A T E}(\gO; a)= \varphi_0(Z, A)(Y - h(W, A))\frac{1}{p(A)} \mathbf{1}[A = a] + h_0(W, a) - \theta_{A T E}(a)
    \label{eq:EIF_proof_influence_function_form}
\end{align}
Next, we show that $\psi_{A T E}(a)$ belongs to a tangent space $\gT_1 + \gT_2$ using Assumption (\ref{assum:ConditionalExpOperatorsSurjectivity}), where
\begin{align*}
    \gT_1 &:= \{s(Z, A) \in \gL_2(\gZ \times \gA, p_{Z, A}) \mid \E[s(Z, A)] = 0\}\\
    \gT_2 & := \{s(Y, W \mid Z, A) \in \gL_2(\gZ \times \gA, p_{Z, A})^\perp \mid \E\left[(Y - h(W,A) s(Y, W \mid Z, A) \mid Z, A] \in cl(\gR(T))\right]\}.
\end{align*}
Here, $\gL_2(\gZ \times \gA, p_{Z, A})^\perp$ is the orthogonal complement of $\gL_2(\gZ \times \gA, p_{Z, A})$, $\gR(T)$ is the range space of the conditional mean operator $T$, and $cl(\cdot)$ denotes the closure. We decompose Equation (\ref{eq:EIF_proof_influence_function_form}) as follows:
\begin{align*}
    &\E[h(W, a) - \theta_{\text{ATE}}(a) \mid Z, A]\\
    &+ h(W, a) - \theta_{\text{ATE}}(a) - \E[h(W, a) - \theta_{\text{ATE}}(a) \mid Z, A]\\
    &+ \frac{1}{p(A)} \mathbf{1}[A = a] \varphi_0(Z, A) (Y - h(W, A)).
\end{align*}
Notice that $\E[h(W, a) - \theta_{\text{ATE}}(a) \mid Z, A] \in \gT_1$ due to Equation (\ref{eq:OutcomeBridgeEquation}). For the remaining two terms, notice that
\begin{align}
    &\E\left[h(W, a) - \theta_{\text{ATE}}(a) - \E[h(W, a) - \theta_{\text{ATE}}(a) \mid Z, A] \mid Z, A\right] = 0 \label{eq:EIF_proof_important_equation1}\\
    &\E\left[\frac{1}{p(A)} \mathbf{1}[A = a] \varphi_0(Z, A) (Y - h(W, A)) \mid Z, A\right] = 0.
    \label{eq:EIF_proof_important_equation2}
\end{align}
Hence, using Equations (\ref{eq:EIF_proof_important_equation1}) and (\ref{eq:EIF_proof_important_equation2}) and Assumption (\ref{assum:ConditionalExpOperatorsSurjectivity}), we observe that
\begin{align*}
    &\E\left[(Y - h_0(W, A))\left(h(W, a) - \theta_{\text{ATE}}(a) - \E[h(W, a) - \theta_{\text{ATE}}(a) \mid Z, A]\right) \mid Z, A\right] \in cl(\gR(T)),\\
    &\E\left[\frac{1}{p(A)} \mathbf{1}[A = a] \varphi_0(Z, A) (Y - h(W, A))^2 \mid Z, A\right] \in cl(\gR(T)).
\end{align*}
\end{proof}

\section{Algorithm derivations}
\label{sec:AlgorithmDerivationsAppendix}
In this section, we derive the doubly robust PCL algorithms, called doubly robust kernel proxy variable (DRKPV) and doubly robust proxy maximum moment restriction (DRPMMR). We begin by reviewing the results from \citet{Mastouri2021ProximalCL} and \citet{bozkurt2025density}, which establish estimation procedures for the outcome and treatment bridge functions: namely, the Kernel Proxy Variable (KPV), Proxy Maximum Moment Restriction (PMMR), and Kernel Alternative Proxy (KAP) methods. Building on these, we introduce fully kernelized doubly robust algorithms for estimating the dose-response curve from observational data, without requiring an explicit density ratio estimation step. This property makes our methods particularly suitable for continuous and high-dimensional treatment settings.




\subsection{Outcome bridge function-based approaches}
\label{Appendix:KPVandPMMR_Algos}
Two kernel-based methods  KPV and PMMR have been proposed in \citep{Mastouri2021ProximalCL} to estimate the outcome bridge function $h : \gW \times \gA \rightarrow \R$, which satisfies
\begin{align*}
\mathbb{E}[Y \mid Z, A] & =\int h_0(w, A) p(w \mid Z, A) d w. 
\end{align*}
Once $h_0$ satisfies this equation, the dose-response curve can be identified by $\E[h_0(W, a)]$ \citep{Miao2018Identifying, Mastouri2021ProximalCL, xu2021deep}, as stated in Theorem~(\ref{theorem:OutcomeBridgeIdentificationATE}). In both approaches proposed by \citet{Mastouri2021ProximalCL}, $h_0$ is assumed to lie in the RKHS $\gH_\gW \otimes \gH_\gA$. Therefore, we note that
\begin{align*}
    &\theta_{ATE}(a) = \E[h_0(W, a)] = \E\left[\left\langle h_0, \phi_\gW(W) \otimes \phi_\gA(a) \right\rangle_{\gH_\gW \otimes \gH_\gA}\right]\\
    &= \left\langle h_0, \E[\phi_\gW(W)] \otimes \phi_\gA(a) \right\rangle_{\gH_\gW \otimes \gH_\gA} \approx \left\langle h_0, \hat{\mu}_W \otimes \phi_\gA(a) \right\rangle_{\gH_\gW \otimes \gH_\gA},
\end{align*}
where $\hat{\mu}_W = \frac{1}{t_h} \sum_{i = 1}^n \phi_\gW(\dot{w}_i)$ is the sample-based estimate of the mean embedding $\E[\phi_\gW(W)]$. As a result, by replacing $h_0$ with its approximation $\hat{h}$ in the above expression, the dose-response can be estimated using the inner product:
\begin{align*}
    \theta_{ATE}(a) \approx \left\langle \hat{h}, \hat{\mu}_W \otimes \phi_\gA(a) \right\rangle_{\gH_\gW \otimes \gH_\gA}.
\end{align*}
In the following, we review the estimation procedures of both KPV and PMMR proposed by \citet{Mastouri2021ProximalCL}. Both methods yield closed-form solutions for the estimation of the bridge function and the causal function, which we present in the subsequent subsections.

\subsubsection{Kernel proxy variable algorithm}
\label{sec:KPVAlgoRevisionAppendix}
To solve the outcome bridge function equation in Equation (\ref{eq:OutcomeBridgeEquation}), the KPV method aims to minimize the following regularized population loss function, assuming that $h \in \gH_{\gW\gA}$:
\begin{align*}
    \gL_{\text{KPV}}(h) = \E\left[ \left(Y - \E[h(W, A) \mid Z, A]\right)^2\right] + \lambda_{h, 2} \|h\|_{\gH_\gW \otimes \gH_\gA},
\end{align*}
where $\lambda_{h, 2}$ is the regularization constant for the second stage regression of KPV.
Since this loss function involves the conditional expectation $\E[h(W, A) \mid Z, A]$, it cannot be directly minimized. To address this, the KPV algorithm decomposes the problem into two-stage regressions. Note that, under Assumption~(\ref{assum:KernelAssumptions_main}), the conditional expectation can be rewritten as 
\begin{align*}
    \E[h(W, A) \mid Z, A] &= \E \left[\left\langle h, \phi_\gW(W) \otimes \phi_\gA(A) \right\rangle \mid Z, A \right]\\
    &= \left\langle h, \E[\phi_\gW(W)\mid Z, A] \otimes \phi_\gA(A) \right\rangle.
\end{align*}
Thus, the first step in KPV is to approximate the conditional mean embedding $\mu_{W | Z, A} (z, a) = \E[\phi_\gW(W)\mid Z = z, A = a]$ using vector-valued kernel ridge regression. In particular, under the regularity condition that $\E[g(W) | Z = \cdot, A = \cdot]$ lies in $\gH_{\gZ\gA}$ for all $g \in \gH_\gW$, there exists an operator $C_{W | Z, A} \in \gS_2(\gH_{\gZ\gA}, \gH_\gW)$ such that $\mu_{W | Z, A} (z, a) = C_{W | Z, A} \left( \phi_\gZ(z) \otimes \phi_\gA(a)\right)$. 

Let $\{\bar{w}_i, \bar{z}_i, \bar{a}_i\}_{i = 1}^{n_{h}}$ be i.i.d. samples from the distribution $p(W, Z, A)$, representing the first-stage regression samples, where $n_h$ is the number of first-stage samples for KPV algorithm. The conditional expectation operator is estimated by minimizing the following regularized least-squares objective:
\begin{align*}
    \hat{\gL}^c_{KPV}(C) = \frac{1}{n_{h}} \sum_{i = 1}^{n_{h}} \|\phi_\gW(\bar{w}_i) - C\left(\phi_\gZ(\bar{z}_i) \otimes \phi_\gA(\bar{a}_i)\right)\| + \lambda_{h, 1} \|C\|^2_{\gS_2(\gH_\gW, \gH_{\gZ\gA})},
\end{align*}
where $\lambda_{h , 1}$ is the regularization constant for the first stage regression of KPV.
The minimizer $\hat{C}_{W| Z, A}$ has a closed form solution and is given by \citep{Mastouri2021ProximalCL}:
\begin{align*}
    \hat{C}_{W| Z, A}\left(\phi_\gZ(z) \otimes \phi_\gA(a)\right) = \hat{\mu}_{W | Z, A}(z, a) = \sum_{i = 1}^{n_h} \mBeta_i(z, a) \phi_\gW(\bar{w}_i),
\end{align*}
where $$\mBeta(z, a) = \left(\mK_{\bar{Z}\bar{Z}} \odot \mK_{\bar{A}\bar{A}} + n \lambda_1 \mI\right)^{-1} \left( \mK_{\bar{Z}z} \odot \mK_{\bar{A}a}\right).$$
By substituting this approximation into the sample-based version of the loss function $\gL_{KPV}(h)$, the KPV algorithm estimates the bridge function $h_0$. Specifically, let $\{\Tilde{y}_i, \Tilde{z}_i, \Tilde{a}_i\}_{i = 1}^{m_{h}}$ be i.i.d. samples from  the distribution $p(Y, Z, A)$, denoting the second-stage data, where $m_{h}$ is the number of second-stage samples for the KPV algorithm. The sample-based KPV objective is then given by
\begin{align}
    \hat{\gL}_{KPV}(h) = \frac{1}{m_{h}} \sum_{i = 1}^{m_{h}} \left(\Tilde{y}_i - \langle h, \hat{\mu}_{W | Z, A}(\Tilde{z}_i, \Tilde{a}_i) \otimes \phi_\gA(\Tilde{a}_i)\rangle\right)^2 + \lambda_{h, 2} \|h\|^2_{\gH_{\gW\gA}}.
    \label{eq:KPV_sampleObjective}
\end{align}
By the representer theorem \citep{GeneralizedRepresenterTheorem}, $h$ admits the representation
\begin{align}
    h = \sum_{i = 1}^{n_{h}}\sum_{j = 1}^{m_{h}} \alpha_{ij} \phi_\gW(\bar{w}_i) \otimes \phi_\gA(\Tilde{a}_j),
    \label{eq:OutcomeBridgeRepresentationKPV}
\end{align}
for some coefficients $\alpha_{ij}$. Substituting this representation into the sample loss in Equation (\ref{eq:KPV_sampleObjective}), KPV finds a closed-form solution for the bridge function by minimizing for the coefficients $\alpha_{ij}$. For further details, we refer the reader to \citep{Mastouri2021ProximalCL}.

Below, we summarize the pseudo-code for the KPV algorithm. Unlike the original implementation in \citep{Mastouri2021ProximalCL}, we use a more numerically stable version that optimizes fewer parameters, as presented in \citep{xu2024kernelsingleproxycontrol} (see Appendix F in \citep{xu2024kernelsingleproxycontrol}). This version of the KPV algorithm optimizes $m_{h}$ parameters rather than $n_{h} \times m_{h}$ parameters.

\begin{algorithm}[H]
{\footnotesize
\textbf{Input}: First-stage samples $\{\bar{w}_i, \bar{z}_i, \bar{a}_i\}_{i=1}^{n_{h}}$, Second-stage samples $\{\Tilde{y}_i, \Tilde{z}_i, \Tilde{a}_i\}_{i = 1}^{m_h}$, \\
\textbf{Parameters:} Regularization parameters $\lambda_{h, 1}$ and $\lambda_{h, 2}$, and kernel functions $k_\gF(\cdot, \cdot)$ for each $\gF \in \{\gA, \gZ, \gW\}$.\\
\textbf{Output}: Bridge function estimation $\hat{h}$. \looseness=-1
\begin{algorithmic}[1]
\STATE For variables $F \in \{W, Z, A\}$ with domain $\gF$, define the (required) first and second-stage kernel matrices/vectors as follows:
\begin{align*}
    &\mK_{\bar{F}\bar{F}} = [k_\gF(\bar{f}_i, \bar{f}_j)]_{ij} \in \R^{n_{h} \times n_{h}}, \quad \mK_{\bar{F}\Tilde{F}} = [k_{\gF}(\bar{f}_i, \Tilde{f}_j)]_{ij} \in \R^{n_{h} \times m_{h}}, \\ &\mK_{\Tilde{F}\Tilde{F}} = [k_\gF(\Tilde{f}_i, \Tilde{f}_j)]_{ij} \in \R^{m_{h} \times m_{h}},
    \mK_{\bar{F}f} = [k_\gF(\bar{f}_i, f)]_i \in \R^{n_{h}}, \\&\mK_{\Tilde{F}f} = [k_\gF(\Tilde{f}_j, f)]_j \in \R^{m_{h}}.
\end{align*}
\STATE Define the following matrices and coefficient vector:
\begin{align*}
    \mB &= \left(\mK_{\bar{A}\bar{A}} \odot \mK_{\bar{Z}\bar{Z}} + n_h \lambda_{h, 1} \mI\right)^{-1}(\mK_{\bar{A}\Tilde{A}} \odot \mK_{\bar{Z}\Tilde{Z}}), \\ \mM &= \mK_{\Tilde{A}\Tilde{A}} \odot \left(\mB^\top \mK_{\bar{W}\bar{W}} \mB\right),\\ \bm{\alpha} &= \left(\mM + m_h \lambda_{h, 2} \mI\right)^{-1} \Tilde{\mY}.
\end{align*}
\STATE The bridge function estimation is given by $\hat{h}(w, a) = \bm{\alpha}^\top \left(\mK_{\Tilde{A}a} \odot \left(\mB^\top \mK_{\bar{W}w}\right)\right).$
\STATE The dose-response estimation is given by $$\hat{\theta}_{ATE}(a) = \sum_{i = 1}^t \hat{h}(\dot{w}_i, a) = \sum_{i = 1}^t \bm{\alpha}^\top \left(\mK_{\Tilde{A}a} \odot \left(\mB^\top \mK_{\bar{W}\dot{w}_i}\right)\right)$$ where we take $\{\dot{w}_i\}_{i = 1}^{t_h} = \{\bar{w}_i\}_{i = 1}^{n_h} \cup \{\Tilde{w}_i\}_{i = 1}^{m_h}$
\end{algorithmic}}
\caption{Kernel Proxy Variable Algorithm \citep{Mastouri2021ProximalCL, xu2024kernelsingleproxycontrol}}
\label{algo:KPVAlgorithm}
\end{algorithm}

\subsubsection{Proximal maximum moment restriction algorithm}
\label{sec:PMMRAlgoRevisionAppendix}

The PMMR algorithm relies on the following conditional moment restriction result:
\begin{lemma}[Lemma (1) in \citep{Mastouri2021ProximalCL}]
    A measurable function $h$ on the domain $\gW \times \gA$ solves  
     \begin{align*}
         \E[Y | Z = z, A = a] = \int_\gW h(w, a) p(w | z, a) dw
     \end{align*}
    if and only if it satisfies the conditional moment restriction (CMR): $\E[Y - h(W, A)| Z, A] = 0$.
    \label{lemma:PMMR_moment_lemma}
\end{lemma}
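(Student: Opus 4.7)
The plan is to prove both directions of the equivalence by rewriting the conditional expectation $\E[h(W,A) \mid Z, A]$ as an integral against the regular conditional density $p(w \mid z, a)$, with $A$ held fixed at $a$. The key observation is that when we condition on the event $\{Z = z, A = a\}$, the quantity $h(W, A)$ equals $h(W, a)$ (since $A$ is determined by the conditioning), so that
\begin{align*}
\E[h(W, A) \mid Z = z, A = a] = \int_{\gW} h(w, a)\, p(w \mid z, a)\, dw,
\end{align*}
for $p(Z, A)$-almost every $(z, a)$. This identity is the only substantive ingredient.

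For the forward direction ($\Rightarrow$), I would assume that $h$ solves the integral equation $\E[Y \mid Z = z, A = a] = \int_{\gW} h(w, a)\, p(w \mid z, a)\, dw$. Combining this with the identity above immediately yields $\E[Y \mid Z, A] = \E[h(W, A) \mid Z, A]$ almost surely, which is precisely the CMR $\E[Y - h(W, A) \mid Z, A] = 0$. For the reverse direction ($\Leftarrow$), I would start from the CMR, rearrange it as $\E[Y \mid Z, A] = \E[h(W, A) \mid Z, A]$, and apply the identity once more to rewrite the right-hand side as $\int_{\gW} h(w, A)\, p(w \mid Z, A)\, dw$; evaluating pointwise at $(z, a)$ recovers the bridge equation.

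The main (mild) obstacle is justifying the Fubini-style identity rigorously: this requires $h(\cdot, a)$ to be integrable against $p(\cdot \mid z, a)$ and the existence of a regular conditional distribution for $W$ given $(Z, A)$. Under the standing assumption that $\gW, \gZ, \gA$ are Polish (Assumption~(\ref{assum:KernelAssumptions_main})) and standard integrability of $Y$ and $h$, this is automatic. Once the identity is established, both implications reduce to one-line computations, which is why \citet{Mastouri2021ProximalCL} state the result as a lemma without detailed proof.
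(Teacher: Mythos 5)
Your argument is correct and is essentially the standard (and only natural) proof: the paper itself does not prove this lemma but imports it directly from \citet{Mastouri2021ProximalCL}, whose Lemma 1 rests on exactly the identity $\E[h(W,A) \mid Z=z, A=a] = \int_{\gW} h(w,a)\, p(w \mid z,a)\, dw$ that you isolate. Your handling of the measure-theoretic caveats (regular conditional distributions on Polish spaces, integrability of $h$ and $Y$) is appropriate and consistent with the paper's standing assumptions.
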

Lemma (\ref{lemma:PMMR_moment_lemma}) implies that $\E[(Y - h(W, A)) g(Z, A)] = 0$ for all measurable $g$ defined on the domain $\gZ \times \gA$. Thus, to solve the CMR, the PMMR algorithm solves the CMR by minimizing the regularized maximum moment restriction (MMR) objective to find the function $h \in \gH_{\gW\gA}$:
\begin{align}
    \gL_{\text{MMR}}(h) = \sup_{\substack{g \in \gH_{\gZ\gA} \\\| g\| \le 1}} \E[(Y - h(W, A)) g(Z, A)]^2 + \lambda_{\text{MMR}} \|h\|_{\gH_{\gW\gA}}.
    \label{eq:PMMR_population_objective}
\end{align}
Under the integrability condition $\E\left[(Y - h(W, A))^2 \langle \phi_{\gZ\gA}(Z, A), \phi_{\gZ\gA}(Z, A)\rangle_{\gH_{\gZ\gA}}\right] < \infty$, \citet[Lemma (2)]{Mastouri2021ProximalCL} shows that the regularized MMR objective admits the equivalent form: 
\begin{align}
    \gL_{\text{MMR}}(h) = \E\left[(Y - h(W, A)) (Y' - h(W', A')) \langle \phi_{\gZ\gA}(Z, A), \phi_{\gZ\gA}(Z', A')\rangle_{\gH_{\gZ\gA}}\right] + \lambda_{\text{MMR}} \|h\|_{\gH_{\gW\gA}},
\end{align}
where $V'$ denotes an independent copy of $V \in {A, Z, W}$, and $\phi_{\gZ\gA}$ is the canonical feature map of $\gH_{\gZ\gA}$. In this paper, we adopt the tensor product structure for the feature map and RKHS by defining $\phi_{\gZ\gA}(z, a) = \phi_\gZ(z) \otimes \phi_\gA(a)$ and $\gH_{\gZ\gA} = \gH_\gZ \otimes \gH_\gA$. This choice allows us to construct $\gH_{\gZ\gA}$ from separate kernels over $\gZ$ and $\gA$. However, it is also possible to define a kernel directly on the joint domain $\gZ \times \gA$ for PMMR algorithm, in which case the feature map and RKHS do not need to take the form of a tensor product.

As a result, given training samples $\{y_i, w_i, z_i, a_i\}_{i=1}^{t}$, the empirical objective for PMMR is defined as
\begin{align}
    \hat{\gL}_{\text{MMR}}(h) = \frac{1}{t^2} \sum_{i = 1}^t \sum_{j = 1}^t (y_i - h_i)(y_j - h_j) k_{ij} + \lambda_{\text{MMR}} \|h\|_{\gH_{\gW\gA}},
    \label{eq:PMMR_sample_objective}
\end{align}
where $h_i = h(w_i, a_i)$ and $k_{ij} = \langle \phi_{\gZ\gA}(z_i, a_i), \phi_{\gZ\gA}(z_j, a_j)\rangle_{\gH_{\gZ\gA}}$. Recall that we assume $h \in \gH_{\gW} \otimes \gH_\gA$. By the representer theorem \citep{GeneralizedRepresenterTheorem}, the solution to Equation~(\ref{eq:PMMR_sample_objective}) admits the following representation:
\begin{align*}
    h = \sum_{i = 1}^t \alpha_i \phi_\gW(w_i) \otimes \phi_{\gA}(a_i).
\end{align*}
Substituting this representation into the sample loss in Equation~(\ref{eq:PMMR_sample_objective}), the empirical objective becomes
\begin{align*}
    \hat{\bm{\alpha}} = \argmin_{\bm{\alpha}} \frac{1}{t^2} (\mathbf{Y} - \mL \bm{\alpha})^\top \mW (\mathbf{Y} - \mL \bm{\alpha}) + \lambda_{\text{MMR}} \bm{\alpha}^\top \mL \bm{\alpha},
\end{align*}
where $\mathbf{Y} = \begin{bmatrix}
    y_1 & y_2 & \ldots & y_t
\end{bmatrix}^\top$, $\bm{\alpha} = \begin{bmatrix}
    \alpha_1 & \alpha_2 & \ldots & \alpha_t
\end{bmatrix}^\top$, $[\mL]_{ij} = k_\gW(w_i, w_j) k_\gA(a_i, a_j)$, and $[\mW]_{ij} = k_\gZ(z_i, z_j) k_\gZ(a_i, a_j)$. The PMMR algorithm solves for the coefficient vector $\bm{\alpha}$. We summarize the procedure in Algorithm~(\ref{algo:PMMRAlgorithm}), following the variant presented by \citet{xu2024kernelsingleproxycontrol}.
\begin{algorithm}[H]
{\footnotesize
\textbf{Input}: Training samples $\{w_i, z_i, a_i, y_i\}_{i=1}^{t}$, \\
\textbf{Parameters:} Regularization parameter $\lambda_{\text{MMR}}$, and kernel functions $k_\gF(\cdot, \cdot)$ for each $\gF \in \{\gA, \gZ, \gW\}$.\\
\textbf{Output}: Bridge function estimation $\hat{h}$. \looseness=-1
\begin{algorithmic}[1]
\STATE For variables $F \in \{W, Z, A\}$ with domain $\gF$, define the (required) kernel matrices/vectors as follows:
\begin{align*}
    &\mK_{FF} = [k_\gF(f_i, f_j)]_{ij} \in \R^{t \times t}, \quad \mK_{Ff} = [k_\gF(f_i, f)]_i \in \R^t.
\end{align*}
\STATE Construct the following kernel matrices and the coefficient vector:
\begin{align*}
    \mL = \mK_{AA} \odot \mK_{WW}, \quad \mG = \mK_{AA} \odot \mK_{ZZ}, \quad \bm{\alpha} = \sqrt{\mG} \left(\sqrt{\mG} \mL \sqrt{\mG} + t \lambda_{\text{MMR}} \mI\right)^{-1} \sqrt{\mG} \mY,
\end{align*}
where $\sqrt{\mG}$ is the square root of the matrix $\mG$.
\STATE The bridge function estimation is given by $\hat{h}(w, a) = \bm{\alpha}^\top \left(\mK_{Aa} \odot \mK_{Ww} \right).$
\STATE The dose-response estimation is given by $\hat{\theta}_{ATE}(a) \approx \sum_{i = 1}^t \hat{h}(w_i, a) = \sum_{i = 1}^t \bm{\alpha}^\top \left(\mK_{Aa} \odot \mK_{Ww_i} \right).$
\end{algorithmic}}
\caption{Proximal Maximum Moment Restriction Algorithm \citep{Mastouri2021ProximalCL, xu2024kernelsingleproxycontrol}}
\label{algo:PMMRAlgorithm}
\end{algorithm}

\subsection{Treatment bridge function-based algorithm}
\label{sec:KernelAlternativeProxySummaryAppendix}
An alternative identification result in the proxy causal learning setting, based on density ratios, is presented in \citep{bozkurt2025density}. Unlike previous works \citep{semiparametricProximalCausalInference, wu2024doubly}, this approach bypasses the explicit density ratio estimation step by leveraging a novel bridge function definition. Specifically, \citep{bozkurt2025density} proposes Kernel Alternative Proxy (KAP) algorithm, a two-stage regression method to estimate the treatment bridge function $\varphi_0$ satisfying
\begin{align*}
\E\left[\varphi_0(Z, a) \mid W, A=a\right]=\frac{p(W) p(a)}{p(W, a)}.
\end{align*}
This new approach aims to minimize the following regularized population loss function, assuming that $\varphi \in \gH_{\gZ\gA}$
\begin{align}
    \gL_{\text{KAP}}(\varphi) &= \E\left[\left(r(W, A) - \E\left[\varphi(Z, A) \mid W, A\right]\right)^2\right] + \lambda_{\varphi, 2} \|\varphi\|_{\gH_{\gZ\gA}}\nonumber\\
    &= \E\left[\E\left[\varphi(Z, A) \mid W, A\right]^2\right] - 2 \E_W \E_A \left[\E\left[\varphi(Z, A) \mid W, A\right]\right] + \lambda_{\varphi, 2} \|\varphi\|_{\gH_{\gZ\gA}} + \text{const.},
    \label{eq:KernelAlternativeProxyPopulationLoss}
\end{align}
where $\lambda_{\varphi, 2}$ is the regularization parameter for the second stage regression, the notation $\E_W \E_A[.]$ denotes the decoupled expectation taken over $W$ and $A$, i.e., for any function $\ell$, $\E_W \E_A[\ell(W, A)] = \int \ell(w, a) p(w) p(a) dw da$, and $'\text{const.}'$ represents the terms independent of $\varphi$. As in the outcome bridge function estimation, this loss function cannot be directly minimized due to the presence of the conditional expectation term $\E\left[\varphi(Z, A) \mid W, A\right]$. Notably, appealing to Assumption~(\ref{assum:KernelAssumptions_main}), this expectation can be rewritten as
\begin{align*}
    \E\left[\varphi(Z, A) \mid W, A\right] &= \E\left[\left\langle \varphi, \phi_\gZ(Z) \otimes \phi_\gA(A) \right\rangle_{\gH_{\gZ\gA}} \mid W, A\right]\\
    &= \left\langle \varphi, \E[\phi_\gZ(Z) \mid W, A] \otimes \phi_\gA(A) \right\rangle_{\gH_{\gZ\gA}}
\end{align*}
Thus, the first step in KAP method is to estimate the conditional mean embedding $\mu_{Z | W, A}(w, a) = \E[\phi_\gZ(Z) \mid W = w, A = a]$ using vector valued kernel ridge regression. Specifically, under the regularity condition that $\E[g(Z) | W = \cdot, A = \cdot]$ lies in $\gH_{\gW\gA}$ for all $g \in \gH_\gZ$, there exists an operator $C_{Z | W, A} \in \gS_2(\gH_{\gW\gA}, \gZ)$ such that $\mu_{Z|W, A}(w, a) = C_{Z | W, A}\left(\phi_\gW(w) \otimes \phi_\gA(a)\right)$. Let $\{\bar{w}_i, \bar{z}_i, \bar{a}_i\}_{i = 1}^{n_{\varphi}}$ be i.i.d. samples from the distribution $p(W, Z, A)$, denoting the first-stage samples, where $n_{\varphi}$ is the number of first-stage samples for KAP algorithm. The conditional expectation operator is learned by minimizing the regularized least-squares function:
\begin{align*}
    \hat{\gL}^c(C)_{\text{KAP}} = \frac{1}{n} \sum_{i = 1}^n \left\|\phi_\gZ(\bar{z}_i) - C \left(\phi_\gW(\bar{w}_i) \otimes \phi_\gA(\bar{a}_i)\right) \right\| + \lambda_{\varphi, 1} \|C\|^2_{\gS_2(\gH_{\gW\gA}, \gH_\gZ)},
\end{align*}
where $\lambda_{\varphi, 1}$ is the regularization parameter for first-stage regression of KAP method. The minimizer $\hat{C}_{Z | W, A}$ admits a closed form solution and is given by \citep{bozkurt2025density}:
\begin{align*}
    \hat{\mu}_{Z | W, A}(w, a) = \hat{C}_{Z| W, A} \left(\phi_\gW(w) \otimes \phi_\gA(a)\right) = \sum_{i = 1}^{n_\varphi} \mBeta_i(w, a) \phi_\gZ(\bar{z}_i),
\end{align*}
where 
\begin{align*}
    \mBeta(w, a) = \left(\mK_{\bar{W}\bar{W}} \odot \mK_{\bar{A}\bar{A}} + n_\varphi \lambda_{\varphi, 1} \mI \right)^{-1} \left(\mK_{\bar{W}w} \odot \mK_{\bar{A}a}\right).
\end{align*}
By substituting this approximation into the sample-based version of the loss function $\gL_{\text{KAP}}(\varphi)$, \citet{bozkurt2025density} derives an algorithm to estimate the bridge function $\varphi_0$. Let $\{\Tilde{w}_i, \Tilde{a}_i\}_{i = 1}^{m_{\varphi}}$ be i.i.d. samples from the distribution $p(W, A)$, denoting the second-stage samples, where $m_{\varphi}$ is the number of samples for the second-stage regression of KAP algorithm. The sample-based version of the loss function in Equation (\ref{eq:KernelAlternativeProxyPopulationLoss}) can be written as
\begin{align}
    \hat{\gL}(\varphi)_{\text{KAP}} &= \frac{1}{m_{\varphi}} \sum_{i = 1}^{m_\varphi} \langle \varphi, \hat{\mu}_{Z|W, A} (\Tilde{w}_i, \Tilde{a}_i) \otimes \phi_\gA(\Tilde{a}_i) \rangle_{\gH_{\gZ\gA}}^2 \nonumber \\&-2 \frac{1}{{m_{\varphi}} ({m_{\varphi}}-1)} \sum_{i = 1}^{m_{\varphi}} \sum_{\substack{j = 1 \\ j \ne i}}^{m_{\varphi}} \Big\langle \varphi, \hat{\mu}_{Z|W, A} (\Tilde{w}_j, \Tilde{a}_i) \otimes \phi_\gA(\Tilde{a}_i) \Big\rangle_{\gH_{\gZ\gA}} + \lambda_{\varphi, 2} \|\varphi\|^2_{\gH_{\gZ\gA}}
\label{eq:2StageRegressionObjectiveWithConfoundersFinal}
\end{align}
The representer theorem \citep{GeneralizedRepresenterTheorem} ensures that $\varphi$ admits the representation
\begin{align*}
    {\varphi = \sum_{i = 1}^{m_{\varphi}} \gamma_i  \hat{\mu}_{Z|W, A} (\Tilde{w}_i, \Tilde{a}_i) \otimes  \phi_\gA(\Tilde{a}_i) + \frac{\gamma_{{m_{\varphi}} + 1}}{{m_{\varphi}} ({m_{\varphi}} - 1)} \sum_{j = 1}^{m_{\varphi}} \sum_{\substack{l = 1 \\ l \ne j}}^{m_{\varphi}} \hat{\mu}_{Z|W, A} (\Tilde{w}_l, \Tilde{a}_j) \otimes \phi_\gA(\Tilde{a}_j)}
\end{align*}
for some coefficients $\{\gamma_i\}_{i = 1}^{{m_{\varphi}} + 1}$. By substituting this representation into the sample loss in Equation~(\ref{eq:2StageRegressionObjectiveWithConfoundersFinal}), the minimizer $\hat{\varphi}$ admits a closed-form solution, as given in \citet{bozkurt2025density}, which we outline in Algorithm~(\ref{algo:KernelAlternativeProxyAlgorithm}). Furthermore, the dose-response curve can be estimated via $\theta_{ATE}(a) \approx \E[Y \hat{\varphi}(Z, a) | A = a]$. Noting that
\begin{align}
    \E[Y \hat{\varphi}(Z, a) | A = a] & = \E\left[Y \left\langle \hat{\varphi}, \phi_\gZ(Z) \otimes \phi_\gA(a) \right\rangle_{\gH_{\gZ\gA}} \mid A = a\right]\nonumber\\
    &=\left\langle \hat{\varphi}, \E[Y \phi_\gZ(Z) \mid A = a] \otimes \phi_\gA(a) \right\rangle_{\gH_{\gZ\gA}}.\label{eq:KAP_dose_response_approximation}
\end{align}
evaluating the above inner product requires estimating the conditional mean embedding $\mu_{YZ|A}(a) = \E[Y \phi_\gZ(Z) \mid A = a]$, which can be obtained via vector-valued kernel ridge regression, similar to the first-stage regression. In particular, let $\{\dot{y}_i, \dot{z}_i, \dot{a}_i\}_{i = 1}^{t_\varphi}$ denote i.i.d. samples from the distribution $p(Y, Z, A)$, denoting the third-stage samples for KAP algorithm which we take to be $\{\dot{y}_i, \dot{z}_i, \dot{a}_i\}_{i = 1}^{t_\varphi} = \{\bar{y}_i, \bar{z}_i, \bar{a}_i\}_{i=1}^{n_\varphi} \cup \{\Tilde{y}_i, \Tilde{z}_i, \Tilde{a}_i\}_{i = 1}^{m_\varphi}$. Then, the conditional mean embedding $\mu_{YZ|A}(a) = C_{YZ|A} \phi_\gA(a)$ is learned by minimizing the regularized least squares objective:
\begin{align*}
    \gL_{\text{KAP}}^{c_{yz|a}}(C) = \frac{1}{t} \sum_{i = 1}^t \|\dot{y}_i \phi_\gZ(\dot{z}_i) - C \phi_\gA(\dot{a}_i)\|^2_{\gH_\gZ} + \lambda_{\varphi, 3}\|C\|^2_{S_2(\gH_\gA, \gH_\gZ)},
\end{align*}
where $\lambda_{\varphi, 3}$ denotes the regularization parameter for the third-stage regression of KAP method. The minimizer is given by
\begin{align*}
    \hat{\mu}_{YZ|A}(a) = \hat{C}_{YZ|A}\phi_\gA(a) = \dot{\Phi}_\gZ \text{diag}(\dot{\mY}) [\mK_{\dot{A} \dot{A}} + n \lambda_{\varphi, 3} \mI]^{-1} \mK_{\dot{A} a},
\end{align*}
where $\dot{\Phi}_\gZ = \begin{bmatrix}
    \phi_\gZ(\dot{z}_1) & \ldots & \phi_\gZ(\dot{z}_t)
\end{bmatrix}$, $\dot{\mY} = \begin{bmatrix}
    \dot{y}_1 & \ldots & \dot{y}_t
\end{bmatrix}^T$, $\mK_{\dot{A}\dot{A}} \in \R^{t_\varphi \times t_\varphi}$ is the kernel matrix over $\{\dot{a}_i\}_{i = 1}^{t_\varphi}$, and $\mK_{\dot{A}a}$ 's the kernel vector between the training points $\{\dot{a}_i\}_{i = 1}^{t_\varphi}$ and the target treatment $a$. As a result, by substituting the estimation $\hat{\mu}_{YZ|A}(a)$ in Equation (\ref{eq:KAP_dose_response_approximation}), the dose-response curve can be estimated with a closed-form solution that is efficiently implemented via matrix operations \citep{bozkurt2025density}, as summarized below in Algorithm (\ref{algo:KernelAlternativeProxyAlgorithm}). For further details of this algorithm, we refer the reader to \citep{bozkurt2025density}.

\begin{algorithm}[H]
{\footnotesize
\textbf{Input}: First-stage, second-stage, and third-stage: $\{\bar{w}_i, \bar{z}_i, \bar{a}_i\}_{i=1}^{n_\varphi}$, $\{\Tilde{y}_i, \Tilde{z}_i, \Tilde{a}_i\}_{i = 1}^{m_\varphi}$, $\{\dot{y}_i, \dot{z}_i, \dot{a}_i\}_{i = 1}^{t_\varphi}$,\\
\textbf{Parameters:} Regularization parameters $\lambda_{\varphi, 1}, \lambda_{\varphi, 2},$ and $\lambda_{\varphi, 3}$, and kernel functions $k_\gF(\cdot, \cdot)$ for each $\gF \in \{\gA, \gZ, \gW\}$.\\
\textbf{Output}: Bridge function and dose response estimations: $\hat{\varphi}(\cdot, \cdot)$,  $\hat{\theta}_{\text{ATE}}(\cdot)$. \looseness=-1
\begin{algorithmic}[1]
\STATE For variables $F \in \{W, Z, A\}$ with domain $\gF$, define the (required) first- and second- and third-stage kernel matrices/vectors as follows:
\begin{align*}
    &\mK_{\bar{F}\bar{F}} = [k_\gF(\bar{f}_i, \bar{f}_j)]_{ij} \in \R^{{n_\varphi} \times {n_\varphi}}, \quad \mK_{\bar{F}\Tilde{F}} = [k_{\gF}(\bar{f}_i, \Tilde{f}_j)]_{ij} \in \R^{{n_\varphi} \times {m_\varphi}}, \\
    & \mK_{\Tilde{F}\Tilde{F}} = [k_\gF(\Tilde{f}_i, \Tilde{f}_j)]_{ij} \in \R^{{m_\varphi} \times {m_\varphi}}, \quad \mK_{\bar{F}f} = [k_\gF(\bar{f}_i, f)]_i \in \R^{n_\varphi}, \\ 
    &\mK_{\Tilde{F}f} = [k_\gF(\Tilde{f}_j, f)]_j \in \R^{{m_\varphi}},\quad \mK_{\dot{F}\dot{F}} = [k_\gF(\dot{f}_i, \dot{f}_j)]_{ij} \in \R^{{t_\varphi} \times {t_\varphi}}, \\
    & \mK_{\bar{F}\dot{F}} = [k_\gF(\bar{f}_i, \dot{f}_j)]_{ij} \in \R^{{n_\varphi} \times {t_\varphi}}, \quad \mK_{\dot{F}f} = [k_\gF(\dot{f}_i, f)]_{} \in \R^{{t_\varphi} }
\end{align*}
\STATE Define the following matrices:
\begin{align*}
    \mB &= \left(\mK_{\bar{W}\bar{W}} \odot \mK_{\bar{A}\bar{A}} + n_\varphi \lambda_{\varphi, 1} \mI\right) ^{-1}\left(\mK_{\bar{W}\Tilde{W}} \odot \mK_{\bar{A}\Tilde{A}}\right), \enspace \\\Tilde{\mB} &\in \R^{m_\varphi \times n_\varphi} \enspace \text{where} \enspace \Tilde{\mB}_{:, j} = \frac{1}{m - 1} \sum_{\substack{l = 1 \\ l \ne j}}^m (\mK_{\bar{W} \bar{W}} \odot \mK_{\bar{A} \bar{A}} + n_\varphi \lambda_{\varphi, 1} \mI)^{-1}(\mK_{\bar{W} \Tilde{w}_l} \odot \mK_{\bar{A} \Tilde{a}_j})\\
    \mL &= \begin{bmatrix}
        \mB^T \mK_{\bar{Z} \bar{Z}}\mB \odot \mK_{\Tilde{A}\Tilde{A}} \nonumber\\ (\frac{\vone}{m_\varphi})^T \big[{\mB}^T \mK_{\bar{Z} \bar{Z}} \Tilde{\mB} \odot \mK_{\Tilde{A}\Tilde{A}}\big]^T
    \end{bmatrix}^T \in \R^{m_\varphi \times (m_\varphi + 1)}, \\ 
    \mM &= \begin{bmatrix}
     [ \mB^T \mK_{\bar{Z} \bar{Z}} \Tilde{\mB} \odot \mK_{\Tilde{A} \Tilde{A}} ] \frac{\vone}{m_\varphi}\\
     (\frac{\vone}{m_\varphi})^T \Big[ \Tilde{\mB}^T \mK_{\bar{Z} \bar{Z}} \Tilde{\mB} \odot \mK_{\Tilde{A} \Tilde{A}}\Big] \frac{\vone}{m_\varphi}\\
     \end{bmatrix} \in \R^{(m_\varphi+1)}, \enspace \\\mN&=\begin{bmatrix}\mL & \mM \end{bmatrix} 
      \in \R^{(m_\varphi+1)\times (m_\varphi + 1)}, \\
      \bm{\gamma} &= \left(\frac{1}{m_\varphi} \mL^\top \mL + \lambda_{\varphi, 2} \mN\right)^{-1} \mM \in \R^{m_\varphi + 1}
\end{align*}
\STATE The bridge function estimation is given by $
        \hat{\varphi}(z, a) = \bm{\gamma}_{1:m_\varphi}^T [(\mB^T \mK_{\bar{Z} z}) \odot \mK_{\Tilde{A} a}] + \bm{\gamma}_{m_\varphi + 1} \Big(\frac{\vone}{m_\varphi}\Big)^T [(\Tilde{\mB}^T \mK_{\bar{Z} z}) \odot \mK_{\Tilde{A} a}]$
\STATE The dose response estimation is given by 
\begin{align*}
    \hat{\theta}_{ATE}(a) =& \bm{\gamma}_{1:m}^T \Big( \mB^T \big(\mK_{\bar{Z} \dot{Z}} \text{diag}(\dot{\mY}) [\mK_{\dot{A} \dot{A}} + t_\varphi \lambda_{\varphi, 3} \mI]^{-1} \mK_{\dot{A} a}\big) \odot \mK_{\Tilde{A} a}\Big)\\
    &+ \bm{\gamma}_{m + 1} \Big( \Tilde{\mB}^T \big( \mK_{\bar{Z} \dot{Z}}  \text{diag}(\dot{\mY}) [\mK_{\dot{A} \dot{A}} + t_\varphi \lambda_{\varphi, 3} \mI]^{-1} \mK_{\dot{A} a} \big) \odot \mK_{\Tilde{A} a}\Big) \frac{\vone}{m_\varphi}
\end{align*}
\end{algorithmic}}
\caption{Alternative Kernel Proxy Variable Algorithm \citep{bozkurt2025density}}
\label{algo:KernelAlternativeProxyAlgorithm}
\end{algorithm}

\subsection{Doubly robust kernel proxy variable algorithm}
\label{section:DR_Algo_Slack_Term_Estimation_Appendix}
Armed with the estimation procedures for the outcome and treatment bridge functions, we now present the doubly robust estimation algorithm for the dose-response curve. As discussed in Section (\ref{sec:DoublyRobustAlgorithmMain}), we need to derive an algorithm to estimate the term $\E[ \varphi_0(Z, a) h_0(W, a) \mid A = a]$. Let $\{z_i, w_i, a_i\}_i^{t} \subset \gD$ denote i.i.d. observations from the distribution $\mathbb{P}(Z, W, A)$. We observe the following
\begin{align*}
    &\E[ \varphi_0(Z, a) h_0(W, a) \mid A = a]  \approx \E[ \hat{\varphi}(Z, a) \hat{h}(W, a) \mid A = a] \nonumber\\&=\E\left[ \Big\langle \hat{\varphi}, \phi_\gZ(Z) \otimes \phi_\gA(a) \Big\rangle_{\gH_\gZ \otimes \gH_\gA} \left\langle \hat{h} , \phi_\gW(W) \otimes \phi_\gA(a) \right\rangle_{\gH_\gW \otimes \gH_\gA}\right] \nonumber\\
    &= \E\left[\left\langle \hat{\varphi} \otimes \hat{h}, \left(\phi_\gZ(Z) \otimes \phi_\gA(a)\right) \otimes \left(\phi_\gW(W) \otimes \phi_\gA(a)\right)\right\rangle_{\gH_\gZ \otimes \gH_\gA \otimes \gH_\gW \otimes \gH_\gA} \right].
\end{align*}
As detailed in Section (S.M. \ref{sec:KPVAlgoRevisionAppendix}), the outcome bridge function in the KPV algorithm has the following representation (see Appendix B.3 in \citep{Mastouri2021ProximalCL}):
\begin{align*}
    \hat{h} = \sum_i^{n_h} \sum_j^{m_h} \beta_{ij} \phi_\gW(\bar{w}_i) \otimes \phi_\gA(\Tilde{a}_j)
\end{align*}
where the set of variables $\bar{w}_i$ and $\Tilde{a}_j$ denote the samples from first and second stage regressions of KPV algorithm, respectively. Furthermore, as detailed in Appendix (\ref{sec:KernelAlternativeProxySummaryAppendix}), the treatment bridge function admits the representation
\begin{align*}
    {\hat{\varphi} = \sum_{i = 1}^{m_\varphi} \gamma_i  \hat{\mu}_{Z|W, A} (\Tilde{w}_i, \Tilde{a}_i) \otimes  \phi_\gA(\Tilde{a}_i) + \frac{\gamma_{m_\varphi + 1}}{m_\varphi (m_\varphi - 1)} \sum_{j = 1}^{m_\varphi} \sum_{\substack{l = 1 \\ l \ne j}}^{m_\varphi} \hat{\mu}_{Z|W, A} (\Tilde{w}_l, \Tilde{a}_j) \otimes \phi_\gA(\Tilde{a}_j)},
\end{align*}
which we can write without loss of generality as 
\begin{align*}
    {\hat{\varphi} = \sum_{l = 1}^{m_\varphi} \sum_{\substack{t = 1}}^{m_\varphi} \gamma_{lt} \hat{\mu}_{Z|W, A} (\Tilde{w}_t, \Tilde{a}_l) \otimes \phi_\gA(\Tilde{a}_l)}
\end{align*}
by appropriately choosing the coefficients $\gamma_{lt}$ from the set $\{\alpha_i\}_{i = 1}^{m + 1} \cup \{0\}$. Next, we notice the following
\begin{align}
    &\E[ \hat{\varphi}(Z, a) \hat{h}(W, a) \mid A = a] = \nonumber\\
    &= \E\Bigg[\Bigg\langle \left(\sum_l \sum_t \gamma_{lt} \hat{\mu}_{Z|W, A} (\Tilde{w}_t, \Tilde{a}_l) \otimes \phi_\gA(\Tilde{a}_l) \right) \otimes \left(\sum_i \sum_j \beta_{ij} \phi_\gW(\bar{w}_i) \otimes \phi_\gA(\Tilde{a}_j)\right)\nonumber\\&, \left(\phi_\gZ(Z) \otimes \phi_\gA(a)\right) \otimes \left(\phi_\gW(W) \otimes \phi_\gA(a)\right)\Bigg\rangle_{\gH_{\gZ\gA} \otimes \gH_{\gW\gA}} \Bigg| A = a \Bigg]\nonumber\\
    &= \sum_{i, j} \sum_{l, t} \alpha_{ij} \gamma_{lt} \E\Big[\Big\langle \hat{\mu}_{Z|W, A} (\Tilde{w}_t, \Tilde{a}_l) \otimes \phi_\gA(\Tilde{a}_l) \otimes \phi_\gW(\bar{w}_i) \otimes \phi_\gA(\Tilde{a}_j)\nonumber\\&, \phi_\gZ(Z) \otimes \phi_\gA(a) \otimes \phi_\gW(W) \otimes \phi_\gA(a) \Big\rangle_{\gH_{\gZ\gA} \otimes \gH_{\gW\gA}} \Big| A = a\Big] \nonumber\\
    &= \sum_{i, j} \sum_{l, t} \alpha_{ij} \gamma_{lt} \E\Big[ \Big\langle \hat{\mu}_{Z|W, A} (\Tilde{w}_t, \Tilde{a}_l), \phi_\gZ(Z) \Big\rangle_{\gH_\gZ} \Big\langle \phi_\gA(\Tilde{a}_l), \phi_\gA(a) \Big\rangle_{\gH_\gA} \cdot \nonumber
    \\&\Big\langle \phi_\gW(\bar{w}_i), \phi_\gW(W) \Big\rangle_{\gH_\gW} \Big\langle \phi_\gA(\Tilde{a}_j), \phi_\gA(a) \Big\rangle_{\gH_\gA} \Big| A = a \Big] \nonumber\\
    &= \sum_{i, j} \sum_{l, t} \alpha_{ij} \gamma_{lt} \E\Big[ \Big\langle \hat{\mu}_{Z|W, A} (\Tilde{w}_t, \Tilde{a}_l) \otimes \phi_\gW(\bar{w}_i), \phi_\gZ(Z) \otimes \phi_\gW(W)\Big\rangle_{\gH_{\gZ\gW}} \cdot \nonumber\\&
    \Big\langle \phi_\gA(\Tilde{a}_l) \otimes \phi_\gA(\Tilde{a}_j), \phi_\gA(a) \otimes \phi_\gA(a) \Big\rangle_{\gH_{\gA\gA}} \Big| A = a\Big] \nonumber\\
    &= \sum_{i, j} \sum_{l, t} \alpha_{ij} \gamma_{lt} \Big\langle \hat{\mu}_{Z|W, A} (\Tilde{w}_t, \Tilde{a}_l) \otimes \phi_\gW(\bar{w}_i), \E\left[ \phi_\gZ(Z) \otimes \phi_\gW(W) \mid A = a\right] \Big\rangle_{\gH_{\gZ\gW}} \cdot \nonumber\\&
    \Big\langle \phi_\gA(\Tilde{a}_l) \otimes \phi_\gA(\Tilde{a}_j), \phi_\gA(a) \otimes \phi_\gA(a) \Big\rangle_{\gH_{\gA\gA}}. 
    \label{eq:DR_Evaluation_withTrueCME}
\end{align}
In order to evaluate the sum of inner products above, we need the conditional mean embedding $\mu_{Z, W \mid A} (a) = \E\left[ \phi_\gZ(Z) \otimes \phi_\gW(W) \mid A = a\right]$. We approximate this term with vector-valued kernel ridge regression. In particular, under the regularity condition $\E[g(Z, W) \mid A = \cdot] \in \gH_\gA$ for all $g \in \gH_{\gZ\gW}$, there exists an operator $C_{Z, W \mid A} \in \gS_2(\gH_\gA, \gH_{\gZ\gW})$ such that $\E\left[ \phi_\gZ(Z) \otimes \phi_\gW(W) \mid A = a\right] = C_{Z, W \mid A} \phi_\gA(a)$. This operator can be learned by minimizing the regularized least-squares function:
\begin{align*}
    \hat{\gL}^c_{DR}(C) = \frac{1}{t} \sum_{i = 1}^t \|\phi_\gZ(z_i) \otimes \phi_\gW(w_i) - C \phi_\gA(a_i)\|_{\gH_{\gZ\gW}}^2 + \lambda_{DR} \|C\|^2_{\gS_2(\gH_\gA, \gH_{\gZ\gW})}.
\end{align*}
The minimizer is given by
\begin{align}
    \hat{\mu}_{Z, W \mid A}(a) = \hat{C}_{Z, W \mid A} \phi_\gA(a) &= \sum_i \xi_i \phi_\gZ(z_i) \otimes \phi_\gW(w_i) = \Phi_{\gZ\gW} \left(\mK_{AA} + t \lambda_{DR} \mI \right)^{-1} \mK_{Aa},
    \label{eq:ZW_A_Conditional_mean_embedding_estimate}
\end{align}
where $\xi_i = \left[\left(\mK_{AA} + t \lambda_{DR} \mI \right)^{-1} \mK_{Aa}\right]_i$ and $\Phi_{\gZ\gW} = \begin{bmatrix}
    \phi_\gZ(z_1) \otimes \phi_\gW(w_1) & \ldots & \phi_\gZ(z_t) \otimes \phi_\gW(w_t)
\end{bmatrix}$.  By plugging this approximation to conditional mean embedding in Equation (\ref{eq:DR_Evaluation_withTrueCME}), we observe that
\begin{align*}
    &\sum_{i, j} \sum_{l, t} \alpha_{ij} \gamma_{lt} \Big\langle \hat{\mu}_{Z|W, A} (\Tilde{w}_t, \Tilde{a}_l) \otimes \phi_\gW(\bar{w}_i), \sum_s \xi_s \phi_\gZ(z_s) \otimes \phi_\gW(w_s) \Big\rangle_{\gH_{\gZ\gW}} \cdot \nonumber\\&
    \Big\langle \phi_\gA(\Tilde{a}_l) \otimes \phi_\gA(\Tilde{a}_j), \phi_\gA(a) \otimes \phi_\gA(a) \Big\rangle_{\gH_{\gA\gA}}\\
    &= \sum_{i, j} \sum_{l, t} \sum_s \alpha_{ij} \gamma_{lt}  \xi_s  \Big\langle \hat{\mu}_{Z|W, A} (\Tilde{w}_t, \Tilde{a}_l) \otimes \phi_\gW(\bar{w}_i), \phi_\gZ(z_s) \otimes \phi_\gW(w_s) \Big\rangle_{\gH_{\gZ\gW}} \cdot \nonumber\\&
    \Big\langle \phi_\gA(\Tilde{a}_l) \otimes \phi_\gA(\Tilde{a}_j), \phi_\gA(a) \otimes \phi_\gA(a) \Big\rangle_{\gH_{\gA\gA}}\\
    &= \sum_s \xi_s \Bigg\langle \sum_{l, t} \gamma_{lt} \hat{\mu}_{Z|W, A} (\Tilde{w}_t, \Tilde{a}_l) \otimes \phi_\gA(\Tilde{a}_l), \phi_\gZ(z_s) \otimes \phi_\gA(a)\Bigg\rangle_{\gH_{\gZ\gA}}\cdot\\
    & \Bigg\langle \sum_{i, j} \alpha_{ij} \phi_\gW(\bar{w}_i) \otimes \phi_\gA(\Tilde{a}_j), \phi_\gW(w_s) \otimes \phi_\gA(a)\Bigg\rangle_{\gH_{\gW\gA}}\\
    &= \sum_s \xi_s \hat{\varphi}(z_s, a) \hat{h}(w_s, a).
\end{align*}
In conclusion, we estimate the term $\E[ \varphi_0(Z, a) h_0(W, a) \mid A = a]$ with the following expression:
\begin{align}
    \E[ \varphi_0(Z, a) h_0(W, a) \mid A = a] \approx \sum_s \xi_s(a) \hat{\varphi}(z_s, a) \hat{h}(w_s, a),
    \label{eq:DR_ThirdTerm_ClosedFormSolution}
\end{align}
where $\xi_s(a) = \left[\left(\mK_{AA} + t \lambda_{DR} \mI \right)^{-1} \mK_{Aa}\right]_s$. The same procedure can be applied with the PMMR representation of the outcome-bridge function $\hat{h}$ that will lead to the same form of estimation in Equation (\ref{eq:DR_ThirdTerm_ClosedFormSolution}). For completeness of this section, we repeat the pseudo-code of doubly robust dose-response estimation algorithms that we give in Section (\ref{sec:DoublyRobustAlgorithmMain}) below. Note that DRKPV uses the KPV algorithm (summarized in Algorithm~(\ref{algo:KPVAlgorithm})), whereas DRPMMR employs the PMMR method (summarized in Algorithm~(\ref{algo:PMMRAlgorithm})).

\begin{algorithm}[H]
{\footnotesize
\textbf{Input}: Training samples $\{y_i, w_i, z_i, a_i\}_{i=1}^{t} \subset \gD$, \\
\textbf{Parameters:} Regularization parameter $\lambda_{\text{DR}}$ and the parameters of Algorithms (\ref{algo:KernelAlternativeProxyAlgorithm}) and (\ref{algo:KPVAlgorithm}) or (\ref{algo:PMMRAlgorithm}).\\
\textbf{Output}: Doubly robust dose-response estimation $\hat{\theta}_{\text{ATE}}^{\text{(DR)}}(a)$ for all $a \in \gA$. \looseness=-1
\begin{algorithmic}[1]
\STATE Compute the estimated outcome bridge function $\hat{h}$ and dose-response curve $\hat{\theta}_1(\cdot) = \hat{E}[\hat{h}(W, \cdot)]$ with either Algorithm (\ref{algo:KPVAlgorithm}) or (\ref{algo:PMMRAlgorithm}).
\STATE Compute the estimated treatment bridge function $\hat{\varphi}$ and the dose-response curve $\hat{\theta}_2(\cdot) = \hat{E}[Y \hat{\varphi}(Z, \cdot) \mid A = \cdot]$ with Algorithm (\ref{algo:KernelAlternativeProxyAlgorithm}).
\STATE Let $\hat{\theta}_3(\cdot)$ be given by $\hat{\theta}_3(\cdot) = \sum_i \xi_i(\cdot) \hat{\varphi}(z_i, \cdot) \hat{h}(w_i, \cdot),$
where $\xi_i(a) = \left[\left(\mK_{AA} + t \lambda_{\text{DR}} \mI \right)^{-1} \mK_{Aa}\right]_i$ for all $a \in \gA$.
\STATE For each $a \in \gA$, return the doubly robust dose-response estimation as 
\begin{align*}
    \hat{\theta}_{\text{ATE}}^{(\text{DR})}(a) = \hat{\theta}_1(a) + \hat{\theta}_2(a) - \hat{\theta}_3(a).
\end{align*}
\end{algorithmic}}
\caption{Doubly Robust Kernel Proxy Variable Algorithm (Replication of Algorithm (\ref{algo:DoublyRobustKPV}) in Section (\ref{sec:DoublyRobustAlgorithmMain})).\looseness=-1}
\label{algo:DoublyRobustKPV_Appendix}
\end{algorithm}

\section{Discussion on the existence of bridge functions}
\label{appendix:ExistenceOfBridgeFunctions}
In this section, we present the conditions under which the outcome and treatment bridge functions exist in their respective RKHSs, following the formulations in \citet{Miao2018Identifying}, \citet{deaner2023proxycontrolspaneldata}, \citet{xu2021deep}, and \citet{bozkurt2025density}. To this end, we consider the conditional expectation operators defined by
\begin{align*}
    E_a: &\gL_2(\gW, p_{W|A = a}) \rightarrow \gL_2(\gZ, p_{Z|A=a}),\\
    F_a: &\gL_2(\gZ, p_{Z|A=a}) \rightarrow \gL_2(\gW, p_{W|A = a}),
\end{align*}
such that 
\begin{align*}
    E_a \ell(z) &= \E[\ell(W) | Z = z, A = a], \qquad z \in \gZ\\
    F_a \ell(w) &= \E[\ell(Z) | W = w, A = a], \qquad w \in \gW.
\end{align*}
Here, $p_{W \mid A = a}$ and $p_{Z \mid A = a}$ denote the conditional distributions of $W$ and $Z$ given $A = a$, respectively.

To ensure the existence of the bridge functions, we impose the following assumptions on the conditional expectation operators.
\begin{assumption}[Assumption (4) in \citep{xu2021deep}]
    For each $a \in \gA$, the operator $E_a$ is compact operator with singular value decomposition $\{\eta_{E_a, i}, \phi_{E_a, i}, \psi_{E_a, i}\}_{i = 1}^\infty$. 
    \label{assum:KPV_cond_mean_compact_assumption}
\end{assumption}

\begin{assumption}[Assumption (10.3) in \citep{bozkurt2025density}]
    For each $a \in \gA$, the operator $F_a$ is compact operator with singular value decomposition $\{\eta_{F_a, i}, \phi_{F_a, i}, \psi_{F_a, i}\}_{i = 1}^\infty$. 
    \label{assum:KAP_cond_mean_compact_assumption}
\end{assumption}

To apply Picard's Theorem \citep[Theorem 15.8]{linearIntegralEquationskress2013}—as used in Lemma (2) of \citet{xu2021deep} and Theorem (10.6) of \citet{bozkurt2025density}—to establish the existence of bridge functions, we make the following additional assumptions.
\begin{assumption}[Assumption (5) in \citep{xu2021deep}]
    For each $a \in \gA$, the conditional expectation $\ell_{Y|a} := \E[Y \mid Z = \cdot, A = a]$ satisfies
    \begin{align*}
        \sum_{i = 1}^\infty \frac{1}{\eta_{E_a, i}^2} \left|\langle \ell_{Y\mid a}, \psi_{E_a, i}\rangle_{\gL_2(\gZ, p_{Z \mid A = a})}\right|^2 < \infty,
    \end{align*}
     where the singular system $\{\eta_{E_a, i}, \phi_{E_a, i}, \psi_{E_a, i}\}_{i = 1}^\infty$ is given in Assumption (\ref{assum:KPV_cond_mean_compact_assumption}).
    \label{assum:KPV_existence_assumption}
\end{assumption}

\begin{assumption}[Assumption (10.4) in \citep{bozkurt2025density}]
    For each $a \in \gA$, the density ratio $r_a(W) := \frac{p(W) p(a)}{p(W, a)}$ satisfies
    \begin{align*}
        \sum_{i = 1}^\infty \frac{1}{\eta_{F_a, i}^2} \big|\langle r_a, \psi_{F_a, i}\rangle_{\gL_2(\gW, p_{W|A = a})}\big|^2 < \infty,
    \end{align*}
where the singular system $\{\eta_{F_a, i}, \phi_{F_a, i}, \psi_{F_a, i}\}_{i = 1}^\infty$. is given in Assumption~(\ref{assum:KAP_cond_mean_compact_assumption}).
    \label{assum:KAP_existence_assumption}
\end{assumption}

We are now ready to state the results on the existence of bridge functions. The following lemma establishes the existence of the outcome bridge function.

\begin{lemma}[Lemma (2) in \citep{xu2021deep}]
    Suppose that Assumptions (\ref{assum:ProxyConditionalIndependenceAssumptions}), (\ref{assum:TreatmentBridgeCompleteness}), (\ref{assum:KPV_cond_mean_compact_assumption}), and (\ref{assum:KPV_existence_assumption}) hold. Then, for each $a \in \gA$, there exists a solution to the functional equation
    \begin{align*}
        \mathbb{E}[Y \mid Z, A] & =\int h_0(w, A) p(w \mid Z, A) d w. 
    \end{align*}
\end{lemma}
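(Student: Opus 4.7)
The plan is to fix $a \in \gA$ and reformulate the integral equation as the abstract operator equation $E_a \, h(\cdot, a) = \ell_{Y \mid a}$, where $\ell_{Y \mid a}(z) = \E[Y \mid Z = z, A = a]$ and $E_a : \gL_2(\gW, p_{W \mid A = a}) \to \gL_2(\gZ, p_{Z \mid A = a})$ is the conditional expectation operator introduced above. With this reformulation, the task becomes the classical question of solvability of a linear equation with a compact operator, for which Picard's Theorem \citep[Theorem 15.18]{linearIntegralEquationskress2013} provides necessary and sufficient conditions: existence of a solution is equivalent to (i) $\ell_{Y \mid a} \in \overline{\gR(E_a)} = \gN(E_a^*)^\perp$ (the \emph{range condition}), and (ii) the Picard series $\sum_i \eta_{E_a, i}^{-2}|\langle \ell_{Y \mid a}, \psi_{E_a, i}\rangle|^2 < \infty$.

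Assumption (\ref{assum:KPV_cond_mean_compact_assumption}) grants compactness and a singular system for $E_a$, and Assumption (\ref{assum:KPV_existence_assumption}) gives the Picard series condition (ii) verbatim. Hence the core work of the proof is verifying the range condition (i). The main step is to take an arbitrary $\psi \in \gN(E_a^*)$, i.e.\ a $\psi \in \gL_2(\gZ, p_{Z \mid A = a})$ such that $\E[\psi(Z) \mid W, A = a] = 0$ almost surely, and conclude that $\langle \ell_{Y \mid a}, \psi \rangle_{\gL_2(\gZ, p_{Z \mid A = a})} = 0$. To do so, I would first invoke the conditional independence $W \perp Z \mid U, A$ from Assumption (\ref{assum:ProxyConditionalIndependenceAssumptions}) to obtain the factorization $p(z \mid w, a) = \int p(z \mid u, a) p(u \mid w, a) \, du$, giving $\E[\psi(Z) \mid W = w, A = a] = \E\big[\E[\psi(Z) \mid U, A = a] \,\big|\, W = w, A = a\big]$; then, the completeness Assumption (\ref{assum:TreatmentBridgeCompleteness}) applied to the function $u \mapsto \E[\psi(Z) \mid U = u, A = a]$ forces $\E[\psi(Z) \mid U, A = a] = 0$ almost surely.

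Given this, I would finish by expanding the inner product using the second conditional independence $Y \perp Z \mid U, A$ from Assumption (\ref{assum:ProxyConditionalIndependenceAssumptions}):
\begin{align*}
\langle \ell_{Y \mid a}, \psi \rangle_{\gL_2(\gZ, p_{Z \mid A = a})}
&= \E[Y \, \psi(Z) \mid A = a] \\
&= \E\!\left[ \E[Y \mid U, A = a] \, \E[\psi(Z) \mid U, A = a] \,\big|\, A = a \right] = 0,
\end{align*}
which establishes $\ell_{Y \mid a} \in \gN(E_a^*)^\perp$. Combining this range condition with the Picard summability from Assumption (\ref{assum:KPV_existence_assumption}), Picard's Theorem delivers an $h(\cdot, a) \in \gL_2(\gW, p_{W \mid A = a})$ solving $E_a h(\cdot, a) = \ell_{Y \mid a}$, which is precisely the outcome bridge equation.

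I expect the main obstacle to be the range condition: one must carefully chain the two conditional independences with the completeness hypothesis in the correct order, and note that even though Assumption (\ref{assum:TreatmentBridgeCompleteness}) is phrased as completeness of $W$ with respect to $U$ (which ordinarily underwrites the \emph{treatment} bridge), it is exactly what is needed here to conclude that vanishing of $\E[\psi(Z) \mid W, A]$ propagates to $\E[\psi(Z) \mid U, A]$. A minor secondary point is that Picard's Theorem yields a solution in $\gL_2$ rather than in any prescribed RKHS; since the statement only asserts existence of a measurable $h_0$, this is sufficient, but it is worth flagging that additional smoothness (e.g.\ an RKHS realization as used later in the consistency analysis) would require stronger source-type conditions.
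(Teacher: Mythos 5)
Your proposal is correct and follows essentially the same route as the paper, which defers to Lemma (2) of \citet{xu2021deep} and explicitly identifies Picard's Theorem as the tool: compactness supplies the singular system, Assumption (\ref{assum:KPV_existence_assumption}) supplies the Picard summability, and the range condition is verified by combining the two conditional independences with the completeness Assumption (\ref{assum:TreatmentBridgeCompleteness}) exactly as you describe. Your closing remarks—that the "wrong-sided" completeness assumption is precisely what is needed here, and that the solution is only guaranteed in $\gL_2$ rather than an RKHS—are both accurate and consistent with the paper's own discussion of the complementary roles of the two completeness assumptions.
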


The next lemma establishes the existence of the treatment bridge function. 
\begin{lemma}[Theorem (10.6) in \citep{bozkurt2025density}]
        Suppose that Assumptions (\ref{assum:ProxyConditionalIndependenceAssumptions}), (\ref{assum:OutcomeBridgeCompleteness}), (\ref{assum:KAP_cond_mean_compact_assumption}), and (\ref{assum:KAP_existence_assumption}) hold. Then, for each $a \in \gA$, there exists a solution to the functional equation
        \begin{align*}
            \E[\varphi_0(Z, a) | W, A = a] = \frac{p(W) p(a)}{p(W, a)}, \qquad a \in \gA.
        \end{align*}
\end{lemma}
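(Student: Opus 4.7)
The plan is to apply Picard's Theorem for compact operators between Hilbert spaces \citep[Theorem~15.18]{linearIntegralEquationskress2013}, carrying out the argument pointwise in $a \in \gA$. First, I would reformulate the target integral equation
\[
\E[\varphi_0(Z, a) \mid W = w, A = a] = \frac{p(w)\,p(a)}{p(w, a)}, \qquad w \in \gW,
\]
as the operator equation $F_a\,\varphi_0(\cdot, a) = r_a$, where $F_a \colon \gL_2(\gZ, p_{Z \mid A = a}) \to \gL_2(\gW, p_{W \mid A = a})$ is the conditional expectation operator introduced just above the lemma and $r_a(w) := p(w)\,p(a) / p(w, a)$ is the target density ratio. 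Assumption~(\ref{assum:KAP_cond_mean_compact_assumption}) endows $F_a$ with the singular value decomposition $\{\eta_{F_a, i}, \phi_{F_a, i}, \psi_{F_a, i}\}_{i \ge 1}$, which is exactly the structure Picard's theorem exploits.

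Second, I would verify the two hypotheses of Picard's theorem. The summability condition
\[
\sum_{i=1}^{\infty} \eta_{F_a, i}^{-2} \bigl|\langle r_a, \psi_{F_a, i} \rangle_{\gL_2(\gW, p_{W \mid A = a})}\bigr|^2 < \infty
\]
is exactly Assumption~(\ref{assum:KAP_existence_assumption}). For the range condition $r_a \in (\ker F_a^*)^{\perp} = \overline{\mathrm{range}(F_a)}$, a short Fubini computation over $p(Z, W \mid A = a)$ identifies the adjoint as $F_a^* g(z) = \E[g(W) \mid Z = z, A = a]$. Using the PCL conditional independencies in Assumption~(\ref{assum:ProxyConditionalIndependenceAssumptions})—specifically $W \perp Z \mid U, A$ and $W \perp A \mid U$—one factorizes $F_a^* g(z) = \E[\tilde{g}(U) \mid Z = z, A = a]$ with $\tilde{g}(u) := \E[g(W) \mid U = u]$. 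Completeness in Assumption~(\ref{assum:OutcomeBridgeCompleteness}) then forces $\tilde{g} \equiv 0$ whenever $g \in \ker F_a^*$, and regularity of the $U \to W$ conditional implicit in the graph propagates this to $g = 0$ on the support of $p(W \mid A = a)$, giving $\ker F_a^* = \{0\}$.

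Third, once both Picard conditions are verified, Picard's theorem yields a solution $\varphi_0(\cdot, a) \in \gL_2(\gZ, p_{Z \mid A = a})$ to $F_a\,\varphi_0(\cdot, a) = r_a$, explicitly
\[
\varphi_0(z, a) = \sum_{i=1}^{\infty} \eta_{F_a, i}^{-1} \, \langle r_a, \psi_{F_a, i} \rangle_{\gL_2(\gW, p_{W \mid A = a})} \, \phi_{F_a, i}(z).
\]
Repeating the construction for each $a \in \gA$ produces the desired bridge function on $\gZ \times \gA$.

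The main obstacle is the step that converts the $U$-level completeness in Assumption~(\ref{assum:OutcomeBridgeCompleteness}) into triviality of $\ker F_a^*$ at the observable $\gW$-level, since completeness is originally formulated on the latent space while Picard's theorem only sees the observables. This is precisely the place where the PCL graph structure must be invoked most carefully, and it is also the reason existence of the treatment bridge function leans on the outcome-side completeness assumption rather than on Assumption~(\ref{assum:TreatmentBridgeCompleteness}).
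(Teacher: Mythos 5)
Your overall route—recast the equation as $F_a\,\varphi_0(\cdot,a)=r_a$, use the singular system from Assumption~(\ref{assum:KAP_cond_mean_compact_assumption}), read the Picard summability condition off Assumption~(\ref{assum:KAP_existence_assumption}), and use the proxy structure plus Assumption~(\ref{assum:OutcomeBridgeCompleteness}) for the range condition—is the same Picard-type argument the paper points to. However, there is a genuine gap in your verification of the range condition. After correctly deriving $F_a^*g(z)=\E[g(W)\mid Z=z,A=a]=\E[\tilde g(U)\mid Z=z,A=a]$ with $\tilde g(u)=\E[g(W)\mid U=u]$, and correctly concluding from Assumption~(\ref{assum:OutcomeBridgeCompleteness}) that $\tilde g(U)=0$ almost surely for any $g\in\ker F_a^*$, you then assert that ``regularity of the $U\to W$ conditional implicit in the graph'' propagates this to $g=0$, i.e.\ $\ker F_a^*=\{0\}$. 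That step does not follow: $\E[g(W)\mid U]=0$ a.s.\ does not imply $g=0$ unless one additionally assumes completeness of $p(W\mid U)$ in the $W$-direction, which is not among the stated hypotheses (Assumption~(\ref{assum:TreatmentBridgeCompleteness}) goes the other way, from $W$ to $U$, and is not assumed in this lemma anyway). As written, the injectivity claim is unsupported.

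The fix is that Picard's theorem does not require $\ker F_a^*=\{0\}$; it only requires $r_a\in N(F_a^*)^{\perp}$, and this weaker orthogonality follows directly from what you already established. For any $g\in\ker F_a^*$ you have $\tilde g(U)=\E[g(W)\mid U]=0$ $p(U)$-a.e., hence
\begin{align*}
\langle r_a, g\rangle_{\gL_2(\gW,\,p_{W\mid A=a})}
=\int \frac{p(w)\,p(a)}{p(w,a)}\,g(w)\,p(w\mid a)\,dw
=\int g(w)\,p(w)\,dw
=\E\bigl[\E[g(W)\mid U]\bigr]=0,
\end{align*}
where the cancellation $\tfrac{p(w)p(a)}{p(w,a)}\,p(w\mid a)=p(w)$ is exactly why the density ratio $r_a$ is the right target. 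With $r_a\perp\ker F_a^*$ and the summability condition of Assumption~(\ref{assum:KAP_existence_assumption}), Picard's theorem delivers the solution in the series form you wrote, and the rest of your argument (pointwise in $a$) goes through. So the proposal is repairable with a one-line replacement of the injectivity claim, but as stated that step would fail.
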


Note that the existence of the \textbf{outcome bridge function} hinges on Assumption~(\ref{assum:TreatmentBridgeCompleteness}), which also ensures the identifiability of the dose-response curve when using the \textbf{treatment bridge function} (see Theorem (\ref{theorem:TreatmentBridgeIdentificationATE})). Conversely, the existence of the \textbf{treatment bridge function} hinges on Assumption~(\ref{assum:OutcomeBridgeCompleteness}), which guarantees identifiability of the dose-response curve through the \textbf{outcome bridge function} (see Theorem (\ref{theorem:OutcomeBridgeIdentificationATE})). Therefore, these completeness assumptions play distinct yet complementary roles in establishing both existence and identifiability for the respective bridge functions.

\section{Consistency results}
\label{appendix:ConsistencyResults}
In this section, we provide the convergence results of the algorithms that we proposed. First, the following two sections respectively review the consistency results for outcome bridge function- and treatment bridge function-based methods from \citep{Mastouri2021ProximalCL, singh2023kernelmethodsunobservedconfounding, bozkurt2025density}. 

\subsection{Consistency results for treatment bridge function-based method}
\label{sec:KernelAlternativeProxy_Consistency}
Here, we overview the consistency result of the treatment bridge function based on the results in \citep{bozkurt2025density}. For expositional clarity, we first review the results from \citep{bozkurt2025density}, as we leverage similar developments in presenting the consistency results for the outcome bridge function in the next section. We begin by introducing the assumptions regarding the noise between the treatment and the outcome.

\begin{assumption}[Assumption (12.1-iv) in \citep{bozkurt2025density}] 
We assume that there exists $R,\sigma > 0$ such that for all $q \geq 2$, $P_A-$almost surely, \begin{equation}\label{eq:mom_stage_3}
    \E[(Y - \E[Y \mid A])^q \mid A] \leq \frac{1}{2}q!\sigma^2R^{q-2}.
    \end{equation}
\label{assum:Stage1ConsistencyKernelAssumptions}
\end{assumption}
Next, we assume that the problem is well-specified, as stated in the following assumption:
\begin{assumption}[Assumption (5.1) in \citep{bozkurt2025density}]
\label{asst:well_specifiedness}
    (1) There exists $C_{Z | W, A} \in S_2(\gH_{\gW\gA}, \gH_\gZ)$ such that $\mu_{Z|W, A}(W, A) = C_{Z | W, A} \phi_{\gW\gA}(W, A)$; (2) There exists a solution $\varphi_0 \in \gH_{\gZ\gA}$ of Equation (\ref{eq:TreatmentBridgeEquation}); (3)  There exists $C_{YZ | A} \in S_2(\gH_{\gA}, \gH_\gZ)$ such that $\E[Y\phi_\gZ(Z) | A] = C_{YZ | A} \phi_{\gA}(A)$.
\end{assumption}

Following proof techniques from \cite{meunier2024nonparametric}, \citet{bozkurt2025density} show that convergence to the minimum RKHS norm solution of the treatment bridge equation—characterized in the following definition—is sufficient for consistency in estimating the dose-response curve using the Kernel Alternative Proxy (KAP) method. We follow the same setting here.
\begin{definition}[Treatment bridge function solution with minimum RKHS norm, Definition (12.2) in \citep{bozkurt2025density}] \label{def:min_norm_bridge} Under Assumption~(\ref{assum:Stage1ConsistencyKernelAssumptions}-(2)), the minimum RKHS solution to Equation (\ref{eq:TreatmentBridgeEquation}) is defined as
\begin{align*}
\bar{\varphi}_0 = \argmin_{\varphi 
    \in \gH_{\gZ\gA}} \|\varphi\|_{\gH_{\gZ\gA}} \quad \text{ s.t. } \quad \E[\varphi(Z, A) | W, A] = r(W, A),
\end{align*}
with $r(W, A) = \frac{p(W) p(A)}{p(W, A)}$.
\end{definition}
Next, we define the (uncentered) covariance operators corresponding to the three-stages of the KAP method:

\begin{definition}[Definition (12.4) in \citep{bozkurt2025density}]
The covariance operators are defined as 
\begin{itemize}
    \item (Stage 1) $\Sigma_{\varphi, 1} = \E\left[\phi_{\gW\gA}(W, A) \otimes \phi_{\gW\gA}(W, A)\right], \qquad \phi_{\gW\gA}(W, A) = \phi_{\gW}(W) \otimes \phi_{\gA}(A)$;
    \item (Stage 2) 
    $\Sigma_{\varphi, 2} = \E\left[ \left(\big(\mu_{Z|W, A} (W, A) \otimes \phi_\gA(A)\big) \otimes \big(\mu_{Z|W, A} (W, A)\otimes \phi_\gA(A) \big)\right)\right];$
    \item (Stage 3) $\Sigma_{\varphi, 3} = \E[\phi_\gA(A) \otimes \phi_\gA(A)]$.
\end{itemize}
\label{definition:CovarianceOperators_ATE}
\end{definition}
The operators $\Sigma_{\varphi, 1}$, $\Sigma_{\varphi, 2}$, and $\Sigma_{\varphi, 3}$ are self-adjoint and positive semi-definite. With Assumption~(\ref{assum:KernelAssumptions_main}), they belong to the trace class, which ensures their compactness and implies that they have a countable spectrum \citep{Ingo_support_vector_machines}.

Furthermore, we state the source condition (SRC) and eigenvalue decay (EVD) assumptions \citep{caponnetto2007optimal, fischer2020sobolev}:
\begin{assumption}[Assumption (12.6) in \citep{bozkurt2025density}] We assume that the following conditions hold:\label{asst:src_all_stages}
\begin{itemize}
    \item  There exists a constant $B_{\varphi, 1} < \infty$ such that for a given $\beta_{\varphi, 1} \in (1, 3]$,
    $$
    \|{C}_{Z|W,A}\Sigma_{\varphi, 1}^{-\frac{\beta_{\varphi, 1}-1}{2}}\|_{S_2(\gH_{\gW\gA}, \gH_\gZ)} \le B_{\varphi, 1}
    $$   
    \item There exists a constant $B_{\varphi, 2} < \infty$ such that for a given $\beta_{\varphi, 2} \in (1, 3]$, 
    $$
        \|\Sigma_{\varphi, 2}^{-\frac{\beta_{\varphi, 2}-1}{2}}\bar{\varphi}_0\|_{\gH_{\gZ\gA}} \le B_{\varphi, 2}.
    $$
    \item There exists a constant $B_{\varphi, 3} < \infty$ such that for a given $\beta_{\varphi, 3} \in (1,3]$,
    $$
    \|C_{YZ \mid A}\Sigma_{\varphi, 3}^{-\frac{\beta_{\varphi, 3}-1}{2}}\|_{S_2(\gH_{\gA}, \gH_{\gZ})} \leq B_{\varphi, 3}.
    $$
\end{itemize}
\end{assumption}

\begin{assumption}[Assumption (12.9) in \citep{bozkurt2025density}] We assume the following conditions hold\label{asst:evd_all_stages}
    \begin{itemize}
        \item  Let $(\lambda_{\varphi, 1,i})_{i\geq 1}$ be the eigenvalues of $\Sigma_{\varphi, 1}$. For some constant $c_{\varphi, 1} >0$ and parameter $p_{\varphi, 1} \in (0,1]$ and for all $i \geq 1$,
    \begin{equation*}
        \lambda_{\varphi, 1,i} \leq c_{\varphi, 1}i^{-1/p_{\varphi, 1}}.
    \end{equation*}    
    \item 
    Let $(\lambda_{\varphi, 2,i})_{i\geq 1}$ be the eigenvalues of $\Sigma_{\varphi, 2}$. For some constant $c_{\varphi, 2} >0$ and parameter $p_{\varphi, 2} \in (0,1]$ and for all $i \geq 1$,
    \begin{equation*}
        \lambda_{\varphi, 2,i} \leq c_{\varphi, 2}i^{-1/p_{\varphi, 2}}.
    \end{equation*}   
    \item Let $(\lambda_{\varphi, 3,i})_{i \geq 1}$ be the eigenvalues of $\Sigma_{\varphi, 3}$. For some constant $c_{\varphi, 3} >0$ and parameter $p_{\varphi, 3} \in (0,1]$ and for all $i \geq 1$,
    \begin{equation*}
        \lambda_{\varphi, 3,i} \leq c_{\varphi, 3}i^{-1/p_{\varphi, 3}}.
    \end{equation*}    
    \end{itemize}
\end{assumption}

Now, we are ready to state the convergence result of the treatment bridge function. We note that Assumptions (\ref{asst:well_specifiedness}-3), (\ref{asst:src_all_stages}-3) and (\ref{asst:evd_all_stages}-3) are not required for the bridge function consistency. However, the consistency of our proposed double robust estimators rely on these assumptions as we show in Section (S.M. \ref{appendix:DoublyRobustConsistency}). 

\begin{theorem}[Theorem 12.21 in \citep{bozkurt2025density}]
    Suppose Assumptions~(\ref{assum:KernelAssumptions_main}), (\ref{assum:Stage1ConsistencyKernelAssumptions}), (\ref{asst:well_specifiedness}-1 \& 2), 
    (\ref{asst:src_all_stages}-1 \& 2) and (\ref{asst:evd_all_stages}-1 \& 2) hold and set $\lambda_{\varphi, 1} = \Theta \left(n_{\varphi}^{-\frac{1}{\beta_{\varphi, 1} + p_{\varphi, 1}}}\right)$ and $n_{\varphi} = m_{\varphi}^{\iota_{\varphi}\frac{\beta_{\varphi, 1}+p_{\varphi, 1}}{\beta_{\varphi, 1} - 1}}$ where $\iota_{\varphi} > 0$. Then,
    \begin{itemize}
        \item[i.]If $\iota_{\varphi} \le \frac{\beta_{\varphi, 2}+1}{\beta_{\varphi, 2} + p_{\varphi, 2}}$ then $ \|\hat{\varphi} - \bar{\varphi}_0\|_{\gH_{\gZ\gA}} = O_p \bigg( m_{\varphi}^{-\frac{\iota_\varphi}{2}\frac{\beta_{\varphi, 2}-1}{\beta_{\varphi, 2}+1}}\bigg)$ with $\lambda_{\varphi, 2} = \Theta \left(m_{\varphi}^{-\frac{\iota_\varphi}{\beta_{\varphi, 2}+1}}\right)$;
        \item[ii.] If $\iota_{\varphi} \ge \frac{\beta_{\varphi, 2}+1}{\beta_{\varphi, 2} + p_{\varphi, 2}}$ then $ \|\hat{\varphi} - \bar{\varphi}_0\|_{\gH_{\gZ\gA}} = O_p \bigg(m_{\varphi}^{-\frac{1}{2}\frac{\beta_{\varphi, 2}-1}{\beta_{\varphi, 2} + p_{\varphi, 2}}} \bigg)$ with $\lambda_{\varphi, 2} = \Theta \left(m_{\varphi}^{-\frac{1}{\beta_{\varphi, 2} + p_{\varphi, 2}}}\right)$.
    \end{itemize}
    \label{theorem:ATEBridgeFunctionFinalBound_KAP}
\end{theorem}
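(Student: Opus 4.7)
The plan is to decompose the total error $\|\hat{\varphi} - \bar{\varphi}_0\|_{\gH_{\gZ\gA}}$ into a Stage-1 contribution (from estimating the conditional mean embedding $\mu_{Z|W,A}$) and a Stage-2 contribution (from solving the KAP objective with the estimated embedding), then balance the regularization parameters and sample sizes using the source and eigenvalue-decay conditions. This follows the analytic template for two-stage kernel regressions developed in \citet{meunier2024nonparametric} and already used for related treatment-bridge estimators; the novelty is that the Stage-2 regression targets $r(W,A) = p(W)p(A)/p(W,A)$ through the density-ratio-free reformulation in Equation~(\ref{eq:KernelAlternativeProxySimplifiedCost_MainText}), so care is needed because the regression ``target'' only appears implicitly via a decoupled expectation $\E_W \E_A[\cdot]$.

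First I would introduce the regularized population minimizers $\varphi_{\lambda_{\varphi,2}}$ (using the true CME $\mu_{Z|W,A}$) and the empirical counterparts $\hat\varphi$ (using the Stage-1 estimate $\hat\mu_{Z|W,A}$), and write
\begin{align*}
\hat{\varphi} - \bar{\varphi}_0 \;=\; (\hat{\varphi} - \varphi_{\lambda_{\varphi,2}}) \;+\; (\varphi_{\lambda_{\varphi,2}} - \bar{\varphi}_0).
\end{align*}
The second term is the deterministic approximation error of Tikhonov regularization in the Stage-2 problem, which, by Assumption~(\ref{asst:src_all_stages}-2) (source condition of order $\beta_{\varphi,2}$ on $\bar\varphi_0$ with respect to $\Sigma_{\varphi,2}$), is of order $\lambda_{\varphi,2}^{(\beta_{\varphi,2}-1)/2}$. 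The first term is stochastic and splits further into (a) the variance from the $m_\varphi$ Stage-2 samples and (b) the error induced by replacing $\mu_{Z|W,A}$ by $\hat\mu_{Z|W,A}$ in the Stage-2 feature maps.

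For piece (a), I would use standard Bernstein-type concentration inequalities for self-adjoint Hilbert–Schmidt operators (as in Caponnetto–De~Vito and the fully-covered treatment in \citet{fischer2020sobolev}) applied to the empirical Stage-2 covariance $\hat\Sigma_{\varphi,2}$ and the empirical moment of the target; combined with Assumption~(\ref{asst:evd_all_stages}-2) on the eigenvalue decay of $\Sigma_{\varphi,2}$ this yields the rate $m_\varphi^{-1/2}\lambda_{\varphi,2}^{-(1+p_{\varphi,2})/2}$ for the variance, giving the familiar optimal Stage-2 rate $m_\varphi^{-(\beta_{\varphi,2}-1)/(2(\beta_{\varphi,2}+p_{\varphi,2}))}$ when $\lambda_{\varphi,2} \asymp m_\varphi^{-1/(\beta_{\varphi,2}+p_{\varphi,2})}$. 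For piece (b), the Stage-1 error enters through the operator norm $\|\hat\mu_{Z|W,A}(\cdot,\cdot) - \mu_{Z|W,A}(\cdot,\cdot)\|$, which by Assumptions~(\ref{asst:well_specifiedness}-1), (\ref{asst:src_all_stages}-1), (\ref{asst:evd_all_stages}-1) and the results recalled in \citet{li2022optimal} is controlled at the optimal rate $n_\varphi^{-(\beta_{\varphi,1}-1)/(2(\beta_{\varphi,1}+p_{\varphi,1}))}$ when $\lambda_{\varphi,1} \asymp n_\varphi^{-1/(\beta_{\varphi,1}+p_{\varphi,1})}$. Propagating this into the Stage-2 normal equations via the regularized inverse $(\hat\Sigma_{\varphi,2}+\lambda_{\varphi,2})^{-1}$ produces a term of order $n_\varphi^{-(\beta_{\varphi,1}-1)/(2(\beta_{\varphi,1}+p_{\varphi,1}))}\,\lambda_{\varphi,2}^{-1/2}$.

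The final step is the algebraic optimization: with the parameterization $n_\varphi = m_\varphi^{\iota_\varphi (\beta_{\varphi,1}+p_{\varphi,1})/(\beta_{\varphi,1}-1)}$ the Stage-1 propagation contribution becomes $m_\varphi^{-\iota_\varphi/2} \lambda_{\varphi,2}^{-1/2}$, which, balanced against the Stage-2 bias $\lambda_{\varphi,2}^{(\beta_{\varphi,2}-1)/2}$ and variance $m_\varphi^{-1/2}\lambda_{\varphi,2}^{-(1+p_{\varphi,2})/2}$, yields the two regimes: when $\iota_\varphi \le (\beta_{\varphi,2}+1)/(\beta_{\varphi,2}+p_{\varphi,2})$, Stage-1 propagation dominates and the optimal choice $\lambda_{\varphi,2}\asymp m_\varphi^{-\iota_\varphi/(\beta_{\varphi,2}+1)}$ gives rate $m_\varphi^{-\iota_\varphi(\beta_{\varphi,2}-1)/(2(\beta_{\varphi,2}+1))}$; in the complementary regime Stage-2 dominates and we recover the single-stage optimal rate. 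I expect the main technical obstacle to be piece (b): the Stage-2 operator $\Sigma_{\varphi,2}$ is built from tensor-product features whose first factor $\mu_{Z|W,A}$ is itself being estimated, so bounding the perturbation of $(\hat\Sigma_{\varphi,2}+\lambda_{\varphi,2})^{-1/2}$ and of the right-hand side simultaneously requires carefully exploiting the Hilbert–Schmidt structure established by Assumption~(\ref{asst:well_specifiedness}-1) and the independence (via data splitting, see S.M.~(\ref{appendix:Data_splitting})) between Stage-1 and Stage-2 samples to avoid cross-dependence in the concentration step.
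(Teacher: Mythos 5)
Note first that the paper does not prove this statement itself: it is imported verbatim as Theorem (12.21) of \citet{bozkurt2025density}, so your sketch can only be compared against the known two-stage architecture used there (and in \citet{meunier2024nonparametric}). Your overall plan is the right one and matches that architecture: split off the Tikhonov approximation error $\varphi_{\lambda_{\varphi,2}}-\bar{\varphi}_0 \asymp \lambda_{\varphi,2}^{(\beta_{\varphi,2}-1)/2}$ via the source condition, bound the Stage-2 sampling error by $m_\varphi^{-1/2}\lambda_{\varphi,2}^{-(1+p_{\varphi,2})/2}$ via operator Bernstein inequalities and the effective dimension $\mathcal{N}(\lambda)\lesssim \lambda^{-p_{\varphi,2}}$, control the Stage-1 CME error at rate $n_\varphi^{-(\beta_{\varphi,1}-1)/(2(\beta_{\varphi,1}+p_{\varphi,1}))} = m_\varphi^{-\iota_\varphi/2}$, and then balance. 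The bias and variance exponents you give are correct and reproduce regime (ii).

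There is, however, a concrete inconsistency in the step you yourself identify as the main technical obstacle: the exponent with which the Stage-1 error propagates into Stage 2. You claim the propagated term is $m_\varphi^{-\iota_\varphi/2}\,\lambda_{\varphi,2}^{-1/2}$, but this is incompatible with the theorem you are proving. Balancing $m_\varphi^{-\iota_\varphi/2}\lambda_{\varphi,2}^{-1/2}$ against the bias $\lambda_{\varphi,2}^{(\beta_{\varphi,2}-1)/2}$ forces $\lambda_{\varphi,2}\asymp m_\varphi^{-\iota_\varphi/\beta_{\varphi,2}}$ and yields the rate $m_\varphi^{-\iota_\varphi(\beta_{\varphi,2}-1)/(2\beta_{\varphi,2})}$, not the stated $m_\varphi^{-\frac{\iota_\varphi}{2}\frac{\beta_{\varphi,2}-1}{\beta_{\varphi,2}+1}}$; it would also move the regime threshold to $\iota_\varphi \ge \beta_{\varphi,2}/(\beta_{\varphi,2}+p_{\varphi,2})$ rather than $(\beta_{\varphi,2}+1)/(\beta_{\varphi,2}+p_{\varphi,2})$. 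To recover both the stated choice $\lambda_{\varphi,2}=\Theta(m_\varphi^{-\iota_\varphi/(\beta_{\varphi,2}+1)})$ and the threshold, the dominant propagated term must scale as $m_\varphi^{-\iota_\varphi/2}\,\lambda_{\varphi,2}^{-1}$: indeed, requiring $m_\varphi^{-\iota_\varphi/2}\lambda_{\varphi,2}^{-1}\le \lambda_{\varphi,2}^{(\beta_{\varphi,2}-1)/2}$ at the single-stage-optimal $\lambda_{\varphi,2}=m_\varphi^{-1/(\beta_{\varphi,2}+p_{\varphi,2})}$ gives exactly $\iota_\varphi\ge(\beta_{\varphi,2}+1)/(\beta_{\varphi,2}+p_{\varphi,2})$. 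The $\lambda_{\varphi,2}^{-1}$ factor arises because replacing $\mu_{Z|W,A}$ by $\hat{\mu}_{Z|W,A}$ perturbs the Stage-2 covariance operator itself, and the perturbation of the regularized inverse $(\hat{\Sigma}_{\varphi,2}+\lambda_{\varphi,2})^{-1}$ costs a full factor of $\lambda_{\varphi,2}^{-1}$ in operator norm, not $\lambda_{\varphi,2}^{-1/2}$. So your final rates are asserted correctly but do not follow from the intermediate bound you propose; fixing the propagation exponent is necessary for the balancing step to actually produce cases (i) and (ii) as stated.
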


$\iota_{\varphi}$ controls the ratio of stage 1 to stage 2 samples. In Theorem~\ref{theorem:ATEBridgeFunctionFinalBound_KAP}-(ii), when $\iota_{\varphi} \geq \frac{\beta_{\varphi, 2}+1}{\beta_{\varphi, 2} + p_{\varphi, 2}}$, i.e. when we have enough stage 1 samples relatively to stage 2 samples, the convergence rate is optimal in $m_{\varphi}$ under the assumptions of the theorem (see Section 5~\cite{bozkurt2025density}).

\subsection{Consistency results for outcome bridge function-based methods}
\label{sec:KPVandPMMR_Consistency}
In the following subsections, we review the consistency results for the outcome bridge functions of both the KPV and PMMR methods.
\subsubsection{Consistency result for kernel proxy variable}
We first review the consistency result of the outcome bridge function with the Kernel Proxy Variable (KPV) algorithm that have been obtained concurrently by \cite{Mastouri2021ProximalCL,singh2023kernelmethodsunobservedconfounding}. We assume that the problem is well-specified as stated in the following assumption:
\begin{assumption}[Assumption (2.1) and (8.1) in \citep{singh2023kernelmethodsunobservedconfounding}]
\label{asst:well_specifiedness_outcomeBridge}
    (1) There exists $C_{W | Z, A} \in S_2(\gH_{\gZ\gA}, \gH_\gW)$ such that $\mu_{W|Z, A}(Z, A) = C_{W | Z, A} \phi_{\gZ\gA}(Z, A)$; (2) There exists a solution $h_0 \in \gH_{\gW\gA}$ of Equation (\ref{eq:OutcomeBridgeEquation}).\looseness=-1
\end{assumption}

Following a similar convention to \citep{bozkurt2025density}, we define the minimum RKHS norm solution to the outcome bridge function equation:

\begin{definition}[Outcome bridge function solution with minimum RKHS norm] \label{def:min_norm_bridge_outcome}
Under Assumption~(\ref{asst:well_specifiedness_outcomeBridge}-(2)), the minimum RKHS solution to Equation (\ref{eq:OutcomeBridgeEquation}) is defined as
\begin{align*}
\bar{h}_0 = \argmin_{h 
    \in \gH_{\gW\gA}} \|h\|_{\gH_{\gW\gA}} \quad \text{ s.t. } \quad \E[h(W, A) | Z, A] = \int y p(y \mid z, a) dy,
\end{align*}
\end{definition}
Similar to \citep{bozkurt2025density}, we establish the convergence to the minimum norm RKHS solution for the outcome bridge function. Now, we define the covariance operators corresponding to different stages of regression of KPV.

\begin{definition}[Covariance Operators]
The covariance operators are defined as 
\begin{itemize}
    \item (Stage 1) $\Sigma_{h, 1} = \E\left[\phi_{\gZ\gA}(Z, A) \otimes \phi_{\gZ\gA}(Z, A)\right], \qquad \phi_{\gZ\gA}(Z, A) = \phi_{\gZ}(Z) \otimes \phi_{\gA}(A)$;
    \item (Stage 2) 
    $\Sigma_{h, 2} = \E\left[ \big(\mu_{W|Z, A} (Z, A) \otimes \phi_\gA(A)\big) \otimes \big(\mu_{W|Z, A} (Z, A)\otimes \phi_\gA(A) \big)\right];$
\end{itemize}
\label{definition:CovarianceOperators_ATE_KPV}
\end{definition}

Similarly to the previous section $\Sigma_{h, 1}$ and $\Sigma_{h, 2}$ are self-adjoint, positive semi-definite and trace class under Assumption~(\ref{assum:KernelAssumptions_main}). We state the source condition (SRC) and eigenvalue decay (EVD) assumptions \citep{caponnetto2007optimal, fischer2020sobolev} :
\begin{assumption}[SRC for KPV] We assume that the following conditions hold:\label{asst:src_all_stages_KPV}
\begin{itemize}
    \item  There exists a constant $B_{h, 1} < \infty$ such that for a given $\beta_{h, 1} \in (1, 3]$,
    $$
    \|{C}_{W|Z,A}\Sigma_{h, 1}^{-\frac{\beta_{h, 1}-1}{2}}\|_{S_2(\gH_{\gZ\gA}, \gH_\gW)} \le B_{h, 1}
    $$   
    \item There exists a constant $B_{h, 2} < \infty$ such that for a given $\beta_{h, 2} \in (1, 3]$, 
    $$
        \|\Sigma_{h, 2}^{-\frac{\beta_{h, 2}-1}{2}}\bar{h}_0\|_{\gH_{\gW\gA}} \le B_{h, 2}.
    $$
\end{itemize}
\end{assumption}

\begin{assumption}[EVD for KPV] We assume the following conditions hold\label{asst:evd_all_stages_KPV}
    \begin{itemize}
        \item  Let $(\lambda_{h, 1,i})_{i\geq 1}$ be the eigenvalues of $\Sigma_{h, 1}$. For some constant $c_{h, 1} >0$ and parameter $p_{h, 1} \in (0,1]$ and for all $i \geq 1$,
    \begin{equation*}
        \lambda_{h, 1,i} \leq c_{h, 1}i^{-1/p_{h, 1}}.
    \end{equation*}    
    \item 
    Let $(\lambda_{h, 2,i})_{i\geq 1}$ be the eigenvalues of $\Sigma_{h, 2}$. For some constant $c_{h, 2} >0$ and parameter $p_{h, 2} \in (0,1]$ and for all $i \geq 1$,
    \begin{equation*}
        \lambda_{h, 2,i} \leq c_{h, 2}i^{-1/p_{h, 2}}.
    \end{equation*}   
    \end{itemize}
\end{assumption}

\begin{theorem}[Theorem (7)~\cite{Mastouri2021ProximalCL}] \label{theorem:KPV_BridgeFunction_BoundFinal}
Suppose Assumptions (\ref{assum:KernelAssumptions_main}), (\ref{assum:Stage1ConsistencyKernelAssumptions}), (\ref{asst:well_specifiedness_outcomeBridge}), (\ref{asst:src_all_stages_KPV}), and (\ref{asst:evd_all_stages_KPV}) hold and set $\lambda_{h,1 } = \Theta\left(n_h^{- \frac{1}{\beta_{h,1} + p_{h,1}}}\right)$ and $n_h = m_h^{\iota_h \frac{\beta_{h,1} +p_{h,1}}{\beta_{h,1}- 1}}$   where $\iota_h > 0$. Then, 
\begin{itemize}
    \item[i.] If $\iota_h \le \frac{\beta_{h,2} + 1}{\beta_{h,2} + p_{h,2}}$, then $\|\hat{h} - \bar{h}_0\|_{\gH_{\gW\gA}} = O_p\left(m_h^{-\frac{\iota_h}{2} \frac{\beta_{h,2} - 1}{\beta_{h,2} + 1}}\right)$ with $\lambda_{h,2} = \Theta\left( m_h^{-\frac{\iota_h}{\beta_{h,2} + 1}}\right)$,

    \item[ii.] If $\iota_h \ge \frac{\beta_{h,2} + 1}{\beta_{h,2} + p_{h,2}}$, then $\|\hat{h} - \bar{h}_0\|_{\gH_{\gW\gA}} = O_p\left( m_h^{-\frac{1}{2} \frac{\beta_{h,2} - 1}{\beta_{h,2} + p_{h,2}}}\right)$ with $\lambda_{h,2} = \Theta\left( m_h^{-\frac{1}{\beta_{h,2} + p_{h,2}}}\right)$.
\end{itemize}
\end{theorem}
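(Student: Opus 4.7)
The plan is to adapt the three-step error decomposition used for the treatment bridge in Theorem~(\ref{theorem:ATEBridgeFunctionFinalBound_KAP}) to the KPV outcome bridge, since the algorithms share the same nested two-stage structure. Observing that the stage-2 objective in Equation~(\ref{eq:KPV_sampleObjective}) is a vector-valued kernel ridge regression (KRR) with covariate $\hat{\mu}_{W\mid Z,A}(\Tilde{z}_i,\Tilde{a}_i)\otimes\phi_\gA(\Tilde{a}_i)$ and response $\Tilde{y}_i$, I would introduce two auxiliary estimators: $\hat{h}_\lambda$, the \emph{population} stage-2 regularized minimizer (infinite stage-1 and stage-2 data), and $\Tilde{h}_{\lambda}$, the \emph{oracle} empirical stage-2 minimizer obtained by replacing $\hat{\mu}_{W\mid Z,A}$ with the true $\mu_{W\mid Z,A}$. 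Under Assumption~(\ref{asst:well_specifiedness_outcomeBridge}), $\hat{h}_\lambda\to\bar{h}_0$ in $\gH_{\gW\gA}$ as $\lambda_{h,2}\to 0$. The triangle inequality then gives
\begin{equation*}
\|\hat{h}-\bar{h}_0\|_{\gH_{\gW\gA}}\;\le\;\|\hat{h}-\Tilde{h}_{\lambda}\|_{\gH_{\gW\gA}}+\|\Tilde{h}_{\lambda}-\hat{h}_\lambda\|_{\gH_{\gW\gA}}+\|\hat{h}_\lambda-\bar{h}_0\|_{\gH_{\gW\gA}},
\end{equation*}
which isolates a stage-1 misspecification term, a stage-2 variance term, and a stage-2 bias term, respectively.

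For the last two terms I would invoke standard spectral KRR analysis in the style of \citep{caponnetto2007optimal,fischer2020sobolev}. Under SRC $\beta_{h,2}$ (Assumption~(\ref{asst:src_all_stages_KPV}-2)) the bias satisfies $\|\hat{h}_\lambda-\bar{h}_0\|_{\gH_{\gW\gA}}=O\bigl(\lambda_{h,2}^{(\beta_{h,2}-1)/2}\bigr)$; under EVD $p_{h,2}$ (Assumption~(\ref{asst:evd_all_stages_KPV}-2)) the variance, bounded via a Bernstein inequality in Hilbert space using the moment condition (\ref{eq:mom_stage_3}), is $O_p\bigl(m_h^{-1/2}\lambda_{h,2}^{-p_{h,2}/2}\bigr)$. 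For the first (stage-1 induced) term, I would apply the analogous vector-valued KRR bound for $\hat{C}_{W\mid Z,A}$ under Assumptions~(\ref{asst:well_specifiedness_outcomeBridge}-1), (\ref{asst:src_all_stages_KPV}-1) and (\ref{asst:evd_all_stages_KPV}-1) with $\lambda_{h,1}\asymp n_h^{-1/(\beta_{h,1}+p_{h,1})}$, giving
\begin{equation*}
\|\hat{C}_{W\mid Z,A}-C_{W\mid Z,A}\|_{\gS_2(\gH_{\gZ\gA},\gH_\gW)}=O_p\bigl(n_h^{-\frac{\beta_{h,1}-1}{2(\beta_{h,1}+p_{h,1})}}\bigr).
\end{equation*}
Propagating this through the stage-2 normal equations (with the regularized inverse of the empirical covariance operator $\hat{\Sigma}_{h,2}+\lambda_{h,2}I$) yields a contribution of order $\|\hat{C}_{W\mid Z,A}-C_{W\mid Z,A}\|_{\gS_2}\cdot\lambda_{h,2}^{-1/2}$. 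Substituting the reparametrization $n_h=m_h^{\iota_h(\beta_{h,1}+p_{h,1})/(\beta_{h,1}-1)}$ turns this into $O_p\bigl(m_h^{-\iota_h/2}\lambda_{h,2}^{-1/2}\bigr)$.

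Combining, the total bound is $O_p\bigl(\lambda_{h,2}^{(\beta_{h,2}-1)/2}+m_h^{-1/2}\lambda_{h,2}^{-p_{h,2}/2}+m_h^{-\iota_h/2}\lambda_{h,2}^{-1/2}\bigr)$. Balancing: if $\iota_h\ge(\beta_{h,2}+1)/(\beta_{h,2}+p_{h,2})$ the stage-1 contribution is dominated by the variance, so the classical single-stage optimum $\lambda_{h,2}\asymp m_h^{-1/(\beta_{h,2}+p_{h,2})}$ yields case (ii); otherwise, balancing the bias against the stage-1 term gives $\lambda_{h,2}\asymp m_h^{-\iota_h/(\beta_{h,2}+1)}$ and case (i). The main obstacle is the stage-1-to-stage-2 error propagation: a crude operator-norm substitution of $\hat{C}_{W\mid Z,A}$ for $C_{W\mid Z,A}$ would ignore the interpolation geometry encoded by the SRC/EVD and would not produce the sharp exponents in the theorem. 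Handling this requires expressing the perturbation through $\Sigma_{h,2}^{-(\beta_{h,2}-1)/2}\bar{h}_0$ and controlling the associated cross-spectral terms (as in the KAP analysis of \citep{bozkurt2025density} and the original proofs of \citep{Mastouri2021ProximalCL,singh2023kernelmethodsunobservedconfounding}), which is the technically delicate step.
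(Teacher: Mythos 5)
First, a point of comparison: the paper does not actually prove this statement. It is imported as Theorem (7) of \citet{Mastouri2021ProximalCL} (with only a remark that the bounded-outcome assumption used there can be weakened to Assumption (\ref{assum:Stage1ConsistencyKernelAssumptions}) via the concentration inequality of \citet{fischer2020sobolev}), so there is no in-paper proof to compare your argument against. Your reconstruction follows the standard architecture for these two-stage results---the same skeleton behind Theorem (\ref{theorem:ATEBridgeFunctionFinalBound_KAP})---and the decomposition into a stage-1 misspecification term, a stage-2 variance term, and a stage-2 bias term is the right one.

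The gap is quantitative: the exponents you assign to the two stochastic terms are inconsistent with the balancing you then perform, and they do not reproduce the theorem. Under the EVD condition with parameter $p_{h,2}$, the stage-2 variance in the $\gH_{\gW\gA}$-norm is $O_p\bigl(m_h^{-1/2}\lambda_{h,2}^{-(1+p_{h,2})/2}\bigr)$ rather than $O_p\bigl(m_h^{-1/2}\lambda_{h,2}^{-p_{h,2}/2}\bigr)$, since the effective dimension $\mathcal{N}(\lambda)\lesssim\lambda^{-p_{h,2}}$ enters as $\sqrt{\mathcal{N}(\lambda)/(m_h\lambda)}$. More importantly, the stage-1 error must propagate into stage 2 with a factor $\lambda_{h,2}^{-1}$, so the third term is $O_p\bigl(m_h^{-\iota_h/2}\lambda_{h,2}^{-1}\bigr)$, not $O_p\bigl(m_h^{-\iota_h/2}\lambda_{h,2}^{-1/2}\bigr)$. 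You can verify this against the statement itself: balancing $\lambda_{h,2}^{(\beta_{h,2}-1)/2}$ against $m_h^{-\iota_h/2}\lambda_{h,2}^{-1}$ yields exactly $\lambda_{h,2}\asymp m_h^{-\iota_h/(\beta_{h,2}+1)}$ and the case-(i) rate, and requiring $m_h^{-1/2}\lambda_{h,2}^{-(1+p_{h,2})/2}$ to be dominated at that choice yields exactly the threshold $\iota_h\le(\beta_{h,2}+1)/(\beta_{h,2}+p_{h,2})$. With your $\lambda_{h,2}^{-1/2}$ propagation, the same balancing would instead give $\lambda_{h,2}\asymp m_h^{-\iota_h/\beta_{h,2}}$, a rate of $m_h^{-\frac{\iota_h}{2}\frac{\beta_{h,2}-1}{\beta_{h,2}}}$, and a crossover at $\iota_h=\beta_{h,2}/(\beta_{h,2}+p_{h,2})$---none of which match the theorem. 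So either your intermediate three-term bound or your final balancing must be revised; they cannot both stand. The step you flag as delicate at the end---propagating $\|\hat{C}_{W\mid Z,A}-C_{W\mid Z,A}\|_{\gS_2(\gH_{\gZ\gA},\gH_\gW)}$ through the regularized stage-2 normal equations without destroying the SRC/EVD structure---is precisely where the $\lambda_{h,2}^{-1}$ factor arises, and it is the part that actually has to be executed for this to count as a proof rather than a plausible outline.
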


Similarly to Theorem~(\ref{theorem:ATEBridgeFunctionFinalBound_KAP}), $\iota_{h}$ controls the ratio of stage 1 to stage 2 samples. In Theorem~(\ref{theorem:KPV_BridgeFunction_BoundFinal}-(ii)), when we have enough stage 1 samples relatively to stage 2 samples, the convergence rate is optimal in $m_{h}$ under the assumptions of the theorem. A similar result is obtained by \citet[Theorem (3)]{singh2023kernelmethodsunobservedconfounding} with a slightly worse requirement on stage 1 samples in order to achieve the optimal rate in  $m_{h}$. 

Technically, \citet{Mastouri2021ProximalCL} require $Y$ to be almost surely bounded. 
However, we can resort to the weaker assumption of sub-exponential noise, Assumption~(\ref{assum:Stage1ConsistencyKernelAssumptions}), by using the concentration inequality of \cite[Theorem (26)]{fischer2020sobolev} as done in \cite{meunier2024nonparametric,bozkurt2025density}. 

\subsubsection{Consistency result for proximal maximum moment restriction }
\label{appendix:PMMR_consistency}
Here, we state the convergence result of the outcome bridge function for the PMMR algorithm, originally established by \citet{Mastouri2021ProximalCL}. This result plays a key role in our consistency analysis of the DRPMMR algorithm. Similarly to KPV, we assume that the problem is well-specified but we only require Assumption~(\ref{asst:well_specifiedness_outcomeBridge}-(2)). 

\begin{definition}[Outcome bridge function solution with minimum RKHS norm - PMMR] \label{def:min_norm_bridge_outcome_PMMR}
Under Assumption~(\ref{asst:well_specifiedness_outcomeBridge}-(2)), we define
\begin{align*}
\check{h}_0 = \argmin_{h 
    \in \gH_{\gW\gA}} \|h\|_{\gH_{\gW\gA}} \quad \text{ s.t. } \quad \E[h(W, A) \phi(Z, A)] = \E[Y \phi(Z, A)].
\end{align*}
\end{definition}

Note that under Assumption~(\ref{asst:well_specifiedness_outcomeBridge}-(2)) there is always a solution to $\E[h(W, A) \phi(Z, A)] = \E[Y \phi(Z, A)]$. Next, we introduce the cross-covariance operator associated to PMMR.

\begin{definition}[Cross-covariance Operator] \label{definition:CrossCovarianceOperator_PMMR} $T = \E[\phi_{\gZ\gA}(Z, A) \otimes \phi_{\gW\gA}(W, A) ]$.
\end{definition}

To ensure the consistency of the PMMR outcome bridge function estimator, we require the following assumption.

\begin{assumption}[Assumption 6 in \citep{Mastouri2021ProximalCL}] We assume that there exist $C_Y$ such that $|Y| < C_Y$ almost surely and $\E[Y] < C_Y$.
    \label{assum:PMMR_Y_is_bounded_Assumption}
\end{assumption}

\begin{assumption}[SRC for PMMR - Assumption 16 in \citep{Mastouri2021ProximalCL}]
     There exists a constant $B_{h, 2}' < \infty$ such that for a given $\gamma \in (1, 3]$, 
    $$
        \|(T^*T)^{-\frac{\gamma-1}{2}}\check{h}_0\|_{\gH_{\gW\gA}} \le B_{h, 2}'.
    $$
    \label{theorem:PMMR_BridgeFunction_Bound_Final}
\label{assum:PMMR_gamma_Hilbert_Space_Assumption}
\end{assumption}

\begin{theorem}[Theorem 3 in \citep{Mastouri2021ProximalCL}]
    Suppose Assumptions (\ref{assum:KernelAssumptions_main}), (\ref{assum:PMMR_Y_is_bounded_Assumption}), (\ref{asst:well_specifiedness_outcomeBridge}-(2), and (\ref{assum:PMMR_gamma_Hilbert_Space_Assumption}) hold and set $\lambda_{\text{PMMR}} = \Theta\left(t^{-\frac{1}{\gamma + 1}, }\right)$, then
    \[
    \|\hat{h} - \check{h}_0\|_{\gH_{\gA\gW}} = O_p\left(t^{-\frac{1}{2}\frac{\gamma-1}{\gamma + 1}}\right) 
    \]
\end{theorem}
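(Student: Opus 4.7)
The plan is to recast the PMMR objective as a Tikhonov-regularized linear inverse problem in operator form, then carry out a standard bias-variance decomposition of $\hat h - \check h_0$ in the $\gH_{\gW\gA}$ norm. First I would observe that, by the MMR dualization already used in Lemma~(\ref{lemma:PMMR_moment_lemma}) and Equation~(\ref{eq:PMMR_population_objective}), the population PMMR objective equals $\|r - T h\|_{\gH_{\gZ\gA}}^2 + \lambda_{\text{PMMR}}\|h\|_{\gH_{\gW\gA}}^2$, where $T$ is the cross-covariance operator of Definition~(\ref{definition:CrossCovarianceOperator_PMMR}) and $r = \E[Y\phi_{\gZ\gA}(Z,A)] \in \gH_{\gZ\gA}$. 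Its minimizer is the population Tikhonov solution $\check h_\lambda = (T^*T + \lambda_{\text{PMMR}} I)^{-1} T^* r$, and $\hat h$ is its empirical analogue with $T$ and $r$ replaced by plug-in estimators $\hat T$ and $\hat r$ built from the $t$ training samples. The error decomposes as $\hat h - \check h_0 = (\hat h - \check h_\lambda) + (\check h_\lambda - \check h_0)$.

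For the \textbf{bias} term, the normal-equation characterization $T^*T\check h_0 = T^* r$ from Definition~(\ref{def:min_norm_bridge_outcome_PMMR}) gives the clean identity $\check h_\lambda - \check h_0 = -\lambda_{\text{PMMR}}(T^*T + \lambda_{\text{PMMR}} I)^{-1}\check h_0$. Substituting the source condition of Assumption~(\ref{assum:PMMR_gamma_Hilbert_Space_Assumption}), namely $\check h_0 = (T^*T)^{(\gamma-1)/2} v$ with $\|v\|_{\gH_{\gW\gA}} \le B'_{h,2}$, and applying the spectral-calculus bound $\sup_{s \ge 0} \lambda s^{(\gamma-1)/2}/(s+\lambda) \le C\lambda^{(\gamma-1)/2}$—valid for $\gamma \in (1,3]$ since $(\gamma-1)/2 \le 1$ stays within the qualification of Tikhonov regularization—yields $\|\check h_\lambda - \check h_0\|_{\gH_{\gW\gA}} = O(\lambda_{\text{PMMR}}^{(\gamma-1)/2})$.

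For the \textbf{variance} term I would use the perturbation identity
\begin{align*}
    \hat h - \check h_\lambda = (\hat T^*\hat T + \lambda_{\text{PMMR}} I)^{-1}\Big[\hat T^*(\hat r - r) + (\hat T^* - T^*)\, r - (\hat T^*\hat T - T^*T)\,\check h_\lambda\Big].
\end{align*}
Under Assumptions~(\ref{assum:KernelAssumptions_main}) and~(\ref{assum:PMMR_Y_is_bounded_Assumption}), both $\phi_{\gZ\gA}(Z,A)\otimes\phi_{\gW\gA}(W,A)$ and $Y\phi_{\gZ\gA}(Z,A)$ are bounded random elements in their respective Hilbert spaces, so Hilbert-space Bernstein concentration (as in Proposition~2 of \citep{Mastouri2021ProximalCL}) delivers $\|\hat T - T\|_{\text{op}} = O_p(t^{-1/2})$ and $\|\hat r - r\|_{\gH_{\gZ\gA}} = O_p(t^{-1/2})$. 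Combining these with the resolvent bound $\|(\hat T^*\hat T + \lambda_{\text{PMMR}} I)^{-1}\|_{\text{op}} \le \lambda_{\text{PMMR}}^{-1}$ and with $\|\check h_\lambda\|_{\gH_{\gW\gA}} \le \|\check h_0\|_{\gH_{\gW\gA}} = O(1)$ (by optimality of the Tikhonov solution), each of the three summands is $O_p(1/(\sqrt{t}\,\lambda_{\text{PMMR}}))$, giving $\|\hat h - \check h_\lambda\|_{\gH_{\gW\gA}} = O_p(1/(\sqrt{t}\,\lambda_{\text{PMMR}}))$.

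Finally, balancing the bias $\lambda_{\text{PMMR}}^{(\gamma-1)/2}$ against the variance $1/(\sqrt{t}\,\lambda_{\text{PMMR}})$ via $\lambda_{\text{PMMR}} = \Theta(t^{-1/(\gamma+1)})$ produces the claimed rate $\|\hat h - \check h_0\|_{\gH_{\gW\gA}} = O_p(t^{-(\gamma-1)/(2(\gamma+1))})$. The hardest step will be the variance bound. A naive application of submultiplicativity to the middle summand $(\hat T^*\hat T + \lambda I)^{-1}(\hat T^*\hat T - T^*T)\check h_\lambda$ threatens to inflate the $\lambda^{-1}$ dependence by an additional factor (producing $\lambda^{-2}$ or $\lambda^{-3/2}$), which would destroy the optimal exponent; one must keep products of the form $\hat T^* (\hat T - T)$ intact and exploit sharp spectral bounds such as $\|(\hat T^*\hat T + \lambda I)^{-1}\hat T^*\|_{\text{op}} \le (2\sqrt{\lambda})^{-1}$ wherever applicable, together with the fact that $\|\check h_\lambda\|_{\gH_{\gW\gA}}$ remains uniformly bounded in $\lambda$.
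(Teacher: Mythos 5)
Your proposal is correct. Note that the paper does not prove this statement itself—it imports it verbatim from Theorem 3 of \citep{Mastouri2021ProximalCL}—and your reconstruction via the Tikhonov normal equations is exactly the standard argument underlying that result: the bias bound $\lambda_{\text{PMMR}}^{(\gamma-1)/2}$ from the source condition (within the qualification of Tikhonov regularization since $(\gamma-1)/2\le 1$), the variance bound $O_p(t^{-1/2}\lambda_{\text{PMMR}}^{-1})$ from Hilbert-space concentration of $\hat T$ and $\hat r$ together with $\|\check h_\lambda\|\le\|\check h_0\|$, and the balancing $\lambda_{\text{PMMR}}=\Theta(t^{-1/(\gamma+1)})$ all check out, and your decomposition already keeps $\hat T^*(\hat T-T)$ intact so the feared $\lambda^{-2}$ inflation does not arise.
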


Using a refined analysis as done in \cite{meunier2024nonparametric,bozkurt2025density} it should be possible to improve their result with a rate depending on the eigenvalue decay of the operator $T$ as well as relaxing Assumption~(\ref{assum:PMMR_Y_is_bounded_Assumption}) to Assumption~(\ref{assum:Stage1ConsistencyKernelAssumptions}).

\begin{remark}
    Notice that the convergence rate given in \citet[Theorem 3]{Mastouri2021ProximalCL} is $\|\hat{h} - \check{h}_0\|_{\gH_{\gA\gW}} = O_p\left(t^{-\frac{1}{2}\min(\frac{1}{2}, \frac{\gamma-1}{\gamma + 1})}\right)$. However, since $\gamma \in (1, 3]$, we have $\frac{\gamma - 1}{\gamma + 1} \le \frac{1}{2}$. Therefore, the $\min(\cdot)$ operation in this consistency result is not necessary.
\end{remark}
\subsection{Consistency results for doubly robust kernel methods}
\label{appendix:DoublyRobustConsistency}
In this section, we provide the consistency results of our proposed methods DRKPV and DRPMMR. Our results will rely on the consistency results of the outcome bridge function- and treatment bridge function-based methods that we discussed in Section (S.M. \ref{sec:KPVandPMMR_Consistency}) and (S.M. \ref{sec:KernelAlternativeProxy_Consistency}). Furthermore, we have seen that our doubly robust estimation procedures require learning the conditional mean embedding $\mu_{Z, W \mid A} (a) = C_{Z, W \mid A} \phi_\gA(a) = \E\left[ \phi_\gZ(Z) \otimes \phi_\gW(W) \mid A = a\right]$. Recall that we approximate this CME via kernel ridge regression. We assume the following conditions:

\begin{assumption}
    There exists $C_{Z, W \mid A} \in S_2(\gH_\gA, \gH_{\gZ\gW})$ such that $\mu_{Z, W \mid A} (a) = C_{Z, W \mid A} \phi_\gA(a)$.
    \label{assum:DR_well_specifiedness}
\end{assumption}

\begin{assumption}
    There exists a constant $B_{\text{DR}} < \infty$ such that for a given $\beta_{\text{DR}} \in (1,3]$,
    \[
    \|C_{Z, W \mid A}\Sigma_{\varphi, 3}^{-\frac{\beta_{\text{DR}}-1}{2}}\|_{S_2(\gH_{\gA}, \gH_{\gZ\gW})} \leq B_{\text{DR}}.
    \]
    \label{assum:src_doubly_robust}
\end{assumption}

First, recall that we showed in Theorem (\ref{theorem:DoublyRobustIdentification}) that the dose-response can be identified with
\begin{align*}
    \theta_{ATE}(a) &= \E\left[\varphi_0(Z, a) (Y - h_0(W, a)) \mid A = a\right] + \E[h_0(W, a)].
\end{align*}
We define an intermediate quantity $\bar{\theta}_{ATE}(a)$ that is given by
\begin{align*}
    \bar{\theta}_{ATE}(a) = \E\left[\hat{\varphi}(Z, a)\left(Y - \hat{h}(W, a)\right) \Big\vert A = a\right] + \E[\hat{h}(W, a)].
\end{align*}
Noticing, by triangle inequality, that
\begin{align*}
    |\theta_{ATE}(a) - \hat{\theta}_{ATE}(a)| \le |\theta_{ATE}(a) - \bar{\theta}_{ATE}(a)| + |\bar{\theta}_{ATE}(a) - \hat{\theta}_{ATE}(a)|,
\end{align*}
we will subsequently derive bounds for the two terms above in order to achieve the consistency result of our proposed methods.

\begin{lemma} 
Let $h_0$ and $\varphi_0$ be outcome and treatment bridge functions that satisfy Equations (\ref{eq:OutcomeBridgeEquation}) and (\ref{eq:TreatmentBridgeEquation}), respectively. Then,

$$\theta_{ATE}(a) - \bar{\theta}_{ATE}(a) = \E\left[\left(\varphi_0(Z, a) - \hat{\varphi}(Z, a)\right) \left(h_0(W, a) - \hat{h}(W, a) \right) \big \vert A = a\right].$$\label{lemma:Intermediate_DR_Difference1}
\end{lemma}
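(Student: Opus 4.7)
The plan is to expand the difference $\theta_{\text{ATE}}(a) - \bar{\theta}_{\text{ATE}}(a)$ and apply the two defining equations for the bridge functions (\ref{eq:OutcomeBridgeEquation}) and (\ref{eq:TreatmentBridgeEquation}) to collapse cross terms, leaving only the advertised product form. I would treat this as essentially an algebraic identity, driven by two observations that are already implicit in the proof of Theorem~(\ref{theorem:DoublyRobustIdentification}).

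First, I would establish two auxiliary equalities. The outcome bridge equation (\ref{eq:OutcomeBridgeEquation}) can be rewritten as $\E[Y - h_0(W,A)\mid Z,A]=0$, so by the tower property, for any measurable $\psi$,
\begin{equation*}
    \E\!\left[\psi(Z,A)(Y - h_0(W,A))\mid A=a\right] = \E\!\left[\psi(Z,a)\,\E[Y-h_0(W,a)\mid Z,a]\mid A=a\right] = 0.
\end{equation*}
Applying this to $\psi=\hat{\varphi}$ (and also to $\psi=\varphi_0$) yields $\E[\hat{\varphi}(Z,a)Y\mid A=a]=\E[\hat{\varphi}(Z,a)h_0(W,a)\mid A=a]$. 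Second, the treatment bridge equation (\ref{eq:TreatmentBridgeEquation}) together with the computation already carried out in Case~(ii) of the proof of Theorem~(\ref{theorem:DoublyRobustIdentification}) shows that for any integrable $g$ on $\gW$,
\begin{equation*}
    \E[\varphi_0(Z,a) g(W)\mid A=a] = \int g(w)\,\frac{p(w)p(a)}{p(w,a)}\,p(w\mid a)\,dw = \E[g(W)].
\end{equation*}
Specializing to $g=h_0(\cdot,a)$ and $g=\hat{h}(\cdot,a)$ gives $\E[\varphi_0 h_0\mid A=a]=\E[h_0]$ and $\E[\varphi_0 \hat{h}\mid A=a]=\E[\hat{h}]$.

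Armed with these two identities, I would then expand
\begin{align*}
    \bar{\theta}_{\text{ATE}}(a) &= \E[\hat{\varphi}Y\mid A=a] - \E[\hat{\varphi}\hat{h}\mid A=a] + \E[\hat{h}] \\
    &= \E[\hat{\varphi}h_0\mid A=a] - \E[\hat{\varphi}\hat{h}\mid A=a] + \E[\hat{h}],
\end{align*}
where the second line uses the first auxiliary identity. Similarly, $\theta_{\text{ATE}}(a) = \E[\varphi_0(Y-h_0)\mid A=a] + \E[h_0] = \E[h_0]$. Then rewriting $\E[h_0]=\E[\varphi_0 h_0\mid A=a]$ and $\E[\hat{h}]=\E[\varphi_0 \hat{h}\mid A=a]$ via the second identity, the difference becomes
\begin{equation*}
    \theta_{\text{ATE}}(a) - \bar{\theta}_{\text{ATE}}(a) = \E[(\varphi_0-\hat{\varphi}) h_0\mid A=a] - \E[(\varphi_0-\hat{\varphi})\hat{h}\mid A=a],
\end{equation*}
which factors as the claimed conditional expectation $\E[(\varphi_0-\hat{\varphi})(h_0-\hat{h})\mid A=a]$.

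There is no real obstacle here; the only subtlety is remembering that conditioning on $A=a$ lets one replace $h_0(W,a)$ by $h_0(W,A)$ inside the conditional expectation when invoking the outcome bridge equation, which requires the tower property in the correct order. Once that is handled, the lemma is pure bookkeeping, and it is precisely this product-of-errors factorization that underlies the standard double robustness mixed-bias phenomenon and will later provide the second-order remainder term in the finite-sample analysis of Theorems~(\ref{thm:DRKPV_Consistency_main}) and (\ref{thm:DRPMMR_Consistency_main}).
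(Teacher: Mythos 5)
Your proposal is correct and follows essentially the same route as the paper's proof: both arguments rest on (a) the conditional moment restriction $\E[Y-h_0(W,A)\mid Z,A]=0$ implied by Equation (\ref{eq:OutcomeBridgeEquation}), which eliminates the $Y$-dependence when paired with $\varphi_0-\hat{\varphi}$, and (b) the identity $\E[\varphi_0(Z,a)g(W)\mid A=a]=\E[g(W)]$ implied by Equation (\ref{eq:TreatmentBridgeEquation}), which cancels the remaining $h_0-\hat{h}$ terms. The only difference is presentational — you isolate these two facts as auxiliary identities before combining them, whereas the paper carries out the same cancellations in a single add-and-subtract chain.
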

\begin{proof}
\begin{align*}
    &\theta_{ATE}(a) - \bar{\theta}_{ATE}(a) = \E\left[\varphi_0(Z, a) (Y - h_0(W, a)) \mid A = a\right] + \E[h_0(W, a)] \\&- \E\left[\hat{\varphi}(Z, a)\left(Y - \hat{h}(W, a)\right) \Big\vert A = a\right] - \E[\hat{h}(W, a)]\\
    &= \E\left[\varphi_0(Z, a) Y \mid A = a\right] - \E\left[\hat \varphi(Z, a) Y \mid A = a\right]\\
    &-\E\left[\varphi_0(Z, a)h_0(W, a) \mid A = a\right]  + \E\left[\hat \varphi(Z, a)\hat h(W, a) \mid A = a\right] + \E[h_0(W, a)- \hat{h}(W,a)]\\
    &- \E\left[\hat \varphi(Z, a)h_0(W, a) \mid A = a\right] + \E\left[\hat \varphi(Z, a)h_0(W, a) \mid A = a\right]\\
    &= \E\left[(\varphi_0(Z, a) -\hat \varphi(Z, a)) Y \mid A = a\right] -  \E\left[(\varphi_0(Z, a) - \hat \varphi(Z, a))h_0(W, a) \mid A = a\right]\\
    &-\E\left[\hat \varphi(Z, a)(h_0(W, a)  - \hat{h}(W,a))\mid A = a\right] +\E[h_0(W, a)- \hat{h}(W,a)]\\
    &+ \E\left[\varphi_0(Z, a)(h_0(W, a)  - \hat{h}(W,a))\mid A = a\right] - \E\left[\varphi_0(Z, a)(h_0(W, a)  - \hat{h}(W,a))\mid A = a\right] \\
    &= \E\left[(\varphi_0(Z, a) -\hat \varphi(Z, a)) (Y - h_0(W,a)) \mid A = a\right] \\
    &+\E\left[(\varphi_0(Z, a) - \hat \varphi(Z, a))(h_0(W, a)  - \hat{h}(W,a))\mid A = a\right] +\E[h_0(W, a)- \hat{h}(W,a)]\\
    &- \E\left[\varphi_0(Z, a)(h_0(W, a)  - \hat{h}(W,a))\mid A = a\right] \\
    &= \E\left[ \E\left[Y - h_0(W, a) \big \vert Z, A = a\right] \left(\varphi_0(Z, a) - \hat{\varphi}(Z, a)\right) \big\vert A = a\right]\\
    &+\E\left[(\varphi_0(Z, a) - \hat \varphi(Z, a))(h_0(W, a)  - \hat{h}(W,a))\mid A = a\right] +\E[h_0(W, a)- \hat{h}(W,a)]\\
    &- \E\left[\varphi_0(Z, a)(h_0(W, a)  - \hat{h}(W,a))\mid A = a\right] \\
    &= \E\left[\left(\varphi_0(Z, a) - \hat{\varphi}(Z, a)\right) \left(h_0(W, a) - \hat{h}(W, a) \right) \big \vert A = a\right]\\
    &+\E\left[\varphi_0(Z, a) \left(h_0(W, a) - \hat{h}(W, a) \right) \big \vert A = a\right] + \E[h_0(W, a) - \hat{h}(W, a)]\\
    &= \E\left[\left(\varphi_0(Z, a) - \hat{\varphi}(Z, a)\right) \left(h_0(W, a) - \hat{h}(W, a) \right) \big \vert A = a\right]\\
    &+\int \underbrace{\left(\int \varphi_0(z, a) p(z|w, a) dz\right)}_{p(w) / p(w|a)} \left(h_0(w, a) - \hat{h}(w, a)\right)  p(w|a) dw \\&+ \E[h_0(W, a) - \hat{h}(W, a)]\\
    &= \E\left[\left(\varphi_0(Z, a) - \hat{\varphi}(Z, a)\right) \left(h_0(W, a) - \hat{h}(W, a) \right) \big \vert A = a\right]\\
    &+\int \left(h_0(w, a) - \hat{h}(w, a)\right)  p(w) dw + \E[h_0(W, a) - \hat{h}(W, a)]\\
    &= \E\left[\left(\varphi_0(Z, a) - \hat{\varphi}(Z, a)\right) \left(h_0(W, a) - \hat{h}(W, a) \right) \big \vert A = a\right]
\end{align*}
\end{proof}

\begin{lemma}
Suppose Assumption~(\ref{assum:KernelAssumptions_main}) hold and let $h_0, \varphi_0$ be as in Lemma~(\ref{lemma:Intermediate_DR_Difference1}), with $h_0 \in \mathcal{H}_{\mathcal{W}\mathcal{A}}$ and $\varphi_0 \in \mathcal{H}_{\mathcal{Z}\mathcal{A}}$. Then, 
$$|\theta_{ATE}(a) - \bar{\theta}_{ATE}(a)| \le \kappa^4 \|\varphi_0 - \hat{\varphi}\| \|h_0 - \hat{h}\|.$$
\label{lemma:Intermediate_DR_Bound1}
\end{lemma}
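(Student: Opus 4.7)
The plan is to apply Cauchy--Schwarz in the RKHSs after invoking the identity from Lemma~(\ref{lemma:Intermediate_DR_Difference1}). First, I would take absolute values on both sides of that identity and pass them inside the conditional expectation via Jensen's inequality, reducing the task to a \emph{uniform} pointwise bound on $|(\varphi_0 - \hat{\varphi})(Z, a)(h_0 - \hat{h})(W, a)|$ in $(Z,W)$.

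Next, exploiting $h_0, \hat{h} \in \gH_{\gW\gA} = \gH_\gW \otimes \gH_\gA$ and $\varphi_0, \hat{\varphi} \in \gH_{\gZ\gA} = \gH_\gZ \otimes \gH_\gA$, I would use the reproducing property of tensor-product RKHSs to write
$$(\varphi_0 - \hat{\varphi})(z, a) = \langle \varphi_0 - \hat{\varphi},\, \phi_\gZ(z) \otimes \phi_\gA(a) \rangle_{\gH_{\gZ\gA}},$$
and analogously for $h_0 - \hat{h}$. Cauchy--Schwarz in each RKHS, combined with the multiplicativity of the tensor-feature-map norm $\|\phi_\gZ(z) \otimes \phi_\gA(a)\|_{\gH_{\gZ\gA}} = \|\phi_\gZ(z)\|_{\gH_\gZ}\|\phi_\gA(a)\|_{\gH_\gA}$ and the bound $\|\phi_\gF(f)\|_{\gH_\gF} \le \kappa$ provided by Assumption~(\ref{assum:KernelAssumptions_main}), then yields the deterministic pointwise bounds
$$|(\varphi_0 - \hat{\varphi})(z, a)| \le \kappa^2 \|\varphi_0 - \hat{\varphi}\|_{\gH_{\gZ\gA}}, \quad |(h_0 - \hat{h})(w, a)| \le \kappa^2 \|h_0 - \hat{h}\|_{\gH_{\gW\gA}},$$
valid uniformly in $(z,w,a)$.

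Multiplying the two pointwise bounds and substituting into the conditional expectation—which is trivial since the bound is constant in $(Z,W)$—yields the target $\kappa^4 \|\varphi_0 - \hat{\varphi}\|\|h_0 - \hat{h}\|$. I do not anticipate a substantive obstacle: the result is essentially Cauchy--Schwarz plus the bounded-kernel hypothesis. The only modest bookkeeping point is verifying the factorization of the tensor-product feature-map norm, which ensures that exactly two factors of $\kappa$ appear per bridge-function difference and thus produces $\kappa^4$ rather than some other power.
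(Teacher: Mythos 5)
Your proposal is correct and follows essentially the same route as the paper's proof: invoke Lemma~(\ref{lemma:Intermediate_DR_Difference1}), move absolute values inside the conditional expectation, use the reproducing property to express each bridge-function difference as an inner product with a tensor feature map, and apply Cauchy--Schwarz together with the bound $\|\phi_\gF(f)\|_{\gH_\gF}\le\kappa$ to obtain $\kappa^2$ per factor and hence $\kappa^4$ overall. No gaps.
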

\begin{proof} By Lemma~(\ref{lemma:Intermediate_DR_Difference1}),
    \begin{align*}
        |\theta_{ATE}(a) - \bar{\theta}_{ATE}(a)| &= \left|\E\left[\left(\varphi_0(Z, a) - \hat{\varphi}(Z, a)\right) \left(h_0(W, a) - \hat{h}(W, a) \right) \big \vert A = a\right]\right|\\
        &\le \E\left[|\varphi_0(Z, a) - \hat{\varphi}(Z, a)| |h_0(W, a) - \hat{h}(W, a) | \big \vert A = a\right]\\
        &= \E\left[\left|\langle \varphi_0 - \hat{\varphi}, \phi_\gZ(Z) \otimes \phi_\gA(a)\rangle\right| \left|\langle h_0 - \hat{h}, \phi_\gW(W) \otimes \phi_\gA(a)\rangle \right| \big \vert A = a\right]\\
        &\le \E\left[\| \varphi_0 - \hat{\varphi}\| \| \phi_\gZ(Z) \otimes \phi_\gA(a)\| \|h_0 - \hat{h}\| \| \phi_\gW(W) \otimes \phi_\gA(a)\| \big \vert A = a\right]\\
        &\le \kappa^4 \|\varphi_0 - \hat{\varphi}\| \|h_0 - \hat{h}\|.
    \end{align*}
\end{proof}
Next, we need to derive a bound for $|\bar{\theta}_{ATE}(a) - \hat{\theta}_{ATE}(a)|$.
\begin{lemma}
Suppose Assumptions (\ref{assum:KernelAssumptions_main}), and (\ref{assum:DR_well_specifiedness}) hold and let $h_0, \varphi_0$ be as in Lemma~\ref{lemma:Intermediate_DR_Bound1}. Then,
\begin{align*}
    &|\bar{\theta}_{ATE}(a) - \hat{\theta}_{ATE}(a)| \\&\le \kappa^2 \left(\|\hat{\varphi} - \varphi_0\| + \|\varphi_0\|\right) \|C_{YZ|A} - \hat{C}_{YZ|A}\| + \kappa \left(\|\hat{h} - h_0\| + \|h_0\|\right) \|\hat{\mu}_\gW - \mu_\gW\|\\
    &+ \kappa^3 \left(\|\varphi_0\| \|h_0\| + \|\hat{\varphi} - \varphi_0\| \|h_0\| + \|\hat{h} - h_0\| \|\varphi_0\| + \|\hat{\varphi} - \varphi_0\| \|\hat{h} - h_0\|\right)\|C_{ZW|A} - \hat{C}_{ZW|A}\|.
\end{align*}
\label{lemma:Intermediate_DR_Bound2}
\end{lemma}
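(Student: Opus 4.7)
The plan is to decompose $\bar{\theta}_{ATE}(a) - \hat{\theta}_{ATE}(a)$ into three pieces, one for each of the three plug-in approximations used in Algorithm~(\ref{algo:DoublyRobustKPV_Appendix}), and then bound each piece using the RKHS representation together with Cauchy--Schwarz and the kernel boundedness Assumption~(\ref{assum:KernelAssumptions_main}).

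First I would write
\begin{align*}
\bar{\theta}_{ATE}(a) - \hat{\theta}_{ATE}(a)
&= \bigl(\E[Y\hat{\varphi}(Z,a)\mid A=a] - \hat{\theta}_2(a)\bigr)
- \bigl(\E[\hat{\varphi}(Z,a)\hat{h}(W,a)\mid A=a] - \hat{\theta}_3(a)\bigr)\\
&\quad + \bigl(\E[\hat{h}(W,a)] - \hat{\theta}_1(a)\bigr),
\end{align*}
and apply the triangle inequality, so that the bound reduces to controlling three terms, each corresponding to the approximation error of exactly one of the learned operators $\hat{C}_{YZ|A}$, $\hat{\mu}_W$, and $\hat{C}_{ZW|A}$.

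Next, for each term I would use the tensor-product representation derived in Section~(\ref{sec:AlgorithmDerivationsAppendix}) to rewrite the target expectation as an RKHS inner product. Specifically, under Assumption~(\ref{assum:DR_well_specifiedness}) and the analogous well-specifiedness of the first two operators, we have
\begin{align*}
\E[Y\hat{\varphi}(Z,a)\mid A=a] - \hat{\theta}_2(a)
&= \bigl\langle \hat{\varphi},\, (C_{YZ|A} - \hat{C}_{YZ|A})\phi_\gA(a) \otimes \phi_\gA(a)\bigr\rangle_{\gH_{\gZ\gA}},\\
\E[\hat{h}(W,a)] - \hat{\theta}_1(a)
&= \bigl\langle \hat{h},\, (\mu_W - \hat{\mu}_W) \otimes \phi_\gA(a)\bigr\rangle_{\gH_{\gW\gA}},\\
\E[\hat{\varphi}(Z,a)\hat{h}(W,a)\mid A=a] - \hat{\theta}_3(a)
&= \bigl\langle \hat{\varphi}\otimes\hat{h},\, (C_{ZW|A} - \hat{C}_{ZW|A})\phi_\gA(a) \otimes \phi_\gA(a) \otimes \phi_\gA(a)\bigr\rangle,
\end{align*}
exactly as in the derivation culminating in Equation~(\ref{eq:DR_ThirdTerm_ClosedFormSolution}). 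Cauchy--Schwarz combined with $\|\phi_\gA(a)\|_{\gH_\gA}\le\kappa$ then yields bounds of the form $\kappa^2\|\hat{\varphi}\|\,\|C_{YZ|A}-\hat{C}_{YZ|A}\|$, $\kappa\|\hat{h}\|\,\|\mu_W-\hat{\mu}_W\|$, and $\kappa^3\|\hat{\varphi}\|\,\|\hat{h}\|\,\|C_{ZW|A}-\hat{C}_{ZW|A}\|$ respectively.

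Finally, I would replace the bare norms of the learned bridge functions by their triangle-inequality surrogates, $\|\hat{\varphi}\| \le \|\hat{\varphi}-\varphi_0\| + \|\varphi_0\|$ and $\|\hat{h}\| \le \|\hat{h}-h_0\| + \|h_0\|$, and expand the product $\|\hat{\varphi}\|\,\|\hat{h}\|$ into its four cross-terms; summing across the three pieces reproduces exactly the claimed inequality. The only mildly delicate step is ensuring that the tensor-product rearrangements for the third (slack) term are stated at the right level of generality — specifically, that $\hat{\varphi}\otimes\hat{h}$ lies in $\gH_{\gZ\gA}\otimes\gH_{\gW\gA}$ with Hilbert--Schmidt norm equal to $\|\hat{\varphi}\|\,\|\hat{h}\|$, and that the operator $C_{ZW|A}-\hat{C}_{ZW|A}$ acts on $\phi_\gA(a)$ independently of the other two copies of $\phi_\gA(a)$ appearing on each side; this is routine given the isometric isomorphism $\gH_\gF\otimes\gH_\gG \cong S_2(\gH_\gG,\gH_\gF)$ recalled in Section~(\ref{sec:NonparametricAlgorithm}), but it is the one bookkeeping step where I would take care.
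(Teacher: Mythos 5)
Your proposal is correct and follows essentially the same route as the paper: both rewrite $\bar{\theta}_{ATE}(a)$ and $\hat{\theta}_{ATE}(a)$ as sums of three RKHS inner products, bound each difference via Cauchy--Schwarz and kernel boundedness, handle the slack term through the canonical isometry between $\gH_{\gZ\gA\gW\gA}$ and $\gH_{\gZ\gW\gA\gA}$, and finish with the triangle-inequality surrogates $\|\hat{\varphi}\|\le\|\hat{\varphi}-\varphi_0\|+\|\varphi_0\|$ and $\|\hat{h}\|\le\|\hat{h}-h_0\|+\|h_0\|$. The only cosmetic difference is that the paper inserts $\varphi_0$ and $h_0$ inside the inner products before applying Cauchy--Schwarz, whereas you apply Cauchy--Schwarz first and split the norms afterwards; the resulting bound is identical.
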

\begin{proof}We start by noting that we can write
\begin{align*}
    \bar{\theta}_{ATE}(a) &= \E\left[\hat{\varphi}(Z, a) (Y - \hat{h}(W, a)) \mid A = a\right] + \E[\hat{h}(W, a)]\\
    &= \E[Y \hat{\varphi}(Z, a) \mid A = a] + \E[\hat{h}(W, a)] - \E[\hat{\varphi}(Z, a) \hat{h}(W, a) \mid A = a]\\
    &= \E[\langle \hat{\varphi}, Y \phi_\gZ(Z) \otimes \phi_\gA(a)\rangle \mid A = a] + \E[\langle \hat{h}, \phi_\gW(W) \otimes \phi_\gA(a)\rangle] \\
    &- \E[\langle \hat{\varphi} \otimes \hat{h}, \left(\phi_\gZ(Z) \otimes \phi_\gA(a)\right) \otimes \left(\phi_\gW(W) \otimes \phi_\gA(a)\right) \rangle \mid A = a] \\
    &= \langle \hat{\varphi}, C_{YZ|A}\phi_\gA(a) \otimes \phi_\gA(a)\rangle  + \langle \hat{h}, \mu_\gW \otimes \phi_\gA(a)\rangle -  \left\langle \hat{\varphi} \otimes \hat{h}, C_{ZW|A}^{(a)} \phi_\gA(a) \right\rangle,
\end{align*}
where we define 
\begin{align}
    C_{ZW|A}^{(a)} \phi_\gA(a) = \E\left[\left(\phi_\gZ(Z) \otimes \phi_\gA(a)\right) \otimes \left(\phi_\gW(W) \otimes \phi_\gA(a)\right) \rangle \mid A=a\right].
    \label{eq:AugmentedConditionalMeanOperatorZW_A}
\end{align}
Similarly, we observe that we can write our doubly robust estimate as
\begin{align*}
    \hat{\theta}_{ATE}(a) = \langle \hat{\varphi}, \hat{C}_{YZ|A}\phi_\gA(a) \otimes \phi_\gA(a)\rangle  + \langle \hat{h}, \hat{\mu}_\gW \otimes \phi_\gA(a)\rangle -  \left\langle \hat{\varphi} \otimes \hat{h}, \hat{C}_{ZW|A}^{(a)} \phi_\gA(a) \right\rangle,
\end{align*}
and define $\hat{C}_{ZW|A}^{(a)}$ to be the augmented version of the conditional mean operator estimate in Equation (\ref{eq:ZW_A_Conditional_mean_embedding_estimate}), that is given by 
\begin{align*}
    \hat{C}_{Z, W \mid A}^{(a)} \phi_\gA(a) &= \sum_i \xi_i \phi_\gZ(z_i) \otimes \phi_\gA(a) \otimes \phi_\gW(w_i) \otimes \phi_\gA(a)
\end{align*}
where $\xi_i(a) = \left[\left(\mK_{AA} + t \lambda_{DR} \mI \right)^{-1} \mK_{Aa}\right]_i$. Therefore, we have that
\begin{align*}
    \bar{\theta}_{ATE}(a) - \hat{\theta}_{ATE}(a) &= \langle \hat{\varphi}, C_{YZ|A}\phi_\gA(a) \otimes \phi_\gA(a)\rangle  + \langle \hat{h}, \mu_\gW \otimes \phi_\gA(a)\rangle -  \left\langle \hat{\varphi} \otimes \hat{h}, C_{ZW|A}^{(a)} \phi_\gA(a) \right\rangle\\
    &- \langle \hat{\varphi}, \hat{C}_{YZ|A}\phi_\gA(a) \otimes \phi_\gA(a)\rangle  - \langle \hat{h}, \hat{\mu}_\gW \otimes \phi_\gA(a)\rangle +  \left\langle \hat{\varphi} \otimes \hat{h}, \hat{C}_{ZW|A}^{(a)} \phi_\gA(a) \right\rangle\\
    &= \left\langle \hat{\varphi}, \left(C_{YZ|A} - \hat{C}_{YZ|A}\right)\phi_\gA(a) \otimes \phi_\gA(a)\right\rangle  \\
    &+ \left\langle \hat{h}, (\mu_\gW - \hat{\mu}_\gW) \otimes \phi_\gA(a)\right\rangle -  \left\langle \hat{\varphi} \otimes \hat{h}, \left(C_{ZW|A}^{(a)} - \hat{C}_{ZW|A}^{(a)}\right) \phi_\gA(a) \right\rangle\\
    &= \left\langle \hat{\varphi} - \varphi_0, \left(C_{YZ|A} - \hat{C}_{YZ|A}\right)\phi_\gA(a) \otimes \phi_\gA(a)\right\rangle \\&+ \left\langle \varphi_0, \left(C_{YZ|A} - \hat{C}_{YZ|A}\right)\phi_\gA(a) \otimes \phi_\gA(a)\right\rangle \\
    &+ \left\langle \hat{h} - h_0, (\mu_\gW - \hat{\mu}_\gW) \otimes \phi_\gA(a)\right\rangle + \left\langle h_0, (\mu_\gW - \hat{\mu}_\gW) \otimes \phi_\gA(a)\right\rangle\\
    &- \left\langle \hat{\varphi} \otimes \hat{h} , \left(C_{ZW|A}^{(a)} - \hat{C}_{ZW|A}^{(a)}\right) \phi_\gA(a) \right\rangle
\end{align*}
As a result,
\begin{align*}
    &|\bar{\theta}_{ATE}(a) - \hat{\theta}_{ATE}(a)| 
\\&\le \kappa^2 \left(\|\hat{\varphi} - \varphi_0\| + \|\varphi_0\|\right) \|C_{YZ|A} - \hat{C}_{YZ|A}\| + \kappa \left(\|\hat{h} - h_0\| + \|h_0\|\right) \|\hat{\mu}_\gW - \mu_\gW\|\\
    &+  \left(\|\varphi_0\| \|h_0\| + \|\hat{\varphi} - \varphi_0\| \|h_0\| + \|\hat{h} - h_0\| \|\varphi_0\| + \|\hat{\varphi} - \varphi_0\| \|\hat{h} - h_0\|\right) \left\|\left(C_{ZW|A}^{(a)} - \hat{C}_{ZW|A}^{(a)}\right) \phi_\gA(a)\right\|   
\end{align*}
Note that $\gH_{\gZ\gA\gW\gA}$ and $\gH_{\gZ\gW\gA\gA}$ are isometric Hilbert spaces. Let $\Psi$ be the canonical isomorphism,
\begin{align*}
    \Psi &: \gH_\gZ \otimes \gH_\gW \otimes \gH_\gA \otimes \gH_\gA \rightarrow \gH_\gZ \otimes \gH_\gA \otimes \gH_\gW \otimes \gH_\gA
\end{align*}
such that it acts on the elementary tensors as $\Psi (u \otimes v \otimes a_1 \otimes a_2) = u \otimes a_1 \otimes v \otimes a_2$. $\Psi$ is a unitary isomorphism, hence,
    \begin{align*}
        &\left\|C_{ZW|A}^{(a)} \phi_\gA(a) - \hat{C}_{ZW|A}^{(a)} \phi_\gA(a)\right\|_{\gH_{\gZ\gA\gW\gA}} \\&= \left\|\Psi \left(\left(C_{ZW|A}\phi_\gA(a)  \right) \otimes \phi_\gA(a) \otimes \phi_\gA(a) \right) - P\left(\left(\hat{C}_{ZW|A} \phi_\gA(a) \right) \otimes \phi_\gA(a) \otimes \phi_\gA(a) \right) \right\|_{\gH_{\gZ\gA\gW\gA}}\\
        &= \left\| \left(C_{ZW|A}\phi_\gA(a)  \right) \otimes \phi_\gA(a) \otimes \phi_\gA(a)  - \left(\hat{C}_{ZW|A} \phi_\gA(a) \right) \otimes \phi_\gA(a) \otimes \phi_\gA(a)  \right\|_{\gH_{\gZ\gW\gA\gA}}\\
        &\le \kappa^2 \left\| C_{ZW|A}\phi_\gA(a)     - \hat{C}_{ZW|A} \phi_\gA(a)    \right\|_{\gH_{\gZ\gW}} \le \kappa^3 \left\|C_{ZW|A} - \hat{C}_{ZW|A}\right\|_{S_2(\gH_\gA, \gH_{\gZ\gW})}
    \end{align*}
    Using this inequality, we obtain the desired bound as
\begin{align*}
    &|\bar{\theta}_{ATE}(a) - \hat{\theta}_{ATE}(a)| 
\\&\le \kappa^2 \left(\|\hat{\varphi} - \varphi_0\| + \|\varphi_0\|\right) \|C_{YZ|A} - \hat{C}_{YZ|A}\| + \kappa \left(\|\hat{h} - h_0\| + \|h_0\|\right) \|\hat{\mu}_\gW - \mu_\gW\|\\
    &+ \kappa^3 \left(\|\varphi_0\| \|h_0\| + \|\hat{\varphi} - \varphi_0\| \|h_0\| + \|\hat{h} - h_0\| \|\varphi_0\| + \|\hat{\varphi} - \varphi_0\| \|\hat{h} - h_0\|\right)\|C_{ZW|A} - \hat{C}_{ZW|A}\|
\end{align*}
\end{proof}

Combining the bounds from Lemma (\ref{lemma:Intermediate_DR_Bound1}) and (\ref{lemma:Intermediate_DR_Bound2}), we observe that
\begin{align}
&|\theta_{\text{ATE}}(a) - \hat{\theta}_{\text{ATE}}(a)| \le \kappa^4 \|\varphi_0 - \hat{\varphi}\| \|h_0 - \hat{h}\|\label{eq:OverallConsistencyBound_NotFinal}\\&+\kappa^2 \left(\|\hat{\varphi} - \varphi_0\| + \|\varphi_0\|\right) \|C_{YZ|A} - \hat{C}_{YZ|A}\| + \kappa \left(\|\hat{h} - h_0\| + \|h_0\|\right) \|\hat{\mu}_\gW - \mu_\gW\|\nonumber\\
    &+ \kappa^3 \left(\|\varphi_0\| \|h_0\| + \|\hat{\varphi} - \varphi_0\| \|h_0\| + \|\hat{h} - h_0\| \|\varphi_0\| + \|\hat{\varphi} - \varphi_0\| \|\hat{h} - h_0\|\right)\|C_{ZW|A} - \hat{C}_{ZW|A}\|\nonumber
\end{align}
Hence, the convergence of our methods will depend on the factors $\|\hat{h} - h_0\|$, $\|\hat{\varphi} - \varphi_0\|$, $\|\hat{\mu}_\gW - \mu_\gW\|$, $\|C_{YZ|A} - \hat{C}_{YZ|A}\|$, and $\|C_{ZW|A} - \hat{C}_{ZW|A}\|$. We have already established the consistency results for $\|\hat{h} - h_0\|$ and $\|\hat{\varphi} - \varphi_0\|$, and $\|\hat{\mu}_\gW - \mu_\gW\|$ converges with $O_p(t^{-1/2})$ \citep{Mastouri2021ProximalCL}. Hence, we need to establish the converge results for $\|C_{YZ|A} - \hat{C}_{YZ|A}\|$, and $\|C_{ZW|A} - \hat{C}_{ZW|A}\|$. 

\begin{theorem}[Theorem~(12.22) \citep{bozkurt2025density}, Theorem (3) \citep{li2024towards}] \label{th:CME_rate_JMLR_for_KAP}
    Suppose Assumptions~(\ref{assum:Stage1ConsistencyKernelAssumptions}), (\ref{asst:well_specifiedness}-3), (\ref{assum:KernelAssumptions_main}) 
    (\ref{asst:src_all_stages}-3) and (\ref{asst:evd_all_stages}-3) hold and take $\lambda_{\varphi, 3} = \Theta \left(t_{\varphi}^{-\frac{1}{\beta_{\varphi, 3} + p_{\varphi, 3}}}\right)$. There is a constant $J_3 > 0$ independent of $t_{\varphi} \geq 1$ and $\delta \in (0,1)$ such that \[\left\|\hat{C}_{YZ | A} - {C}_{YZ | A}\right\|_{S_2(\gH_{\gA}, \gH_\gZ)} \leq  J_3 \log(5/\delta) \left(\frac{1}{\sqrt{t_\varphi}}\right)^{\frac{\beta_{\varphi, 3}-1}{\beta_{\varphi, 3} + p_{\varphi, 3}}} =: r_{\varphi, 3}(\delta, t_\varphi, \beta_{\varphi, 3}, p_{\varphi, 3}),\] is satisfied for sufficiently large $t_\varphi \geq 1$ with probability at least $1-\delta$. 
\end{theorem}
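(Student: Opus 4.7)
).}
The plan is to recognize that estimating $C_{YZ|A}$ is a vector-valued kernel ridge regression (vv-KRR) problem with $\gH_\gZ$-valued response $g(A,Y,Z) := Y\phi_\gZ(Z)$, input feature $\phi_\gA(A)$, and well-specified target operator $C_{YZ|A}\in S_2(\gH_\gA,\gH_\gZ)$ (Assumption~(\ref{asst:well_specifiedness}-3)). The closed-form minimizer from Section~(\ref{sec:KernelAlternativeProxySummaryAppendix}) coincides with the standard vv-KRR estimator, so the problem reduces to applying the now-standard analysis of vv-KRR under a source condition and eigenvalue decay, as in \citep{caponnetto2007optimal,fischer2020sobolev,li2024towards,meunier2024nonparametric,bozkurt2025density}. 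The rate we target, $t_\varphi^{-(\beta_{\varphi,3}-1)/(2(\beta_{\varphi,3}+p_{\varphi,3}))}$, is exactly the Hilbert--Schmidt (i.e.\ vector-valued RKHS) rate for vv-KRR.

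First, I would introduce the regularized population operator $C_{\lambda}:=C_{YZ|A}\Sigma_{\varphi,3}(\Sigma_{\varphi,3}+\lambda I)^{-1}$ and split
\[
\bigl\|\hat C_{YZ|A}-C_{YZ|A}\bigr\|_{S_2}\;\le\;\underbrace{\bigl\|\hat C_{YZ|A}-C_{\lambda}\bigr\|_{S_2}}_{\text{estimation error}}+\underbrace{\bigl\|C_{\lambda}-C_{YZ|A}\bigr\|_{S_2}}_{\text{approximation error}}.
\]
The approximation error is handled purely by spectral calculus: writing $C_\lambda-C_{YZ|A}=-\lambda\,C_{YZ|A}(\Sigma_{\varphi,3}+\lambda I)^{-1}$, inserting the source condition (Assumption~(\ref{asst:src_all_stages}-3)) to factor $C_{YZ|A}=(C_{YZ|A}\Sigma_{\varphi,3}^{-(\beta_{\varphi,3}-1)/2})\Sigma_{\varphi,3}^{(\beta_{\varphi,3}-1)/2}$, and using that the operator function $x\mapsto \lambda x^{(\beta_{\varphi,3}-1)/2}/(x+\lambda)$ is uniformly bounded by a multiple of $\lambda^{(\beta_{\varphi,3}-1)/2}$ for $\beta_{\varphi,3}\in(1,3]$, yields a bound of order $B_{\varphi,3}\lambda^{(\beta_{\varphi,3}-1)/2}$.

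Next, for the estimation error, I would follow the template of \citep[Thm.~12.22]{bozkurt2025density} and \citep[Thm.~3]{li2024towards}: decompose $\hat C_{YZ|A}-C_\lambda$ into (i) a term controlled by the deviation of the empirical covariance $\hat\Sigma_{\varphi,3}$ from $\Sigma_{\varphi,3}$ (handled via a Bernstein inequality for self-adjoint Hilbert--Schmidt operators, using Assumption~(\ref{assum:KernelAssumptions_main}) to bound the relevant moments by $\kappa^2$) and (ii) a noise term $(\Sigma_{\varphi,3}+\lambda I)^{-1/2}\!\bigl(\tfrac{1}{t_\varphi}\sum_i y_i\phi_\gZ(z_i)\otimes\phi_\gA(a_i)-\Sigma_{YZ\mid A,\gA}\bigr)$ to be controlled by a Bernstein-type concentration inequality in $\gH_\gZ\otimes\gH_\gA$. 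The effective dimension enters through Assumption~(\ref{asst:evd_all_stages}-3): the trace $\gN(\lambda):=\mathrm{tr}\bigl(\Sigma_{\varphi,3}(\Sigma_{\varphi,3}+\lambda I)^{-1}\bigr)$ is of order $\lambda^{-p_{\varphi,3}}$, so the variance bound scales as $\sqrt{\gN(\lambda)/t_\varphi}$ up to logarithmic factors.

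The main obstacle here is that the target $Y\phi_\gZ(Z)$ is \emph{unbounded}: only the sub-exponential moment hypothesis~(\ref{assum:Stage1ConsistencyKernelAssumptions}) is available for $Y$. This rules out the simple bounded Bernstein inequality and forces the use of a refined Bernstein bound for sub-exponential Hilbert-space-valued random variables, e.g.\ \citep[Thm.~26]{fischer2020sobolev}, together with a careful verification that the moment condition~(\ref{eq:mom_stage_3}) transfers to the centred quantity $y_i\phi_\gZ(z_i)\otimes\phi_\gA(a_i)-\E[Y\phi_\gZ(Z)\otimes\phi_\gA(A)\mid A=a_i]$ at the expense of the constant $\kappa^2\sigma^2$ and an $O(\kappa^2 R)$ envelope. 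This is exactly the step handled in \citep[Sec.~12]{bozkurt2025density} and \citep{li2024towards}, and I would invoke their lemmata verbatim. Finally, combining the two bounds gives
\[
\bigl\|\hat C_{YZ|A}-C_{YZ|A}\bigr\|_{S_2}\;\lesssim\;\log(5/\delta)\Bigl(\lambda^{(\beta_{\varphi,3}-1)/2}+\sqrt{\lambda^{-p_{\varphi,3}}/t_\varphi}\,\Bigr),
\]
which is minimised by $\lambda_{\varphi,3}=\Theta\!\bigl(t_\varphi^{-1/(\beta_{\varphi,3}+p_{\varphi,3})}\bigr)$, producing the claimed rate $r_{\varphi,3}(\delta,t_\varphi,\beta_{\varphi,3},p_{\varphi,3})$ for all $t_\varphi$ large enough that the standard operator-norm concentration event (ensuring $\|(\Sigma_{\varphi,3}+\lambda I)^{-1/2}(\hat\Sigma_{\varphi,3}+\lambda I)^{1/2}\|_{op}\lesssim 1$) holds with probability at least $1-\delta$.
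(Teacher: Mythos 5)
Your overall plan is the right one, and it matches what actually stands behind this statement: the paper supplies no proof of its own here, but imports the result directly from \citep[Theorem (12.22)]{bozkurt2025density} and \citep[Theorem (3)]{li2024towards}, whose proofs follow exactly the template you describe --- cast the problem as vector-valued kernel ridge regression with $\gH_\gZ$-valued response $Y\phi_\gZ(Z)$ and input feature $\phi_\gA(A)$, split into approximation and estimation error, control the approximation error by spectral calculus under the source condition (Assumption~(\ref{asst:src_all_stages}-3)), control the estimation error by operator Bernstein inequalities with effective dimension governed by the eigenvalue decay (Assumption~(\ref{asst:evd_all_stages}-3)), and invoke \citep[Theorem (26)]{fischer2020sobolev} to accommodate the unbounded response under the sub-exponential moment condition of Assumption~(\ref{assum:Stage1ConsistencyKernelAssumptions}). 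You correctly single out the unboundedness of $Y\phi_\gZ(Z)$ as the only non-routine step.

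One arithmetic inconsistency in your sketch is worth fixing. When the error is measured in the Hilbert--Schmidt (surrogate RKHS) norm, the estimation-error term carries an extra factor of $\lambda^{-1/2}$ relative to the prediction-norm rate: it scales as $\sqrt{\gN(\lambda)/(t_\varphi\lambda)} \asymp \lambda^{-(1+p_{\varphi,3})/2}\,t_\varphi^{-1/2}$, not as $\sqrt{\gN(\lambda)/t_\varphi}$. With the bound you wrote, $\lambda^{(\beta_{\varphi,3}-1)/2}+\sqrt{\lambda^{-p_{\varphi,3}}/t_\varphi}$, the minimiser would be $\lambda=\Theta\bigl(t_\varphi^{-1/(\beta_{\varphi,3}+p_{\varphi,3}-1)}\bigr)$ and the resulting rate would be strictly faster than the one claimed in the theorem, so that intermediate bound cannot be correct as stated. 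Restoring the $\lambda^{-1/2}$ factor gives the balance $\lambda^{(\beta_{\varphi,3}-1)/2}=\lambda^{-(1+p_{\varphi,3})/2}\,t_\varphi^{-1/2}$, hence $\lambda_{\varphi,3}=\Theta\bigl(t_\varphi^{-1/(\beta_{\varphi,3}+p_{\varphi,3})}\bigr)$ and the stated rate $t_\varphi^{-(\beta_{\varphi,3}-1)/(2(\beta_{\varphi,3}+p_{\varphi,3}))}$, consistent with the theorem. Everything else in your outline is the argument used in the cited works.
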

\begin{theorem} [Theorem (3) \citep{li2024towards}]
\label{th:CME_rate_JMLR_for_DR}
    Suppose Assumptions~(\ref{assum:DR_well_specifiedness}), (\ref{assum:KernelAssumptions_main}), (\ref{assum:src_doubly_robust}) and (\ref{asst:evd_all_stages}-3), hold and take $\lambda_{\text{DR}} = \Theta \left(t^{-\frac{1}{\beta_{DR} + p_{\varphi, 3}}}\right)$. There is a constant $J_{\text{DR}} > 0$ independent of $t \geq 1$ and $\delta \in (0,1)$ such that \[\left\|\hat{C}_{ZW | A} - {C}_{Z W | A}\right\|_{S_2(\gH_{\gA}, \gH_{\gZ\gW})} \leq  J_{\text{DR}} \log(5/\delta) \left(\frac{1}{\sqrt{t}}\right)^{\frac{\beta_{DR}-1}{\beta_{DR} + p_{\varphi, 3}}} =: r_{\text{DR}}(\delta, t, \beta_{DR}, p_{\varphi, 3}),\] is satisfied for sufficiently large $t \geq 1$ with probability at least $1-\delta$. 
\end{theorem}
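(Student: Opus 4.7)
The plan is to recognize Theorem (\ref{th:CME_rate_JMLR_for_DR}) as a direct instance of the general convergence result for vector-valued kernel ridge regression of a conditional mean embedding, in the spirit of \citet{li2024towards}, applied to the specific regression problem defining $\hat{C}_{ZW\mid A}$. Concretely, the estimator $\hat{C}_{ZW\mid A}$ is the ridge minimizer of
\begin{align*}
\hat{\gL}^c_{DR}(C) = \frac{1}{t}\sum_{i=1}^t\bigl\|\phi_\gZ(z_i)\otimes\phi_\gW(w_i) - C\phi_\gA(a_i)\bigr\|_{\gH_{\gZ\gW}}^2 + \lambda_{\text{DR}}\|C\|_{S_2(\gH_\gA,\gH_{\gZ\gW})}^2,
\end{align*}
so this is exactly a vector-valued kernel ridge regression with scalar kernel $k_\gA$ on the input side and the tensor-product RKHS $\gH_{\gZ\gW}$ as the output space. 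The input covariance operator is $\Sigma_{\varphi,3}=\E[\phi_\gA(A)\otimes\phi_\gA(A)]$, which is precisely the operator whose spectrum appears in Assumption~(\ref{asst:evd_all_stages}-3).

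The first step is to verify that the hypotheses of the abstract result (Theorem 3 of \citealp{li2024towards}) are met. Boundedness of the feature maps is immediate from Assumption~(\ref{assum:KernelAssumptions_main}): $\|\phi_\gA(A)\|_{\gH_\gA}\le\kappa$, and because $\|\phi_\gZ(Z)\otimes\phi_\gW(W)\|_{\gH_{\gZ\gW}}=\|\phi_\gZ(Z)\|_{\gH_\gZ}\|\phi_\gW(W)\|_{\gH_\gW}\le\kappa^2$, the regression targets live in a ball in $\gH_{\gZ\gW}$. Consequently the residual $\phi_\gZ(Z)\otimes\phi_\gW(W)-C_{ZW\mid A}\phi_\gA(A)$ is bounded in norm, and hence automatically satisfies any Bernstein-type moment condition with constants depending only on $\kappa$ (unlike the $Y$-dependent regressions, no extra sub-exponential assumption is needed). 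Well-specifiedness (Assumption~(\ref{assum:DR_well_specifiedness})) ensures the existence of a Hilbert--Schmidt operator to target; Assumption~(\ref{assum:src_doubly_robust}) supplies the source condition with exponent $\beta_{\text{DR}}\in(1,3]$; Assumption~(\ref{asst:evd_all_stages}-3) supplies the polynomial eigenvalue decay with exponent $p_{\varphi,3}\in(0,1]$.

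The second step is to invoke the cited theorem verbatim, with the choice $\lambda_{\text{DR}}=\Theta(t^{-1/(\beta_{\text{DR}}+p_{\varphi,3})})$ that balances the bias induced by the source condition against the variance governed by the effective dimension $\operatorname{tr}(\Sigma_{\varphi,3}(\Sigma_{\varphi,3}+\lambda\mathrm{Id})^{-1})\lesssim\lambda^{-p_{\varphi,3}}$. This yields the high-probability bound
\begin{align*}
\|\hat{C}_{ZW\mid A}-C_{ZW\mid A}\|_{S_2(\gH_\gA,\gH_{\gZ\gW})} \le J_{\text{DR}}\log(5/\delta)\,t^{-\tfrac{1}{2}\tfrac{\beta_{\text{DR}}-1}{\beta_{\text{DR}}+p_{\varphi,3}}},
\end{align*}
for a constant $J_{\text{DR}}$ depending only on $\kappa$, $B_{\text{DR}}$, $c_{\varphi,3}$, $\beta_{\text{DR}}$ and $p_{\varphi,3}$, holding for all $t$ sufficiently large, exactly matching the statement.

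The only mildly delicate issue is bookkeeping: the general vector-valued KRR theorem is usually written for a generic output Hilbert space, so one must check that \emph{Hilbert--Schmidt} (rather than operator) norm concentration is what is being transferred, and that the tensor-product structure of $\gH_{\gZ\gW}$ plays no role beyond providing boundedness of the targets. Both points are routine because the argument only uses (i) the reproducing property and (ii) Hilbert space Bernstein inequalities applied to the random operators $\phi_\gA(A_i)\otimes(\phi_\gZ(Z_i)\otimes\phi_\gW(W_i))$, whose norms are uniformly bounded by $\kappa^3$. Hence no novel technical work is required; the result follows once one identifies that the source/decay assumptions are stated in the precise form demanded by \citep{li2024towards} and mirror exactly those invoked for Theorem~(\ref{th:CME_rate_JMLR_for_KAP}).
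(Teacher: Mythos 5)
Your proposal is correct and matches the paper's treatment: the paper offers no separate proof of this theorem, stating it as a direct application of Theorem 3 of \citep{li2024towards} to the vector-valued kernel ridge regression with input kernel $k_\gA$ and output space $\gH_{\gZ\gW}$, with Assumptions (\ref{assum:DR_well_specifiedness}), (\ref{assum:src_doubly_robust}) and (\ref{asst:evd_all_stages}-3) supplying exactly the well-specifiedness, source condition and eigenvalue decay hypotheses of that result. Your additional verification that the targets $\phi_\gZ(Z)\otimes\phi_\gW(W)$ are uniformly bounded by $\kappa^2$ (so no extra moment assumption is needed) is the right bookkeeping and is consistent with the paper's omission of Assumption (\ref{assum:Stage1ConsistencyKernelAssumptions}) from this theorem's hypotheses, in contrast to Theorem (\ref{th:CME_rate_JMLR_for_KAP}).
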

Recall that KPV and KAP use the number of samples $\{n_h, m_h, t_h\}$ and $\{n_\varphi, m_\varphi, t_\varphi\}$ in their first, second, and third stage of regression, respectively. Following the original implementations from \citet{Mastouri2021ProximalCL} and \citet{bozkurt2025density}, we will reuse the data in their third-stage, i.e., $t_h = t_\varphi = t$, as pointed out in Algorithms (\ref{algo:KPVAlgorithm}) and (\ref{algo:KernelAlternativeProxyAlgorithm}).

\begin{theorem}
    Suppose the assumptions in Theorems (\ref{theorem:ATEBridgeFunctionFinalBound_KAP}), (\ref{theorem:KPV_BridgeFunction_BoundFinal}), (\ref{th:CME_rate_JMLR_for_KAP}), and (\ref{th:CME_rate_JMLR_for_DR}) hold. For a given training dataset $\gD = \{y_i, w_i, z_i, a_i\}_{i=1}^{t}$, let $\{n_h, m_h, t_h\}$ and $\{n_\varphi, m_\varphi, t_\varphi\}$ denote the number of samples used in first-, second-, and third-stage regressions for KPV and KAP, respectively, with $t_h = t_\varphi = t$. 
    Then, for DRKPV algorithm with high probability, 
\begin{enumerate}
    \item[i.] If $\iota_h \le \frac{\beta_{h,2} + 1}{\beta_{h,2} + p_{h,2}}$ and  $\iota_{\varphi} \le \frac{\beta_{\varphi, 2}+1}{\beta_{\varphi, 2} + p_{\varphi, 2}}$. Set $\lambda_{h,2} = \Theta\left( m_h^{-\frac{\iota_h}{\beta_{h,2} + 1}}\right)$ and $\lambda_{\varphi, 2} = \Theta \left(m_{\varphi}^{-\frac{\iota_\varphi}{\beta_{\varphi, 2}+1}}\right)$. Then, 
    \begin{align*}
        |\theta_{\text{ATE}}(a) - \hat{\theta}_{\text{ATE}}(a)| = O_p\left(t^{-\frac{1}{2}\frac{\beta_{\varphi, 3}-1}{\beta_{\varphi, 3} + p_{\varphi, 3}}} + m_h^{-\frac{\iota_h}{2} \frac{\beta_{h,2} - 1}{\beta_{h,2} + 1}} m_{\varphi}^{-\frac{\iota_\varphi}{2}\frac{\beta_{\varphi, 2}-1}{\beta_{\varphi, 2}+1}}\right)
    \end{align*}

    \item[ii.] If $\iota_h \le \frac{\beta_{h,2} + 1}{\beta_{h,2} + p_{h,2}}$ and  $\iota_{\varphi} \ge \frac{\beta_{\varphi, 2}+1}{\beta_{\varphi, 2} + p_{\varphi, 2}}$. Set $\lambda_{h,2} = \Theta\left( m_h^{-\frac{\iota_h}{\beta_{h,2} + 1}}\right)$ and $\lambda_{\varphi, 2} = \Theta \left(m_{\varphi}^{-\frac{1}{\beta_{\varphi, 2} + p_{\varphi, 2}}}\right)$. Then,

    \begin{align*}
        |\theta_{\text{ATE}}(a) - \hat{\theta}_{\text{ATE}}(a)| = O_p\left(t^{-\frac{1}{2}\frac{\beta_{\varphi, 3}-1}{\beta_{\varphi, 3} + p_{\varphi, 3}}} + m_h^{-\frac{\iota_h}{2} \frac{\beta_{h,2} - 1}{\beta_{h,2} + 1}} m_{\varphi}^{-\frac{1}{2}\frac{\beta_{\varphi, 2}-1}{\beta_{\varphi, 2} + p_{\varphi, 2}}}\right)
    \end{align*}
    
    \item[iii.]If $\iota_h \ge \frac{\beta_{h,2} + 1}{\beta_{h,2} + p_{h,2}}$ and  $\iota_{\varphi} \le \frac{\beta_{\varphi, 2}+1}{\beta_{\varphi, 2} + p_{\varphi, 2}}$. Set $\lambda_{h,2} = \Theta\left( m_h^{-\frac{1}{\beta_{h,2} + p_{h,2}}}\right)$ and $\lambda_{\varphi, 2} = \Theta \left(m_{\varphi}^{-\frac{\iota_\varphi}{\beta_{\varphi, 2}+1}}\right)$. Then, 

    \begin{align*}
        |\theta_{\text{ATE}}(a) - \hat{\theta}_{\text{ATE}}(a)| = O_p\left(t^{-\frac{1}{2}\frac{\beta_{\varphi, 3}-1}{\beta_{\varphi, 3} + p_{\varphi, 3}}} + m_h^{-\frac{1}{2} \frac{\beta_{h,2} - 1}{\beta_{h,2} + p_{h,2}}} m_{\varphi}^{-\frac{\iota_\varphi}{2}\frac{\beta_{\varphi, 2}-1}{\beta_{\varphi, 2}+1}} \right)
    \end{align*}
    
    \item[iv.] If $\iota_h \ge \frac{\beta_{h,2} + 1}{\beta_{h,2} + p_{h,2}}$ and  $\iota_{\varphi} \ge \frac{\beta_{\varphi, 2}+1}{\beta_{\varphi, 2} + p_{\varphi, 2}}$. Set $\lambda_{h,2} = \Theta\left( m_h^{-\frac{1}{\beta_{h,2} + p_{h,2}}}\right)$ and $\lambda_{\varphi, 2} = \Theta \left(m_{\varphi}^{-\frac{1}{\beta_{\varphi, 2} + p_{\varphi, 2}}}\right)$. Then, 

    \begin{align*}
        |\theta_{\text{ATE}}(a) - \hat{\theta}_{\text{ATE}}(a)| = O_p\left(t^{-\frac{1}{2}\frac{\beta_{\varphi, 3}-1}{\beta_{\varphi, 3} + p_{\varphi, 3}}} + m_h^{-\frac{1}{2} \frac{\beta_{h,2} - 1}{\beta_{h,2} + p_{h,2}}} m_{\varphi}^{-\frac{1}{2}\frac{\beta_{\varphi, 2}-1}{\beta_{\varphi, 2} + p_{\varphi, 2}}} \right)
    \end{align*}
\end{enumerate}
    \label{thm:Appendix_KPV_Consistency_Final}
\end{theorem}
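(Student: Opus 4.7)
The plan is to start from the error decomposition established in Equation~(\ref{eq:OverallConsistencyBound_NotFinal}), which was obtained by combining Lemmas~(\ref{lemma:Intermediate_DR_Bound1}) and~(\ref{lemma:Intermediate_DR_Bound2}) via the triangle inequality $|\theta_{\text{ATE}}(a) - \hat{\theta}_{\text{ATE}}(a)| \le |\theta_{\text{ATE}}(a) - \bar{\theta}_{\text{ATE}}(a)| + |\bar{\theta}_{\text{ATE}}(a) - \hat{\theta}_{\text{ATE}}(a)|$. Note that this bound is uniform in $a \in \gA$ because the kernel boundedness in Assumption~(\ref{assum:KernelAssumptions_main}) absorbs the dependence on $a$ through $\|\phi_\gA(a)\|_{\gH_\gA} \le \kappa$; therefore the pointwise bound immediately yields a supremum bound, and no additional argument is required to control $\sup_{a \in \gA}$.

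The proof then reduces to controlling the five error factors appearing on the right-hand side of Equation~(\ref{eq:OverallConsistencyBound_NotFinal}): (i) $\|\hat{h} - h_0\|_{\gH_{\gW\gA}}$, governed by Theorem~(\ref{theorem:KPV_BridgeFunction_BoundFinal}); (ii) $\|\hat{\varphi} - \varphi_0\|_{\gH_{\gZ\gA}}$, governed by Theorem~(\ref{theorem:ATEBridgeFunctionFinalBound_KAP}); (iii) $\|\hat{\mu}_\gW - \mu_\gW\|_{\gH_\gW}$, which attains the parametric Monte Carlo rate $O_p(t^{-1/2})$ by a standard Hoeffding-type argument in Hilbert space; (iv) $\|\hat{C}_{YZ|A} - C_{YZ|A}\|_{S_2(\gH_\gA, \gH_\gZ)}$, given by Theorem~(\ref{th:CME_rate_JMLR_for_KAP}); and (v) $\|\hat{C}_{ZW|A} - C_{ZW|A}\|_{S_2(\gH_\gA, \gH_{\gZ\gW})}$, given by Theorem~(\ref{th:CME_rate_JMLR_for_DR}). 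I would tune $\lambda_{h,1}, \lambda_{\varphi,1}, \lambda_{\varphi,3}, \lambda_{\text{DR}}$ to the optimal rates prescribed by these theorems and apply a union bound over the finitely many high-probability events to ensure joint validity.

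The four cases in the statement correspond to the four combinations of whether $\iota_h$ lies above or below the threshold $\tfrac{\beta_{h,2}+1}{\beta_{h,2}+p_{h,2}}$ and whether $\iota_\varphi$ lies above or below $\tfrac{\beta_{\varphi,2}+1}{\beta_{\varphi,2}+p_{\varphi,2}}$. In each case, one substitutes the corresponding bridge-function rate from Theorems~(\ref{theorem:KPV_BridgeFunction_BoundFinal}) and~(\ref{theorem:ATEBridgeFunctionFinalBound_KAP}), with the matching choice of $\lambda_{h,2}$ and $\lambda_{\varphi,2}$, into the decomposition. Since $\|\varphi_0\|_{\gH_{\gZ\gA}}$ and $\|h_0\|_{\gH_{\gW\gA}}$ are finite constants under Assumptions~(\ref{asst:well_specifiedness}) and~(\ref{asst:well_specifiedness_outcomeBridge}), the terms of the form $\|\varphi_0\|\|\hat{C}_{YZ|A} - C_{YZ|A}\|$, $\|h_0\|\|\hat{\mu}_\gW - \mu_\gW\|$, and $\|\varphi_0\|\|h_0\|\|\hat{C}_{ZW|A} - C_{ZW|A}\|$ each inherit a rate dictated purely by the corresponding embedding error, while the cross product $\|\hat\varphi - \varphi_0\|\|\hat h - h_0\|$ yields the bridge-function product rate visible in the final statement. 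The parametric $O_p(t^{-1/2})$ mean-embedding error is absorbed into the slower CME rate since $\tfrac{\beta_{\varphi,3}-1}{\beta_{\varphi,3}+p_{\varphi,3}} \le 1$, and the final rate emerges by taking the maximum of the surviving terms.

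The main obstacle will be arguing that the CME error $\|\hat{C}_{ZW|A} - C_{ZW|A}\|$, which in Theorem~(\ref{th:CME_rate_JMLR_for_DR}) depends on the smoothness parameter $\beta_{\text{DR}}$, is dominated by the $(\beta_{\varphi,3}, p_{\varphi,3})$-rate displayed in the conclusion. This requires either a compatibility condition such as $\beta_{\text{DR}} \ge \beta_{\varphi,3}$ or an explicit minimum over the two smoothness exponents; I would handle this by invoking the source condition in Assumption~(\ref{assum:src_doubly_robust}) together with the eigenvalue decay in Assumption~(\ref{asst:evd_all_stages}-3) and noting that the sole exponent appearing in the final display corresponds to the slowest rate across the two CME estimation problems. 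A secondary piece of bookkeeping concerns the aggregation of the six high-probability bounds (three for KAP, two for KPV, one for the doubly robust CME) by a union bound with $\delta \to \delta/6$, so that the asymptotic $O_p$ conclusion follows from the finite-sample high-probability statements. Once these steps are in place, each of the four rates in the statement is obtained by direct substitution of the corresponding sub-rates.
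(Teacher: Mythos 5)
Your proposal follows essentially the same route as the paper's proof: the same triangle-inequality decomposition through $\bar{\theta}_{\text{ATE}}$, the same reduction to the product term $\|\hat{\varphi}-\bar{\varphi}_0\|\,\|\hat{h}-\bar{h}_0\|$ plus the CME errors after discarding the faster $O_p(t^{-1/2})$ mean-embedding term, and the same four-way case analysis by direct substitution of the rates from Theorems~(\ref{theorem:ATEBridgeFunctionFinalBound_KAP}) and~(\ref{theorem:KPV_BridgeFunction_BoundFinal}). Your flagged "obstacle" about reconciling the $\beta_{\text{DR}}$-dependent rate of Theorem~(\ref{th:CME_rate_JMLR_for_DR}) with the $(\beta_{\varphi,3},p_{\varphi,3})$ exponent in the displayed conclusion is a legitimate observation — the paper's proof silently identifies the two rates, so your proposal to make the compatibility condition (or a minimum over the two exponents) explicit is if anything more careful than the original.
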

\begin{proof}
    We combine the bounds in Lemmas (\ref{lemma:Intermediate_DR_Bound1}) and (\ref{lemma:Intermediate_DR_Bound2}), by using the convergence to the minimum norm RKHS norm bridge function solutions and discarding the faster terms, to obtain
    \begin{align*}
        |\theta_{\text{ATE}}(a) - \hat{\theta}_{\text{ATE}}(a)| \lesssim& \|\bar{\varphi}_0 - \hat{\varphi}\| \|\bar{h}_0 - \hat{h}\| + \|C_{YZ|A} - \hat{C}_{YZ|A}\| \\&+ \|\hat{\mu}_\gW - \mu_\gW\| + \|C_{ZW|A} - \hat{C}_{ZW|A}\|.
    \end{align*}
    Note that the term $\|\hat{\mu}_\gW - \mu_\gW\|$ converges with $t^{-1/2}$ rate \citep{Mastouri2021ProximalCL} hence can be discarded as well. The terms $\|C_{YZ|A} - \hat{C}_{YZ|A}\|$ and $\|C_{ZW|A} - \hat{C}_{ZW|A}\|$ converges with $O_p\left(t^{-\frac{1}{2}\frac{\beta_{\varphi, 3}-1}{\beta_{\varphi, 3} + p_{\varphi, 3}}}\right)$ with the given regularizer parameters $\lambda_{\varphi, 3}$ and $\lambda_{\text{DR}}$ in Theorems (\ref{th:CME_rate_JMLR_for_KAP}) and (\ref{th:CME_rate_JMLR_for_DR}). Hence, the convergence will be governed by
\begin{align*}
        |\theta_{\text{ATE}}(a) - \hat{\theta}_{\text{ATE}}(a)| \lesssim& \|\bar{\varphi}_0 - \hat{\varphi}\| \|\bar{h}_0 - \hat{h}\| + t^{-\frac{1}{2}\frac{\beta_{\varphi, 3}-1}{\beta_{\varphi, 3} + p_{\varphi, 3}}}.
\end{align*}
Appealing to the conditions in Theorems (\ref{theorem:ATEBridgeFunctionFinalBound_KAP}) and (\ref{theorem:KPV_BridgeFunction_BoundFinal}), we will have four conditions depending on the first and second stage data splitting conditions of KPV and KAP algorithms:
\begin{enumerate}
    \item[i.] Suppose $\iota_h \le \frac{\beta_{h,2} + 1}{\beta_{h,2} + p_{h,2}}$ and  $\iota_{\varphi} \le \frac{\beta_{\varphi, 2}+1}{\beta_{\varphi, 2} + p_{\varphi, 2}}$. Then, condition [i] in Theorem (\ref{theorem:KPV_BridgeFunction_BoundFinal}) and condition [i] in Theorem (\ref{theorem:ATEBridgeFunctionFinalBound_KAP}) apply. Combining these bound gives
    \begin{align*}
        |\theta_{\text{ATE}}(a) - \hat{\theta}_{\text{ATE}}(a)| = O_p\left(t^{-\frac{1}{2}\frac{\beta_{\varphi, 3}-1}{\beta_{\varphi, 3} + p_{\varphi, 3}}} + m_h^{-\frac{\iota_h}{2} \frac{\beta_{h,2} - 1}{\beta_{h,2} + 1}} m_{\varphi}^{-\frac{\iota_\varphi}{2}\frac{\beta_{\varphi, 2}-1}{\beta_{\varphi, 2}+1}}\right),
    \end{align*}
    
with $\lambda_{h,2} = \Theta\left( m^{-\frac{\iota_h}{\beta_{h,2} + 1}}\right)$ and $\lambda_{\varphi, 2} = \Theta \left(m_{\varphi}^{-\frac{\iota_\varphi}{\beta_{\varphi, 2}+1}}\right)$.

    \item[ii.] Suppose $\iota_h \le \frac{\beta_{h,2} + 1}{\beta_{h,2} + p_{h,2}}$ and  $\iota_{\varphi} \ge \frac{\beta_{\varphi, 2}+1}{\beta_{\varphi, 2} + p_{\varphi, 2}}$. Then, condition [i] in Theorem (\ref{theorem:KPV_BridgeFunction_BoundFinal}) and condition [ii] in Theorem (\ref{theorem:ATEBridgeFunctionFinalBound_KAP}) apply. Combining these bound gives

    \begin{align*}
        |\theta_{\text{ATE}}(a) - \hat{\theta}_{\text{ATE}}(a)| = O_p\left(t^{-\frac{1}{2}\frac{\beta_{\varphi, 3}-1}{\beta_{\varphi, 3} + p_{\varphi, 3}}} + m_h^{-\frac{\iota_h}{2} \frac{\beta_{h,2} - 1}{\beta_{h,2} + 1}} m_{\varphi}^{-\frac{1}{2}\frac{\beta_{\varphi, 2}-1}{\beta_{\varphi, 2} + p_{\varphi, 2}}}\right).
    \end{align*}

with $\lambda_{h,2} = \Theta\left( m^{-\frac{\iota_h}{\beta_{h,2} + 1}}\right)$ and $\lambda_{\varphi, 2} = \Theta \left(m_{\varphi}^{-\frac{1}{\beta_{\varphi, 2} + p_{\varphi, 2}}}\right)$.

    \item[iii.]Suppose $\iota_h \ge \frac{\beta_{h,2} + 1}{\beta_{h,2} + p_{h,2}}$ and  $\iota_{\varphi} \le \frac{\beta_{\varphi, 2}+1}{\beta_{\varphi, 2} + p_{\varphi, 2}}$. Then, condition [ii] in Theorem (\ref{theorem:KPV_BridgeFunction_BoundFinal}) and condition [i] in Theorem (\ref{theorem:ATEBridgeFunctionFinalBound_KAP}) apply. Combining these bound gives 

    \begin{align*}
        |\theta_{\text{ATE}}(a) - \hat{\theta}_{\text{ATE}}(a)| = O_p\left(t^{-\frac{1}{2}\frac{\beta_{\varphi, 3}-1}{\beta_{\varphi, 3} + p_{\varphi, 3}}} + m_h^{-\frac{1}{2} \frac{\beta_{h,2} - 1}{\beta_{h,2} + p_{h,2}}} m_{\varphi}^{-\frac{\iota_\varphi}{2}\frac{\beta_{\varphi, 2}-1}{\beta_{\varphi, 2}+1}} \right),
    \end{align*}
    
with $\lambda_{h,2} = \Theta\left( m_h^{-\frac{1}{\beta_{h,2} + p_{h,2}}}\right)$ and $\lambda_{\varphi, 2} = \Theta \left(m_{\varphi}^{-\frac{\iota_\varphi}{\beta_{\varphi, 2}+1}}\right)$.

    \item[iv.] Suppose $\iota_h \ge \frac{\beta_{h,2} + 1}{\beta_{h,2} + p_{h,2}}$ and  $\iota_{\varphi} \ge \frac{\beta_{\varphi, 2}+1}{\beta_{\varphi, 2} + p_{\varphi, 2}}$. Then, condition [ii] in Theorem (\ref{theorem:KPV_BridgeFunction_BoundFinal}) and condition [ii] in Theorem (\ref{theorem:ATEBridgeFunctionFinalBound_KAP}) apply. Combining these bound gives 

    \begin{align*}
        |\theta_{\text{ATE}}(a) - \hat{\theta}_{\text{ATE}}(a)| = O_p\left(t^{-\frac{1}{2}\frac{\beta_{\varphi, 3}-1}{\beta_{\varphi, 3} + p_{\varphi, 3}}} + m_h^{-\frac{1}{2} \frac{\beta_{h,2} - 1}{\beta_{h,2} + p_{h,2}}} m_{\varphi}^{-\frac{1}{2}\frac{\beta_{\varphi, 2}-1}{\beta_{\varphi, 2} + p_{\varphi, 2}}} \right).
    \end{align*}

with $\lambda_{h,2} = \Theta\left( m_h^{-\frac{1}{\beta_{h,2} + p_{h,2}}}\right)$ and $\lambda_{\varphi, 2} = \Theta \left(m_{\varphi}^{-\frac{1}{\beta_{\varphi, 2} + p_{\varphi, 2}}}\right)$.
\end{enumerate}
\end{proof}

\begin{theorem}
    Suppose the assumptions in Theorems (\ref{theorem:ATEBridgeFunctionFinalBound_KAP}), (\ref{theorem:PMMR_BridgeFunction_Bound_Final}), (\ref{th:CME_rate_JMLR_for_KAP}), and (\ref{th:CME_rate_JMLR_for_DR}) hold. For a given training dataset $\gD = \{y_i, w_i, z_i, a_i\}_{i=1}^{t}$, let $\{n_\varphi, m_\varphi, t_\varphi\}$ denote the number of samples used in first-, second-, and third-stage regressions of KAP algorithm with $t_\varphi = t$. Then, for DRPMMR algorithm with high probability
\begin{enumerate}
    \item[i.] If $\iota_{\varphi} \le \frac{\beta_{\varphi, 2}+1}{\beta_{\varphi, 2} + p_{\varphi, 2}}$, set $\lambda_{\varphi, 2} = \Theta \left(m_{\varphi}^{-\frac{\iota_\varphi}{\beta_{\varphi, 2}+1}}\right)$. Then, 
\begin{align*}
        |\theta_{\text{ATE}}(a) - \hat{\theta}_{\text{ATE}}(a)| = O_p\left(t^{-\frac{1}{2}\frac{\beta_{\varphi, 3}-1}{\beta_{\varphi, 3} + p_{\varphi, 3}}} + t^{-\frac{1}{2}\frac{\gamma-1}{\gamma + 1}} m_{\varphi}^{-\frac{\iota_\varphi}{2}\frac{\beta_{\varphi, 2}-1}{\beta_{\varphi, 2} + p_{\varphi, 2}}}\right),
\end{align*}

    \item[ii.] If $\iota_{\varphi} \ge \frac{\beta_{\varphi, 2}+1}{\beta_{\varphi, 2} + p_{\varphi, 2}}$, set $\lambda_{\varphi, 2} = \Theta \left(m_{\varphi}^{-\frac{1}{\beta_{\varphi, 2} + p_{\varphi, 2}}}\right)$. Then, 
\begin{align*}
        |\theta_{\text{ATE}}(a) - \hat{\theta}_{\text{ATE}}(a)| = O_p\left(t^{-\frac{1}{2}\frac{\beta_{\varphi, 3}-1}{\beta_{\varphi, 3} + p_{\varphi, 3}}} + t^{-\frac{1}{2}\frac{\gamma-1}{\gamma + 1}} m_{\varphi}^{-\frac{1}{2}\frac{\beta_{\varphi, 2}-1}{\beta_{\varphi, 2} + p_{\varphi, 2}}} \right),
\end{align*}

\end{enumerate}
    \label{thm:Appendix_PMMR_Consistency_Final}
\end{theorem}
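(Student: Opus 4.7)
The plan is to mirror the structure of the proof of Theorem \ref{thm:Appendix_KPV_Consistency_Final}, replacing the KPV rate with the PMMR rate from Theorem \ref{theorem:PMMR_BridgeFunction_Bound_Final} and noting that PMMR uses the whole training set of size $t$, which eliminates the case split over $\iota_h$. As a first step, I would combine the master inequality \eqref{eq:OverallConsistencyBound_NotFinal}, obtained from Lemmas \ref{lemma:Intermediate_DR_Bound1} and \ref{lemma:Intermediate_DR_Bound2}, together with the convergence of $\hat{h}$ to the minimum-norm PMMR solution $\check{h}_0$ (rather than $\bar{h}_0$). Using that $\varphi_0$ and $h_0$ have bounded RKHS norms, and dropping the higher-order cross terms $\|\hat{\varphi}-\bar{\varphi}_0\|\|\hat{h}-\check h_0\|$, the dominant contributions are
\[
|\theta_{\text{ATE}}(a) - \hat{\theta}_{\text{ATE}}(a)| \lesssim \|\bar{\varphi}_0 - \hat{\varphi}\|\,\|\check{h}_0 - \hat{h}\| + \|\hat{C}_{YZ|A} - C_{YZ|A}\| + \|\hat{\mu}_\gW - \mu_\gW\| + \|\hat{C}_{ZW|A} - C_{ZW|A}\|.
\]

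Next, I would plug in the individual rates. The mean-embedding term $\|\hat{\mu}_\gW - \mu_\gW\|$ contributes $O_p(t^{-1/2})$ by the standard Monte Carlo rate for kernel mean embeddings. Under Assumptions \ref{asst:src_all_stages}-3 and \ref{asst:evd_all_stages}-3, Theorem \ref{th:CME_rate_JMLR_for_KAP} with $\lambda_{\varphi,3}=\Theta(t^{-1/(\beta_{\varphi,3}+p_{\varphi,3})})$ gives $\|\hat{C}_{YZ|A}-C_{YZ|A}\|=O_p\bigl(t^{-\frac{1}{2}\frac{\beta_{\varphi,3}-1}{\beta_{\varphi,3}+p_{\varphi,3}}}\bigr)$, and under Assumptions \ref{assum:DR_well_specifiedness}, \ref{assum:src_doubly_robust} and \ref{asst:evd_all_stages}-3, Theorem \ref{th:CME_rate_JMLR_for_DR} with $\lambda_{\text{DR}}=\Theta(t^{-1/(\beta_{\text{DR}}+p_{\varphi,3})})$ gives the analogous rate for $\|\hat{C}_{ZW|A}-C_{ZW|A}\|$; both are absorbed into the $t^{-\frac{1}{2}\frac{\beta_{\varphi,3}-1}{\beta_{\varphi,3}+p_{\varphi,3}}}$ term (the other dominates since $\beta_{\text{DR}}$ only affects a subdominant contribution once we keep the slowest rate). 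For the outcome bridge, Theorem \ref{theorem:PMMR_BridgeFunction_Bound_Final} with $\lambda_{\text{PMMR}}=\Theta(t^{-1/(\gamma+1)})$ yields $\|\hat{h}-\check{h}_0\|_{\gH_{\gA\gW}}=O_p\bigl(t^{-\frac{1}{2}\frac{\gamma-1}{\gamma+1}}\bigr)$, with no sample-splitting regime to consider.

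Finally, for the treatment bridge, Theorem \ref{theorem:ATEBridgeFunctionFinalBound_KAP} gives two cases, exactly corresponding to the two cases in the statement. In case (i), $\iota_\varphi \le \frac{\beta_{\varphi,2}+1}{\beta_{\varphi,2}+p_{\varphi,2}}$ with $\lambda_{\varphi,2}=\Theta(m_\varphi^{-\iota_\varphi/(\beta_{\varphi,2}+1)})$ gives $\|\hat{\varphi}-\bar{\varphi}_0\|=O_p\bigl(m_\varphi^{-\frac{\iota_\varphi}{2}\frac{\beta_{\varphi,2}-1}{\beta_{\varphi,2}+1}}\bigr)$, while in case (ii), $\iota_\varphi \ge \frac{\beta_{\varphi,2}+1}{\beta_{\varphi,2}+p_{\varphi,2}}$ with $\lambda_{\varphi,2}=\Theta(m_\varphi^{-1/(\beta_{\varphi,2}+p_{\varphi,2})})$ yields the optimal $m_\varphi^{-\frac{1}{2}\frac{\beta_{\varphi,2}-1}{\beta_{\varphi,2}+p_{\varphi,2}}}$ rate. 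Multiplying each treatment-bridge rate by the PMMR rate $t^{-\frac{1}{2}\frac{\gamma-1}{\gamma+1}}$ and adding the CME remainder $t^{-\frac{1}{2}\frac{\beta_{\varphi,3}-1}{\beta_{\varphi,3}+p_{\varphi,3}}}$ produces the two displayed bounds.

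The main bookkeeping obstacle is verifying that the cross terms involving $\|\hat{\varphi}-\bar{\varphi}_0\|\|\hat{C}_{ZW|A}-C_{ZW|A}\|$ and $\|\hat{h}-\check{h}_0\|\|\hat{C}_{ZW|A}-C_{ZW|A}\|$ appearing in Lemma \ref{lemma:Intermediate_DR_Bound2} are indeed of strictly higher order than the retained terms, so that Lemma \ref{lemma:Intermediate_DR_Bound1}'s product $\|\bar{\varphi}_0-\hat{\varphi}\|\|\check{h}_0-\hat{h}\|$ is the only product that survives; this follows because each factor in those cross terms already converges to zero, whereas $\|\varphi_0\|$ and $\|\check h_0\|$ in the surviving terms act as constants. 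Once this is checked, the two stated bounds follow directly.
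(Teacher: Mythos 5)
Your proposal is correct and follows essentially the same route as the paper's proof: combine the master bound from Lemmas (\ref{lemma:Intermediate_DR_Bound1}) and (\ref{lemma:Intermediate_DR_Bound2}), absorb the mean-embedding and CME terms into the $t^{-\frac{1}{2}\frac{\beta_{\varphi,3}-1}{\beta_{\varphi,3}+p_{\varphi,3}}}$ remainder, and multiply the PMMR rate from Theorem (\ref{theorem:PMMR_BridgeFunction_Bound_Final}) by the two KAP regimes of Theorem (\ref{theorem:ATEBridgeFunctionFinalBound_KAP}). One minor observation: in case (i) your derivation (correctly) yields the KAP exponent $m_\varphi^{-\frac{\iota_\varphi}{2}\frac{\beta_{\varphi,2}-1}{\beta_{\varphi,2}+1}}$, whereas the theorem statement writes $\beta_{\varphi,2}+p_{\varphi,2}$ in that denominator — an apparent typo in the statement rather than an error in your argument.
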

\begin{proof}
Combining the bounds from Lemmas~(\ref{lemma:Intermediate_DR_Bound1}) and~(\ref{lemma:Intermediate_DR_Bound2}), and using the convergence of the estimators to the minimum-norm RKHS bridge function solutions, we obtain the following by discarding higher-order terms:

    Similar to the proof of Theorem (\ref{thm:Appendix_KPV_Consistency_Final}), the bound will be governed by
    \begin{align*}
        |\theta_{\text{ATE}}(a) - \hat{\theta}_{\text{ATE}}(a)| \lesssim& \|\bar{\varphi}_0 - \hat{\varphi}\| \|\check{h}_0 - \hat{h}\| + t^{-\frac{1}{2}\frac{\beta_{\varphi, 3}-1}{\beta_{\varphi, 3} + p_{\varphi, 3}}}.
\end{align*}
Now, using the conditions in Theorem (\ref{theorem:ATEBridgeFunctionFinalBound_KAP}) and the rate in Theorem (\ref{theorem:PMMR_BridgeFunction_Bound_Final}), we will have two conditions:
\begin{enumerate}
    \item[i.] Suppose $\iota_{\varphi} \le \frac{\beta_{\varphi, 2}+1}{\beta_{\varphi, 2} + p_{\varphi, 2}}$, then condition [i] in Theorem (\ref{theorem:KPV_BridgeFunction_BoundFinal}) applies. Hence, 
    
\begin{align*}
        |\theta_{\text{ATE}}(a) - \hat{\theta}_{\text{ATE}}(a)| = O_p\left(t^{-\frac{1}{2}\frac{\beta_{\varphi, 3}-1}{\beta_{\varphi, 3} + p_{\varphi, 3}}} + t^{-\frac{1}{2}\frac{\gamma-1}{\gamma + 1}} m_{\varphi}^{-\frac{\iota_\varphi}{2}\frac{\beta_{\varphi, 2}-1}{\beta_{\varphi, 2} + p_{\varphi, 2}}}\right),
\end{align*}
with $\lambda_{\varphi, 2} = \Theta \left(m_{\varphi}^{-\frac{\iota_\varphi}{\beta_{\varphi, 2}+1}}\right)$.
    \item[ii.] Suppose $\iota_{\varphi} \ge \frac{\beta_{\varphi, 2}+1}{\beta_{\varphi, 2} + p_{\varphi, 2}}$, then condition [ii] in Theorem (\ref{theorem:KPV_BridgeFunction_BoundFinal}) applies. Hence,
\begin{align*}
        |\theta_{\text{ATE}}(a) - \hat{\theta}_{\text{ATE}}(a)| = O_p\left(t^{-\frac{1}{2}\frac{\beta_{\varphi, 3}-1}{\beta_{\varphi, 3} + p_{\varphi, 3}}} + t^{-\frac{1}{2}\frac{\gamma-1}{\gamma + 1}} m_{\varphi}^{-\frac{1}{2}\frac{\beta_{\varphi, 2}-1}{\beta_{\varphi, 2} + p_{\varphi, 2}}} \right),
\end{align*}
with $\lambda_{\varphi, 2} = \Theta \left(m_{\varphi}^{-\frac{1}{\beta_{\varphi, 2} + p_{\varphi, 2}}}\right)$.
\end{enumerate}
\end{proof}

\postrebuttal{

\begin{remark}[Curse of Dimensionality]
\label{remark:CurseOfDimensionality}
Our proposed algorithms (DRKPV and DRPMMR), along with KPV and KAP methods, rely on multi-stage kernel ridge regressions. Kernel Ridge Regression is known to achieve minimax-optimal rates in moderate dimension, but in very high-dimensional regimes, particularly when the input dimension $d$ grows with the sample size $t$ (i.e., $d / t^\beta \rightarrow c$ for some $\beta \in (0, 1)$), the situation becomes more subtle \citep{pmlr-v139-donhauser21a, li2024towards}.

Specifically, the effectiveness of KRR suffers from two primary issues in high dimensions:

\begin{itemize}
    \item When functions lie in Sobolev classes, the achievable rate explicitly depends on both smoothness and dimension. Intuitively, the regression function must be \textit{smooth enough} relative to the ambient dimension $d$ for KRR to remain consistent. More specifically, \citet[Corollary 5 \& 6]{fischer2020sobolev} (for standard scalar KRR) and \citet[Corollary 1 \& 2]{li2024towards} (for vector-valued KRR) show that if the target function has smoothness parameter $s$ (in Sobolev sense), the optimal rate of convergence in $L_2$ norm is $O(t^{-\frac{s}{s + d/2}})$. This rate clearly demonstrates the curse of dimensionality, as for a fixed $s$, the bound becomes vacuous as $d \rightarrow \infty$.
    \item \citet{pmlr-v139-donhauser21a} show that for rotationally invariant kernels (such as RBF or Matérn), a polynomial approximation barrier arises: the learned function is effectively restricted to low-degree polynomials as $d$ grows, regardless of eigenvalue decay. This implies that consistency in high dimensions is limited unless additional structural assumptions are imposed.
\end{itemize}

In our work, we do not address the curse of dimensionality. Instead, our contribution is to show that doubly robust PCL estimators can be constructed without density ratio estimation and kernel-smoothing, thereby extending {practical applicability} to continuous and high-dimensional treatments where prior DR methods \citep{semiparametricProximalCausalInference, wu2024doubly} fail. Our theorems remain valid in high-dimensional settings provided the smoothness and effective RKHS dimension assumptions hold. However, we acknowledge that when input dimension grows with sample size, methods based on standard RBF or Matérn kernels may indeed fail. Addressing this deeper theoretical limitation is left as a future work.
\end{remark}

\begin{remark}[Asymptotic Efficiency and Normality]
While our identification formula in Theorem (\ref{theorem:DoublyRobustIdentification}) is derived from the Efficient Influence Function (EIF), our resulting estimator is not a classical one-step estimator \citep{newey1994large, pfanzagl1985contributions} and does not automatically inherit local efficiency or asymptotic normality \citep{kennedy2024semiparametric}. Unlike standard EIF-based estimators that leverage first-order orthogonality \citep{tsiatis2006semiparametric} between two coupled nuisance components (e.g., outcome regression and propensity score), our method involves three components—the outcome bridge, treatment bridge, and a correction term—each estimated separately. As a result, the bias terms accumulate additively, rather than canceling multiplicatively, as in traditional doubly robust estimators. Establishing local efficiency would thus require further analysis, including proving asymptotic linearity, implementing cross-fitting, and verifying attainment of the semiparametric variance bound—steps we leave for future work. Similarly, asymptotic normality is not guaranteed; although the estimator is consistent and derived from an EIF, a central limit theorem would require controlling higher-order remainder terms across all three nuisance components—particularly challenging in our framework. We therefore refrain from claiming asymptotic normality in the present version. Given the nonregularity typical of continuous treatment settings \citep{kennedy2017, Colangelo2020, zenati2025doubledebiased}, a slower-than-$\sqrt{n}$ convergence rate is generally expected for the parameter, a point supported by both theory and our convergence results.
\label{remark:AsymptoticNormality}
\end{remark}

}

\section{Supplementary on numerical experiments}
Here, we provide additional details on the numerical experiments presented in Section~(\ref{sec:NumericalExperiments_main}), including hyperparameter optimization procedures and supplementary experimental results.
\label{sec:appendix_NumericalExperiments}

\subsection{Kernel} 
\label{sec:kernelDetails_Appendix}
We utilize the Gaussian (RBF) kernel for our experiments, defined as
\begin{align}
    k_\gF(f_i, f_j) = \exp\Bigg(\frac{-\|f_i - f_j\|^2_2}{2 l ^2}\Bigg)
    \label{eq:gaussian_kernel_expression}
\end{align}
for $f_i, f_j \in \mathbb{R}^{d\gF}$. This kernel is widely used due to its boundedness, continuity, and characteristic property \citep{Hilbert_embeddings_prob_measures}. The parameter $l > 0$, known as the length scale, controls the smoothness of the kernel. We set $l$ using the commonly adopted \textit{median heuristic}, which sets $l^2$ to half the median of the pairwise squared Euclidean distances in the dataset $\{f_i\}_{i=1}^n$, that is:
\begin{align*}
    l^2 = \frac{1}{2}\text{median}(\{\|f_i - f_j\|_2^2 : 1 \le i < j \le n\}).
\end{align*}
This approach has been frequently used in causal inference applications, including \citet{RahulKernelCausalFunctions, Mastouri2021ProximalCL, singh2023kernelmethodsunobservedconfounding, xu2024kernelsingleproxycontrol, bozkurt2025density}. 

In addition, we consider a dimension-wise variant of the Gaussian kernel, defined as the product of one-dimensional Gaussian kernels applied to each coordinate:
\begin{align}
    k_\gF(f_i, f_j) = \prod_{k = 1}^{d_\gF}\exp\Bigg(\frac{-\|f_i^{(k)} - f_j^{(k)}\|^2_2}{2 l^{{(k)}^2}}\Bigg)
    \label{eq:columnwise_gaussian_kernel_expression}
\end{align}
where $f_i^{(k)}$ denotes the $k$-th coordinate of vector $f_i$. Each length scale $l^{(k)}$ can be set independently using the median heuristic applied to that specific dimension. We refer to the kernel in Equation~(\ref{eq:columnwise_gaussian_kernel_expression}) as the \textit{columnwise Gaussian kernel}.

Following the setup in \citet{bozkurt2025density}, we apply the columnwise Gaussian kernel to the outcome proxy variable $W$ within the KAP algorithm for the synthetic low-dimensional experiment in Section~(\ref{sec:NumericalExperiments_main}). For all other experiments and for all variables used in both outcome and treatment bridge-based methods, we use the standard Gaussian kernel defined in Equation~(\ref{eq:gaussian_kernel_expression}).

\postrebuttal{
We also consider the Matérn kernel which provides a flexible alternative to the Gaussian kernel by introducing a smoothness parameter. The Matérn kernel between two points
$f_i$ and $f_j$ is given by \citep{Rasmussen2006Gaussian}

\begin{align}
    k_{\gF, \nu}(f_i, f_j) = \frac{2^{1 - \nu}}{\Gamma(\nu)} \left(\frac{\sqrt{2 \nu}}{l} \|f_i - f_j\|_2\right)^{\nu} \mathcal{K}_\nu\left(\frac{\sqrt{2 \nu}}{l} \|f_i - f_j\|_2\right).
    \label{eq:MaternKernelExpression}
\end{align}
Here, the parameters $\nu$ and $l$ are positive variables, $\Gamma(\cdot)$ is the gamma function, and $\mathcal{K}_\nu(\cdot)$ is modified Bessel function of the second kind \citep{Abramowitz_handbook}. The Matérn family of kernels satisfies the assumptions on kernels, including those in Assumption (\ref{assum:KernelAssumptions_main}) \citep{JMLR:v12:sriperumbudur11a}. The parameter $\nu$ directly controls the differentiability of the kernel function; as $\nu \rightarrow \infty$ , the kernel converges to the Gaussian kernel in Equation (\ref{eq:gaussian_kernel_expression}). 

For practical implementation and to explore varying levels of smoothness, we use cases where $\nu$ is a half integer in the form $\nu = p + 1/2$, which yield simplified, closed-form polynomial expressions \citep{Rasmussen2006Gaussian}:\looseness=-1
\begin{align}
 k_{\gF, \nu = p + 1/2}(f_i, f_j) = \exp\left(-\frac{\sqrt{2 \nu}}{l} \|f_i - f_j\|\right) \frac{\Gamma(p + 1)}{\Gamma(2p + 1)}\sum_{k = 0}^p \frac{(p + k)!}{k! (p - k)!} \left( \frac{\sqrt{8\nu}}{l} \|f_i - f_j\|\right)^{p - k}.
\label{eq:MaternKernelExpression_halfInteger}
\end{align}
In our ablation studies in Section (S.M. (\ref{appendix:Ablation_matern_selection})), we test the performance of our method across Matérn kernels corresponding to different integer values of $p$.
}
\postrebuttal{
\subsection{Discussion on the data splitting of KPV and KAP}
\label{appendix:Data_splitting}
We detail the data splitting strategy employed for the multi-stage {KPV} and {KAP} algorithms, which derive their samples from the full training observations $\{{y}_i, {w}_i, {z}_i, {a}_i\}_{i=1}^{t}$.

\begin{tabular}{l|c|l}
\toprule
\textbf{Algorithm Stage} & \textbf{Samples Used} & \textbf{Implementation Details} \\
\midrule
\textbf{KPV (Stage 1)} & $\{\bar{{w}}_i, \bar{{z}}_i, \bar{{a}}_i\}_{i = 1}^{n_{h}}$ & Uses $n_h = \lfloor t / 2 \rfloor$ random samples. \\
\textbf{KPV (Stage 2)} & $\{\tilde{{y}}_i, \tilde{{z}}_i, \tilde{{a}}_i\}_{i = 1}^{m_{h}}$ & Uses $m_h = t - n_h$. Samples are \textbf{disjoint} from Stage 1. \\
\textbf{KPV (Stage 3)} & $\{\dot{{w}}_i\}_{i = 1}^{t_h}$ & Uses the full set: $t_h = t$. \\
\midrule
\textbf{KAP (Stage 1)} & $\{\bar{{w}}_i, \bar{{z}}_i, \bar{{a}}_i\}_{i = 1}^{n_{\varphi}}$ & Uses $n_\varphi = \lfloor t / 2 \rfloor$ random samples. \\
\textbf{KAP (Stage 2)} & $\{\tilde{{w}}_i, \tilde{{a}}_i\}_{i = 1}^{m_{\varphi}}$ & Uses $m_\varphi = t - n_\varphi$. Samples are \textbf{disjoint} from Stage 1. \\
\textbf{KAP (Stage 3)} & $\{\dot{{y}}_i, \dot{{z}}_i, \dot{{a}}_i\}_{i = 1}^{t_\varphi}$ & Uses the full set: $t_\varphi = t$. \\
\bottomrule
\end{tabular}

\vspace{0.2cm}

While we use this structured splitting in our implementation, our consistency proofs in Section (\ref{appendix:ConsistencyResults}) {do not assume disjoint splits}, similar to \citep{ Mastouri2021ProximalCL, singh2023kernelmethodsunobservedconfounding, bozkurt2025density}. Indeed, using the full dataset per stage still retains theoretical consistency (e.g., Corollary 1 in \citep{singh2023kernelmethodsunobservedconfounding} for outcome bridge-based methods).

However, structured data splitting is useful in practice for two main reasons:
\begin{enumerate}
    \item \textbf{Hyperparameter Tuning:} For regression stages lacking a closed-form LOOCV solution (such as the second stage of both KPV and KAP), using the first-stage data as a held-out set provides a clean and non-overlapping validation loss for regularizer tuning. See Section (S.M. \ref{apendix:HyperparamTuning}) for the corresponding discussion.
    \item \textbf{Observation Types:} Splitting naturally accommodates the structure of the different marginal distributions required across stages. For instance, the KAP first-stage utilizes the marginal distribution of $({W}, {Z}, {A})$, whereas the third-stage requires data from the distribution of $({Y}, {Z}, {A})$, facilitating estimation when only partial marginals are available.
\end{enumerate}

}
\subsection{Hyperparameter optimization procedures} 
\label{apendix:HyperparamTuning}
We describe the procedures used to tune the regularization parameters $\lambda_{h,1}$, $\lambda_{h,2}$, $\lambda_{\varphi,1}$, $\lambda_{\varphi,2}$, $\lambda_{\varphi,3}$, $\lambda_{\text{MMR}}$, and $\lambda_{\text{DR}}$. Specifically, $\lambda_{h,1}$, $\lambda_{\varphi,1}$, $\lambda_{\varphi,3}$, and $\lambda_{\text{DR}}$ are selected using leave-one-out cross-validation (LOOCV), which admits a closed-form solution in the case of kernel ridge regression. In contrast, $\lambda_{h,2}$, $\lambda_{\varphi,2}$ and $\lambda_{\text{MMR}}$ are tuned using validation loss on a held-out set. To avoid repetition, we first review the LOOCV procedure in the general kernel ridge regression setting, followed by the closed-form expressions for the relevant regression stages of the KPV and KAP algorithms. We then present the validation loss formulas used to tune $\lambda_{\varphi,2}$ and $\lambda_{\text{MMR}}$.

\textbf{Leave one out cross validation in kernel ridge regression}: We consider the problem of estimating the conditional mean function $f_0(x) = \E[Y \mid X = x]$ from an observational data $\{x_i, y_i\}_{i = 1}^t \subset \R^{d_\gX} \times \R^{d_\gY}$, denote the dimensions of inputs and outputs, respectively. The kernel ridge regression (KRR) estimator for $f_0$ is given by
\begin{align}
    \hat{f} = \argmin_{f \in \gH_{\gX}} \frac{1}{t} \sum_{i = 1}^t \left\| y_i - \langle f, \phi_\gX(x_i)\rangle_{\gH_\gX}\right\|_\gY^2 + \lambda \|f\|_{\gH_\gX},
    \label{eq:general_KRR_optimization}
\end{align}
where $\gH_\gX$ is a reproducing kernel Hilbert space (RKHS) on domain $\gX$ with the associated canonical feature map $\phi_\gX(\cdot) : \gX \rightarrow \gH_\gX$, and $\lambda > 0$ is the regularization parameter. The closed-form solution to Equation (\ref{eq:general_KRR_optimization}) is
\begin{align}
    \hat{f} = \mY^\top (\mK_{XX} + t \lambda \mI)^{-1} \Phi_\gX,
    \label{eq:general_KRR_solution}
\end{align}
where $\mY = \begin{bmatrix}
    y_1 & \ldots & y_t
\end{bmatrix}^\top$, $\Phi_\gX = \begin{bmatrix}
    \phi_\gX(x_1) & \ldots & \phi_\gX(x_t)
\end{bmatrix}$, and $\mK_{XX}$ is the kernel matrix computed from the inputs $\{x_i\}_{i = 1}^t$. To select an appropriate value for $\lambda$, we employ leave-one-out cross-validation (LOOCV), which assesses generalization by sequentially excluding each data point and evaluating prediction error. The LOOCV objective is defined as
\begin{align}
    \text{LOOCV}(\lambda) &= \frac{1}{t} \sum_{j = 1}^t \left\|y_j - \hat{f}_{-j}(x_i)\right\|_\gY^2, 
    \label{eq:Definition_of_LOOCV_loss}
\end{align}
where $\hat{f}_{-j}$ KRR estimator trained on all data except the $j$-th observation. In the KRR setting, the LOOCV loss admits a closed-form expression, as given by the following result:
\begin{theorem}[Algorithm (F.1) in \citep{RahulKernelCausalFunctions}]
    Consider the kernel ridge regression setup introduced in Equation (\ref{eq:general_KRR_optimization}) where $\{x_i\}_{i = 1}^t$ denotes the input data and $\{y_i\}_{i = 1}^t$ denotes the corresponding outputs. Then, the LOOCV loss is given by
\begin{align}
\text{LOOCV}(\lambda) &= \frac{1}{t} \|\Tilde{\mH}_\lambda^{-1} \mH_\lambda \mY\|_\gY^2 = \frac{1}{t} \Tr \left(\Tilde{\mH}_\lambda^{-1} \mH_\lambda \mY \mY^\top \mH_\lambda^\top \Tilde{\mH}_\lambda^{-\top} \right),
    \label{eq:KRR_LOOCV}
\end{align}
where 
\begin{align*}
    \mH_\lambda &= \mI - \mK_{X X} (\mK_{XX} + n \lambda \mI)^{-1} \in \R^{t \times t},\quad \Tilde{\mH}_\lambda = \text{diag}(\mH_\lambda) \in \R^{t \times t}.
\end{align*}
\label{thm:LOOCV_KRR_closed_form}
\end{theorem}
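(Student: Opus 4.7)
The plan is to prove the closed-form LOOCV identity by invoking the standard "leave-one-out lemma" for linear smoothers and then assembling the per-point residuals into matrix form. First, I would note from Equation~(\ref{eq:general_KRR_solution}) that the vector of training predictions is $\hat{\mY} = \mS_\lambda \mY$, where $\mS_\lambda = \mK_{XX}(\mK_{XX} + t\lambda \mI)^{-1}$ is the "smoother" (hat) matrix. Consequently $\mH_\lambda = \mI - \mS_\lambda$, and the training-residual vector is $\mY - \hat{\mY} = \mH_\lambda \mY$.

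The core step is the classical leave-one-out identity: for each $j$, the deleted-residual satisfies
\begin{equation*}
y_j - \hat{f}_{-j}(x_j) \;=\; \frac{y_j - \hat{f}(x_j)}{1 - (\mS_\lambda)_{jj}} \;=\; \frac{(\mH_\lambda \mY)_j}{(\mH_\lambda)_{jj}}.
\end{equation*}
I would establish this via the standard "ghost sample" argument: replace $y_j$ by an auxiliary value $y_j^\star$ and observe that if one picks $y_j^\star = \hat{f}_{-j}(x_j)$, then the KRR minimizer on the perturbed data coincides with $\hat{f}_{-j}$ (since the $j$-th term contributes zero residual to the objective, whose solution is unique by strict convexity of the ridge penalty). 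Linearity of $\hat{f}(x_j;\cdot)$ in the targets then gives $\hat{f}(x_j; \mY) - \hat{f}_{-j}(x_j) = (\mS_\lambda)_{jj}(y_j - y_j^\star)$, and rearranging produces the identity above.

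Finally, I would stack these per-point identities into matrix form. Writing $\tilde{\mH}_\lambda = \operatorname{diag}(\mH_\lambda)$, the vector of LOO residuals equals $\tilde{\mH}_\lambda^{-1} \mH_\lambda \mY$, so
\begin{equation*}
\mathrm{LOOCV}(\lambda) = \frac{1}{t}\sum_{j=1}^t \|y_j - \hat{f}_{-j}(x_j)\|_\gY^2 = \frac{1}{t}\|\tilde{\mH}_\lambda^{-1}\mH_\lambda \mY\|_\gY^2,
\end{equation*}
and the trace form follows from $\|\mM\|^2 = \operatorname{Tr}(\mM \mM^\top)$ applied to $\mM = \tilde{\mH}_\lambda^{-1}\mH_\lambda \mY$.

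The main obstacle is the leave-one-out identity itself: one must be careful that the ghost-sample argument goes through even when $y_i$ is vector-valued (so $\mY \in \mathbb{R}^{t\times d_\gY}$ and the per-point predictions are vectors), and that $(\mH_\lambda)_{jj} \ne 0$ so the inversion of $\tilde{\mH}_\lambda$ is well-defined. The former is handled by noting that KRR with vector targets decouples across output coordinates sharing the same smoother $\mS_\lambda$, so the scalar argument applies componentwise; the latter follows because $\mS_\lambda = \mK_{XX}(\mK_{XX}+t\lambda\mI)^{-1}$ has spectrum strictly less than $1$ for $\lambda > 0$, hence $(\mH_\lambda)_{jj} = 1 - (\mS_\lambda)_{jj} > 0$.
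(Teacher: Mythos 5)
Your proposal is correct, and it supplies something the paper itself does not: the paper offers no proof of this theorem, deferring entirely to Algorithm (F.1) of the cited reference. Your argument is the standard one underlying that reference — identify KRR on the training inputs as a linear smoother $\hat{\mY} = \mS_\lambda \mY$ with $\mS_\lambda = \mK_{XX}(\mK_{XX}+t\lambda\mI)^{-1}$, prove the deleted-residual identity $y_j - \hat{f}_{-j}(x_j) = (\mH_\lambda\mY)_j/(\mH_\lambda)_{jj}$ by the ghost-sample trick, and stack. The ghost-sample step is sound: replacing $y_j$ by $y_j^\star = \hat{f}_{-j}(x_j)$ makes $\hat{f}_{-j}$ the unique minimizer of the perturbed full-data objective, and linearity of the solution in the targets yields the identity. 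Your two flagged obstacles are also handled correctly — vector-valued outputs decouple across coordinates through the common smoother, and $(\mH_\lambda)_{jj} = 1 - (\mS_\lambda)_{jj} > 0$ because $\mS_\lambda$ is symmetric PSD with operator norm strictly below one for $\lambda>0$. The one point you should make explicit is the normalization convention for the leave-one-out fit: your argument requires $\hat{f}_{-j}$ to minimize $\sum_{i\ne j}\|y_i - f(x_i)\|_\gY^2 + t\lambda\|f\|_{\gH_\gX}^2$, i.e.\ the same \emph{absolute} penalty $t\lambda$ as the full fit, rather than the renormalized objective $\frac{1}{t-1}\sum_{i\ne j}(\cdot) + \lambda\|f\|^2$; the closed form in the theorem is exact only under the former convention, which is the one the cited algorithm adopts. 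With that caveat stated, the proof is complete.
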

The regularization parameter $\lambda$ can then be selected by minimizing the LOOCV loss over a predefined grid $\Lambda \subset \R$:
\begin{align*}
    \lambda^* = \argmin_{\Lambda \subset \R} \frac{1}{t} \|\Tilde{\mH}_\lambda^{-1} \mH_\lambda \mY\|_\gY^2.
\end{align*}
The proof of Theorem (\ref{thm:LOOCV_KRR_closed_form}) can be found in \citet{RahulKernelCausalFunctions} (see Algorithm (F.1)). We apply Theorem (\ref{thm:LOOCV_KRR_closed_form}) to tune the regularization parameters $\lambda_{h,1}$, $\lambda_{\varphi,1}$, $\lambda_{\varphi,3}$, and $\lambda_{\text{DR}}$.

\subsubsection{Hyperparameter selection for \texorpdfstring{$\lambda_{h,1}$, $\lambda_{\varphi,1}$, $\lambda_{\varphi,3}$, and $\lambda_{\text{DR}}$}{Lg}}
Below, we present the application of the LOOCV tuning procedure to the regularization parameters $\lambda_{h,1}$, $\lambda_{\varphi,1}$, $\lambda_{\varphi,3}$, and $\lambda_{\text{DR}}$.
\begin{itemize}
    \item \textbf{KPV first-stage regression:} The first-stage regression in the KPV algorithm, given samples $\{\bar{w}_i, \bar{z}_i, \bar{a}_i\}_{i = 1}^{n_{h}}$, is a kernel ridge regression from inputs $\{\phi_\gZ(\bar{z}_i) \otimes \phi_\gA(\bar{a}_i)\}_{i = 1}^{n_h}$ to the outcomes $\{\phi_\gW(\bar{w}_i)\}_{i = 1}^{n_h}$. Therefore, the LOOCV loss for $\lambda_{h, 1}$ is given by
    \begin{align*}
        \text{LOOCV}(\lambda_{h, 1}) = \frac{1}{n_h} \Tr \left(\Tilde{\mH}_{\lambda_{h, 1}}^{-1} \mH_{\lambda_{h, 1}} \mK_{\bar{W}\bar{W}} \mH_{\lambda_{h, 1}}^\top \Tilde{\mH}_{\lambda_{h, 1}}^{-\top} \right),
    \end{align*}
where 
\begin{align*}
    \mH_{\lambda_{h, 1}} &= \mI - \left(\mK_{\bar{Z}\bar{Z}} \odot \mK_{\bar{A}\bar{A}}\right) (\mK_{\bar{Z}\bar{Z}} \odot \mK_{\bar{A}\bar{A}} + n_h \lambda_{h,1} \mI)^{-1} \in \R^{n_h \times n_h},\\ \Tilde{\mH}_{\lambda_{h, 1}} &= \text{diag}(\mH_{\lambda_{h, 1}}) \in \R^{n_h \times n_h}.
\end{align*}
We use a logarithmically spaced grid of $25$ values in the range $[5 \times 10^{-5}, 1]$ and select the value of $\lambda_{h,1}$ that minimizes the LOOCV loss. 
\item \textbf{KAP first-stage regression:} The first-stage regression in the KAP algorithm, using samples $\{\bar{w}_i, \bar{z}_i, \bar{a}_i\}_{i = 1}^{n_{\varphi}}$, is a kernel ridge regression from inputs $\{\phi_\gW(\bar{w}_i) \otimes \phi_\gA(\bar{a}_i)\}_{i = 1}^{n_\varphi}$ to the outcomes $\{\phi_\gZ(\bar{z}_i)\}_{i = 1}^{n_\varphi}$. Hence, the LOOCV loss for $\lambda_{\varphi, 1}$ is given by    
\begin{align*}
        \text{LOOCV}(\lambda_{\varphi, 1}) = \frac{1}{n_\varphi} \Tr \left(\Tilde{\mH}_{\lambda_{\varphi, 1}}^{-1} \mH_{\lambda_{\varphi, 1}} \mK_{\bar{Z}\bar{Z}} \mH_{\lambda_{\varphi, 1}}^\top \Tilde{\mH}_{\lambda_{\varphi, 1}}^{-\top} \right),
    \end{align*}
where 
\begin{align*}
    \mH_{\lambda_{\varphi, 1}} &= \mI - \left(\mK_{\bar{W}\bar{W}} \odot \mK_{\bar{A}\bar{A}}\right) (\mK_{\bar{W}\bar{W}} \odot \mK_{\bar{A}\bar{A}} + n_\varphi \lambda_{\varphi, 1} \mI)^{-1} \in \R^{n_\varphi \times n_\varphi},\\ \Tilde{\mH}_{\lambda_{\varphi, 1}} &= \text{diag}(\mH_{\lambda_{\varphi, 1}}) \in \R^{n_\varphi \times n_\varphi}.
\end{align*}
We perform a grid search over 25 logarithmically spaced values in the range  $[5 \times 10^{-5}, 1]$ and select the value of $\lambda_{\varphi,1}$ that minimizes the LOOCV loss. 
\item \textbf{KAP third-stage regression:} With the third-stage data $\{\dot{y}_i, \dot{z}_i, \dot{a}_i\}_{i = 1}^{t_\varphi}$, the KAP algorithm performs kernel ridge regression from the inputs $\{\phi_\gA(\dot{a}_i)\}_{i = 1}^{t_\varphi}$ to the outcomes $\{\dot{y}_i \phi_\gZ(\dot{z}_i)\}_{i = 1}^{t_\varphi}$. Thus, the LOOCV loss for $\lambda_{\varphi, 3}$ is given by
\begin{align*}
        \text{LOOCV}(\lambda_{\varphi, 3}) = \frac{1}{t_\varphi} \Tr \left(\Tilde{\mH}_{\lambda_{\varphi, 3}}^{-1} \mH_{\lambda_{\varphi, 3}} \left(\mK_{\dot{Z}\dot{Z}} \odot \dot{\mY} \dot{\mY}^\top\right) \mH_{\lambda_{\varphi, 3}}^\top \Tilde{\mH}_{\lambda_{\varphi, 3}}^{-\top} \right),
    \end{align*}
where 
\begin{align*}
    \mH_{\lambda_{\varphi, 3}} &= \mI -  \mK_{\dot{A}\dot{A}} (\mK_{\dot{A}\dot{A}} + t_\varphi \lambda_{\varphi, 3} \mI)^{-1} \in \R^{t_\varphi \times t_\varphi},\\ \Tilde{\mH}_{\lambda_{\varphi, 3}} &= \text{diag}(\mH_{\lambda_{\varphi, 3}}) \in \R^{t_\varphi \times t_\varphi}.
\end{align*}
We perform a grid search over $25$ logarithmically spaced values in the range  $[5 \times 10^{-5}, 1]$ and select the value of $\lambda_{\varphi,3}$ that minimizes the LOOCV loss. 
\item \textbf{Slack term estimation:} Our doubly robust estimator includes a term of the form $\E[ \varphi_0(Z, a) h_0(W, a) \mid A = a]$, which requires estimating the conditional mean embedding $E\left[ \phi_\gZ(Z) \otimes \phi_\gW(W) \mid A = a\right]$ as we derive in Section (S.M. \ref{section:DR_Algo_Slack_Term_Estimation_Appendix}). Using training data $\{z_i, w_i, a_i\}_i^{t}$, we fit a kernel ridge regression from the inputs $\{\phi_{\gA}(a_i)\}_{i = 1}^t$ to the outputs $\{\phi_\gZ(z_i) \otimes \phi_\gW(w_i)\}_{i = 1}^t$ with the regularization parameter $\lambda_{\text{DR}}$. As a result, the LOOCV for $\lambda_{\text{DR}}$ is given by
\begin{align*}
        \text{LOOCV}(\lambda_{\text{DR}}) = \frac{1}{t} \Tr \left(\Tilde{\mH}_{\lambda_{\text{DR}}}^{-1} \mH_{\lambda_{\text{DR}}} \left(\mK_{{Z}{Z}} \odot \mK_{{W}{W}}\right) \mH_{\lambda_{\text{DR}}}^\top \Tilde{\mH}_{\lambda_{\text{DR}}}^{-\top} \right),
    \end{align*}
where 
\begin{align*}
    \mH_{\lambda_{\text{DR}}} &= \mI -  \mK_{{A}{A}} (\mK_{{A}{A}} + t \lambda_{\text{DR}} \mI)^{-1} \in \R^{t \times t},\\ \Tilde{\mH}_{\lambda_{\text{DR}}} &= \text{diag}(\mH_{\lambda_{\text{DR}}}) \in \R^{t \times t}.
\end{align*}
For $\lambda_{\text{DR}}$, we use a grid search over $25$ logarithmically spaced values in the range  $[5 \times 10^{-5}, 1]$ to minimize the LOOCV loss.
\end{itemize}

\subsubsection{Hyperparameter selection for \texorpdfstring{$\lambda_{h,2}$, $\lambda_{\varphi,2}$, and $\lambda_{\text{MMR}}$}{Lg}}
In this section, we provide details of the tuning procedures of the other regularization parameters $\lambda_{h,2}$, $\lambda_{\varphi,2}$, and $\lambda_{\text{MMR}}$. Specifically, we leverage the validation loss on a held-out set to tune these parameters.

\begin{itemize}
    \item \textbf{KPV second-stage regression:} The second-stage regression in KPV approximates the outcome bridge function via optimizing the objective given in Equation~(\ref{eq:KPV_sampleObjective}) To tune the regularization parameter $\lambda_{h,2}$, we treat the first-stage samples $\{\bar{w}_i, \bar{z}_i, \bar{a}_i\}_{i = 1}^{n_{h}}$ as a held-out validation set and minimize the validation loss:
\begin{align*}
    \lambda_{h,2}^* = \argmin \frac{1}{n_h} \sum_{i = 1}^{n_h} \left(\bar{y}_i - \langle \hat{h}, \hat{\mu}_{W | Z, A}(\bar{z}_i, \bar{a}_i) \otimes \phi_\gA(\bar{a}_i)\rangle\right)^2,
\end{align*}
where $\hat{h}$ is the solution to the optimization problem in Equation~(\ref{eq:KPV_sampleObjective}), as described in Algorithm~(\ref{algo:KPVAlgorithm}), and $\hat{\mu}_{W | Z, A}$ denotes the estimated conditional mean embedding from the first stage.

In our experiments, we used a grid of $25$ logarithmically spaced values in the range $[5 \times 10^{-5}, 1]$.

    \item \textbf{KAP second-stage regression:} The second-stage regression of KAP estimates the treatment bridge function via optimizing the objective given in Equation (\ref{eq:2StageRegressionObjectiveWithConfoundersFinal}). To tune the regularization parameter $\lambda_{\varphi, 2}$, we treat the first-stage samples $\{\bar{w}_i, \bar{z}_i, \bar{a}_i\}_{i = 1}^{n_{\varphi}}$ as a held-out validation set and minimize the validation loss:
\begin{align*}
    \hat{\gL}(\varphi)_{\text{KAP}}^{\text{Val}} & =\frac{1}{n_{\varphi}} \sum_{i = 1}^{n_\varphi} \langle \varphi, \hat{\mu}_{Z|W, A} (\bar{w}_i, \bar{a}_i) \otimes \phi_\gA(\bar{a}_i) \rangle_{\gH_{\gZ\gA}}^2 \nonumber \\&-2 \frac{1}{{m_{\varphi}} ({n_{\varphi}}-1)} \sum_{i = 1}^{n_{\varphi}} \sum_{\substack{j = 1 \\ j \ne i}}^{m_{\varphi}} \Big\langle \varphi, \hat{\mu}_{Z|W, A} (\bar{w}_j, \bar{a}_i) \otimes \phi_\gA(\bar{a}_i) \Big\rangle_{\gH_{\gZ\gA}}.
\end{align*}
This objective admits a closed-form expression: 
\begin{align}
    \hat{\gL}_{\text{KAP}}^{\text{Val}}(\varphi) &=\frac{1}{n} \begin{bmatrix}
        \alpha_{1:m_\varphi} \\ \alpha_{m_\varphi + 1}
    \end{bmatrix}
    \begin{bmatrix}
        \mB^T \mK_{Z Z}\mC \odot \mK_{\Tilde{A} {A}} \\ (\frac{\vone}{m_\varphi})^T \big(\Tilde{\mB}^T \mK_{Z Z} {\mC} \odot \mK_{\Tilde{A}{A}}\big)
    \end{bmatrix} 
    \begin{bmatrix}
        \mC^T \mK_{Z Z}\mB \odot \mK_{{A}\Tilde{A}} \\ \big({\mC}^T \mK_{Z Z} \Tilde{\mB} \odot \mK_{{A}\Tilde{A}}\big) \frac{\vone}{m_\varphi}
    \end{bmatrix}^\top 
    \begin{bmatrix}
        \alpha_{1:m_\varphi}\\ \alpha_{m_\varphi+1}
    \end{bmatrix}\nonumber
    \\& -2\begin{bmatrix}
        \alpha_{1:m_\varphi} \\ \alpha_{m_\varphi + 1}
    \end{bmatrix}^\top 
    \begin{bmatrix}
        \big( \mB^\top \mK_{Z Z} \bar{\mC} \odot \mK_{\Tilde{A} A}\big) \frac{\vone}{n_\varphi}\\
        (\frac{\vone}{m_\varphi})^\top \big(\Tilde{\mB}^\top \mK_{Z Z} \bar{\mC} \odot \mK_{\Tilde{A} A}) \frac{\vone}{n_\varphi}
    \end{bmatrix}.\nonumber
\end{align}
Here, the matrices $\mC$ and $\bar{\mC}$ are defined as: 
\begin{align*}
    \mC &= \Big( \mK_{\bar{W} \bar{W}} \odot \mK_{\bar{A} \bar{A}} + n_\varphi \lambda_{\varphi, 1} \mI\Big)^{-1} (\mK_{\bar{W} \bar{W}} \odot \mK_{\bar{A} \bar{A}}),\\
\bar{\mC}_{:, j} &= \frac{1}{n_\varphi} \sum_{\substack{l = 1 \\ l \ne j}}^{n_\varphi}\Big( \mK_{\bar{W} \bar{W}}  \odot \mK_{\bar{A} \bar{A}} + n_\varphi \lambda_{\varphi, 1} \mI\Big)^{-1} (\mK_{\bar{W} \bar{w}_l} \odot \mK_{\bar{A} \bar{a}_j}), \quad \forall j.
\end{align*}
For full derivation, see \citet[Section (13.2.2)]{bozkurt2025density}. To avoid overfitting, \citet{bozkurt2025density} additionally propose minimizing the validation loss augmented by a model complexity penalty:
\begin{align*}
    \lambda_{\varphi, 2}^* = \argmin \hat{\gL}_{\text{KAP}}^{\text{Val}}(\varphi) + \frac{2 \sigma_\varphi^2}{m_\varphi} \text{Tr} \Big( \big( \mL^T \mL + m \lambda_{\varphi, 2} \mI\big)^{-1} \mL^T \mL \Big).
\end{align*}
for some fixed $\sigma_\varphi > 0$.

In our experiments, we used a grid of $25$ logarithmically spaced values in the range $[5 \times 10^{-5}, 1]$ to tune $\lambda_{\varphi, 2}$. Following the complexity regularization parameters in \citet{bozkurt2025density}, we set $\sigma_\varphi = 1$ or the synthetic low-dimensional setting as well as the legalized abortion and crime dataset, and $\sigma_\varphi = 3$ for dSprite and grade retention datasets.

\item \textbf{PMMR regularization parameter tuning:} To tune the regularization parameter $\lambda_{\text{MMR}}$, we follow a procedure similar to that used in the second-stage regression of the KPV algorithm. Specifically, we set aside a small validation subset from the training data $\{y_i, w_i, z_i, a_i\}_{i=1}^{t}$ and evaluate the validation loss based on the bridge function predictions. We use a grid of $25$ logarithmically spaced values in the range $[5 \times 10^{-5}, 10^{-3}]$ to tune $\lambda_{MMR}$, with $10\%$ of the training set held out as a validation set.
\end{itemize}

\subsection{Additional numerical experiments with misspecified bridge functions} 
\label{sec:AdditionalNumericalExperiments_MisspecifiedBridgeFunction}
\postrebuttal{ A higher-resolution version of the misspecification analysis presented in Section (\ref{sec:NumericalExperiments_main}) (Figure (\ref{fig:DoublyRobustMethodATEMisspecifiedComparison})) is provided below in Figure (\ref{fig:DoublyRobustMethodATEMisspecifiedComparison_appendix}) for enhanced legibility. }

Here, we conduct an additional ablation study to evaluate the robustness of our methods under misspecification of bridge functions, building on the experimental setup from Section (\ref{sec:NumericalExperiments_main}). In this experiment, we introduce higher levels of noise into the bridge function coefficients. Recall that, by the representer theorem, the bridge functions admit the following forms:
\begin{itemize}
    \item KPV: $\hat{h} = \sum_{i = 1}^{n_{h}}\sum_{j = 1}^{m_{h}} \alpha_{ij} \phi_\gW(w_i) \otimes \phi_\gA(\Tilde{a}_j)$,
    \item PMMR: $\hat{h} = \sum_{i = 1}^{t} \alpha_{i} \phi_\gW(w_i) \otimes \phi_\gA({a}_i)$,
    \item KAP: ${\hat{\varphi} = \sum_{l = 1}^{m_\varphi} \sum_{\substack{t = 1}}^{m_\varphi} \gamma_{lt} \hat{\mu}_{Z|W, A} (\Tilde{w}_t, \Tilde{a}_l) \otimes \phi_\gA(\Tilde{a}_l)}$
\end{itemize}
To simulate misspecification, we first train DRKPV (or DRPMMR) in the synthetic low-dimensional setting and then perturb either the outcome (KPV/PMMR) or treatment (KAP) bridge coefficients by injecting Gaussian noise. 

Figures~(\ref{fig:DRKPV_Outcome_Misspecified_V2}) and~(\ref{fig:DRKPV_Treatment_Misspecified_V2}) display DRKPV results, averaged over five independent runs with standard deviation bands. In Figure~(\ref{fig:DRKPV_Outcome_Misspecified_V2}), outcome bridge coefficients are perturbed via $\alpha_{ij} \leftarrow \alpha_{ij} + \varepsilon_{ij}$, where $\varepsilon_{ij} \sim \mathcal{N}(0, 0.5)$. Similarly, in Figure~(\ref{fig:DRKPV_Treatment_Misspecified_V2}), treatment bridge coefficients are jittered as $\gamma_{ij} \leftarrow \gamma_{ij} + \varepsilon_{ij}$ with $\varepsilon_{ij} \sim \mathcal{N}(0, 0.5)$. Despite the misspecification, DRKPV successfully recovers the true causal function, with the slack term compensating for the introduced error.

Figures~(\ref{fig:DRPMMR_Outcome_Misspecified_V2}) and~(\ref{fig:DRPMMR_Treatment_Misspecified_V2}) present analogous results for DRPMMR, where Gaussian noise $\mathcal{N}(0, 0.5)$ is added to one of the learned bridge functions. As with DRKPV, DRPMMR remains robust, accurately recovering the true causal effect even under significant misspecification.

\begin{figure*}[ht!]
\centering
\subfloat[DRKPV, Outcome Bridge Misspecified]
{\includegraphics[trim = {0cm 0cm 0cm 0.0cm},clip,width=0.42\textwidth]{image/DRKPV_ATE_OutcomeBridge_Misspecified_V1.pdf} \label{fig:DRKPV_Outcome_Misspecified_V1_appendix}}
\hfill 
\subfloat[DRKPV, Treatment Bridge Misspecified]
{\includegraphics[trim = {0cm 0cm 0cm 0.0cm},clip,width=0.42\textwidth]{image/DRKPV_ATE_TreatmentBridge_Misspecified_V1.pdf}
\label{fig:DRKPV_Treatment_Misspecified_V1_appendix}}

\vspace{0.1cm} 

\subfloat[DRPMMR, Outcome Bridge Misspecified]
{\includegraphics[trim = {0cm 0cm 0cm 0.0cm},clip,width=0.42\textwidth]{image/DRPMMR_ATE_OutcomeBridge_Misspecified_V1.pdf}
\label{fig:DRPMMR_Outcome_Misspecified_V1_appendix}}
\hfill 
\subfloat[DRPMMR, Treatment Bridge Misspecified]
{\includegraphics[trim = {0cm 0cm 0cm 0.0cm},clip,width=0.42\textwidth]{image/DRPMMR_ATE_TreatmentBridge_Misspecified_V1.pdf}
\label{fig:DRPMMR_Treatment_Misspecified_V1_appendix}}

\caption{(Duplicate of Figure (\ref{fig:DoublyRobustMethodATEMisspecifiedComparison})) Experimental results in bridge function misspecifications with the synthetic low-dimensional data with jittering noise sampled from $\mathcal{N}(0, 0.2)$: (a) DRKPV estimates under outcome bridge misspecification; (b) DRKPV estimates under treatment bridge misspecification; (c) DRPMMR estimates under outcome bridge misspecification; and (d) DRPMMR estimates under treatment bridge misspecification.}
\label{fig:DoublyRobustMethodATEMisspecifiedComparison_appendix}
\end{figure*}

\begin{figure*}[ht!]
\centering
\subfloat[DRKPV, Outcome Bridge Misspecified]
{\includegraphics[trim = {0cm 0cm 0cm 0.0cm},clip,width=0.41\textwidth]{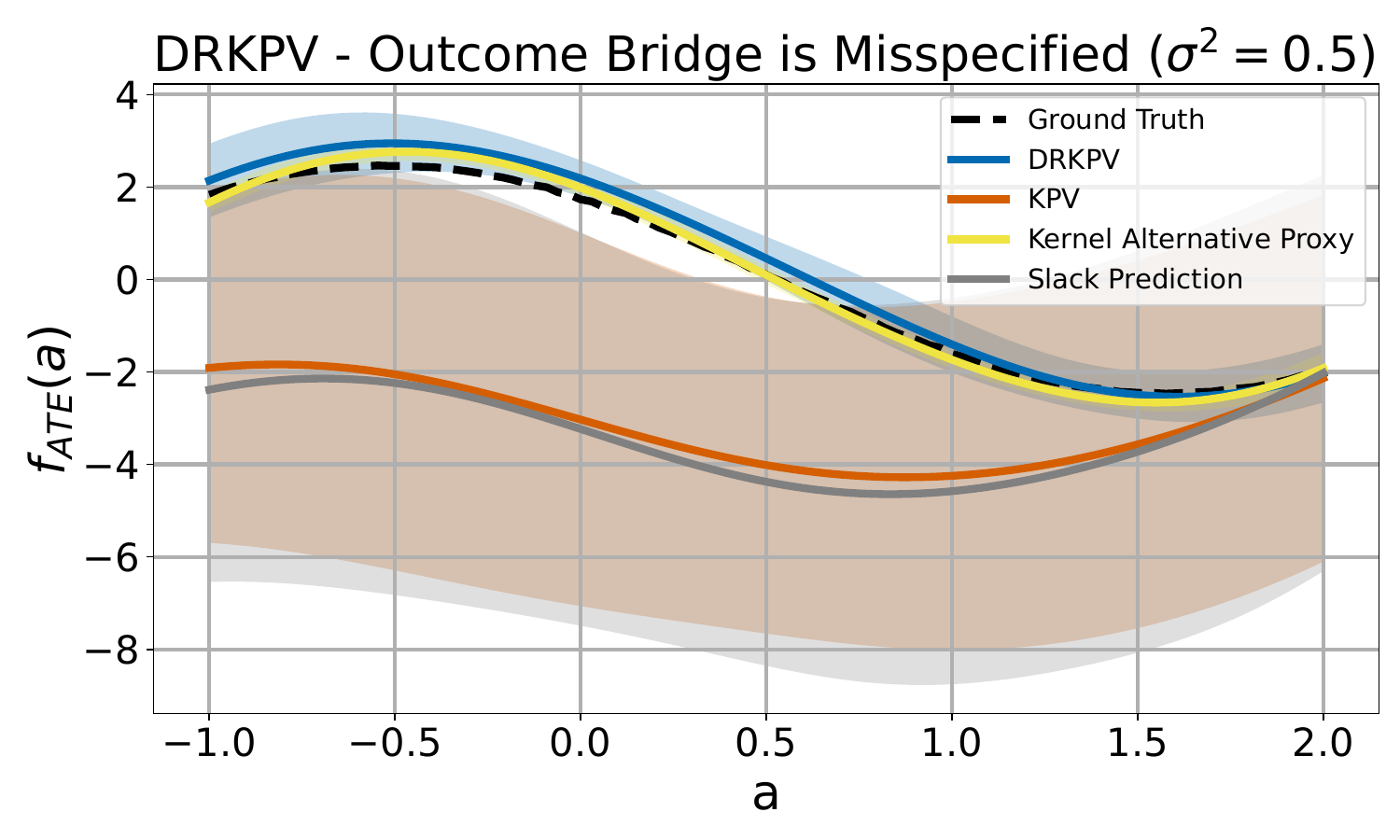} \label{fig:DRKPV_Outcome_Misspecified_V2}}
\hfill 
\subfloat[DRKPV, Treatment Bridge Misspecified]
{\includegraphics[trim = {0cm 0cm 0cm 0.0cm},clip,width=0.41\textwidth]{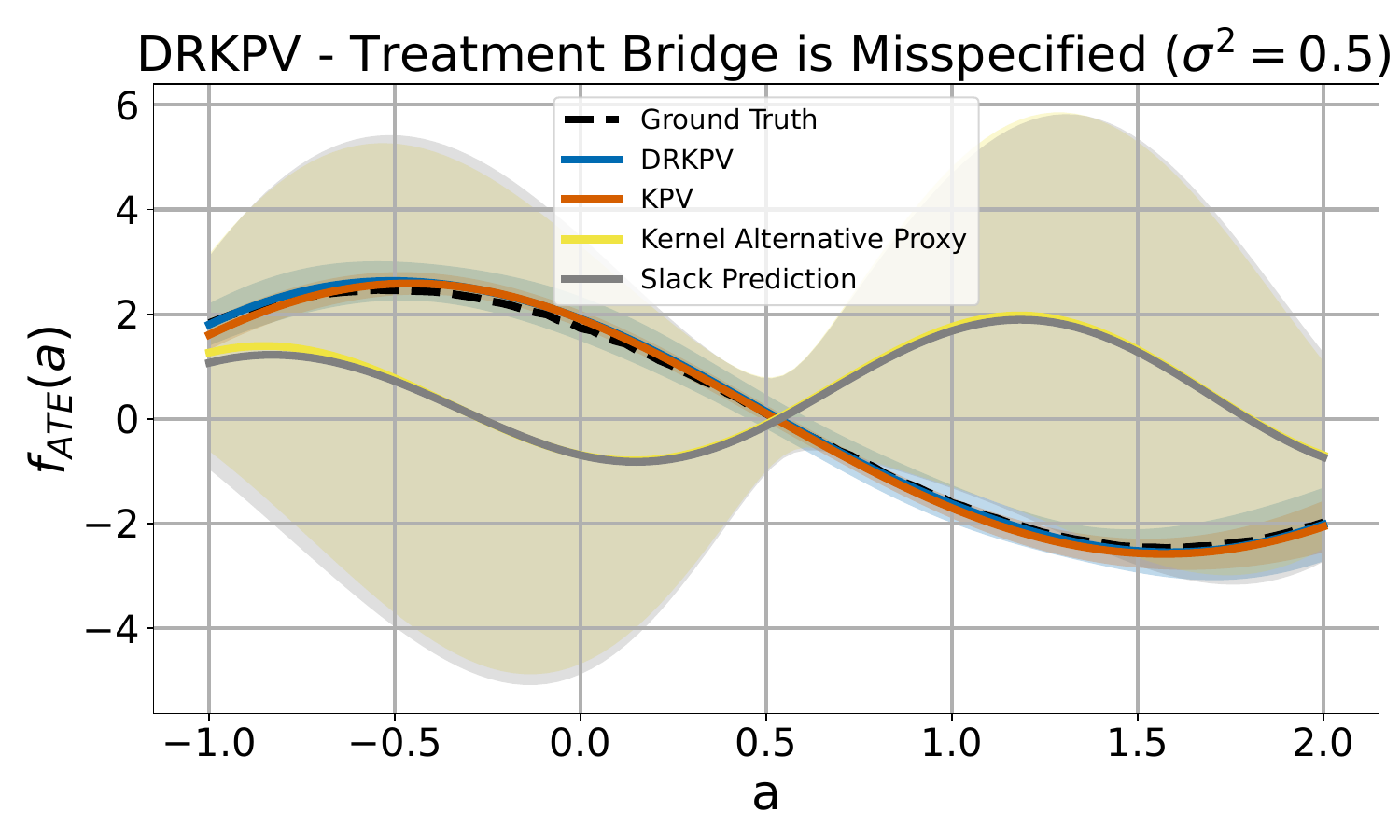}
\label{fig:DRKPV_Treatment_Misspecified_V2}}

\vspace{0.1cm} 

\subfloat[DRPMMR, Outcome Bridge Misspecified]
{\includegraphics[trim = {0cm 0cm 0cm 0.0cm},clip,width=0.41\textwidth]{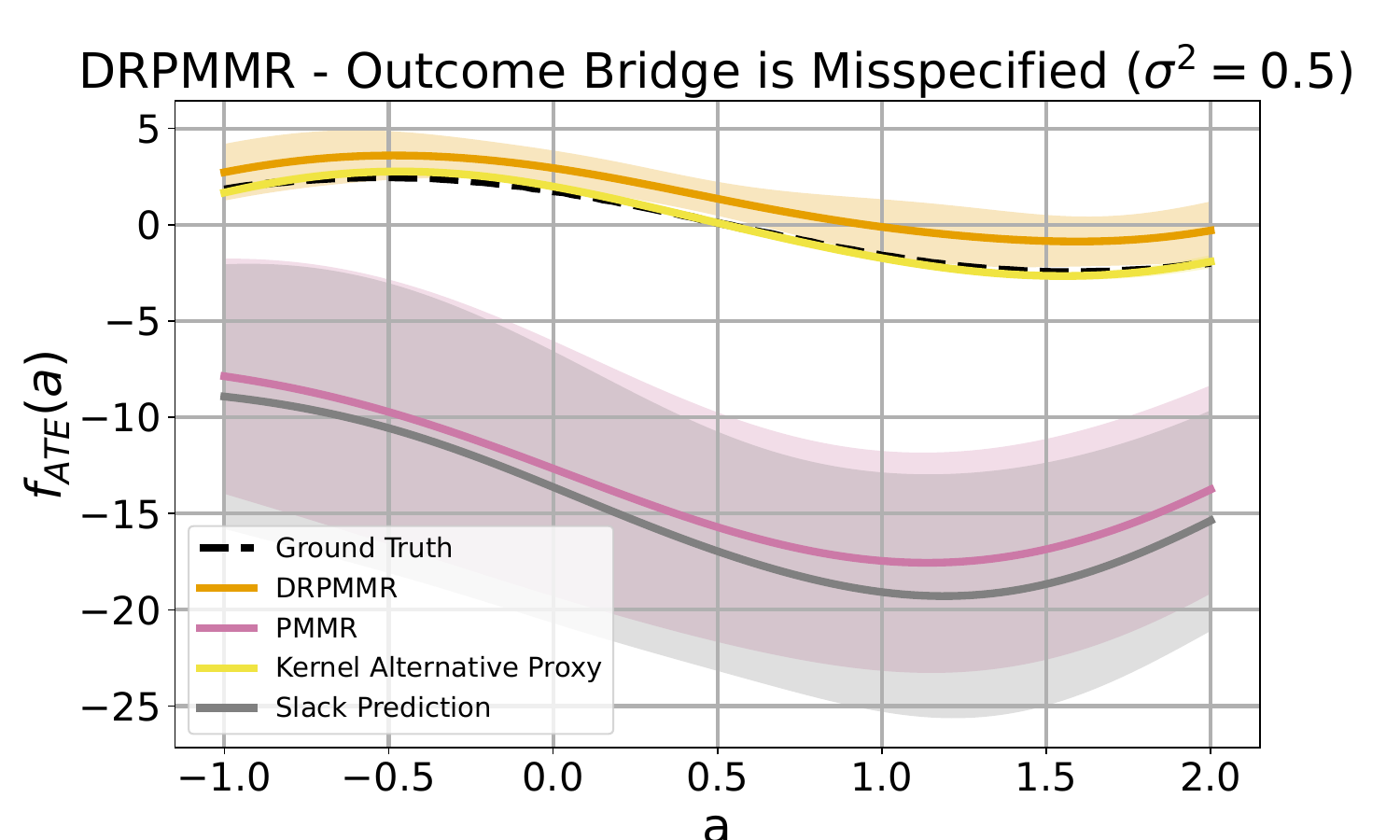}
\label{fig:DRPMMR_Outcome_Misspecified_V2}}
\hfill 
\subfloat[DRPMMR, Treatment Bridge Misspecified]
{\includegraphics[trim = {0cm 0cm 0cm 0.0cm},clip,width=0.41\textwidth]{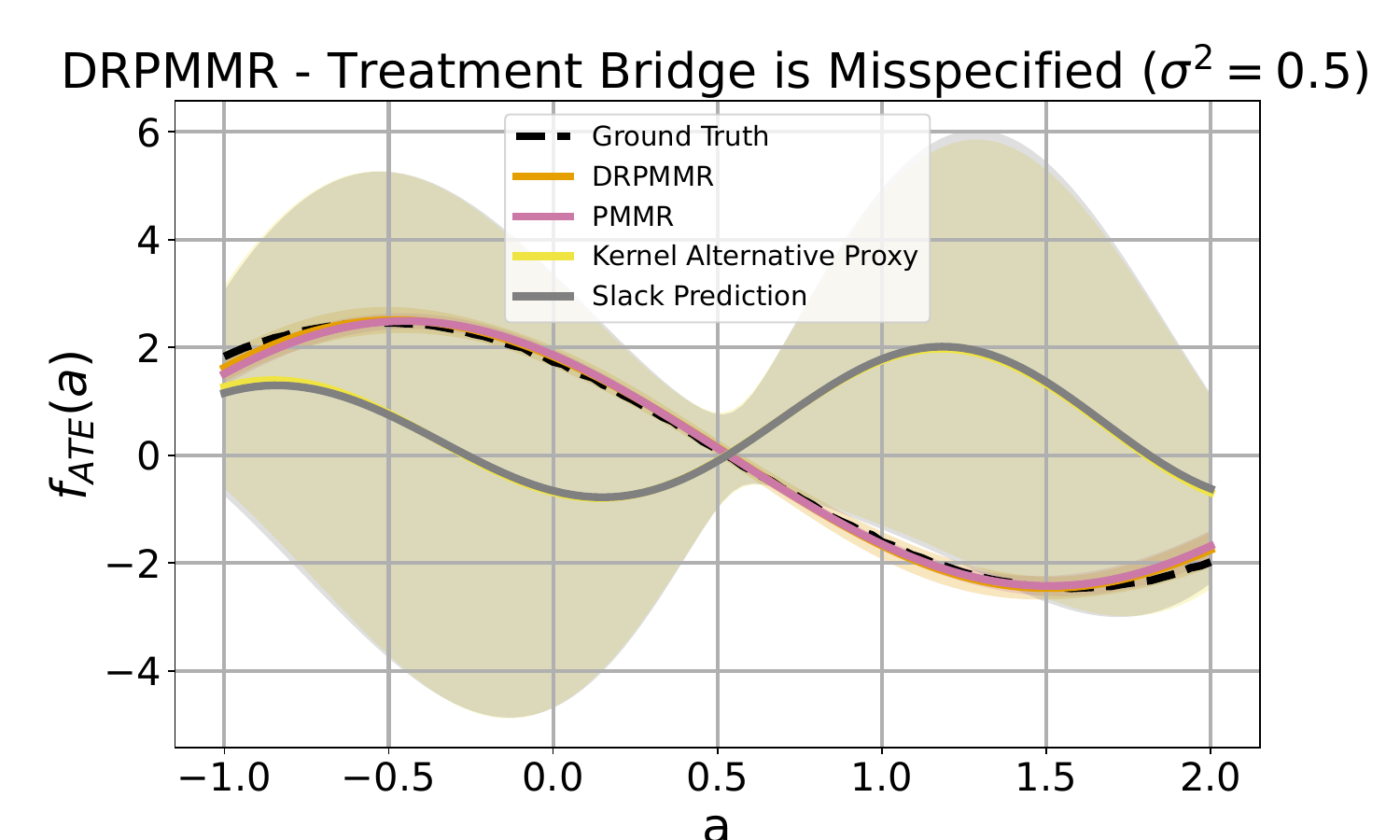}
\label{fig:DRPMMR_Treatment_Misspecified_V2}}

\caption{Experimental results under bridge function misspecifications with the synthetic low dimensional data with jittering noise sampled from $\mathcal{N}(0, 0.5)$: (a) DRKPV estimation when the outcome bridge is misspecified, (b) DRKPV estimation when the treatment bridge function is misspecified, (c) DRPMMR estimation when the outcome bridge function is misspecified, (d) DRPMMR estimation when the treatment bridge is misspecified.}
\label{fig:DoublyRobustMethodATEMisspecifiedComparison_AppendixHigherNoise}
\end{figure*}

\newpage
\subsection{Numerical experiments with Job Corps dataset} 
\label{sec:AdditionalNumericalExperiments_JobCorp}
In this section, we present numerical experiments with the Job Corps dataset \citep{Schochet_JobCorps, Flores_JobCorps}, which we accessed through the public repository provided by \citet{RahulKernelCausalFunctions} (\url{https://github.com/liyuan9988/KernelCausalFunction/tree/master}).
The U.S. Job Corps Program is an educational initiative designed to support disadvantaged youth. In the corresponding observational dataset, the treatment variable $A$ captures the total number of hours participants spent in academic or vocational training, whereas the outcome variable $Y$ measures the proportion of weeks the participant was employed during the program’s second year. The covariate vector $U \in \mathbb{R}^{65}$ includes demographic and socioeconomic features such as gender, ethnicity, age, language proficiency, education level, marital status, household size, among others.

To adapt this dataset to the proximal causal learning (PCL) framework, \citet{bozkurt2025density} proposed synthetic proxy generation schemes specifically crafted to test the limits of the completeness conditions stated in Assumptions (\ref{assum:OutcomeBridgeCompleteness}) and (\ref{assum:TreatmentBridgeCompleteness}). We adopt the six experimental settings introduced in their work to evaluate the performance of our proposed doubly robust estimators.

\textbf{Setting 1:} Let $W = U + \epsilon_w$ and $Z = g\left(U^{(1:20)} / \max\left(U^{(1:20)}\right)\right) + \epsilon_z$
where  $g(x) = 0.8 \frac{\exp(x)}{1 + \exp(x)} + 0.1$ is applied elementwise, and both the division and maximum operations are performed elementwise. Here, the notation $U^{(1:20)}$ refers to the first $20$ components of the vector $U$.  Noise terms are sampled as $\epsilon_w^{(i)} \sim \gN(0, 1) \enspace \forall i = 1, \ldots, 65$, $\epsilon_z^{(i)} \sim \gU[-1, 1] \enspace \forall i = 1, \ldots, 20$. 

\textbf{Setting 2:} Let $Z = U + \epsilon_z$ and $W = g\left(U^{(1:20)} / \max\left(U^{(1:20)}\right)\right) + \epsilon_w$ with 
$\epsilon^{(i)}_z \sim \gN(0, 1) \enspace \forall i$, $\epsilon_w^{(i)} \sim \gU[-1, 1] \enspace \forall i$.

\textbf{Setting 3:} Let $W = U + \epsilon_w$ and $Z = g\left(U^{(20:40)} / \max\left(U^{(20:40)}\right)\right) + \epsilon_z$ with 
$\epsilon^{(i)}_w \sim \gN(0, 1) \enspace \forall i$, $\epsilon_z^{(i)} \sim \gU[-1, 1] \enspace \forall i$.

\textbf{Setting 4:} Let $Z = U + \epsilon_z$ and $W = g\left(U^{(20:40)} / \max\left(U^{(20:40)}\right)\right) + \epsilon_w$ with 
$\epsilon^{(i)}_z \sim \gN(0, 1) \enspace \forall i$, $\epsilon_w^{(i)} \sim \gU[-1, 1] \enspace \forall i$.

\textbf{Setting 5:} Let $W = U + \epsilon_w$ and $Z = g\left(U^{(40:60)} / \max\left(U^{(40:60)}\right)\right) + \epsilon_z$ with 
$\epsilon^{(i)}_w \sim \gN(0, 1) \enspace \forall i$, $\epsilon_z^{(i)} \sim \gU[-1, 1] \enspace \forall i$.

\textbf{Setting 6:} Let $Z = U + \epsilon_z$ and $W = g\left(U^{(40:60)} / \max\left(U^{(40:60)}\right)\right) + \epsilon_w$ with 
$\epsilon^{(i)}_z \sim \gN(0, 1) \enspace \forall i$, $\epsilon_w^{(i)} \sim \gU[-1, 1] \enspace \forall i$.

Notice that the odd numbered settings are constructed to introduce an incomplete link between the treatment proxy $Z$ and the confounder $U$, combined with a nonlinear transformation. These configurations are prone to violating Assumption~(\ref{assum:OutcomeBridgeCompleteness}) that hinges the identifiability of dose-response with outcome bridge function (see Theorem (\ref{theorem:OutcomeBridgeIdentificationATE})). On the other hand, even numbered settings are constructed to introduce an incomplete link between the outcome proxy $W$ and the confounder $U$, combined with a nonlinear transformation. These setups are likely to violate Assumption~(\ref{assum:TreatmentBridgeCompleteness}) , which underpins the identifiability guarantees of dose response with treatment bridge function (see Theorem (\ref{theorem:TreatmentBridgeIdentificationATE})). 

\citet{bozkurt2025density} showed that their treatment bridge-based method, KAP, produces estimates that are more closely aligned with those of the oracle method \emph{Kernel-ATE} \citep{RahulKernelCausalFunctions}, which assumes access to the true confounder $U$, compared to outcome bridge-based approaches such as KPV and KNC. They attribute this to KAP’s robustness when the existence of the bridge function is violated, rather than when the assumption underlying the identifiability of the causal function is compromised. Conversely, their results suggest that KPV and KNC tend to yield estimates closer to the oracle method in scenarios where the existence of the bridge function is violated but the assumption ensuring the causal function’s identifiability remains potentially valid—highlighting their strength under different failure modes. While these findings emphasize the complementary robustness of the two classes of methods under varying  conditions, determining which condition holds in practice is often difficult. In this work, we demonstrate that our proposed doubly robust estimators effectively unify the strengths of both approaches, yielding consistently strong performance across diverse scenarios.

Figure~(\ref{fig:DoublyRobustMethodATEComparison_JobCorps}) presents the estimated dose-response curves produced by our proposed methods across all six experimental settings averaged over $5$ different realizations with standard deviation envelopes. For comparison, we include estimates from methods based solely on outcome or treatment bridge functions, along with the oracle method \textit{Kernel-ATE} from \citet{RahulKernelCausalFunctions}, which has access to the true confounder $U$. For numerical comparison, Table~(\ref{tab:JobCorps_setting_vs_algorithm}) reports the mean squared distance between each algorithm’s estimate and that of the oracle method. Across all settings, DRKPV consistently outperforms its non-doubly robust counterpart KPV, as well as the baselines KNC and KAP. In Settings 1, 3, 4, and 5, DRPMMR also outperforms its non-doubly robust variant PMMR and the other baselines. The only exceptions are Settings 2 and 6, where PMMR performs better than DRPMMR, based on the oracle method’s dose-response estimates as the ground truth.

\begin{figure*}[ht!]
\centering
\subfloat[]
{\includegraphics[trim = {0cm 0cm 0cm 0.0cm},clip,width=0.495\textwidth]{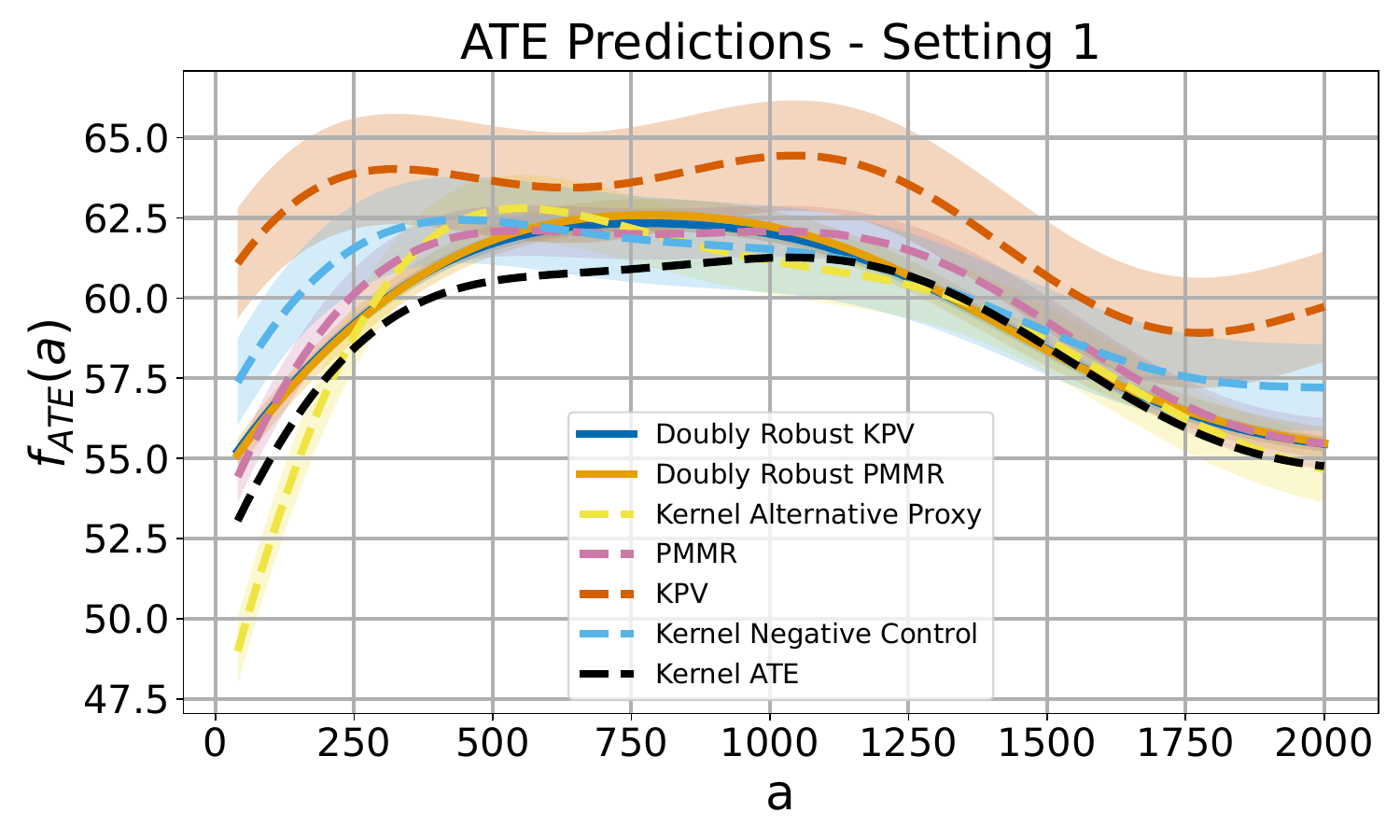} \label{fig:JobCorpsMisspecified_Setting1}}
\subfloat[]
{\includegraphics[trim = {0cm 0cm 0cm 0.0cm},clip,width=0.495\textwidth]{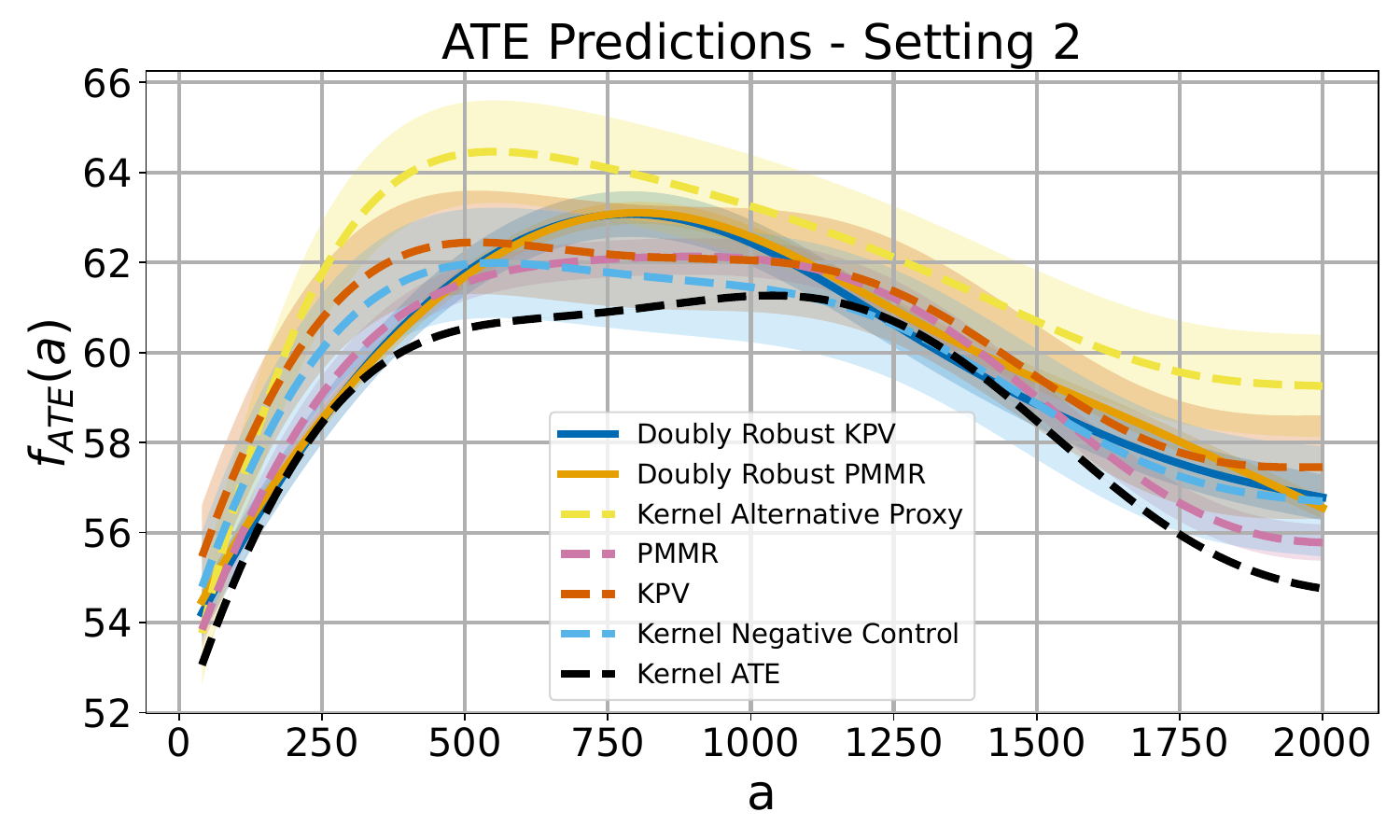}
\label{fig:JobCorpsMisspecified_Setting2}}
\\
\centering
\subfloat[]
{\includegraphics[trim = {0cm 0cm 0cm 0.0cm},clip,width=0.495\textwidth]{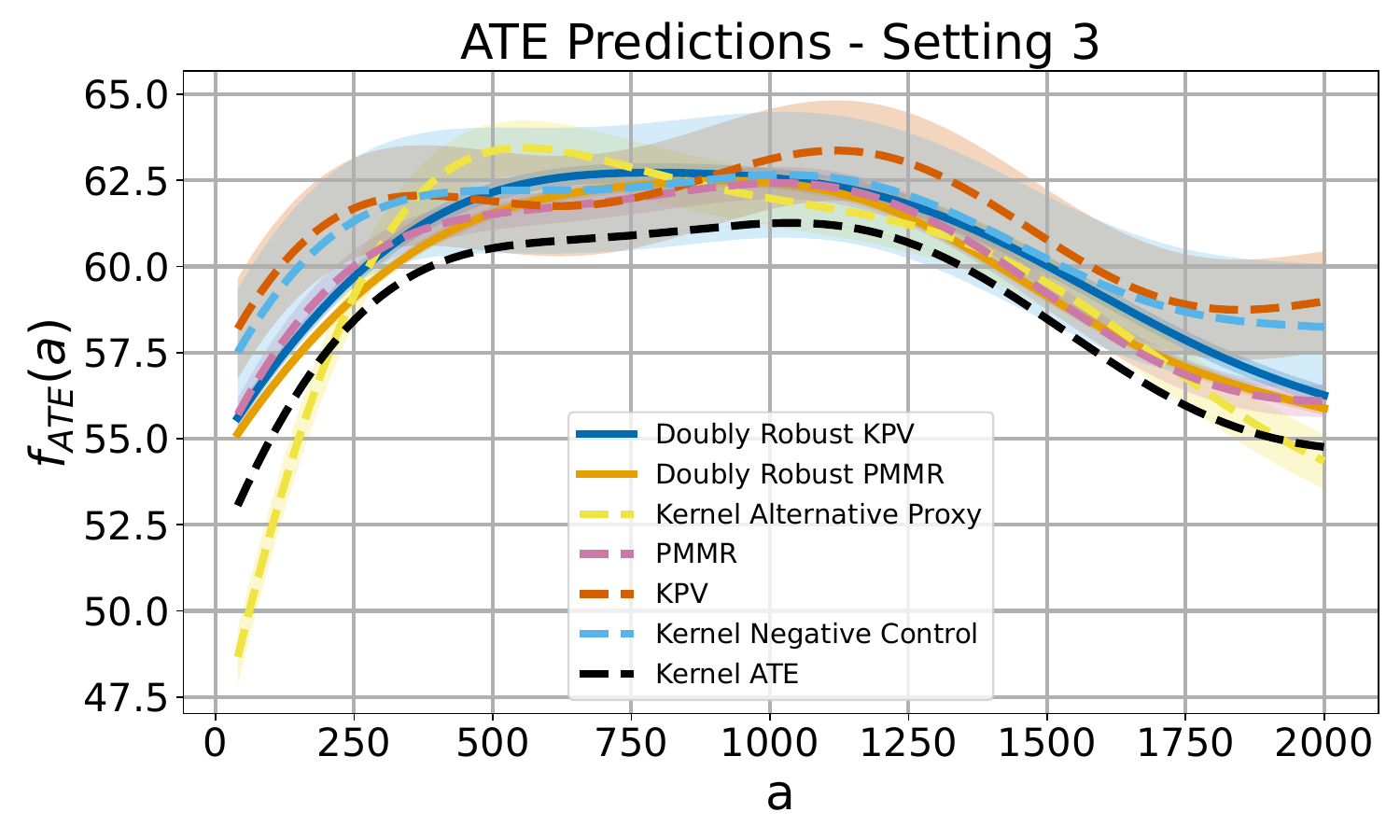}
\label{fig:JobCorpsMisspecified_Setting3}}
\subfloat[]
{\includegraphics[trim = {0cm 0cm 0cm 0.0cm},clip,width=0.495\textwidth]{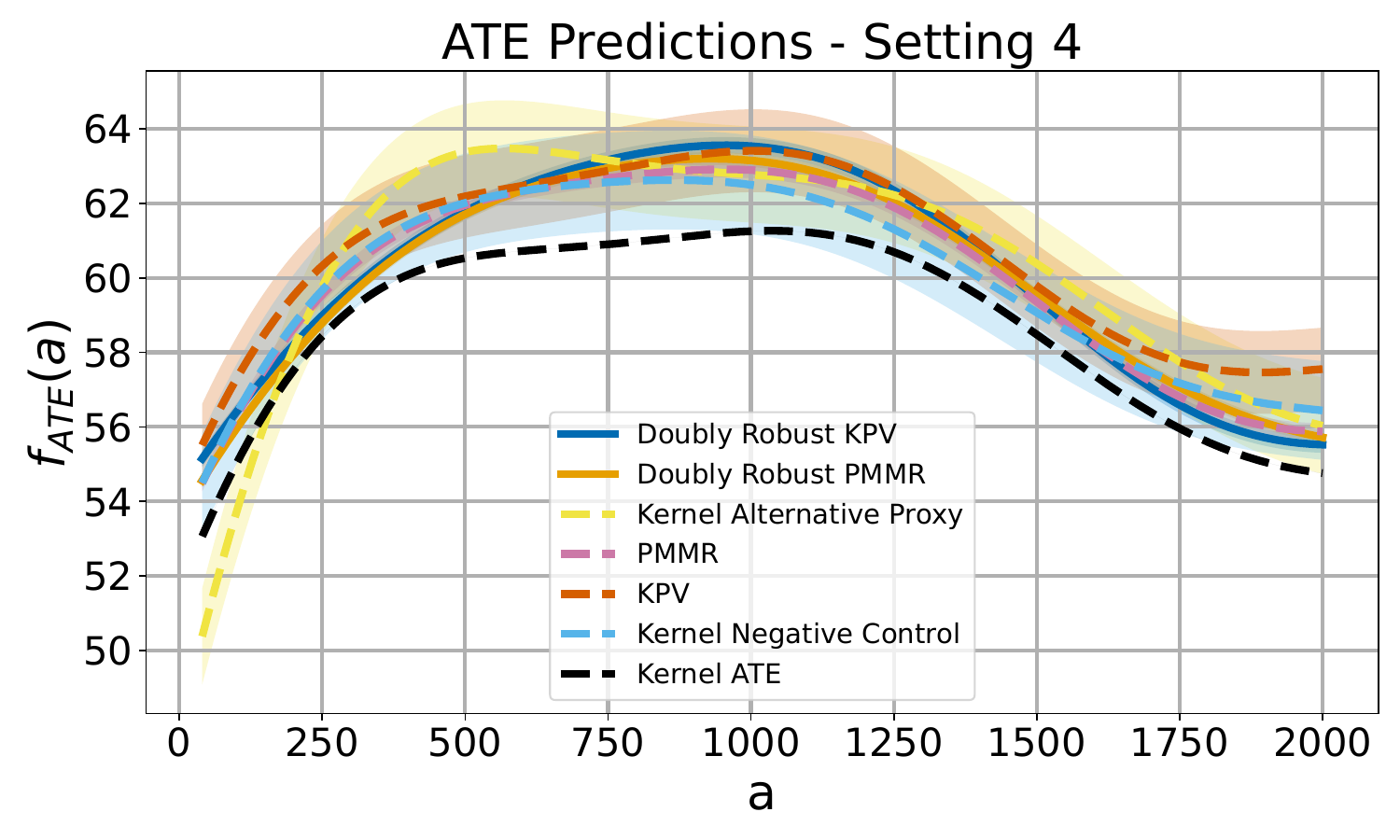}
\label{fig:JobCorpsMisspecified_Setting4}}
\\
\centering
\subfloat[]
{\includegraphics[trim = {0cm 0cm 0cm 0.0cm},clip,width=0.495\textwidth]{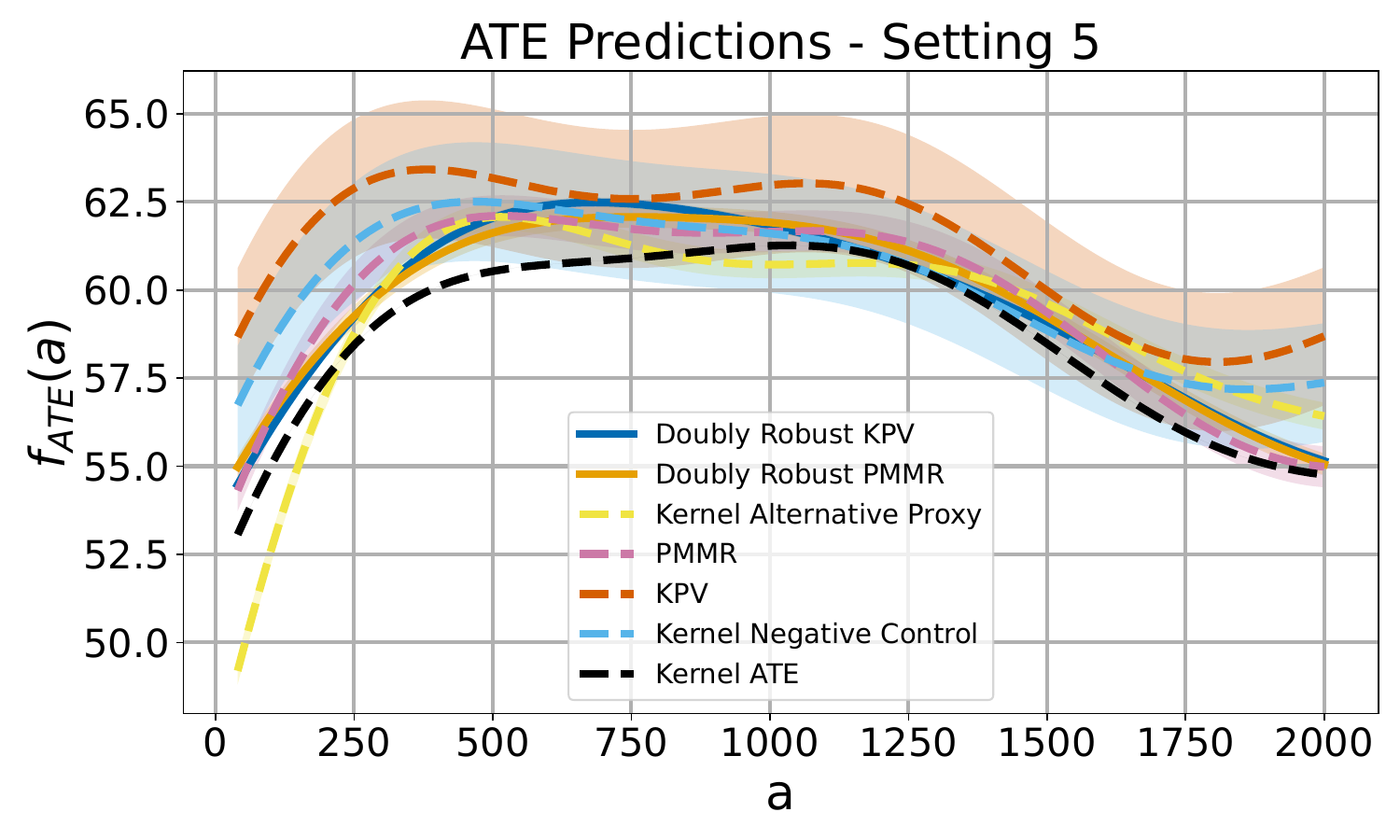}
\label{fig:JobCorpsMisspecified_Setting5}}
\subfloat[]
{\includegraphics[trim = {0cm 0cm 0cm 0.0cm},clip,width=0.495\textwidth]{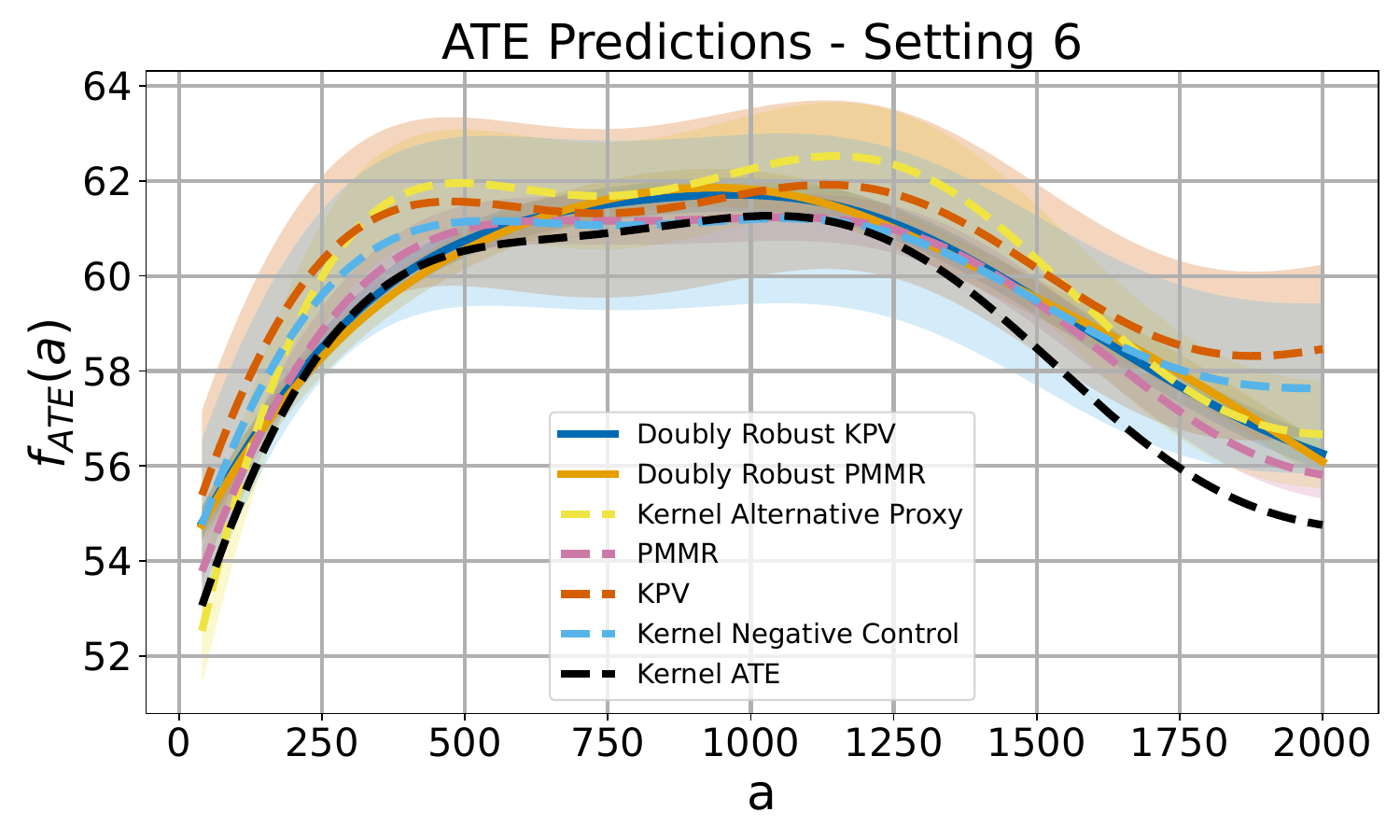}
\label{fig:JobCorpsMisspecified_Setting6}}
\newline\caption{Dose-response estimation curves for the Job Corps experimental settings described in Section (S.M.~\ref{sec:AdditionalNumericalExperiments_JobCorp}). Figures~(\ref{fig:JobCorpsMisspecified_Setting1})-(\ref{fig:JobCorpsMisspecified_Setting6}) display the dose-response estimates from our proposed methods, DRKPV and DRPMMR, compared against KPV, PMMR, KNC, KAP, and the oracle method, Kernel-ATE.}
\label{fig:DoublyRobustMethodATEComparison_JobCorps}
\end{figure*}

\begin{table}[h]
\centering
\caption{Mean squared distance between each algorithm’s estimated dose-response curve and that of the oracle method across the six experimental settings for the Job Corps dataset described in Section (S.M.~\ref{sec:AdditionalNumericalExperiments_JobCorp}).\looseness=-1}
\label{tab:JobCorps_setting_vs_algorithm}
\begin{tabular}{l c c c c c c}
\toprule
\textbf{} & \textbf{DRKPV} & \textbf{DRPMMR} & \textbf{KPV} & \textbf{PMMR} & \textbf{KNC} & \textbf{KAP} \\
\midrule
Set. 1 & $\mathbf{0.87\pm0.30}$ & $\mathbf{1.17\pm0.49}$ & $15.58\pm2.38$ & $1.55\pm0.44$ & $4.45\pm1.37$ & $1.95\pm0.21$ \\
Set. 2 & $\mathbf{1.78\pm0.83}$ & $2.02\pm0.50$ & $4.30\pm2.05$ & $\mathbf{0.84\pm0.21}$ & $2.74\pm1.12$ & $9.67\pm2.95$ \\
Set. 3 & $\mathbf{2.58\pm0.65}$ & $\mathbf{1.17\pm0.20}$ & $8.92\pm3.33$ & $1.63\pm0.35$ & $6.49\pm1.59$ & $3.15\pm0.96$ \\
Set. 4 & $\mathbf{2.29\pm0.43}$ & $\mathbf{1.87\pm0.31}$ & $5.25\pm4.01$ & $1.96\pm0.14$ & $2.86\pm2.11$ & $4.12\pm1.21$ \\
Set. 5 & $\mathbf{1.07\pm0.43}$ & $\mathbf{0.89\pm0.34}$ & $10.20\pm5.32$ & $1.26\pm0.30$ & $4.45\pm2.29$ & $1.83\pm0.71$ \\
Set. 6 & $\mathbf{0.99\pm0.42}$ & $1.03\pm0.13$ & $4.99\pm2.59$ & $\mathbf{0.62\pm0.19}$ & $2.87\pm1.02$ & $2.70\pm1.48$ \\
\bottomrule
\end{tabular}
\end{table}

\postrebuttal{
\subsection{Ablation study: hyperparameter sensitivity on the length scale}

Here, we investigate the sensitivity of our methods to the selection of the kernel length-scale hyperparameter, $l$, for the Gaussian kernel used throughout our experiments. In the main paper (and detailed in Section (S.M. \ref{sec:kernelDetails_Appendix})), we employed the median heuristic to set the length scale. This process involves setting $l$ equal to the ${0.5\text{-quantile}}$ of the computed pairwise distances across all training data. To demonstrate the robustness of our proposed DRKPV and DRPMMR methods, we vary the quantile value used for setting $l$ for each of the involved kernels. Specifically, we test the algorithms on the synthetic low-dimensional setting (with $t = 2000$) and the Legalized Abortion \& Crime dataset by varying the quantile within the set $\{0.25, 0.4, 0.5, 0.6, 0.75, 0.9\}$. The corresponding results are presented in Table (\ref{tab:lengthscale_ablation_combined}). The table demonstrates that the performance of both DRKPV and DRPMMR is stable across a broad range of length-scale quantile selections, and confirms that the $0.50$-quantile (median heuristic) is a robust choice. Degradation in performance is only significant at the extreme high end of the scale (e.g., the $0.90$-quantile in the synthetic dataset), indicating that our methods are generally robust to the specific selection of $l$.

\begin{table*}[ht!]
\centering
\caption{Ablation study on hyperparameter sensitivity: Mean Squared Error (MSE) across different length-scale ($\ell$) selections for the RBF kernel. The length scale is set to the specified quantile of pairwise distances.}
\label{tab:lengthscale_ablation_combined}
\begin{tabular}{l|c|c|c}
\toprule
\multirow{2}{*}{\textbf{Algorithm}} & \multirow{2}{*}{\textbf{Quantile}} & \multicolumn{2}{c}{\textbf{MSE (mean $\pm$ std)}} \\
\cmidrule(lr){3-4}
& & \textbf{Synthetic Low-Dim} & \textbf{Abortion \& Crime} \\
\midrule
\multirow{6}{*}{\textbf{DRKPV}} & 0.25 & $0.055 \pm 0.035$ & $0.024 \pm 0.007$ \\
& 0.40 & $0.028 \pm 0.016$ & $0.020 \pm 0.014$ \\
& \textbf{0.50 (Median)} & $\mathbf{0.026 \pm 0.015}$ & $\mathbf{0.018 \pm 0.010}$ \\
& 0.60 & $0.027 \pm 0.018$ & $0.019 \pm 0.012$ \\
& 0.75 & $0.034 \pm 0.019$ & $0.023 \pm 0.016$ \\
& 0.90 & $0.22 \pm 0.096$ & $0.021 \pm 0.018$ \\
\midrule
\multirow{6}{*}{\textbf{DRPMMR}} & 0.25 & $0.026 \pm 0.011$ & ${0.016 \pm 0.005}$ \\
& 0.40 & ${0.020 \pm 0.011}$ & $\mathbf{0.015 \pm 0.008}$ \\
& \textbf{0.50 (Median)} & $\mathbf{0.019 \pm 0.012}$ & $0.016 \pm 0.009$ \\
& 0.60 & $0.024 \pm 0.018$ & $0.018 \pm 0.009$ \\
& 0.75 & $0.034 \pm 0.033$ & $0.025 \pm 0.013$ \\
& 0.90 & $0.22 \pm 0.130$ & $0.022 \pm 0.015$ \\
\bottomrule
\end{tabular}
\end{table*}

\subsection{Ablation study: performance with the Matérn kernel class}
\label{appendix:Ablation_matern_selection}

We now demonstrate the performance of our proposed methods across different kernel choices by conducting an experiment with the Matérn kernel class in the synthetic low-dimensional setting (with $t = 2000$) and the Legalized Abortion \& Crime dataset.
 
As detailed in Section (S.M. \ref{sec:kernelDetails_Appendix}), the Matérn kernel is controlled by the smoothness parameter $\nu$. In this study, we focus on cases where $\nu$ is a half-integer, i.e., $\nu = p + 1/2$ where $p$ is an integer. The closed form polynomial expression is given in Equation (\ref{eq:MaternKernelExpression_halfInteger}).

We report the performance of our proposed methods, DRKPV and DRPMMR, for varying values of the integer ${p}$ within the set ${\{0, 1, 2, 3, 10\}}$. This range allows us to explore kernel smoothness from the least smooth Exponential kernel ($\nu=1/2$, $p=0$) up to an approximation of the highly smooth Gaussian kernel ($\nu=10.5$, $p=10$).

\begin{table*}[ht!]
\centering
\caption{Ablation Study: Mean Squared Error (MSE) for various Matérn kernel selections, controlled by the smoothness parameter $\alpha = p + 1/2$.}
\label{tab:matern_ablation_combined}
\begin{tabular}{l|c|c|c}
\toprule
\multirow{2}{*}{\textbf{Algorithm}} & \multirow{2}{*}{\textbf{Smoothness Parameter ($p$)}} & \multicolumn{2}{c}{\textbf{MSE (mean $\pm$ std)}} \\
\cmidrule(lr){3-4}
& & \textbf{Synthetic Low-Dim} & \textbf{Abortion \& Crime} \\
\midrule
\multirow{5}{*}{\textbf{DRKPV}} & 0 ($\nu=0.5$) & $0.032 \pm 0.016$ & $\mathbf{0.019 \pm 0.007}$ \\
& 1 ($\nu=1.5$) & $\mathbf{0.022 \pm 0.013}$ & $0.022 \pm 0.015$ \\
& 2 ($\nu=2.5$) & $\mathbf{0.022 \pm 0.013}$ & $0.020 \pm 0.012$ \\
& 3 ($\nu=3.5$) & $\mathbf{0.022 \pm 0.013}$ & $0.023 \pm 0.017$ \\
& 10 ($\nu=10.5$) & $0.024 \pm 0.015$ & $0.023 \pm 0.014$ \\
\midrule
\multirow{5}{*}{\textbf{DRPMMR}} & 0 ($\nu=0.5$) & $0.029 \pm 0.011$ & $\mathbf{0.016 \pm 0.003}$ \\
& 1 ($\nu=1.5$) & $0.019 \pm 0.013$ & $0.019 \pm 0.009$ \\
& 2 ($\nu=2.5$) & $0.019 \pm 0.013$ & $0.018 \pm 0.010$ \\
& 3 ($\nu=3.5$) & $\mathbf{0.018 \pm 0.013}$ & $0.018 \pm 0.010$ \\
& 10 ($\nu=10.5$) & $0.022 \pm 0.016$ & $0.021 \pm 0.011$ \\
\bottomrule
\end{tabular}
\end{table*}

\subsection{Scalability analysis: impact of Nyström approximation on performance}

We now analyze the computational complexity of our proposed DRKPV method and introduce the Nyström approximation as a technique to enhance its scalability for large datasets. While we focus on DRKPV for simplicity, this discussion extends directly to DRPMMR.

The DRKPV framework comprises three primary computational components: 1) the KPV algorithm (Algorithm (\ref{algo:KPVAlgorithm})), 2) the KAP algorithm (Algorithm (\ref{algo:KernelAlternativeProxyAlgorithm})), and 3) the estimation of the slack correction term (step 3 in Algorithm (\ref{algo:DoublyRobustKPV})). We break down the complexity analysis as follows:

\begin{itemize}
    \item \textbf{KPV Algorithm (Stage 1):} Utilizes $n_h$ data samples and involves the inversion of an $n_h \times n_h$ Gram matrix, yielding a complexity dominated by the matrix inversion, $O(n_h^3)$. \looseness=-1
    \item \textbf{KPV Algorithm (Stage 2):} Uses $m_h$ data samples and requires the inversion of $m_h \times m_h$ Gram matrix. Its complexity is therefore $O(m_h^3)$.\looseness=-1
    \item \textbf{KAP Algorithm (Stage 1):} Uses $n_\varphi$ data samples and involves the inverting an $n_\varphi \times n_\varphi$ Gram matrix, with a complexity of $O(n_\varphi^3)$.
    \item \textbf{KAP Algorithm (Stage 2):} The complexity is dominated by the inversion of $(m_\varphi + 1) \times (m_\varphi + 1)$ matrix, scaling with $O(m_\varphi^3)$, where $m_\varphi$ is the number of samples in the second stage.
    \item \textbf{KAP Algorithm (Stage 3):} Leverages $t_\varphi$ data samples, and its complexity is governed by inversion of $t_\varphi \times t_\varphi$ Gram matrix, scaling with $O(t_\varphi^3)$.
    \item \textbf{DRKPV - Slack correction term:} As outlined in Equation (\ref{eq:DoublyRobustEmpiricalEstimate}), this step requires inverting a $t \times t$ Gram matrix, where $t$ is the total number of data samples. Hence, the computational complexity is $O(t^3)$.
\end{itemize}
In conclusion, the overall computational complexity of our method, DRKPV, is dominated by the inversion of the largest matrix, scaling as $O(t^3)$. This is similar to the complexity of standard kernel methods like kernel ridge regression. 

In general, the $O(t^3)$ computational complexity is a significant bottleneck for applications involving large datasets. To address this, we propose the Nyström approximation as a concrete step towards making our method more scalable. 

\textbf{Nyström approximation:} Recall from Equations (\ref{eq:general_KRR_optimization})-(\ref{eq:general_KRR_solution}), the kernel ridge regression (KRR) estimator for the conditional mean function $f_0 = \E[Y \mid X = x]$ uses observational data $\{x_i, y_i\}_{i = 1}^t$ and is given by:
\begin{align*}
    \hat{f}(x) = \mY^\top (\mK_{XX} + t \lambda \mI)^{-1} \mK_{Xx}.
\end{align*}
As detailed in the complexity breakdown above, the required matrix inversion of the $t \times t$ matrix $(\mK_{XX} + t \lambda \mI)$ makes this solution expensive to compute, as the complexity scales cubically with the number of data points. The Nyström approximation technique \citep{NIPS2000_19de10ad} tackles this bottleneck by relying on a low-rank approximation to the Gram matrix $\mK_{XX}$. This approximation is constructed by using a smaller subset of landmark points $\{s_i\}_{i = 1}^p \subset \{x_i\}_{i = 1}^t$, where $p \ll t$. These landmarks are typically selected using various sampling schemes \citep{JMLR:v13:kumar12a}. The closed form expression with this approximation is given by (See Eqatuion (2) in \citep{pmlr-v151-meanti22a}):
\begin{align}
    \hat{f}(x) = \mY^\top \mK_{XS}\left(\mK_{XS}^\top \mK_{XS} + t \lambda \mK_{SS}\right)^{-1} \mK_{Sx}
    \label{NystormApproximateKRRSolution}
\end{align}
where $[\mK_{XS}]_{i, j} = k_\gX(x_i, s_j)$, $[\mK_{SS}]_{ij} = k_\gX(s_i, s_j)$, and $[\mK_{Sx}]_i = k_\gX(s_i, x)$. The primary computational advantage of this Nyström form is that the matrix requiring inversion, $\left(\mK_{XS}^\top \mK_{XS} + t \lambda \mK_{SS}\right)^{-1}$, is now only $p \times p$. Assuming that the number of landmarks $p$ is substantially smaller than the total number of samples ($p \ll t$), the computational cost of kernel ridge regression solution is reduced from $O(t^3)$ to $O(p^2t)$.

We apply this low-rank approximation to the computationally intensive kernel regression stages in our DRKPV algorithm:

\begin{itemize}
    \item  KPV (Stage 1): Kernel ridge regression from the input features $\{\phi_\gZ(\bar{z}_i \otimes \phi_\gA(\bar{a}_i))\}_{i = 1}^{n_h}$ to the observations $\{\phi_\gW(\bar{w}_i)\}_{i = 1}^{n_h}$.
    \item  KPV (Stage 2): Kernel ridge regression from the input features $\{\hat{\mu}_{W | Z, A}(\Tilde{z}_i, \Tilde{a}_i) \otimes \phi_\gA(\Tilde{a}_i)\}_{i = 1}^{m_h}$ to the observations $\{\Tilde{y}_i\}_{i = 1}^{m_h}$.
    \item KAP (Stage 1): Kernel ridge regression from the input features $\{\phi_\gW(\bar{w}_i \otimes \phi_\gA(\bar{a}_i))\}_{i = 1}^{n_\varphi}$ to the observations $\{\phi_\gZ(\bar{z}_i)\}_{i = 1}^{n_\varphi}$.
    \item KAP (Stage 3): Kernel ridge regression from the input features $\{\phi_\gA(\dot{a}_i)\}_{i = 1}^{t_\varphi}$ to the observations $\{\dot{y}_i \phi_\gZ(\dot{z}_i)\}_{i = 1}^{t_\varphi}$.
    \item Doubly robust correction term (Equation \ref{eq:ZW_A_Conditional_mean_embedding_estimate}): Kernel ridge regression from the input features $\{\phi_\gA(a)_i\}_{i = 1}^t$ to the observations $\{\phi_\gZ(z_i) \otimes \phi_\gW(w_i)\}_{i = 1}^t$. 
\end{itemize}
The second-stage regression of KAP involves a more specialized setup that deviates from standard KRR, but it is equally amenable to approximation techniques. Table \ref{tab:nystrom_ablation} presents synthetic low-dimensional experiments comparing the full-kernel DRKPV to its Nyström-approximated version ($p = 500$ and $p = 1000$), where all regularization parameters are uniformly set to $10^{-3}$. Results clearly demonstrate that the Nyström approximation effectively reduces the algorithm's runtime while maintaining or even improving prediction accuracy. Future work will also explore stronger scalable alternatives such as scalable kernel ridge regression \citep{pmlr-v151-meanti22a} and the neural network approaches \citep{xu2021deep, https://doi.org/10.1002/sam.70045}, including for the second-stage regression of KAP.
\begin{table}[ht]
\centering
\caption{Performance and scalability comparison of DRKPV with Nyström approximation in synthetic low-dimensional experiments.}
\label{tab:nystrom_ablation}
\begin{tabular}{lccc}
\toprule
\textbf{Algorithm} & \textbf{Data Size ($t$)} & \textbf{MSE (mean $\pm$ std)} & \textbf{Algo Run Time (s)} \\
\midrule
DRKPV (Original) & $5000$ & $0.0097 \pm 0.0046$ & $61.1 \pm 0.21$ \\
DRKPV (Nyström, $p=500$) & $5000$ & $0.0082 \pm 0.0033$ & $42.5 \pm 0.19$ \\
DRKPV (Nyström, $p=100$) & $5000$ & $\mathbf{0.0071 \pm 0.0028}$ & $\mathbf{39.2 \pm 0.22}$ \\
\midrule
DRKPV (Original) & $7500$ & $\mathbf{0.0070 \pm 0.0033}$ & $194.5 \pm 0.35$ \\
DRKPV (Nyström, $p=500$) & $7500$ & $0.0085 \pm 0.0044$ & $132.7 \pm 0.04$ \\
DRKPV (Nyström, $p=100$) & $7500$ & $0.0091 \pm 0.0046$ & $\mathbf{127.6 \pm 0.00}$ \\
\midrule
DRKPV (Original) & $10000$ & $0.0065 \pm 0.0026$ & $453.9 \pm 0.51$ \\
DRKPV (Nyström, $p=500$) & $10000$ & $\mathbf{0.0059 \pm 0.0014}$ & $306.7 \pm 0.02$ \\
DRKPV (Nyström, $p=100$) & $10000$ & $0.0062 \pm 0.0022$ & $\mathbf{299.5 \pm 0.00}$ \\
\bottomrule
\end{tabular}
\end{table}

}

\newpage
\section*{NeurIPS Paper Checklist}

\begin{enumerate}

\item {\bf Claims}
    \item[] Question: Do the main claims made in the abstract and introduction accurately reflect the paper's contributions and scope?
    \item[] Answer: \answerYes{} 
    \item[] Justification: {The abstract and introduction clearly state the paper’s contributions and scope, namely the development of doubly robust estimators for proxy causal learning using kernel mean embeddings.}
    \item[] Guidelines:
    \begin{itemize}
        \item The answer NA means that the abstract and introduction do not include the claims made in the paper.
        \item The abstract and/or introduction should clearly state the claims made, including the contributions made in the paper and important assumptions and limitations. A No or NA answer to this question will not be perceived well by the reviewers. 
        \item The claims made should match theoretical and experimental results, and reflect how much the results can be expected to generalize to other settings. 
        \item It is fine to include aspirational goals as motivation as long as it is clear that these goals are not attained by the paper. 
    \end{itemize}

\item {\bf Limitations}
    \item[] Question: Does the paper discuss the limitations of the work performed by the authors?
    \item[] Answer: \answerYes{} 
    \item[] Justification: {Section~(\ref{sec:Conclusion}) discusses the limitations of our work and outlines future directions.}
    \item[] Guidelines:
    \begin{itemize}
        \item The answer NA means that the paper has no limitation while the answer No means that the paper has limitations, but those are not discussed in the paper. 
        \item The authors are encouraged to create a separate "Limitations" section in their paper.
        \item The paper should point out any strong assumptions and how robust the results are to violations of these assumptions (e.g., independence assumptions, noiseless settings, model well-specification, asymptotic approximations only holding locally). The authors should reflect on how these assumptions might be violated in practice and what the implications would be.
        \item The authors should reflect on the scope of the claims made, e.g., if the approach was only tested on a few datasets or with a few runs. In general, empirical results often depend on implicit assumptions, which should be articulated.
        \item The authors should reflect on the factors that influence the performance of the approach. For example, a facial recognition algorithm may perform poorly when image resolution is low or images are taken in low lighting. Or a speech-to-text system might not be used reliably to provide closed captions for online lectures because it fails to handle technical jargon.
        \item The authors should discuss the computational efficiency of the proposed algorithms and how they scale with dataset size.
        \item If applicable, the authors should discuss possible limitations of their approach to address problems of privacy and fairness.
        \item While the authors might fear that complete honesty about limitations might be used by reviewers as grounds for rejection, a worse outcome might be that reviewers discover limitations that aren't acknowledged in the paper. The authors should use their best judgment and recognize that individual actions in favor of transparency play an important role in developing norms that preserve the integrity of the community. Reviewers will be specifically instructed to not penalize honesty concerning limitations.
    \end{itemize}

\item {\bf Theory assumptions and proofs}
    \item[] Question: For each theoretical result, does the paper provide the full set of assumptions and a complete (and correct) proof?
    \item[] Answer: \answerYes{} 
    \item[] Justification: {We provide complete proofs and all necessary assumptions for our novel results, referencing the Supplementary Material when needed. Assumptions and results from prior work are properly cited and referenced.}
    \item[] Guidelines:
    \begin{itemize}
        \item The answer NA means that the paper does not include theoretical results. 
        \item All the theorems, formulas, and proofs in the paper should be numbered and cross-referenced.
        \item All assumptions should be clearly stated or referenced in the statement of any theorems.
        \item The proofs can either appear in the main paper or the supplemental material, but if they appear in the supplemental material, the authors are encouraged to provide a short proof sketch to provide intuition. 
        \item Inversely, any informal proof provided in the core of the paper should be complemented by formal proofs provided in appendix or supplemental material.
        \item Theorems and Lemmas that the proof relies upon should be properly referenced. 
    \end{itemize}

    \item {\bf Experimental result reproducibility}
    \item[] Question: Does the paper fully disclose all the information needed to reproduce the main experimental results of the paper to the extent that it affects the main claims and/or conclusions of the paper (regardless of whether the code and data are provided or not)?
    \item[] Answer: \answerYes{} 
    \item[] Justification: {We include our Python code with instructions and dependencies in the supplementary zip. Experimental setups, hyperparameters, and implementation details are provided in Sections (\ref{sec:NumericalExperiments_main}) and S.M.(\ref{sec:appendix_NumericalExperiments}), with derivations and pseudocode in S.M. (\ref{sec:AlgorithmDerivationsAppendix}).}
    \item[] Guidelines:
    \begin{itemize}
        \item The answer NA means that the paper does not include experiments.
        \item If the paper includes experiments, a No answer to this question will not be perceived well by the reviewers: Making the paper reproducible is important, regardless of whether the code and data are provided or not.
        \item If the contribution is a dataset and/or model, the authors should describe the steps taken to make their results reproducible or verifiable. 
        \item Depending on the contribution, reproducibility can be accomplished in various ways. For example, if the contribution is a novel architecture, describing the architecture fully might suffice, or if the contribution is a specific model and empirical evaluation, it may be necessary to either make it possible for others to replicate the model with the same dataset, or provide access to the model. In general. releasing code and data is often one good way to accomplish this, but reproducibility can also be provided via detailed instructions for how to replicate the results, access to a hosted model (e.g., in the case of a large language model), releasing of a model checkpoint, or other means that are appropriate to the research performed.
        \item While NeurIPS does not require releasing code, the conference does require all submissions to provide some reasonable avenue for reproducibility, which may depend on the nature of the contribution. For example
        \begin{enumerate}
            \item If the contribution is primarily a new algorithm, the paper should make it clear how to reproduce that algorithm.
            \item If the contribution is primarily a new model architecture, the paper should describe the architecture clearly and fully.
            \item If the contribution is a new model (e.g., a large language model), then there should either be a way to access this model for reproducing the results or a way to reproduce the model (e.g., with an open-source dataset or instructions for how to construct the dataset).
            \item We recognize that reproducibility may be tricky in some cases, in which case authors are welcome to describe the particular way they provide for reproducibility. In the case of closed-source models, it may be that access to the model is limited in some way (e.g., to registered users), but it should be possible for other researchers to have some path to reproducing or verifying the results.
        \end{enumerate}
    \end{itemize}

\item {\bf Open access to data and code}
    \item[] Question: Does the paper provide open access to the data and code, with sufficient instructions to faithfully reproduce the main experimental results, as described in supplemental material?
    \item[] Answer: \answerYes{} 
    \item[] Justification: {We include our Python code with the supplementary zip file, along with instructions to reproduce and required packages. Dataset sources and URLs are cited in Sections~(\ref{sec:NumericalExperiments_main}) and S.M. (\ref{sec:appendix_NumericalExperiments}). The synthetic data can be reproduced using our code or the descriptions provided in the paper.}
    \item[] Guidelines:
    \begin{itemize}
        \item The answer NA means that paper does not include experiments requiring code.
        \item Please see the NeurIPS code and data submission guidelines (\url{https://nips.cc/public/guides/CodeSubmissionPolicy}) for more details.
        \item While we encourage the release of code and data, we understand that this might not be possible, so “No” is an acceptable answer. Papers cannot be rejected simply for not including code, unless this is central to the contribution (e.g., for a new open-source benchmark).
        \item The instructions should contain the exact command and environment needed to run to reproduce the results. See the NeurIPS code and data submission guidelines (\url{https://nips.cc/public/guides/CodeSubmissionPolicy}) for more details.
        \item The authors should provide instructions on data access and preparation, including how to access the raw data, preprocessed data, intermediate data, and generated data, etc.
        \item The authors should provide scripts to reproduce all experimental results for the new proposed method and baselines. If only a subset of experiments are reproducible, they should state which ones are omitted from the script and why.
        \item At submission time, to preserve anonymity, the authors should release anonymized versions (if applicable).
        \item Providing as much information as possible in supplemental material (appended to the paper) is recommended, but including URLs to data and code is permitted.
    \end{itemize}

\item {\bf Experimental setting/details}
    \item[] Question: Does the paper specify all the training and test details (e.g., data splits, hyperparameters, how they were chosen, type of optimizer, etc.) necessary to understand the results?
    \item[] Answer: \answerYes{} 
    \item[] Justification: {We provide full details of our experimental setup in the main text (Section~(\ref{sec:NumericalExperiments_main})) and additional training details, including hyperparameter selection procedures, in the supplementary material (S.M.~(\ref{sec:appendix_NumericalExperiments})).}
    \item[] Guidelines:
    \begin{itemize}
        \item The answer NA means that the paper does not include experiments.
        \item The experimental setting should be presented in the core of the paper to a level of detail that is necessary to appreciate the results and make sense of them.
        \item The full details can be provided either with the code, in appendix, or as supplemental material.
    \end{itemize}

\item {\bf Experiment statistical significance}
    \item[] Question: Does the paper report error bars suitably and correctly defined or other appropriate information about the statistical significance of the experiments?
    \item[] Answer: \answerYes{} 
    \item[] Justification: {We report results over multiple realizations and include boxplots or standard deviation bars to illustrate the variability and statistical significance of our methods.}
    \item[] Guidelines:
    \begin{itemize}
        \item The answer NA means that the paper does not include experiments.
        \item The authors should answer "Yes" if the results are accompanied by error bars, confidence intervals, or statistical significance tests, at least for the experiments that support the main claims of the paper.
        \item The factors of variability that the error bars are capturing should be clearly stated (for example, train/test split, initialization, random drawing of some parameter, or overall run with given experimental conditions).
        \item The method for calculating the error bars should be explained (closed form formula, call to a library function, bootstrap, etc.)
        \item The assumptions made should be given (e.g., Normally distributed errors).
        \item It should be clear whether the error bar is the standard deviation or the standard error of the mean.
        \item It is OK to report 1-sigma error bars, but one should state it. The authors should preferably report a 2-sigma error bar than state that they have a 96\% CI, if the hypothesis of Normality of errors is not verified.
        \item For asymmetric distributions, the authors should be careful not to show in tables or figures symmetric error bars that would yield results that are out of range (e.g. negative error rates).
        \item If error bars are reported in tables or plots, The authors should explain in the text how they were calculated and reference the corresponding figures or tables in the text.
    \end{itemize}

\item {\bf Experiments compute resources}
    \item[] Question: For each experiment, does the paper provide sufficient information on the computer resources (type of compute workers, memory, time of execution) needed to reproduce the experiments?
    \item[] Answer: \answerNo{} 
    \item[] Justification: {We did not report compute resources, as our methods are lightweight and all experiments can be run on a standard personal computer with Python on it.}
    \item[] Guidelines:
    \begin{itemize}
        \item The answer NA means that the paper does not include experiments.
        \item The paper should indicate the type of compute workers CPU or GPU, internal cluster, or cloud provider, including relevant memory and storage.
        \item The paper should provide the amount of compute required for each of the individual experimental runs as well as estimate the total compute. 
        \item The paper should disclose whether the full research project required more compute than the experiments reported in the paper (e.g., preliminary or failed experiments that didn't make it into the paper). 
    \end{itemize}
    
\item {\bf Code of ethics}
    \item[] Question: Does the research conducted in the paper conform, in every respect, with the NeurIPS Code of Ethics \url{https://neurips.cc/public/EthicsGuidelines}?
    \item[] Answer: \answerYes{} 
    \item[] Justification: {Our research fully complies with the NeurIPS Code of Ethics.}
    \item[] Guidelines:
    \begin{itemize}
        \item The answer NA means that the authors have not reviewed the NeurIPS Code of Ethics.
        \item If the authors answer No, they should explain the special circumstances that require a deviation from the Code of Ethics.
        \item The authors should make sure to preserve anonymity (e.g., if there is a special consideration due to laws or regulations in their jurisdiction).
    \end{itemize}

\item {\bf Broader impacts}
    \item[] Question: Does the paper discuss both potential positive societal impacts and negative societal impacts of the work performed?
    \item[] Answer: \answerNo{} 
    \item[] Justification: {Our paper focuses on methodological and theoretical contributions and does not include a discussion of broader societal impacts. We do not foresee any negative societal impacts from the research presented.}
    \item[] Guidelines:
    \begin{itemize}
        \item The answer NA means that there is no societal impact of the work performed.
        \item If the authors answer NA or No, they should explain why their work has no societal impact or why the paper does not address societal impact.
        \item Examples of negative societal impacts include potential malicious or unintended uses (e.g., disinformation, generating fake profiles, surveillance), fairness considerations (e.g., deployment of technologies that could make decisions that unfairly impact specific groups), privacy considerations, and security considerations.
        \item The conference expects that many papers will be foundational research and not tied to particular applications, let alone deployments. However, if there is a direct path to any negative applications, the authors should point it out. For example, it is legitimate to point out that an improvement in the quality of generative models could be used to generate deepfakes for disinformation. On the other hand, it is not needed to point out that a generic algorithm for optimizing neural networks could enable people to train models that generate Deepfakes faster.
        \item The authors should consider possible harms that could arise when the technology is being used as intended and functioning correctly, harms that could arise when the technology is being used as intended but gives incorrect results, and harms following from (intentional or unintentional) misuse of the technology.
        \item If there are negative societal impacts, the authors could also discuss possible mitigation strategies (e.g., gated release of models, providing defenses in addition to attacks, mechanisms for monitoring misuse, mechanisms to monitor how a system learns from feedback over time, improving the efficiency and accessibility of ML).
    \end{itemize}
    
\item {\bf Safeguards}
    \item[] Question: Does the paper describe safeguards that have been put in place for responsible release of data or models that have a high risk for misuse (e.g., pretrained language models, image generators, or scraped datasets)?
    \item[] Answer: \answerNA{} 
    \item[] Justification: {Not applicable to the research presented in this paper.}
    \item[] Guidelines:
    \begin{itemize}
        \item The answer NA means that the paper poses no such risks.
        \item Released models that have a high risk for misuse or dual-use should be released with necessary safeguards to allow for controlled use of the model, for example by requiring that users adhere to usage guidelines or restrictions to access the model or implementing safety filters. 
        \item Datasets that have been scraped from the Internet could pose safety risks. The authors should describe how they avoided releasing unsafe images.
        \item We recognize that providing effective safeguards is challenging, and many papers do not require this, but we encourage authors to take this into account and make a best faith effort.
    \end{itemize}

\item {\bf Licenses for existing assets}
    \item[] Question: Are the creators or original owners of assets (e.g., code, data, models), used in the paper, properly credited and are the license and terms of use explicitly mentioned and properly respected?
    \item[] Answer: \answerYes{} 
    \item[] Justification: {We properly credit all datasets used: Legalized Abortion and Crime \citep{LegalizedAbortionData, woody2020estimatingheterogeneouseffectscontinuous}, Grade Retention \citep{Fruehwirth_GradeRetentions, deaner2023proxycontrolspaneldata}, and Job Corps \citep{Schochet_JobCorps, Flores_JobCorps}. We also cite the GitHub repository from \citet{Mastouri2021ProximalCL} where some datasets were sourced.}
    \item[] Guidelines:
    \begin{itemize}
        \item The answer NA means that the paper does not use existing assets.
        \item The authors should cite the original paper that produced the code package or dataset.
        \item The authors should state which version of the asset is used and, if possible, include a URL.
        \item The name of the license (e.g., CC-BY 4.0) should be included for each asset.
        \item For scraped data from a particular source (e.g., website), the copyright and terms of service of that source should be provided.
        \item If assets are released, the license, copyright information, and terms of use in the package should be provided. For popular datasets, \url{paperswithcode.com/datasets} has curated licenses for some datasets. Their licensing guide can help determine the license of a dataset.
        \item For existing datasets that are re-packaged, both the original license and the license of the derived asset (if it has changed) should be provided.
        \item If this information is not available online, the authors are encouraged to reach out to the asset's creators.
    \end{itemize}

\item {\bf New assets}
    \item[] Question: Are new assets introduced in the paper well documented and is the documentation provided alongside the assets?
    \item[] Answer: \answerYes{} 
    \item[] Justification: {We provide well-structured code for our methods in the supplementary material zip file, including a README with instructions to reproduce the experiments.}
    \item[] Guidelines:
    \begin{itemize}
        \item The answer NA means that the paper does not release new assets.
        \item Researchers should communicate the details of the dataset/code/model as part of their submissions via structured templates. This includes details about training, license, limitations, etc. 
        \item The paper should discuss whether and how consent was obtained from people whose asset is used.
        \item At submission time, remember to anonymize your assets (if applicable). You can either create an anonymized URL or include an anonymized zip file.
    \end{itemize}

\item {\bf Crowdsourcing and research with human subjects}
    \item[] Question: For crowdsourcing experiments and research with human subjects, does the paper include the full text of instructions given to participants and screenshots, if applicable, as well as details about compensation (if any)? 
    \item[] Answer: \answerNA{} 
    \item[] Justification: {Our work does not involve crowdsourcing or research with human subjects.}
    \item[] Guidelines:
    \begin{itemize}
        \item The answer NA means that the paper does not involve crowdsourcing nor research with human subjects.
        \item Including this information in the supplemental material is fine, but if the main contribution of the paper involves human subjects, then as much detail as possible should be included in the main paper. 
        \item According to the NeurIPS Code of Ethics, workers involved in data collection, curation, or other labor should be paid at least the minimum wage in the country of the data collector. 
    \end{itemize}

\item {\bf Institutional review board (IRB) approvals or equivalent for research with human subjects}
    \item[] Question: Does the paper describe potential risks incurred by study participants, whether such risks were disclosed to the subjects, and whether Institutional Review Board (IRB) approvals (or an equivalent approval/review based on the requirements of your country or institution) were obtained?
    \item[] Answer: \answerNA{} 
    \item[] Justification: {This paper does not involve research with human subjects or crowdsourcing.}
    \item[] Guidelines:
    \begin{itemize}
        \item The answer NA means that the paper does not involve crowdsourcing nor research with human subjects.
        \item Depending on the country in which research is conducted, IRB approval (or equivalent) may be required for any human subjects research. If you obtained IRB approval, you should clearly state this in the paper. 
        \item We recognize that the procedures for this may vary significantly between institutions and locations, and we expect authors to adhere to the NeurIPS Code of Ethics and the guidelines for their institution. 
        \item For initial submissions, do not include any information that would break anonymity (if applicable), such as the institution conducting the review.
    \end{itemize}

\item {\bf Declaration of LLM usage}
    \item[] Question: Does the paper describe the usage of LLMs if it is an important, original, or non-standard component of the core methods in this research? Note that if the LLM is used only for writing, editing, or formatting purposes and does not impact the core methodology, scientific rigorousness, or originality of the research, declaration is not required.
    \item[] Answer: \answerNA{} 
    \item[] Justification: {LLMs were only used for grammar and writing assistance, not for core methodology or scientific content.}
    \item[] Guidelines:
    \begin{itemize}
        \item The answer NA means that the core method development in this research does not involve LLMs as any important, original, or non-standard components.
        \item Please refer to our LLM policy (\url{https://neurips.cc/Conferences/2025/LLM}) for what should or should not be described.
    \end{itemize}

\end{enumerate}

\end{document}